\numberwithin{equation}{section}
\theoremstyle{definition}
\newtheorem{mainthm}{Theorem}
\newtheorem{thm}{Theorem}[section]
\newtheorem{lem}[thm]{Lemma}
\newtheorem{coro}[thm]{Corollary}
\newtheorem{prop}[thm]{Proposition}
\newtheorem{defn}[thm]{Definition}
\newtheorem{assumption}[thm]{Assumption}
\newcommand\dd{\mathrm{d}}
\newcommand\ee{\mathrm{e}}
\newcommand{\goto}{\rightarrow}
\newcommand{\e}{\mathrm{e}}
\newcommand{\R}{\mathbb{R}}
\tikzset{
   mybox/.style  = {draw, rectangle, minimum width=4cm, minimum height=0.8cm, text centered, text width=4.4cm,   
  font=\normalsize},
  box/.style  = {draw, rectangle, minimum width=4.0cm, minimum height=0.8cm, text centered, text width=4.5cm,   
  font=\normalsize},
   myarrow/.style = {line width=0.2pt, draw=black, -triangle 60, postaction={draw, line width=0.2pt, shorten >=10pt,-}}
}
\tikzstyle{arrow} = [->, >=stealth, -triangle 60]
\DeclareMathOperator{\E}{\mathbb{E}}
\begin{document}

\title{On Learning Rates and Schr\"odinger Operators}

\author{Bin Shi\thanks{University of California, Berkeley. Email: \url{binshi@berkeley.edu}.} \and  Weijie J.~Su\thanks{The Wharton School, University of Pennsylvania. Email: \url{suw@wharton.upenn.edu}.} \and Michael I.~Jordan\thanks{University of California, Berkeley. Email: 
\url{jordan@cs.berkeley.edu}.}}
\date\today

\maketitle

\begin{abstract}

The learning rate is perhaps the single most important parameter in the training of neural networks and, more broadly, in stochastic (nonconvex) optimization. Accordingly, there are numerous effective, but poorly understood, techniques for tuning the learning rate, including learning rate decay, which starts with a large initial learning rate that is gradually decreased. In this paper, we present a general theoretical analysis of the effect of the learning rate in stochastic gradient descent (SGD).  Our analysis is based on the use of a \emph{learning-rate-dependent stochastic differential equation} (lr-dependent SDE) that serves as a surrogate for SGD. For a broad class of objective functions, we establish a linear rate of convergence for this continuous-time formulation of SGD, highlighting the fundamental importance of the learning rate in SGD, and contrasting to gradient descent and stochastic gradient Langevin dynamics. Moreover, we obtain an explicit expression for the optimal linear rate by analyzing the spectrum of the Witten-Laplacian, a special case of the Schr\"odinger operator associated with the lr-dependent SDE. Strikingly, this expression clearly reveals the dependence of the linear convergence rate on the learning rate---the linear rate  decreases rapidly to zero as the learning rate tends to zero for a broad class of nonconvex functions, whereas it stays constant for strongly convex functions. Based on this sharp distinction between nonconvex and convex problems, we provide a mathematical interpretation of the benefits of using learning rate decay for nonconvex optimization.

\end{abstract}


\section{Introduction}
\label{sec: intro}

Gradient-based optimization has been the workhorse algorithm powering recent developments in statistical machine learning. Many of these developments involve solving nonconvex optimization problems, which raises new challenges for theoreticians, given that classical theory has often been restricted to the convex setting.

A particular focus in machine learning is the class of gradient-based methods referred to as \emph{stochastic gradient descent} (SGD), given its desirable runtime properties, and its desirable statistical performance in a wide range of nonconvex problems.  Consider the minimization of a (nonconvex) function $f$ defined in terms of an expectation:
\[
f(x) = \E_{\zeta} f(x; \zeta),
\]
where the expectation is over the randomness embodied in $\zeta$. A simple example of this is empirical risk minimization, where the loss function,
\[
f(x) = \frac{1}{n} \sum_{i = 1}^{n} f_{i}(x),
\]
is averaged over $n$ data points, where the datapoint-specific losses, $f_i(x)$, are indexed by $i$ and where $x$ denotes a parameter. When $n$ is large, it is computationally prohibitive to compute the full gradient of the objective function, and SGD provides a compelling alternative.  SGD is a gradient-based update based on a (noisy) gradient evaluated from a single data point or a mini-batch:
\[
\widetilde{\nabla} f(x) := \frac1{B} \sum_{i \in \mathcal{B}} \nabla f_i(x) = \nabla f(x) + \xi,
\]
where the set $\mathcal{B}$ of size $B$ is sampled uniformly from the $n$ data points and therefore the noise term $\xi$ has mean zero. Starting from an initial point $x_0$, SGD updates the iterates according to
\begin{align}\label{eqn: SGD_1}
x_{k + 1} = x_{k} - s \widetilde{\nabla} f(x_k) = x_{k} - s\nabla f(x_{k}) - s \xi_k,
\end{align}
where $\xi_k$ denotes the noise term at the $k$th iteration. Note that the step size $s > 0$, also known as the \textit{learning rate}, can either be constant or vary with the iteration~\cite{bottou2010large}.


\begin{figure}[h!]
\centering
\includegraphics[width=3.6in, height=2.5in]{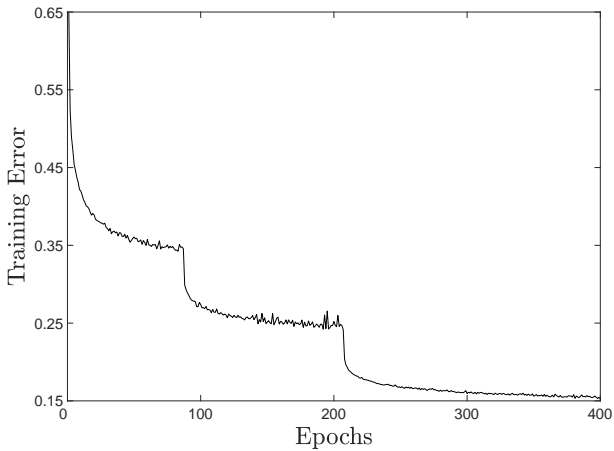}
\caption{\small Training error using SGD with mini-batch size 32 to train an 8-layer convolutional neural network on CIFAR-10~\cite{krizhevsky2009learning}. The first 90 epochs use a learning rate of $s = 0.006$, the next 120 epochs use $s = 0.003$, and the final 190 epochs use $s = 0.0005$. Note that the training error decreases as the learning rate $s$ decreases and a smaller $s$ leads to a larger number of epochs for SGD to reach a plateau. See~\cite{he2016deep} for further investigation of this phenomenon.} 
\label{fig: sgd-deep}
\end{figure}


The learning rate plays an essential role in determining the performance of SGD and many of the practical variants of SGD~\cite{bengio2012practical}.\footnote{Note that the mini-batch size as another parameter can be, to some extent, incorporated into the learning rate. See discussion later in this section.} The overall effect of the learning rate can be complex.  In convex optimization problems, theoretical analysis can explain many aspects of this complexity, but in the nonconvex setting the effect of the learning rate is yet more complex and theory is lacking~\cite{zeiler2012adadelta,kingma2014adam}. As a numerical illustration of this complexity, Figure~\ref{fig: sgd-deep} plots the error of SGD with a piecewise constant learning rate in the training of a neural network on the CIFAR-10 dataset. With a constant learning rate, SGD quickly reaches a plateau in terms of training error, and whenever the learning rate decreases, the plateau decreases as well, thereby yielding better optimization performance. This illustration exemplifies the idea of learning rate decay, a technique that is used in training deep neural networks (see, e.g., \cite{he2016deep,bottou2018optimization,sordello2019robust}). Despite its popularity and the empirical evidence of its success, however, the literature stops short of providing a \textit{general} and \textit{quantitative} approach to understanding how the learning rate impacts the performance of SGD and its variants in the nonconvex setting~\cite{you2019learning,li2019towards}.  Accordingly, strategies for setting learning rate decay schedules are generally adhoc and empirical.

In the current paper we provide theoretical insight into the dependence of SGD on the learning rate in nonconvex optimization. Our approach builds on a recent line of work in which optimization algorithms are studied via the analysis of their behavior in continuous-time limits~\cite{su2016differential,jordan2018dynamical,shi2018understanding}. Specifically, in the case of SGD, we study stochastic differential equations (SDEs) as surrogates for discrete stochastic optimization methods (see, e.g., \cite{kushner2003stochastic,li2017stochastic,krichene2017acceleration,chaudhari2018deep,diakonikolas2019generalized}). The construction is roughly as follows. Taking a small but nonzero learning rate $s$, let $t_{k} = ks$ denote a time step and define $x_{k} = X_s(t_{k})$ for some sufficiently smooth curve $X_s(t)$. Applying a Taylor expansion in powers of $s$, we obtain:
\[
x_{k + 1} = X_s(t_{k+1}) = X_s(t_{k}) + \dot{X_s}(t_{k}) s + O(s^{2}).  
\]
Let $W$ be a standard Brownian motion and, for the time being, assume that the noise term $\xi_k$ is approximately normally distributed with unit variance. Informally, this leads to\footnote{Although a Brownian motion is not differentiable, the formal notation $\dd W(t)/ \dd t$ can be given a rigorous interpretation~\cite{evans2012introduction,villani2006hypocoercive}.}
\[
-\sqrt{s} \xi_k = W(t_{k + 1}) - W(t_{k}) = s \,\frac{\dd W(t_k)}{\dd t} + O(s^{2}).
\]
Plugging the last two displays into \eqref{eqn: SGD_1}, we get
\begin{align}\nonumber
\dot{X_s}(t_{k}) +  O(s) = - \nabla f(X_s(t_{k})) + \sqrt{s} \frac{\dd W(t_{k})}{\dd t} + O\left( s^{\frac{3}{2}}\right).
\end{align}
Retaining both $O(1)$ and $O(\sqrt{s})$ terms but ignoring smaller terms, we obtain a \emph{learning-rate-dependent stochastic differential equation} (lr-dependent SDE) that approximates the discrete-time SGD algorithm:
\begin{align}\label{eqn: sgd_high_resolution_formally}
\dd X_s= - \nabla f(X_s)\dd t +  \sqrt{s}\dd W,
\end{align}
where the initial condition is the same value $x_0$ as its discrete counterpart. This SDE has been shown to be a valid approximating surrogate for SGD in earlier work~\cite{kushner2003stochastic,chaudhari2018stochastic}. As an indication of the generality of this formulation, we note that it can seamlessly take account of the mini-batch size $B$; in particular, the effective learning rate scales as $O(s/B)$ in the mini-batch setting (see more discussion in \cite{smith2017don}). Throughout this paper we focus on \eqref{eqn: sgd_high_resolution_formally} and regard $s$ alone as the effective learning rate.\footnote{Recognizing that the variance of $\xi_k$ is inversely proportional to the mini-batch size $B$, we assume that the noise term $\xi_k$ has variance $\sigma^2/B$. Under this assumption the resulting SDE reads $\dd X_s= - \nabla f(X_s)\dd t +  \sigma \sqrt{s/B}\dd W$. In light of this, the effective learning rate through incorporating the mini-batch size is $O(\sigma^2 s/B)$.}


\begin{figure}[h!]
\centering
\begin{minipage}[t]{0.45\linewidth}
\centering
\includegraphics[scale=0.16]{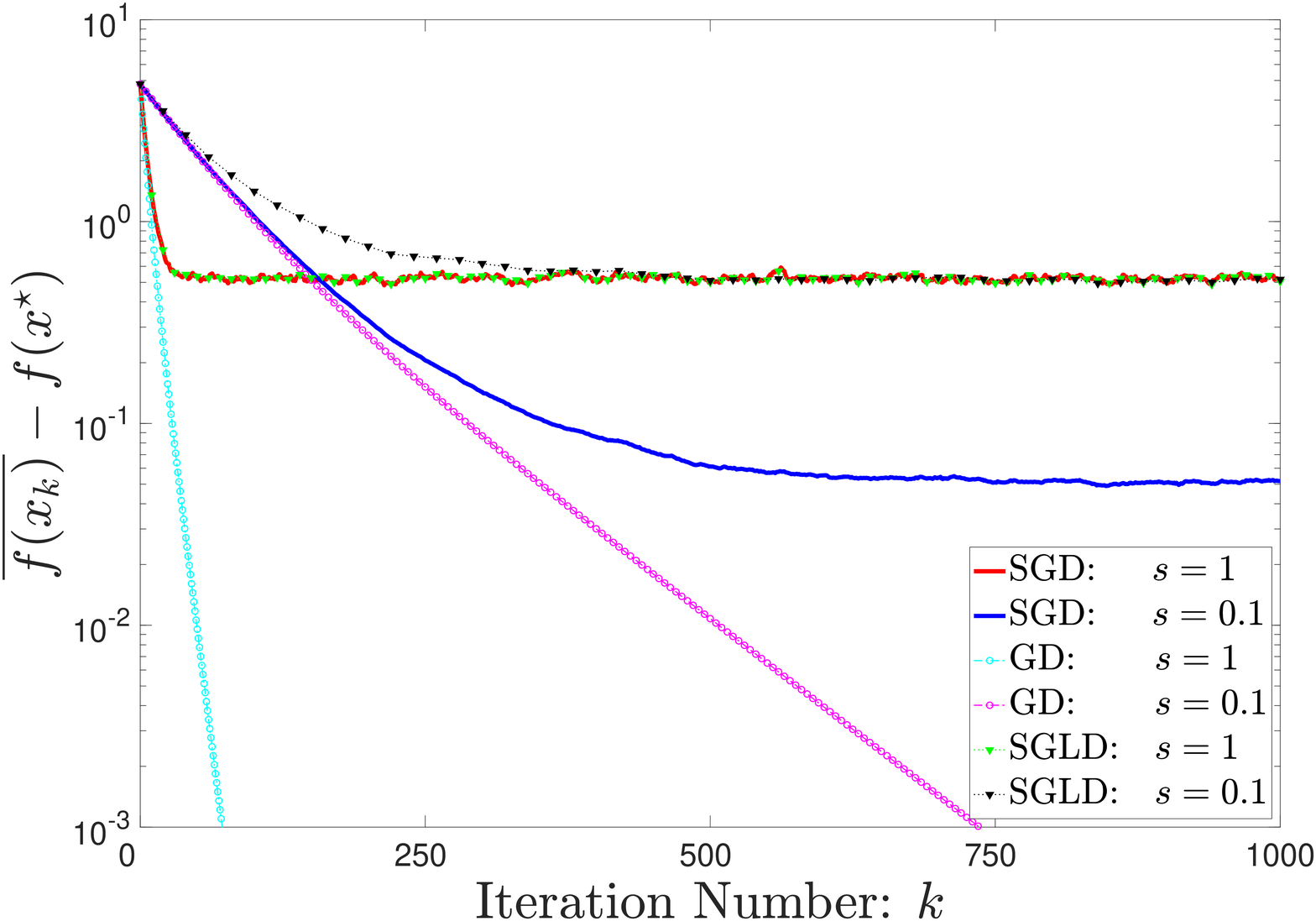}
\end{minipage}
\begin{minipage}[t]{0.45\linewidth}
\centering
\includegraphics[scale=0.16]{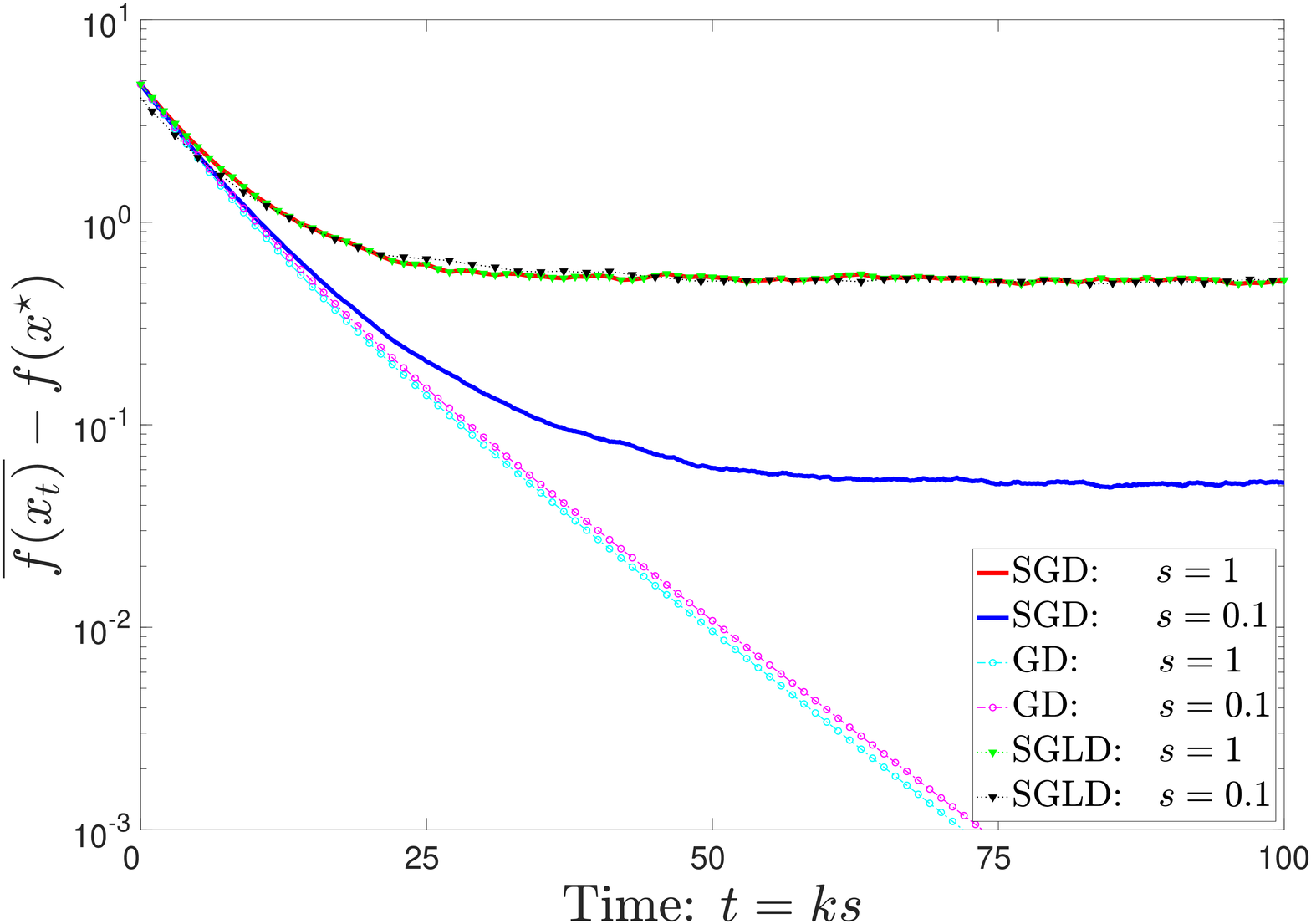}
\end{minipage}
\caption{\small Illustrative examples showing distinct behaviors of GD, SGD, and SGLD. The $y$-axis displays the optimization error $\overline{f(x_k)} - f(x^\star)$, where $f(x^\star)$ denotes the minimum value of the objective and in the case of SGD and SGLD $\overline{f(x_k)}$ denotes an average over 1000 replications. The objective function is $f(x_1,x_2) = 5 \times 10^{-2}x_1^{2} + 2.5 \times 10^{-2}x_2^{2}$, with an initial point $(8,8)$, and the noise $\xi_k$ in the gradient follows a standard normal distribution. Note that SGD with $s=1$ is identical to SGLD with $s = 1$. As shown in the right panel, taking time $t = ks$ as the $x$-axis, the learning rate has little to no impact on GD and SGLD in terms of optimization error.}
\label{fig:traj_compare}
\end{figure}


Intuitively, a larger learning rate $s$ gives rise to more stochasticity in the lr-dependent SDE~\eqref{eqn: sgd_high_resolution_formally}, and vice versa. Accordingly, the learning rate must have a substantial impact on the dynamics of SGD in its continuous-time formulation. In stark contrast, this parameter plays a fundamentally different role in gradient descent (GD) and stochastic gradient Langevin dynamics (SGLD) when one considers their limiting differential equations. In particular, consider GD:
\begin{align}\nonumber
x_{k + 1} = x_{k} - s \nabla f(x_{k}),
\end{align}
which can be modeled by the following ordinary differential equation (ODE):
\begin{equation}\nonumber
\dot X = - \nabla f(X),
\end{equation}
and the SGLD algorithm, which adds Gaussian noise $\xi_k$ to the GD iterates:
\begin{align}\nonumber
x_{k + 1} = x_{k} - s \nabla f(x_{k}) + \sqrt{s} \xi_k,
\end{align}
and its SDE model:
\begin{equation}\nonumber
\dd X = -\nabla f(X) \dd t + \dd W.
\end{equation}
These differential equations are derived in the same way as \eqref{eqn: sgd_high_resolution_formally}, namely by the Taylor expansion and retaining $O(1)$ and $O(\sqrt{s})$ terms.\footnote{The coefficients of the $O(\sqrt{s})$ terms turn out to be zero in both differential equations. See more discussion in Appendix~\ref{subsec: supplement-DE-algorithms} and particularly \Cref{fig:chart} therein.} While the SDE for modeling SGD sets the square root of the learning rate to be its diffusion coefficient, both the GD and SGLD counterparts are completely free of this parameter. This distinction between SGD and the other two methods is reflected in their different numerical performance as revealed in \Cref{fig:traj_compare}. The right plot of this figure shows that the behaviors of both GD and SGLD in the time $t = ks$  scale are almost invariant in terms of optimization error with respect to the learning rate. In striking contrast, the stationary optimization error of SGD decreases significantly as the learning rate decays. As a consequence of this distinction, GD and SGLD do not exhibit the phenomenon that is shown in~\Cref{fig: sgd-deep}.

\subsection{Overview of contributions}
\label{subsec: contribution}


The discussion thus far suggests that one may examine the effect of the learning rate in SGD using the lr-dependent SDE~\eqref{eqn: sgd_high_resolution_formally}.  In particular, this SDE distinguishes SGD from GD and SGLD.  Accordingly, in the current paper we study the lr-dependent SDE, and make the following contributions.


\begin{enumerate}
\item 
\noindent{\bf Linear convergence to stationarity.} We show that, for a large class of (nonconvex) objectives, the continuous-time formulation of SGD converges to its stationary distribution at a \textit{linear rate}.\footnote{Roughly speaking, stationarity refers to the distribution of $X_s(t)$ in the limit $t \rightarrow \infty$. See a more precise definition in \Cref{sec:main-results}.} In particular, we prove that the solution $X_s(t)$ to the lr-dependent SDE obeys
\begin{equation}\label{eq:intro_conv}
\E  f(X_s(t)) - f^\star  \le \epsilon(s) + C(s) \e^{-\lambda_s  t},
\end{equation}
where $f^\star$ denotes the global minimum of the objective function $f$, $\epsilon(s)$ denotes the risk at stationarity, and $C(s)$ depends on both the learning rate and the distribution of the initial $x_0$. Notably, we can show that $\epsilon(s)$ decreases monotonically to zero as $s \goto 0$. This bound can be carried over to the discrete case by a uniform approximation between SGD and the lr-dependent SDE~\eqref{eqn: sgd_high_resolution_formally}. Specifically, the term $C(s) \e^{-\lambda_s  t}$ becomes $C(s) \e^{-\lambda_s k s}$, showing that the convergence is linear as well in the discrete regime. This is consistent with the numerical evidence from \Cref{fig: sgd-deep} and \Cref{fig:traj_compare}. 

This convergence result sheds light on why SGD performs so well in many practical nonconvex problems. In particular, note that while GD can be trapped in a saddle point or a local minimum, SGD can efficiently escape saddle points, provided that the linear rate $\lambda_s$ is not too small (this is the case if $s$ is sufficiently large; see the second contribution). This superiority of SGD in the nonconvex setting must be attributed to the noise in the gradient and this implication is consistent with earlier work showing that stochasticity in gradients significantly accelerates the escape of saddle points for gradient-based methods~\cite{jin2017escape,lee2016gradient}.


\item \noindent{\bf Distinctions between convexity and nonconvexity.} The first contribution stops short of saying anything about how $\lambda_s$ depends on the learning rate $s$ and the \textit{geometry} of the objective $f$. Such an analysis is fundamental to an explanation of the differing effects of the learning rate in deep learning (nonconvex optimization) and convex optimization. In the current paper we show that if the objective $f$ is a nonconvex function and satisfies certain regularity conditions, we have:\footnote{We write $a_m \asymp b_m$ if there exist positive constants $c$ and $c'$ such that $c b_m \le a_m \le c' b_m$ for all $m$.}
\begin{equation}\label{eq:lambda_s_exp_intro}
\lambda_s \asymp \e^{-\frac{2 H_f}{s}},
\end{equation}
for a certain value $H_f > 0$ that only depends on $f$. This expression for $\lambda_s$ enables a concrete interpretation of the effect of learning rate in \Cref{fig: sgd-deep}. In brief, in the nonconvex setting, $\lambda_s$ decreases to zero quickly as the learning rate $s$ tends to zero. As a consequence, with a large learning rate $s$ at the beginning, SGD converges rapidly to stationarity and the rate becomes smaller as the learning rate decreases. 

For comparison, $\lambda_s$ is equal to $\mu$ if $f$ is $\mu$-strongly convex for $\mu > 0$, regardless of the learning rate $s$. As such, the convergence behaviors of SGD are necessarily different between convex and nonconvex objectives. To appreciate this implication, we refer to \Cref{fig: sgd_nonconvex_traj-gen}. Note that all four plots show that a larger learning rate gives rise to a larger stationary risk, as predicted by the monotonically increasing nature of $\epsilon$ with respect to $s$ in \eqref{eq:intro_conv}. The most salient part of this figure is, however, shown in the right panel. Specifically, the right panel, which uses time $t$ as the $x$-axis, shows that in the (strongly) convex setting the linear rate of the convergence is roughly the same between the two choices of learning rate, which is consistent with the result that $\lambda_s$ is constant in the case of a strongly convex objective. In the nonconvex case (bottom
right), however, the rate of convergence is more rapid with the larger learning rate $s = 0.1$, which is implied by the fact that $\lambda_{0.1} > \lambda_{0.05}$. In stark contrast, the two plots in the left panel, which use the number $k$ of iterations for the $x$-axis, are observed to have a larger rate of linear convergence with a larger learning rate. This is because in the $k$ scale the rate $\lambda_s s$ of linear convergence always increases as $s$ increases no matter if the objective is convex or nonconvex.

\end{enumerate}

The mathematical tools that we bring to bear in analyzing the lr-dependent SDE~\eqref{eqn: sgd_high_resolution_formally}
are as follows.  We establish the linear convergence via a Poincar\'e-type inequality that is due to Villani~\cite{villani2009hypocoercivity}. 
The asymptotic expression for the rate $\lambda_s$ is proved by making use of the spectral theory of the Schr\"odinger operator or, more concretely, the Witten-Laplacian associated with the Fokker--Planck--Smoluchowski equation that governs the lr-dependent SDE. We believe that these tools will prove to be useful in theoretical analyses of other stochastic approximation methods.

\begin{figure}[htb!]
\centering
\begin{minipage}[t]{0.45\linewidth}
\centering
\includegraphics[scale=0.15]{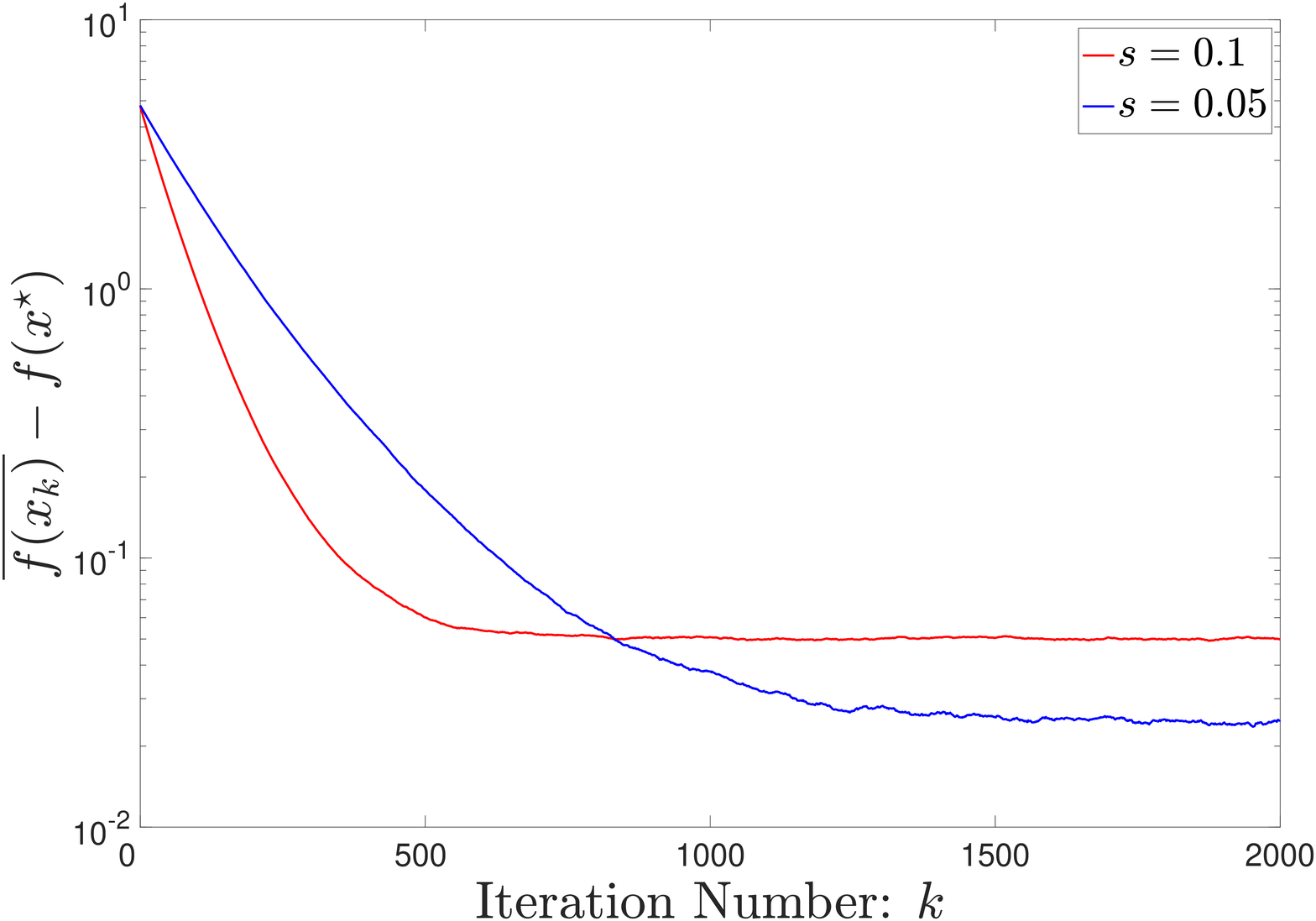}
\end{minipage}
\begin{minipage}[t]{0.45\linewidth}
\centering
\includegraphics[scale=0.15]{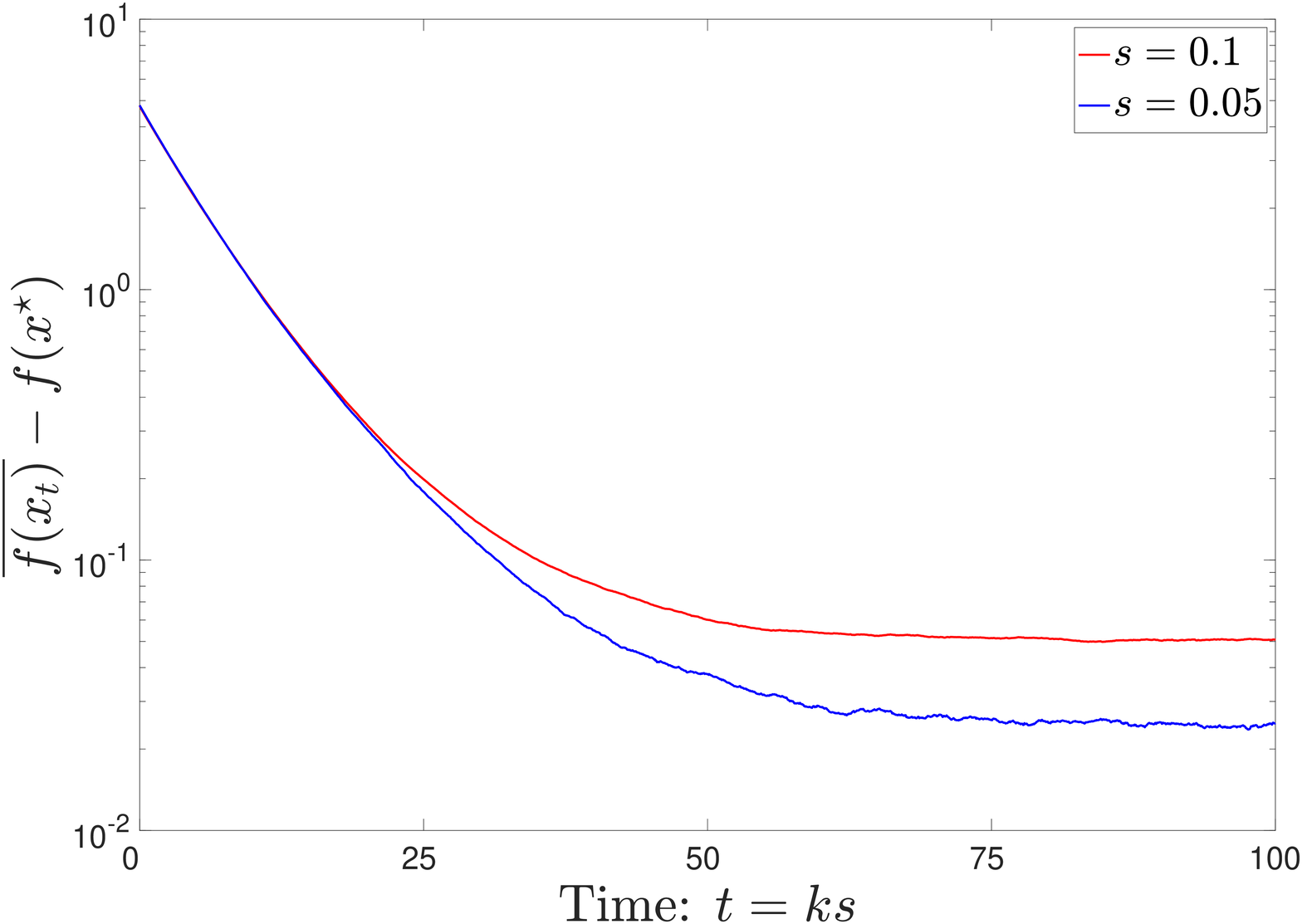}
\end{minipage}\\
\begin{minipage}[t]{0.45\linewidth}
\centering
\includegraphics[scale=0.15]{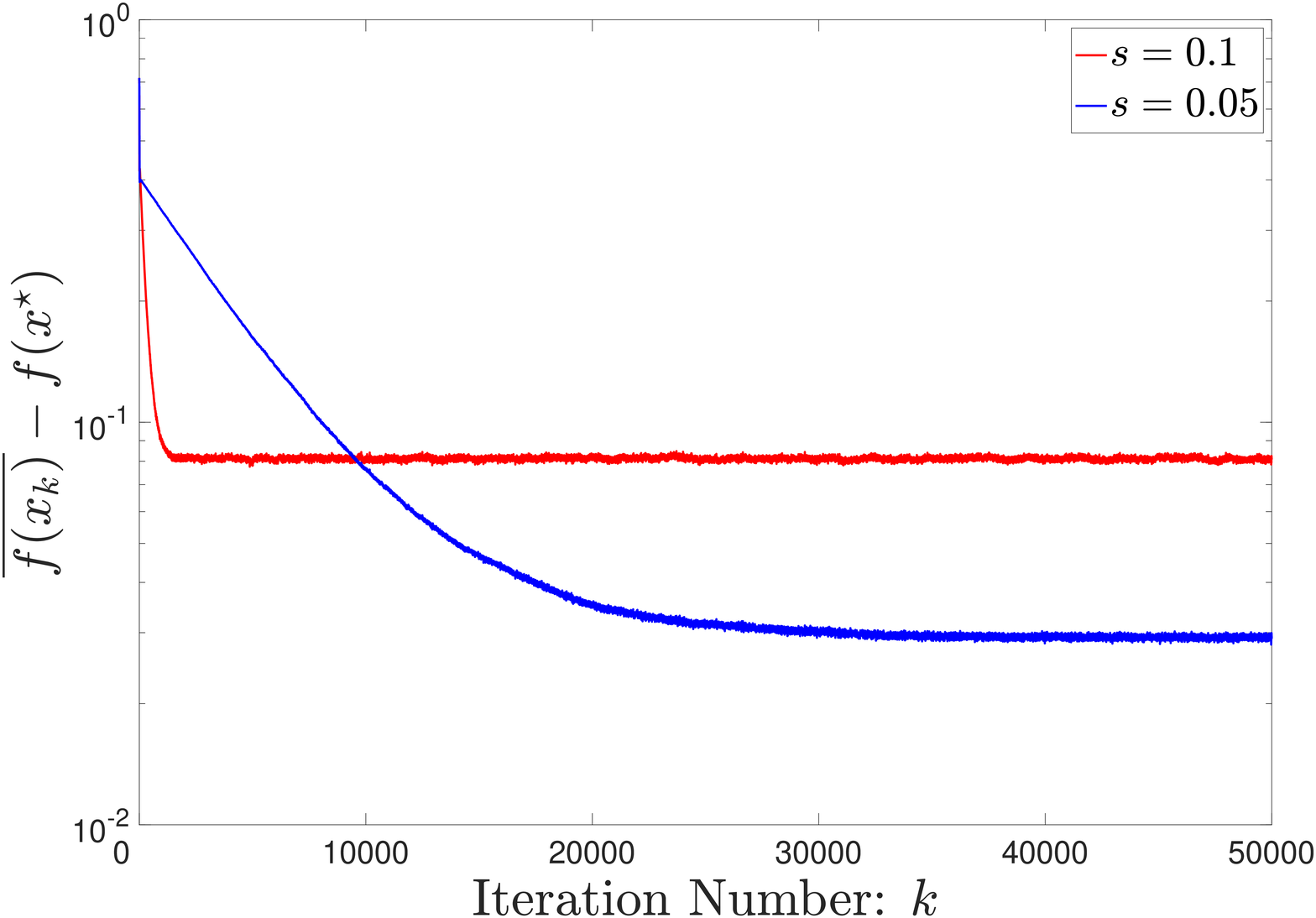}
\end{minipage}
\begin{minipage}[t]{0.45\linewidth}
\centering
\includegraphics[scale=0.15]{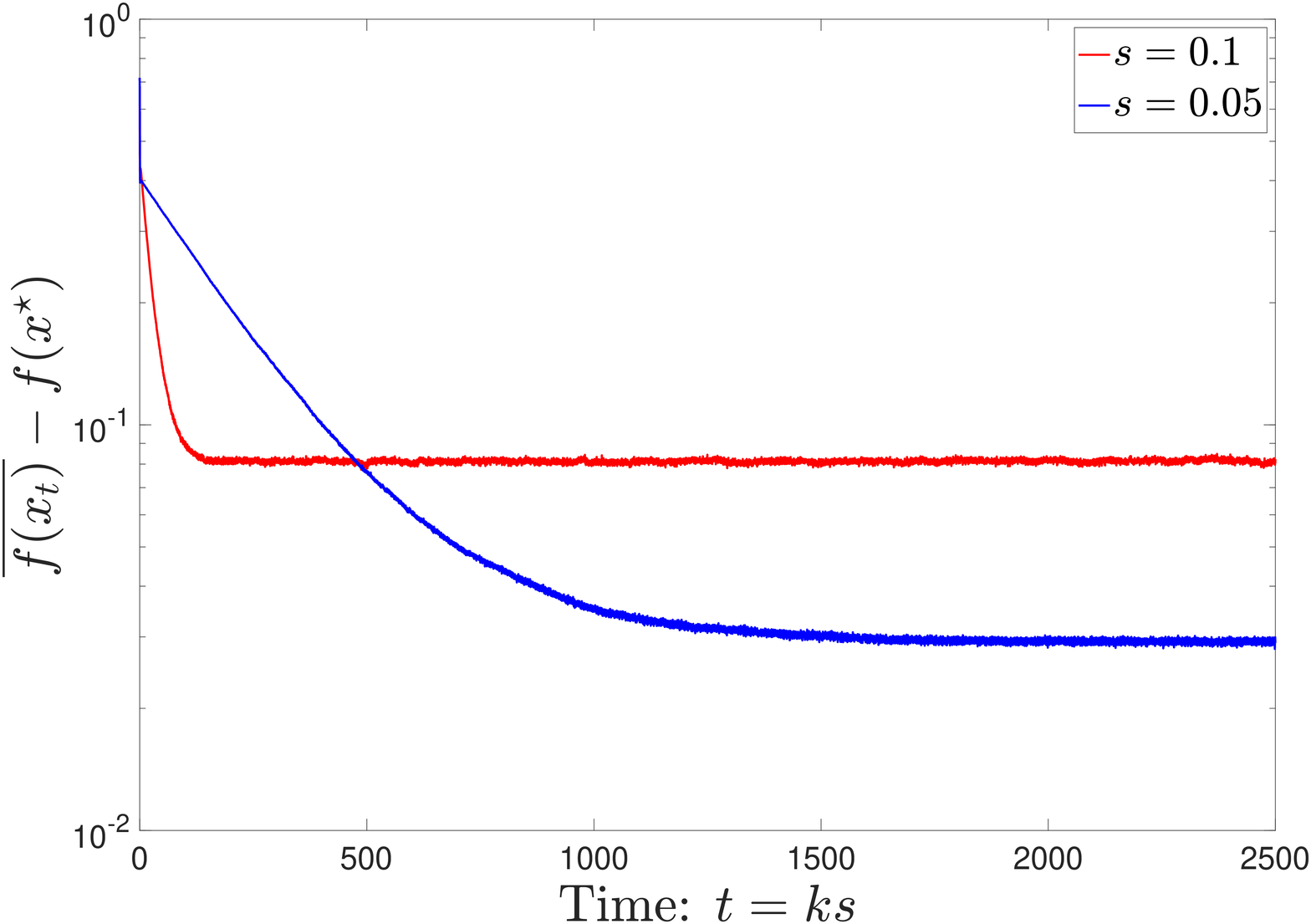}
\end{minipage}
\caption{\small The dependence of the optimization dynamics of SGD on the learning rate \textit{differs} between convex objectives and nonconvex objectives. The learning rate is set to either $s = 0.1$ or $s = 0.05$. The two top plots consider minimizing a convex function $f(x_1, x_2) = 5 \times 10^{-2}x_1^{2} + 2.5 \times 10^{-2}x_2^{2}$, with an initial point $(8,8)$, and the bottom plots consider minimizing a nonconvex function $f(x_1,x_2) = [(x_1+0.7)^{2} + 0.1](x_1 - 0.7)^{2} + (x_2 + 0.7)^{2} [(x_2 - 0.7)^{2} + 0.1 ]$, with an initial point $(-0.9, 0.9)$. The gradient noise is drawn from the standard normal distribution. All results are averaged over 10000 independent replications.} 
\label{fig: sgd_nonconvex_traj-gen}
\end{figure}

\subsection{Related work}
\label{subsec: related_works}


Recent years have witnessed a surge of research devoted to explanations of the effectiveness of deep neural networks, with a particular focus on understanding how the learning rate affects the behavior of stochastic optimization. In \cite{smith2017don,keskar2016large}, the authors uncovered various tradeoffs linking the learning rate and the mini-batch size. Moreover, \cite{jastrzkebski2017three,jastrzebski2018relation} related the learning rate to the generalization performance of neural networks in the early phase of training. This connection has been further strengthened by the demonstration that learning rate decay encourages SGD to learn features of increasing complexity~\cite{li2019towards,you2019learning}. From a topological perspective, \cite{davis2019stochastic} establish connections between the learning rate and the sharpness of local minima. Empirically, deep learning models work well with non-decaying schedules such as cyclical learning rates~\cite{loshchilov2016sgdr,smith2017cyclical} (see also the review~\cite{sun2019optimization}), with recent theoretical justification~\cite{li2019exponential}.


In a different direction, there has been a flurry of activity in using dynamical systems to analyze discrete optimization methods. For example, \cite{su2016differential,wibisono2016variational,shi2018understanding} derived ODEs for modeling Nesterov's accelerated gradient methods and used the ODEs to understand the acceleration phenomenon (see the review \cite{jordan2018dynamical}). In the stochastic setting, this approach has been recently pursued by various authors \cite{chaudhari2018deep,chaudhari2018stochastic,mandt2016variational,lee2016gradient,caluya2019gradient,li2017stochastic} to establish various properties of stochastic optimization. As a notable advantage, the continuous-time perspective allows us to work without assumptions on the boundedness of the domain and gradients, as opposed to older analyses of SGD (see, for example, \cite{hazan2008adaptive}).


Our work is motivated in part by the recent progress on Langevin dynamics, in particular in nonconvex settings~\cite{villani2009hypocoercivity, pavliotis2014stochastic, helffer2004quantitative,bovier2005metastability}. In relating to Langevin dynamics, $s$ in the lr-dependent SDE can be thought of as the temperature parameter and, under certain conditions, this SDE has a stationary distribution given by the Gibbs measure, which is proportional to $\exp(-2f/s)$. Of particular relevance to the present paper from this perspective is a line of work that has considered the optimization properties of SGLD and analyzed its convergence rates~\cite{hwang1980laplace,raginsky2017non,zhang2017hitting}. Compared to these results, however, the present paper is distinct in that our analysis provides a more concise and sharp delineation of the convergence rate based on geometric properties of the objective function.


\subsection{Organization}
\label{subsec: framework}

The remainder of the paper is structured as follows. In \Cref{sec: preliminaries} we introduce basic assumptions and techniques employed throughout this paper. Next, \Cref{sec:main-results} develops our main theorems. In \Cref{sec:expl-benef-init}, we use the results of \Cref{sec:main-results} to offer insights into the benefit of taking a larger initial learning rate followed by a sequence of decreasing learning rates in training neural networks. \Cref{sec: convergence-rate} formally proves the linear convergence \eqref{eq:intro_conv} and \Cref{sec: estimate-rate} further specifies the rate of convergence \eqref{eq:lambda_s_exp_intro}. Technical details of the proofs are deferred to the appendices. We conclude the paper in \Cref{sec: conclusion} with a few directions for future research.

\section{Preliminaries}
\label{sec: preliminaries}

Throughout this paper, we assume that the objective function $f$ is infinitely differentiable in $\R^d$; that is, $f \in C^{\infty}(\mathbb{R}^{d})$. We use $\|\cdot\|$ to denote the standard Euclidean norm.

\begin{defn}[Confining condition \cite{pavliotis2014stochastic,markowich1999trend}]\label{defn: confining}
A function $f$ is said to be \textit{confining} if it is infinitely differentiable and satisfies $\lim_{\|x\| \rightarrow +\infty} f(x) = +\infty$ and $\exp(-2f/s)$ is integrable for all $s > 0$:
\[
\int_{\R^d}  \e^{- \frac{2f(x)}{s}} \mathrm{d}x < +\infty.
\]
\end{defn}
This condition is quite mild and, indeed, it essentially requires that the function grows sufficiently rapidly when $x$ is far from the origin. This condition is met, for example, when an $\ell_2$ regularization term is added to the objective function $f$ or, equivalently, weight decay is employed in the SGD update.

Next, we need to show that the~lr-dependent SDE~\eqref{eqn: sgd_high_resolution_formally} with an arbitrary learning rate $s > 0$ admits a unique global solution under mild conditions on the objective $f$. We will show in \Cref{sec:discrete-perspective} that the solution to this SDE approximates the SGD iterates well. The  formal description is shown rigorously in Proposition~\ref{prop: approx}. Recall that the lr-dependent SDE~\eqref{eqn: sgd_high_resolution_formally} is
\[
\dd X_s= - \nabla f(X_s)\dd t +  \sqrt{s}\dd W,
\]
where the initial point $X_s(0)$ is distributed according to a probability density function $\rho$ in $\R^d$, independent of the standard Brownian motion $W$. It is well known that the probability density $\rho_s(t, \cdot)$ of $X_s(t)$ evolves according to the Fokker--Planck--Smoluchowski equation
\begin{equation}\label{eqn: Fokker-Planck}
\frac{\partial \rho_s}{\partial t}  =   \nabla \cdot \left(\rho_s \nabla f\right) + \frac{s}{2} \Delta \rho_s,
\end{equation}
with the boundary condition $\rho_s(0, \cdot) = \rho$. Here, $\Delta \equiv \nabla \cdot \nabla$ is the Laplacian. For completeness, in Appendix~\ref{subsec: fok-plk} we derive this Fokker--Planck--Smoluchowski equation from the lr-dependent SDE~\eqref{eqn: sgd_high_resolution_formally} by It\^o's formula. If the objective $f$ satisfies the confining condition, then this equation admits a unique invariant Gibbs distribution that takes the form
\begin{equation}\label{eqn: Gibbs}
\mu_{s} = \frac{1}{Z_s} \e^{- \frac{2 f}{s}}.
\end{equation}
The proof of uniqueness is shown in Appendix~\ref{subsec: proof-unique-steady}. The normalization factor is $Z_{s} = \int_{\mathbb{R}^{d}} \e^{- \frac{2f}{s}} \dd x$. Taking any initial probability density $\rho_s(0, \cdot) \equiv \rho$ in $L^2(\mu_s^{-1})$ (a measurable function $g$ is said to belong to $L^2(\mu_s^{-1})$ if $\|g\|_{\mu_s^{-1}}  := \left( \int_{\mathbb{R}^{d}} g^{2} \mu_s^{-1} \dd x \right)^{\frac{1}{2}} < + \infty$), we have the following guarantee:

\begin{lem}[Existence and uniqueness of the weak solution]\label{prop: unique-existence}
For any confining function $f$ and any initial probability density $\rho \in L^2(\mu_s^{-1})$, the lr-dependent SDE~\eqref{eqn: sgd_high_resolution_formally} admits a weak solution whose probability density in $C^{1}\left([0,+\infty), L^{2}(\mu_s^{-1}) \right)$ is the unique solution to the Fokker--Planck--Smoluchowski equation~\eqref{eqn: Fokker-Planck}. 
\end{lem}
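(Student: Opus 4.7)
The plan is to recast the Fokker--Planck--Smoluchowski equation as a self-adjoint evolution equation on a weighted $L^2$ space, solve it by spectral/semigroup theory, and then lift the resulting family of probability densities to a weak solution of the SDE via a martingale-problem argument. Existence and uniqueness in the PDE sense will immediately yield existence and uniqueness of the density of any weak solution, and the confining condition on $f$ will play two roles: guaranteeing essential self-adjointness of the generator and preventing explosion of the SDE.

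First, I would introduce the change of unknown $h_s(t,x) = \rho_s(t,x) / \mu_s(x)$. A short computation using $\nabla \mu_s = -\tfrac{2}{s}\nabla f \cdot \mu_s$ shows that \eqref{eqn: Fokker-Planck} is equivalent to the parabolic equation
\begin{equation*}
\partial_t h_s = -L_s h_s, \qquad L_s h := -\frac{s}{2}\,\mu_s^{-1}\nabla\cdot(\mu_s \nabla h),
\end{equation*}
with initial datum $h_s(0,\cdot) = \rho/\mu_s$. The operator $L_s$ is symmetric and non-negative on $L^2(\mu_s)$, since integration by parts gives $\langle L_s h, h\rangle_{\mu_s} = \tfrac{s}{2}\int \|\nabla h\|^2 \mu_s \,\dd x \ge 0$ for $h \in C_c^\infty(\R^d)$. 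Notice that the isometry $\rho \mapsto \rho/\mu_s$ maps $L^2(\mu_s^{-1})$ onto $L^2(\mu_s)$, so the hypothesis $\rho \in L^2(\mu_s^{-1})$ is precisely $h_s(0,\cdot) \in L^2(\mu_s)$, and a solution in $C^1([0,\infty), L^2(\mu_s))$ for $h_s$ transfers to the same regularity for $\rho_s$ in $L^2(\mu_s^{-1})$.

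Second, using the confining assumption (which makes $\mu_s$ a finite measure and rules out growth pathologies of $f$ at infinity), I would invoke the standard essential self-adjointness result for Witten-type Laplacians: $L_s$ restricted to $C_c^\infty(\R^d)$ admits a unique self-adjoint extension in $L^2(\mu_s)$, namely its Friedrichs extension associated with the Dirichlet form $\mathcal{E}_s(h) = \tfrac{s}{2}\int \|\nabla h\|^2 \mu_s\,\dd x$. The spectral theorem then produces a strongly continuous contraction semigroup $T_s(t) = e^{-t L_s}$ on $L^2(\mu_s)$. Setting $h_s(t,\cdot) = T_s(t)(\rho/\mu_s)$ and $\rho_s(t,\cdot) = \mu_s \cdot h_s(t,\cdot)$ yields a solution of \eqref{eqn: Fokker-Planck} in $C^0([0,\infty), L^2(\mu_s^{-1})) \cap C^1((0,\infty), L^2(\mu_s^{-1}))$; parabolic smoothing makes $\rho_s$ classical for $t>0$, and uniqueness in $L^2(\mu_s^{-1})$ follows from the energy identity $\tfrac{1}{2}\partial_t \|h_s\|_{\mu_s}^2 = -\mathcal{E}_s(h_s) \le 0$, which forces any $L^2$ solution starting from $0$ to remain $0$.

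Finally, I would construct the weak solution of the SDE from this density. Smoothness of $-\nabla f$ gives local-in-time strong existence and uniqueness of the SDE up to a possible explosion time; the confining condition provides $f$ itself (or $f + \mathrm{const}$) as a Lyapunov function for the generator $\tfrac{s}{2}\Delta - \nabla f\cdot \nabla$, so standard non-explosion criteria (e.g.\ Khasminskii's test) rule out blow-up. The law of $X_s(t)$ then solves the martingale problem associated with this generator, and by It\^o's formula its density satisfies \eqref{eqn: Fokker-Planck} weakly; uniqueness of that PDE solution, established in the previous step, identifies this density with the $\rho_s$ constructed above, proving the lemma. The main obstacle will be the essential self-adjointness step: unlike the compact-domain or globally Lipschitz-drift case, one has to exploit the confinement of $f$ carefully to rule out pathological self-adjoint extensions, and this is where the full strength of Definition~\ref{defn: confining} is needed rather than just integrability of $e^{-2f/s}$.
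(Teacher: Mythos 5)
Your proposal is sound and reaches the same conclusion, but it takes a genuinely different route from the paper. The paper does not work with $h_s=\rho_s/\mu_s$ and a semigroup on $L^{2}(\mu_s)$; instead it uses the ground-state transform $\psi_s=\rho_s/\sqrt{\mu_s}$, so that the Fokker--Planck--Smoluchowski equation becomes the Schr\"odinger equation \eqref{eq:schrodinger} with operator $-s\Delta+V_s$ on the unweighted space $L^{2}(\mathbb{R}^{d})$, and then it invokes the purely discrete spectrum of that operator (\Cref{thm: discrete-spectrum-schrodinger}) to expand the initial datum in the orthonormal eigenbasis $\sqrt{\mu_s}=\phi_{s,0},\phi_{s,1},\dots$ and write the solution explicitly as $\rho_s(t,\cdot)=\mu_s+\sum_{\ell\ge1}c_\ell \e^{-\zeta_{s,\ell}t}\phi_{s,\ell}\sqrt{\mu_s}$; existence and uniqueness both drop out of this series representation, and no SDE-side argument is made at all. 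Your Friedrichs-extension/contraction-semigroup construction plus the energy identity buys the same PDE existence and uniqueness without needing a discrete spectrum --- which is a real advantage, since discreteness of the spectrum requires $V_s\to+\infty$ (essentially the Villani condition), a hypothesis not literally contained in the confining condition under which the lemma is stated, so your route is arguably more faithful to the stated assumptions; the paper's route, in exchange, delivers the eigenvalue expansion that is reused later to identify $\lambda_s=\tfrac12\zeta_{s,1}$. You also go further than the paper by actually constructing the weak solution of the SDE (local strong solvability, a Lyapunov/Khasminskii non-explosion argument, and the martingale problem) and identifying its density with the PDE solution; the paper leaves this lifting implicit. One caveat on that extra step: with only the confining condition, taking $f$ itself as Lyapunov function gives $\mathscr{L}_s f=-\|\nabla f\|^{2}+\tfrac{s}{2}\Delta f$, which need not be dominated by $c(1+f)$ without some control on $\Delta f$ (e.g.\ the Villani condition or a gradient-growth bound), so that step needs either an additional hypothesis or a more careful choice of Lyapunov function --- a gap of the same nature as the paper's own silent reliance on $V_s\to+\infty$.
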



The proof of \Cref{prop: unique-existence} is shown in Appendix~\ref{subsec: proof_wellposedness}. For more information, \Cref{thm: converge} in \Cref{sec: convergence-rate} shows that the probability density $\rho_s(t, \cdot)$ converges to the Gibbs distribution as $t \goto \infty$. 

Finally, we need a condition that is due to Villani for the development of our main results in the next section.
\begin{defn}[Villani condition~\cite{villani2009hypocoercivity}]\label{defn: villani-condition}
A confining function $f$ is said to satisfy the Villani condition if
$
\|\nabla f(x)\|^{2}/s - \Delta f(x) \rightarrow + \infty
$
as $\|x\| \rightarrow +\infty$ for all $s > 0$.
\end{defn}

This condition amounts to saying that the gradient has a sufficiently large squared norm compared with the Laplacian of the function. Strictly speaking, some loss functions used for training neural networks might not satisfy this condition. However, the Villani condition does not look as stringent as it appears since the SGD iterates in the training process are bounded and this condition is essentially concerned with the function at infinity.

\section{Main Results}
\label{sec:main-results}

In this section, we state our main results. In brief, in \Cref{sec:linear-convergence} we show linear convergence to stationarity for SGD in its continuous formulation, the lr-dependent SDE. In \Cref{sec:rate-line-conv}, we derive a quantitative expression of the rate of linear convergence and study the difference in the behavior of SGD in the convex and nonconvex settings. This distinction is further elaborated in \Cref{sec:discrete-perspective} by carrying over the continuous-time convergence guarantees to the discrete case. Finally, \Cref{sec:one-dimens-example} offers an exposition of the theoretical results in the univariate case. Proofs of the results presented in this section are deferred to Section~\ref{sec: convergence-rate} and Section~\ref{sec: estimate-rate}.


\subsection{Linear convergence}
\label{sec:linear-convergence}

In this subsection we are concerned with the expected excess risk, $\E  f(X_s(t)) - f^\star$. Recall that $f^\star = \inf_x f(x)$. 

\begin{mainthm}\label{thm: continuous-qualitative}
Let $f$ satisfy both the confining condition and the Villani condition. Then there exists $\lambda_s > 0$ for any learning rate $s > 0$ such that the expected excess risk satisfies
\begin{equation}\label{eqn: continuous-qualitative}
\E  f(X_s(t)) - f^\star  \le \epsilon(s) + D(s) \e^{-\lambda_s  t},
\end{equation}
for all $t \ge 0$. Here $\epsilon(s) = \epsilon(s; f) \ge 0$ is a strictly increasing function of $s$ depending only on the objective function $f$, and $D(s) = D(s; f, \rho) \ge 0$ depends only on $s, f$, and the initial distribution $\rho$.

\end{mainthm}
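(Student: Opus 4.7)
The strategy is the standard spectral/$L^2$ approach to convergence of Fokker--Planck equations: exploit a Poincar\'e inequality for the Gibbs measure $\mu_s$ to show exponential decay of the relative density $\rho_s(t,\cdot)/\mu_s$ toward $1$ in $L^2(\mu_s)$, and then translate this into decay of $\mathbb{E} f(X_s(t))$ toward its stationary value $\int f \, d\mu_s$. The excess risk then naturally splits into a stationary ``bias'' $\epsilon(s) := \int f\, d\mu_s - f^\star$ and a transient part that decays exponentially.

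First, I would invoke a Poincar\'e inequality for $\mu_s$. Under the confining assumption, the Villani condition $\|\nabla f\|^2/s - \Delta f \to +\infty$ is precisely what guarantees that the associated Witten-Laplacian $-\Delta + (\|\nabla f\|^2/s - \Delta f)/s$ (equivalently, the generator of the semigroup acting on $L^2(\mu_s)$) has a strictly positive spectral gap. This yields a constant $C_P(s) > 0$ such that, for every smooth $h$ with $\int h\, d\mu_s = 1$,
\[
\int (h-1)^2 \, d\mu_s \;\le\; \frac{1}{C_P(s)} \int \|\nabla h\|^2 \, d\mu_s.
\]
Setting $h_s(t,\cdot) := \rho_s(t,\cdot)/\mu_s$ and rewriting \eqref{eqn: Fokker-Planck} as $\partial_t \rho_s = (s/2)\nabla\cdot(\mu_s \nabla h_s)$, a direct integration by parts gives
\[
\frac{\mathrm{d}}{\mathrm{d}t} \int (h_s - 1)^2 \, d\mu_s \;=\; -s \int \|\nabla h_s\|^2 \, d\mu_s \;\le\; -s\, C_P(s) \int (h_s - 1)^2 \, d\mu_s,
\]
so Gr\"onwall's inequality yields $\|h_s(t) - 1\|_{L^2(\mu_s)} \le e^{-\lambda_s t}\|h_s(0)-1\|_{L^2(\mu_s)}$ with $\lambda_s := s\, C_P(s)/2 > 0$. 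The initial norm is finite by the assumption $\rho \in L^2(\mu_s^{-1})$.

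To obtain the bound on the excess risk, I would decompose
\[
\mathbb{E} f(X_s(t)) - f^\star \;=\; \bigl(\mathbb{E} f(X_s(t)) - \textstyle\int f\, d\mu_s\bigr) + \bigl(\textstyle\int f\, d\mu_s - f^\star\bigr),
\]
and identify $\epsilon(s)$ with the second summand. Using $\int (h_s - 1)\, d\mu_s = 0$ (mass conservation), the first summand equals $\int (f - \int f \, d\mu_s)(h_s(t) - 1)\, d\mu_s$, and Cauchy--Schwarz gives
\[
\bigl|\mathbb{E} f(X_s(t)) - \textstyle\int f\, d\mu_s\bigr| \;\le\; \|f - \int f\, d\mu_s\|_{L^2(\mu_s)}\, \|h_s(t) - 1\|_{L^2(\mu_s)} \;\le\; D(s)\, e^{-\lambda_s t},
\]
with $D(s) := \|f - \int f\, d\mu_s\|_{L^2(\mu_s)}\, \|\rho/\mu_s - 1\|_{L^2(\mu_s)}$, depending only on $s$, $f$, and $\rho$. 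Finally, strict monotonicity of $\epsilon(s)$ follows by differentiating under the integral sign against $\mu_s \propto e^{-2f/s}$: a short computation identifies
\[
\frac{\mathrm{d}\epsilon}{\mathrm{d}s} \;=\; \frac{2}{s^2}\, \mathrm{Var}_{\mu_s}(f),
\]
which is strictly positive unless $f$ is $\mu_s$-a.s.\ constant (a trivial degenerate case that can be excluded).

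\textbf{Main obstacle.} The delicate ingredient is the Poincar\'e inequality for $\mu_s$ in the nonconvex regime: without log-concavity one cannot appeal to Bakry--\'Emery, so one must argue through the spectral theory of the Witten-Laplacian and use the divergent effective potential $\|\nabla f\|^2/s - \Delta f$ furnished by the Villani condition to rule out essential spectrum at zero. Everything else (integration by parts, Gr\"onwall, Cauchy--Schwarz, and the variance identity for $\epsilon'(s)$) is routine once the spectral gap is in hand; finiteness of $\|f - \int f\, d\mu_s\|_{L^2(\mu_s)}$ and integrability of the boundary terms also follow from the confining condition and the rapid Gaussian-type decay of $\mu_s$.
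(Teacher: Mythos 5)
Your proposal is correct and follows essentially the same route as the paper: the same decomposition into a stationary bias $\epsilon(s)$ and a transient term, exponential $L^2(\mu_s)$ decay of $h_s = \rho_s/\mu_s$ via the Villani-type Poincar\'e inequality plus Gr\"onwall, Cauchy--Schwarz to pass to $\E f(X_s(t))$, and strict monotonicity of $\epsilon(s)$ from the variance identity for $\dd\epsilon/\dd s$ (the paper centers with $f - f^\star$ rather than $f - \int f\,\dd\mu_s$ in the Cauchy--Schwarz step, an immaterial difference). You also correctly identify the Poincar\'e inequality under the confining and Villani conditions as the one nontrivial ingredient, which the paper imports as Villani's Theorem A.1 and proves in its appendix.
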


Briefly, the proof of this theorem is based on the following decomposition of the excess risk:
\[
\E  f(X_s(t)) - f^\star = \E  f(X_s(t)) - \E f(X_s(\infty))  + \E f(X_s(\infty))  - f^\star,
\]
where we informally use $\E f(X_s(\infty))$ to denote $\E_{X \sim \mu_s} f(X)$ in light of the fact that $X_s(t)$ converges weakly to $\mu_s$ as $t \goto +\infty$ (see \Cref{thm: converge}). The question is thus to quantify how fast $\E  f(X_s(t)) - \E f(X_s(\infty))$ vanishes to zero as $t \goto \infty$ and how the excess risk at stationarity $\E f(X_s(\infty))  - f^\star$ depends on the learning rate. The following two propositions address these two questions. Recall that $\rho \in L^2(\mu_s^{-1})$ is the probability density of the initial iterate in SGD. 

\begin{prop}\label{prop:convv}
Under the assumptions of \Cref{thm: continuous-qualitative}, there exists $\lambda_s > 0$ for any learning rate $s$ such that
\[
\left| \E  f(X_s(t)) - \E f(X_s(\infty))  \right| \le  C(s) \left\| \rho - \mu_{s} \right\|_{\mu_s^{-1}} \e^{-\lambda_s  t},
\]
for all $t \ge 0$, where the constant $C(s) > 0$ depends only on $s$ and $f$, and where 
\[
\left\| \rho - \mu_{s} \right\|_{\mu_s^{-1}} = \left(\int_{\mathbb{R}^{d}}  \left( \rho - \mu_s \right)^{2} \mu_s^{-1} \dd x \right)^{\frac{1}{2}}
\]
measures the gap between the initialization and the stationary distribution. 
\end{prop}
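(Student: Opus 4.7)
The plan is to bound $\E f(X_s(t)) - \E f(X_s(\infty))$ by viewing it as an integral of $f$ against the difference of densities and then separating it into a time-independent piece involving $f$ and a time-decaying piece involving $\rho_s - \mu_s$. Concretely, I would write
\[
\E f(X_s(t)) - \E f(X_s(\infty)) = \int_{\R^d} f \,(\rho_s(t,\cdot) - \mu_s)\,\dd x = \int_{\R^d} (f - \bar f_s)\,(\rho_s(t,\cdot) - \mu_s)\,\dd x,
\]
where $\bar f_s = \int f\,\mu_s \,\dd x$ and the second equality uses the mass conservation $\int(\rho_s - \mu_s)\dd x = 0$. Cauchy--Schwarz with weight $\mu_s$ then gives
\[
\bigl|\E f(X_s(t)) - \E f(X_s(\infty))\bigr| \le \Bigl(\int(f - \bar f_s)^2 \mu_s\,\dd x\Bigr)^{1/2} \cdot \|\rho_s(t,\cdot) - \mu_s\|_{\mu_s^{-1}}.
\]
The prefactor is finite because the confining condition makes $f^2$ grow slower than $\e^{2f/s}$ at infinity, so I would set $C(s) := \sqrt{\mathrm{Var}_{\mu_s}(f)}$, a quantity depending only on $s$ and $f$.

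The main task is then to establish exponential decay of $\|\rho_s(t,\cdot) - \mu_s\|_{\mu_s^{-1}}$. I would pass to the density ratio $h_s := \rho_s/\mu_s$, noting that the Fokker--Planck--Smoluchowski equation~\eqref{eqn: Fokker-Planck} is equivalent to $\partial_t h_s = L_s h_s$ for the generator $L_s = -\nabla f\cdot\nabla + (s/2)\Delta$, which is self-adjoint on $L^2(\mu_s)$ with Dirichlet form $\tfrac{s}{2}\int|\nabla g|^2 \mu_s\,\dd x$. Differentiating in time and integrating by parts yields the energy identity
\[
\frac{\dd}{\dd t}\int (h_s - 1)^2 \mu_s\,\dd x = -s \int |\nabla h_s|^2 \mu_s\,\dd x.
\]
Since mass conservation forces $\int h_s\,\mu_s\,\dd x = 1$, a Poincar\'e inequality for $\mu_s$ with constant $\kappa_s > 0$ gives $\int(h_s - 1)^2\mu_s\,\dd x \le \kappa_s^{-1}\int|\nabla h_s|^2\mu_s\,\dd x$, hence with $\lambda_s := s\kappa_s/2$,
\[
\frac{\dd}{\dd t}\int(h_s - 1)^2 \mu_s\,\dd x \le -2\lambda_s \int(h_s - 1)^2 \mu_s\,\dd x.
\]
Gr\"onwall's inequality then yields $\|\rho_s(t,\cdot) - \mu_s\|_{\mu_s^{-1}} \le \|\rho - \mu_s\|_{\mu_s^{-1}}\e^{-\lambda_s t}$, and combining this with the Cauchy--Schwarz bound completes the proof.

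The main obstacle is justifying the Poincar\'e inequality for $\mu_s$, since $f$ is nonconvex and no log-concavity is available. This is exactly where the Villani condition is used: the standard conjugation $g \mapsto \mu_s^{-1/2} g$ turns $-L_s$ into a Witten--Schr\"odinger operator whose potential is essentially $\tfrac{1}{4}\|\nabla f\|^2/s - \tfrac{1}{2}\Delta f$ up to a multiplicative constant, and the hypothesis $\|\nabla f\|^2/s - \Delta f \goto +\infty$ as $\|x\|\goto\infty$ is the classical Persson-type criterion ensuring that this operator has compact resolvent and therefore a strictly positive spectral gap. That spectral gap is precisely $\kappa_s$, so $\lambda_s > 0$. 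Quantifying $\kappa_s$ in terms of the geometry of $f$ is delicate and is postponed to \Cref{sec: estimate-rate}, where the sharp estimate $\lambda_s \asymp \e^{-2H_f/s}$ is derived; for the present qualitative claim, mere positivity is enough.
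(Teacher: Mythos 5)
Your proposal is correct and takes essentially the same approach as the paper: center $f$ (the paper uses $f-f^\star$, you use $f-\bar f_s$, an immaterial difference), apply Cauchy--Schwarz in the $\mu_s^{-1}$-weighted norm, then obtain exponential $L^2(\mu_s^{-1})$ decay via the ratio $h_s=\rho_s/\mu_s$, the energy identity for the self-adjoint generator, a Poincar\'e inequality for $\mu_s$ under the Villani condition, and Gr\"onwall. Even your justification of the spectral gap through the Witten--Schr\"odinger conjugation and discreteness of the spectrum is the mechanism the paper itself uses (Lemma~\ref{thm: villani-poincare-inq} and the discussion in Section~\ref{subsubsec: schrodinger}), with the simplicity of the zero eigenvalue (uniqueness of the Gibbs distribution) implicitly needed in both treatments.
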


Loosely speaking, it takes $O(1/\lambda_s)$ time to converge to stationarity. In relating to \Cref{thm: continuous-qualitative}, $D(s)$ can be set to $C(s) \left\| \rho - \mu_{s} \right\|_{\mu_s^{-1}}$. Notably, the proof of \Cref{prop:convv} shall reveal that $C(s)$ increases as $s$ increases.

Turning to the analysis of the second term, $\E f(X_s(\infty))  - f^\star$, we write henceforth $\epsilon(s) := \E f(X_s(\infty))  - f^\star$.
\begin{prop}\label{thm: compare_step}
Under the assumptions of \Cref{thm: continuous-qualitative}, the excess risk at stationarity, $\epsilon(s)$, is a strictly increasing function of $s$. Moreover, for any $S > 0$, there exists a constant $A$ that depends only on $S$ and $f$ and satisfies
\[
\epsilon(s) \equiv \E f(X_s(\infty))  - f^\star \le A s,
\]
for any learning rate $0 < s \le S$.
\end{prop}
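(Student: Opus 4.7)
The plan is to work directly with the stationary density $\mu_s \propto \e^{-2f/s}$, since the convergence established in \Cref{thm: converge} gives $\E f(X_s(\infty)) = \int f\, \dd\mu_s$. Throughout, I would shift $f$ so that $f^\star = 0$; this leaves $\mu_s$ and $\epsilon(s)$ unchanged.

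\textbf{Monotonicity via a variance identity.} I would reparametrize by the inverse temperature $\beta = 2/s$ and set $Z(\beta) = \int \e^{-\beta f}\, \dd x$ (finite for every $\beta > 0$ by the confining condition). The standard statistical-mechanics identities read
\[
\epsilon(s) = -\frac{\dd}{\dd \beta} \log Z(\beta), \qquad \frac{\dd \epsilon}{\dd \beta} = -\mathrm{Var}_{\mu_s}(f).
\]
Because $f \in C^\infty$ is confining and therefore nonconstant, $\mathrm{Var}_{\mu_s}(f) > 0$, so $\epsilon$ is strictly decreasing in $\beta$ and hence strictly increasing in $s = 2/\beta$.

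\textbf{Upper bound via layer cake.} Using $f(x) = \int_0^\infty \mathbf{1}\{f(x) > u\}\, \dd u$ together with Fubini,
\[
\int f\, \e^{-2f/s}\, \dd x = \int_0^\infty \int_{\{f > u\}} \e^{-2f/s}\, \dd x\, \dd u.
\]
On $\{f > u\}$ I split $\e^{-2f/s} = \e^{-f/s}\e^{-f/s} \le \e^{-u/s}\e^{-f/s}$. Integrating first in $x$ and then in $u$ yields
\[
\int f\, \e^{-2f/s}\, \dd x \;\le\; \Bigl(\int_0^\infty \e^{-u/s}\, \dd u\Bigr)\int \e^{-f/s}\,\dd x \;=\; s\, Z_{2s},
\]
where $Z_r := \int \e^{-2f/r}\,\dd x$. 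Dividing by $Z_s$ gives the key reduction
\[
\epsilon(s) \;\le\; s\cdot R(s), \qquad R(s) := \frac{Z_{2s}}{Z_s}.
\]

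\textbf{Uniform control of $R(s)$ on $(0,S]$.} It suffices to bound $R$ on $(0,S]$. By dominated convergence (dominated on $[\eta,S]$ by $\e^{-2f/S}$, which is integrable), $s\mapsto Z_s$ is continuous and strictly positive on $(0,\infty)$, so $R$ is continuous and bounded on any compact $[\eta,S]$. The delicate regime is $s\goto 0^+$: here I would isolate a neighborhood of a global minimizer $x^\star$ (which exists because $f$ is confining and continuous), Taylor-expand $f$ at $x^\star$, and use the rescaling $y = (x - x^\star)/\sqrt{s}$ to obtain matching Laplace-type bounds $c\, s^{d/2} \le Z_s \le C\, s^{d/2}$ for all sufficiently small $s$ (nondegenerate Hessian), with the tail $\{\|x-x^\star\| > \delta\}$ contributing only $O(\e^{-c'/s})$ because $f$ is bounded away from $0$ there. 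Consequently $R(s) \to 2^{d/2}$ as $s\goto 0^+$, and combining with the compact-interval bound yields $A := \sup_{s\in(0,S]} R(s) < \infty$, giving $\epsilon(s) \le A s$ as desired.

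\textbf{Main obstacle.} The conceptually easy part is the identity-based monotonicity and the layer-cake inequality; the real work is in the uniform Laplace-type bound as $s \goto 0^+$. One needs enough regularity at a global minimizer (a nondegenerate Hessian, or at worst a finite-order degeneracy) to match upper and lower bounds on $Z_s$ up to constants. Pathological degenerate or flat minima would weaken the scaling from $s^{d/2}$ to $s^\alpha$ for some $\alpha > 0$, but since the ratio $Z_{2s}/Z_s$ then tends to $2^\alpha$, the conclusion $\epsilon(s) = O(s)$ still holds; ruling out worse behavior uses that $f \in C^\infty$ and that the minimizer set is compact by the confining assumption.
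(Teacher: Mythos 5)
Your monotonicity argument is essentially the paper's own: the paper differentiates $\epsilon(s)$ and identifies the derivative as a nonnegative quantity via Cauchy--Schwarz, which is exactly your variance identity in the variable $\beta = 2/s$, with strictness following from $f$ being nonconstant. The layer-cake inequality $\epsilon(s) \le s\, Z_{2s}/Z_s$ is also correct. So the first half of your proposal is fine and matches the paper; the divergence, and the problem, is in how the factor multiplying $s$ is controlled.

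The gap is the claim that $\sup_{0<s\le S} Z_{2s}/Z_s < \infty$. \Cref{thm: compare_step} is stated under only the confining and Villani conditions; the Morse/nondegeneracy hypothesis enters only in \Cref{thm: continuous-quantative}, so you are not entitled to a nondegenerate Hessian at a global minimizer, and the two-sided Laplace bounds $c\,s^{d/2} \le Z_s \le C\,s^{d/2}$ are not available. Your fallback remark --- that a finite-order degeneracy merely changes the exponent and that ``ruling out worse behavior uses $f \in C^\infty$ and compactness of the minimizer set'' --- is precisely the unproved step. A $C^\infty$ minimum can be infinitely flat, and smoothness plus compactness give no doubling-type control of the sublevel-set volumes $F(t) = |\{f - f^\star \le t\}|$ near $t=0$. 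Indeed, by an energy--entropy mechanism one can arrange a smooth confining, Villani function with a multi-scale ``shelf'' structure accumulating at the minimum: a nearly flat region at height $h_n \to 0$ whose width exceeds the volume of $\{f - f^\star < h_n\}$ by a factor like $\e^{\alpha h_n/s_n}$ with $1<\alpha<2$ along scales $s_n \to 0$. Such a shelf inflates $Z_{2s_n} = \int \e^{-(f-f^\star)/s_n}\dd x$ without comparably inflating $Z_{s_n}$, and the ratio $Z_{2s_n}/Z_{s_n}$ blows up along the sequence. So the bound you need is not a routine consequence of the stated hypotheses; it genuinely requires quantitative nondegeneracy at the minimizers (Morse, or a polynomial-volume/doubling condition on sublevel sets), which is exactly what your Laplace step silently assumes. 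The paper avoids this reduction altogether: it shows $\epsilon(0)=0$ by Fatou, writes $\dd\epsilon/\dd s$ as $\frac{2}{s^2}$ times a variance, and invokes a separate lemma (\Cref{lem: deriv_epsilon=0}) asserting that this derivative tends to $0$ as $s \to 0^+$, hence is bounded by some $A=A_S$ on $(0,S]$, so that $\epsilon(s) \le As$ follows by integrating the derivative. If you are willing to add the Morse assumption (as in the later results), your route does go through and is a clean alternative; as a proof of the proposition as stated, the uniform control of $Z_{2s}/Z_s$ is a genuine missing piece.
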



The two propositions are proved in \Cref{sec: convergence-rate}. The proof of \Cref{thm: continuous-qualitative} is a direct consequence of \Cref{prop:convv} and \Cref{thm: compare_step}. More precisely, the two propositions taken together give
\begin{equation}\label{eq:as_strong}
\E  f(X_s(t)) - f^\star  \le O(s) + C(s) \e^{-\lambda_s  t},
\end{equation}
for a bounded learning rate $s$.



Taken together, these results offer insights into the phenomena observed in \Cref{fig: sgd-deep}. In particular, \Cref{prop:convv} states that, from the continuous-time perspective, the risk of SGD with a constant learning rate applied to a (nonconvex) objective function converges to stationarity at a \textit{linear} rate. Moreover, \Cref{thm: compare_step} demonstrates that the excess risk at stationarity decreases as the learning rate $s$ tends to zero. This is in agreement with the numerical experiments illustrated in Figures~\ref{fig: sgd-deep}, \ref{fig:traj_compare}, and \ref{fig: sgd_nonconvex_traj-gen}. For comparison, this property is not observed in GD and SGLD. 



The following result gives the iteration complexity of SGD in its continuous-time formulation.
\begin{coro}\label{coro:iter_epsi}
Under the assumptions of \Cref{thm: compare_step}, for any $\epsilon > 0$, if the learning rate $s \le \min\{ \epsilon/(2 A), S\}$ and $t \ge \frac1{\lambda_s} \log \frac{2C(s) \left\| \rho - \mu_{s} \right\|_{\mu_s^{-1}} }{\epsilon}$, then
\[
\E  f(X_s(t)) - f^\star \le \epsilon.
\]
\end{coro}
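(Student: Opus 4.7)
The corollary is a direct consequence of the quantitative bound \eqref{eq:as_strong} obtained by combining \Cref{prop:convv} and \Cref{thm: compare_step}, so the proof is essentially a matter of splitting the error budget $\epsilon$ between the stationary-risk term and the transient term, and then inverting the exponential decay to read off the required time.

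The plan is as follows. First, I invoke \Cref{thm: continuous-qualitative} with $D(s) = C(s)\|\rho - \mu_s\|_{\mu_s^{-1}}$, as identified in the remark following \Cref{prop:convv}, to obtain
\[
\E f(X_s(t)) - f^\star \;\le\; \epsilon(s) + C(s)\|\rho-\mu_s\|_{\mu_s^{-1}} \, \e^{-\lambda_s t}
\]
for all $t \ge 0$. Next, I bound the two terms separately by $\epsilon/2$. For the stationary-risk term, I apply \Cref{thm: compare_step}: since $s \le S$, we have $\epsilon(s) \le A s$, and since $s \le \epsilon/(2A)$ by hypothesis, this yields $\epsilon(s) \le \epsilon/2$.

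For the transient term, I solve the inequality $C(s)\|\rho-\mu_s\|_{\mu_s^{-1}} \, \e^{-\lambda_s t} \le \epsilon/2$ for $t$, which is equivalent to
\[
t \;\ge\; \frac{1}{\lambda_s}\log\frac{2C(s)\|\rho-\mu_s\|_{\mu_s^{-1}}}{\epsilon},
\]
exactly the hypothesis on $t$. Adding the two half-$\epsilon$ bounds gives the desired conclusion $\E f(X_s(t)) - f^\star \le \epsilon$.

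There is essentially no technical obstacle here, since all the heavy lifting has already been done in \Cref{prop:convv} (linear convergence to stationarity with prefactor $C(s)\|\rho-\mu_s\|_{\mu_s^{-1}}$) and \Cref{thm: compare_step} (linear-in-$s$ control of the stationary excess risk for bounded $s$). The only mild subtlety is checking that the logarithm is well-defined, i.e., that $2C(s)\|\rho-\mu_s\|_{\mu_s^{-1}}/\epsilon > 0$; this follows from $C(s) > 0$ in \Cref{prop:convv} and the fact that if $\rho = \mu_s$ almost everywhere then the transient term vanishes identically and the bound is trivial (so one may assume $\|\rho-\mu_s\|_{\mu_s^{-1}} > 0$ without loss of generality, or interpret the logarithm as $-\infty$ and the lower bound on $t$ as vacuous). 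No additional probabilistic or PDE machinery is needed beyond what has already been established.
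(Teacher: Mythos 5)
Your proof is correct and follows exactly the route the paper intends: combine \Cref{prop:convv} (giving $D(s) = C(s)\|\rho-\mu_s\|_{\mu_s^{-1}}$) with \Cref{thm: compare_step}, bound each of the two resulting terms by $\epsilon/2$ using the hypotheses on $s$ and $t$, and add. The paper states the corollary without a written-out proof precisely because it is this immediate budget-splitting argument, and your handling of the degenerate case $\rho=\mu_s$ is a harmless extra check.
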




\subsection{The rate of linear convergence}
\label{sec:rate-line-conv}


We now turn to the key issue of understanding how the linear rate $\lambda_s$ depends on the learning rate. In this subsection, we show that for certain objective functions, $\lambda_s$ admits a simple expression that allows us to interpret how the convergence rate depends on the learning rate. 

We begin by considering a strongly convex function. Recall the definition of strong convexity: for $\mu > 0$, a function $f$ is $\mu$-strongly convex if 
\[
f(y) \ge f(x) + \langle \nabla f(x), y - x \rangle + \frac{\mu}{2} \|y - x\|^2,
\] 
for all $x, y$. Equivalently, $f$ is $\mu$-strong convex if all eigenvalues of its Hessian $\nabla^2 f(x)$ are greater than or equal to $\mu$ for all $x$ (note that here $f$ is assumed to be infinitely differentiable). As is clear, a strongly convex function satisfies the confining condition. In \Cref{subsec: proof-mu-strong}, we prove the following proposition by making use of a Poincar\'e-type inequality, the Bakry--Emery theorem~\cite{bakry2013analysis}.\footnote{In fact, we can obtain a tighter log-Sobolev inequality for convergence of the probability densities in $L^1(\mathbb{R}^d)$, as is shown in \Cref{sbsec: l1-space-density}.}


\begin{prop}\label{prop:lambda_str_mu}
In addition to the assumptions of \Cref{thm: continuous-qualitative}, assume that the objective $f$ is a $\mu$-strongly convex function. Then, $\lambda_{s}$ in~\eqref{eqn: continuous-qualitative} satisfies $\lambda_s = \mu$.
\end{prop}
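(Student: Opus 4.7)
The strategy is to apply the Bakry--Emery criterion to the Gibbs measure $\mu_s \propto \e^{-2f/s}$ and then combine the resulting Poincar\'e inequality with the standard entropy dissipation identity for the Fokker--Planck--Smoluchowski equation~\eqref{eqn: Fokker-Planck}. Writing $\mu_s = \e^{-V}/Z_s$ with $V = 2f/s$, the Hessian satisfies $\nabla^2 V = (2/s)\nabla^2 f \succeq (2\mu/s) I$ by $\mu$-strong convexity of $f$. Bakry--Emery then yields the Poincar\'e inequality
\[
\mathrm{Var}_{\mu_s}(g) \le \frac{s}{2\mu} \int_{\R^d} \|\nabla g\|^2\, \dd\mu_s
\]
for every sufficiently regular $g$. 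Notice that the Bakry--Emery constant $2\mu/s$ conspires with the diffusion coefficient $s$ so that the $s$-dependence will cancel in the final rate---this is the crux of the learning-rate-invariance that the proposition asserts.

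The next step is the dissipation identity. Set $h(t,x) := \rho_s(t,x)/\mu_s(x)$, which belongs to $L^2(\mu_s)$ since $\rho \in L^2(\mu_s^{-1})$. A direct differentiation of $\int h^2 \dd\mu_s$ using~\eqref{eqn: Fokker-Planck} and integration by parts (with boundary terms at infinity vanishing thanks to the confining condition) gives
\[
\frac{\dd}{\dd t}\int_{\R^d} h^2 \dd\mu_s = -s \int_{\R^d} \|\nabla h\|^2 \dd\mu_s.
\]
Because $\int h \dd\mu_s = \int \rho_s \dd x = 1$, one has $\int h^2 \dd\mu_s - 1 = \mathrm{Var}_{\mu_s}(h) = \|\rho_s(t,\cdot) - \mu_s\|_{\mu_s^{-1}}^2$. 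Plugging the Poincar\'e inequality into the dissipation identity and invoking Gr\"onwall's lemma yields
\[
\|\rho_s(t,\cdot) - \mu_s\|_{\mu_s^{-1}}^2 \le \e^{-2\mu t}\|\rho - \mu_s\|_{\mu_s^{-1}}^2,
\]
so the density converges to stationarity in $L^2(\mu_s^{-1})$ at exponential rate $\mu$, uniformly in $s$.

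To transfer this density-level bound to the expected excess risk in \Cref{thm: continuous-qualitative}, I would reuse the Cauchy--Schwarz argument underlying \Cref{prop:convv}:
\[
\bigl|\E f(X_s(t)) - \E f(X_s(\infty))\bigr|
= \left| \int_{\R^d} \bigl(f - \E_{\mu_s}f\bigr)\bigl(\rho_s(t,\cdot) - \mu_s\bigr)\dd x \right|
\le \sqrt{\mathrm{Var}_{\mu_s}(f)}\, \|\rho_s(t,\cdot) - \mu_s\|_{\mu_s^{-1}},
\]
where the first equality uses $\int (\rho_s(t,\cdot) - \mu_s)\dd x = 0$. Combining with the exponential decay above gives the bound in~\eqref{eqn: continuous-qualitative} with rate $\lambda_s = \mu$, which is the claim.

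\textbf{Main obstacle.} The bulk of the work is the Bakry--Emery invocation and the cancellation of $s$ between the Poincar\'e constant and the diffusion coefficient; both are short once set up correctly. The most delicate piece is justifying the dissipation identity rigorously for initial data $\rho \in L^2(\mu_s^{-1})$---i.e., verifying that the boundary terms produced by integration by parts actually vanish and that $h(t,\cdot)$ enjoys enough regularity (in $x$) to apply Bakry--Emery at each time $t>0$. These are standard consequences of the confining condition and the smoothing properties of the Fokker--Planck semigroup, but they are the steps that require the most care; the remainder of the argument is algebraic.
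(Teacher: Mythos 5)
Your proposal is correct and follows essentially the same route as the paper: invoke the Bakry--Emery theorem to get the Poincar\'e inequality with constant $\frac{s}{2\mu}$ for $\dd\mu_s \propto \e^{-2f/s}\dd x$, feed it into the dissipation identity~\eqref{eqn: converge-FK} and apply Gr\"onwall to obtain $\|\rho_s(t,\cdot)-\mu_s\|_{\mu_s^{-1}} \le \e^{-\mu t}\|\rho-\mu_s\|_{\mu_s^{-1}}$, and then transfer to the excess risk by the same Cauchy--Schwarz argument as in \Cref{prop:convv}. The only cosmetic difference is that you center $f$ at its mean under $\mu_s$ rather than at $f^\star$, and that the paper additionally supplies a self-contained $\Gamma_2$-calculus proof of Bakry--Emery in the appendix, which you cite as a standard criterion.
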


We turn to the more challenging setting where $f$ is \textit{nonconvex}. Let us refer to the objective $f$ as a \textit{Morse function} if its Hessian has full rank at any critical point $x$ (that is, $\nabla f(x) = 0$).\footnote{See \Cref{sec:basics-morse-theory} for a discussion of Morse functions. Note that (infinitely differentiable) strongly convex functions are Morse functions.}


\begin{mainthm}\label{thm: continuous-quantative}
In addition to the assumptions of Theorem~\ref{thm: continuous-qualitative}, assume that the objective $f$ is a Morse function and has at least two local minima.\footnote{We call $x$ a local minimum of $f$ if $\nabla f(x) = 0$ and the Hessian $\nabla^2 f(x)$ is positive definite. By convention, in this paper a global minimum is also considered a local minimum.} Then the constant $\lambda_{s}$ in~\eqref{eqn: continuous-qualitative} satisfies
\begin{equation}\label{eqn: main-lambda-nonconvex}
\lambda_s = (\alpha + o(s)) \e^{- \frac{2H_{f}}{s}},
\end{equation}
for $0 < s \le s_0$, where $s_0 >0, \alpha > 0$, and $H_{f} > 0$ are constants that all depend \textit{only} on $f$. 

\end{mainthm}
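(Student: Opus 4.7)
The plan is to recognize $\lambda_s$ as the spectral gap of the Fokker--Planck--Smoluchowski operator, to convert this into a question about the first nonzero eigenvalue of a semiclassical Schr\"odinger operator (the Witten Laplacian on $0$-forms), and then to invoke the Eyring--Kramers asymptotics of Helffer--Sj\"ostrand and Bovier--Eckhoff--Gayrard--Klein to read off both the exponential factor $e^{-2H_f/s}$ and the prefactor $\alpha$.

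First I would reduce the problem to the spectral theory of a Schr\"odinger operator. The generator $\mathcal{L}_s u = (s/2)\Delta u - \nabla f\cdot \nabla u$ of \eqref{eqn: Fokker-Planck} is self-adjoint on $L^2(\mu_s)$, and the rate $\lambda_s$ furnished by the Poincar\'e-type inequality used in \Cref{sec: convergence-rate} equals the spectral gap of $-\mathcal{L}_s$. Conjugation by $\mu_s^{1/2}$ converts $-\mathcal{L}_s$ into the Witten Laplacian
\[
\mathcal{H}_s = -\tfrac{s}{2}\Delta + \tfrac{1}{2s}\|\nabla f\|^2 - \tfrac{1}{2}\Delta f
\]
on $L^2(\mathbb{R}^d)$, a Schr\"odinger operator whose ``potential'' is $V_s(x) = \tfrac{1}{2s}\|\nabla f(x)\|^2 - \tfrac{1}{2}\Delta f(x)$. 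The Villani condition is precisely what forces $V_s(x)\to +\infty$ as $\|x\|\to\infty$, so $\mathcal{H}_s$ has purely discrete spectrum, and $\lambda_s$ coincides with its smallest nonzero eigenvalue.

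Next I would apply the semiclassical analysis of $\mathcal{H}_s$ as $s\to 0^+$. Because $f$ is a Morse function with $n\ge 2$ local minima, the low-lying spectrum of $\mathcal{H}_s$ is known to consist of exactly $n$ exponentially small eigenvalues (one equal to $0$, associated with $\mu_s$), separated from the rest of the spectrum by a gap of order $s$ produced by the harmonic-oscillator approximation at each critical point. The smallest positive eigenvalue obeys the sharp Eyring--Kramers asymptotic
\[
\lambda_s = \alpha\,e^{-2H_f/s}\bigl(1+o(1)\bigr),\qquad s\to 0^+,
\]
in which $H_f = f(\sigma^\star) - f(m^\star)$ is the minimal barrier height between a metastable minimum $m^\star$ and the basin of the global minimum through the lowest saddle $\sigma^\star$, and the prefactor $\alpha$ is the explicit combination of the magnitude of the unique negative Hessian eigenvalue at $\sigma^\star$, the Hessian determinants at $m^\star$ and $\sigma^\star$, and a factor of $1/(2\pi)$. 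Absorbing the multiplicative $1+o(1)$ into an additive $o(s)$ yields exactly $\lambda_s = (\alpha + o(s))\,e^{-2H_f/s}$ for $s\in(0,s_0]$, where $s_0$, $\alpha$, and $H_f$ depend only on the critical-point data of $f$.

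The hard part will be the sharp semiclassical estimate. Upper bounds on $\lambda_s$ are produced by constructing quasi-modes localized near $m^\star$ and $m_1$ whose Rayleigh quotients saturate the Arrhenius factor $e^{-2H_f/s}$; matching lower bounds and the identification of the prefactor $\alpha$ require IMS-type partitions of unity together with a Laplace-method computation of the tunneling flux through $\sigma^\star$, which is the core of the Helffer--Klein--Nier machinery. A secondary difficulty is that under the Morse hypothesis alone there may be several saddles or metastable minima realizing the minimal barrier height; this multiplicity is absorbed into $\alpha$ and does not affect the exponential rate. Finally, the $O(s)$ gap between the exponentially small eigenvalues and the remainder of the spectrum---furnished again by the Villani condition and the Morse structure---is what guarantees that $\lambda_s$ is genuinely governed by the leading term $\alpha\,e^{-2H_f/s}$ rather than by any other eigenvalue in the $o(s)$ window.
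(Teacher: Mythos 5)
Your proposal is correct and follows essentially the same route as the paper: identify $\lambda_s$ with (one half of) the first nonzero eigenvalue of the Schr\"odinger/Witten-Laplacian operator obtained by conjugating the Fokker--Planck--Smoluchowski generator with $\sqrt{\mu_s}$, use the Villani condition to get purely discrete spectrum, and then import the sharp semiclassical Eyring--Kramers asymptotics for the exponentially small eigenvalues to read off $\e^{-2H_f/s}$ and the prefactor. The only substantive difference is bookkeeping: where you assert that possible degeneracies (several minima or saddles realizing the minimal barrier) are simply absorbed into $\alpha$, the paper makes this precise by splitting into the generic case, handled via the labeling of index-1 separating saddle points and \Cref{thm: HHS-generic}, and the degenerate case, handled by \Cref{thm: Michel-degerate}, which is what justifies the conclusion under the bare Morse hypothesis.
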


The proof of this result relies on tools in the spectral theory of Schr\"{o}dinger operators and is deferred to \Cref{sec: estimate-rate}. From now on, we call $\lambda_s$ in~\eqref{eqn: continuous-qualitative} the \textit{exponential decay constant}. To obviate any confusion, $o(s)$ in~\Cref{thm: continuous-quantative} stands for a quantity that tends to zero as $s \goto 0$, and the precise expression for $H_{f}$ shall be given in \Cref{sec: estimate-rate}, with a simple example provided in \Cref{sec:one-dimens-example}. To leverage \Cref{thm: continuous-quantative} for understanding the phenomena discussed in \Cref{sec: intro}, however, it suffices to recognize the fact that $H_{f}$ is completely determined by $f$. Moreover, we remark that while \Cref{thm: continuous-qualitative} shows that $\lambda_s$ exists for any learning rate, the present theorem assumes a bounded learning rate. 



The key implication of this result is that the rate of convergence is highly contingent upon the learning rate $s$: the exponential decay constant increases as the learning rate $s$ increases. Accordingly, the linear convergence to stationarity established in \Cref{sec:linear-convergence} is faster if $s$ is larger, and, by recognizing the exponential dependence of $\lambda_s$ on $s$, the convergence would be very slow if the learning rate $s$ is very small. For example, if $H_{f} = 0.05$, setting $s = 0.1$ and $s = 0.001$ gives
\begin{align*}
\frac{\lambda_{0.1}}{\lambda_{0.001}}  \approx \frac{\e^{-1}}{\e^{-100}}  = 9.889 \times 10^{42}.
\end{align*}



Moreover,as we will see clearly in \Cref{sec: estimate-rate}, $\lambda_s$ is completely determined by the \textit{geometry} of $f$. In particular, it does not depend on the probability distribution of the initial point or the dimension $d$ given that the constant $H_{f}$ has no direct dependence on the dimension $d$. For comparison, the linear rate in the nonconvex case is shown by \Cref{thm: continuous-quantative} to depend on the learning rate $s$, while the linear rate of convergence stays constant regardless of $s$ if the
objective is strongly convex. This fundamental distinction between the convex and nonconvex settings enables an interpretation of the observation brought up in \Cref{fig: sgd-deep}, in particular the right panel of \Cref{fig: sgd_nonconvex_traj-gen}. More precisely, with time $t$ being the $x$-axis, SGD with a larger learning rate leads to a faster convergence rate in the nonconvex setting, while for the (strongly) convex setting the convergence rate is independent of the learning rate. For further in-depth discussion of the implications of \Cref{thm: continuous-quantative} (see \Cref{sec:expl-benef-init}). 




\subsection{Discretization}
\label{sec:discrete-perspective}

In this subsection, we carry over the results developed from the continuous perspective to the discrete regime. In addition to assuming that the objective function $f$ satisfies the Villani condition, satisfies the confining condition, and is a Morse function, we also now assume $f$ to be $L$-smooth; that is, $f$ has $L$-Lipschitz continuous gradients in the sense that $\left\| \nabla f(x) - \nabla f(y) \right\| \leq L \|x - y\|$ for all $x, y$. Moreover, we restrict the learning rate $s$ to be no larger than $1/L$. The following proposition is the key theoretical tool that allows translation to the discrete regime.
\begin{prop}\label{prop: approx}
For any $L$-smooth objective $f$ and any initialization $X_{s}(0)$ drawn from a probability density $\rho \in L^{2}(\mu_s^{-1})$, the~lr-dependent SDE~\eqref{eqn: sgd_high_resolution_formally} has a unique global solution $X_s$  in expectation; that is, $\E X_s(t)$ as a function of $t$ in $C^{1}([0, + \infty); \mathbb{R}^{d})$ is unique. Moreover, there exists  $B(T) > 0$ such that the SGD iterates $x_{k}$ satisfy
\[
\max_{0 \leq k \leq T/s} \left| \E f(x_k) - \E f(X_s(ks)) \right| \leq B(T) s,
\]
for any fixed $T > 0$.
\end{prop}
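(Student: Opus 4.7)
The proof has two parts: establishing global existence and uniqueness for the SDE, and then the weak approximation bound. For the first part, $L$-smoothness makes the drift $-\nabla f$ globally Lipschitz, and the diffusion coefficient $\sqrt{s}\,I$ is constant, so classical It\^o theory yields a pathwise-unique strong solution $X_s(t)$ defined for all $t\ge 0$. Since $\rho\in L^2(\mu_s^{-1})$ implies that $X_s(0)$ has finite second moment (the weight $\mu_s^{-1}$ grows like $\e^{2f/s}$, forcing rapid decay of $\rho$ and its first two moments), a Gr\"onwall estimate propagates this to $\sup_{t\in[0,T]}\E\|X_s(t)\|^2 < \infty$ for every $T$. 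Integrating \eqref{eqn: sgd_high_resolution_formally} and applying Fubini gives $\E X_s(t) = \E X_s(0) - \int_0^t \E \nabla f(X_s(\tau))\,\mathrm{d}\tau$, so $t\mapsto \E X_s(t)$ lies in $C^1([0,+\infty);\mathbb{R}^d)$ and is uniquely determined.

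The central task is the weak error bound, which I would prove by the standard generator-comparison technique. Fix $T>0$ and set $K = \lfloor T/s\rfloor$. Define the value function
\begin{equation*}
v(t,x) \;=\; \E\bigl[f(X_s(T)) \bigm| X_s(t) = x\bigr],
\end{equation*}
which by Feynman--Kac solves the backward Kolmogorov equation $\partial_t v + \mathcal{L} v = 0$ with terminal datum $v(T,x) = f(x)$, where $\mathcal{L} = -\nabla f\cdot\nabla + \tfrac{s}{2}\Delta$ is the generator of the SDE. Telescoping yields
\begin{equation*}
\E f(x_K) - \E f(X_s(T)) \;=\; \sum_{k=0}^{K-1}\E\bigl[v((k{+}1)s,\,x_{k+1}) - v(ks,\,x_k)\bigr].
\end{equation*}
I would Taylor-expand each summand in both $s$ and $x_{k+1}-x_k$ around $(ks,x_k)$ and then take the conditional expectation given $x_k$. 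Using $\E[x_{k+1}-x_k\mid x_k] = -s\nabla f(x_k)$ and $\E[(x_{k+1}-x_k)(x_{k+1}-x_k)^\top\mid x_k] = s^2\nabla f\nabla f^\top + s^2\mathrm{Cov}(\xi_k)$, the backward Kolmogorov equation cancels the $O(s)$ drift contribution identically and the $\tfrac{s^2}{2}\Delta v$ diffusion piece matches the corresponding term from the second-order spatial Taylor expansion, leaving a residual of order $s^2$ with coefficients depending polynomially on $\nabla f$, $\nabla^2 f$, $\nabla v$, and $\nabla^2 v$. The third-order Taylor remainders contribute $O(s^3)$ since every cubic moment of $x_{k+1}-x_k = -s(\nabla f+\xi_k)$ is $O(s^3)$. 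Summing the $K = T/s$ per-step errors of size $O(s^2)$ produces the claimed $O(s)$ total bound.

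The main obstacle is closing this estimate rigorously, which requires (i) uniform-in-$(t,x)$ bounds on $\nabla^j v$ for $j\leq 3$ and (ii) polynomial moment bounds on the SGD iterates $x_k$ for $k\leq T/s$. For (i), differentiating $v(t,x) = \E f(X_s^x(T-t))$ under the integral expresses $\nabla^j v$ as an expectation of $\nabla^j f$ composed with the stochastic flow together with derivatives of the flow in its initial condition; these flow derivatives are controlled by Gr\"onwall using $L$-smoothness of $f$, at the cost of a $\e^{cT}$ factor, and the higher-order derivatives of $f$ are available through the $C^\infty$ assumption. For (ii), the SGD recursion together with the restriction $s\leq 1/L$ yields a discrete Gr\"onwall inequality bounding $\E\|x_k\|^r$ uniformly in $k\leq T/s$, so the Taylor remainders can be integrated legitimately. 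Assembling (i) and (ii) with the per-step estimate closes the sum and gives $B(T)$ growing like $\e^{cT}$.
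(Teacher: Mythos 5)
Your proof is correct, and your main argument---the weak-error bound via generator comparison, telescoping with the backward Kolmogorov solution $v(t,x) = \E[f(X_s(T)) \mid X_s(t)=x]$, and per-step Taylor cancellation---is precisely the Talay--Tubaro technique that the paper's cited references (Milstein, Talay, Pardoux--Talay, Kloeden--Platen, Bally--Talay) employ. The paper itself does not spell this out: it simply cites those works and remarks that they require roughly $C^6$ regularity of $f$, so in this respect your sketch is actually more informative than the paper's proof, making visible where the per-step cancellation comes from (the $O(s)$ drift terms and the $\tfrac{s^2}{2}\Delta v$ diffusion terms cancel against the backward equation, leaving the $\tfrac{s^2}{2}\nabla f^\top\nabla^2 v\,\nabla f$ residual plus cubic remainders) and where the $\e^{cT}$ growth of $B(T)$ enters (Gr\"onwall on the flow derivatives). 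Where you genuinely diverge is the first part: you derive global well-posedness and the $C^1$ regularity of $t\mapsto\E X_s(t)$ from classical It\^o theory with a globally Lipschitz drift and Gr\"onwall moment propagation at the SDE level, whereas the paper argues entirely at the Fokker--Planck level, invoking the uniqueness of $\rho_s(t,\cdot)$ from \Cref{prop: unique-existence} and then bounding $\|\E X_s(t)\|$ by Cauchy--Schwarz together with the $L^2(\mu_s^{-1})$ decay from \Cref{thm: rate-l2-1} and the integrability $\int \|x\|^2\,\dd\mu_s < \infty$ provided by the Villani condition. Your route is more elementary and self-contained; the paper's reuses machinery already built for the convergence analysis. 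One small point to be explicit about: your assertion that $\rho\in L^2(\mu_s^{-1})$ forces finite second moment of $X_s(0)$ relies on the confining/Villani conditions (standing assumptions for \Cref{sec:discrete-perspective}), not on $L$-smoothness---by Cauchy--Schwarz $\int\|x\|^2\rho\,\dd x \le \bigl(\int\|x\|^4\,\dd\mu_s\bigr)^{1/2}\|\rho\|_{\mu_s^{-1}}$, and it is the growth of $f$ implied by those conditions that makes $\int\|x\|^4\,\dd\mu_s$ finite.
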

We note that there exists a sharp bound on $B(T)$ in~\cite{bally1996law}. For completeness, we also remark that the convergence can be strengthened to the strong sense:
\[
 \max_{0 \leq k \leq T/s} \E   \left\| x_k - X_s(ks)  \right\| \leq B'(T)s.
\]
This result has appeared in \cite{mil1975approximate, talay1982analyse, pardoux1985approximation, talay1984efficient, kloeden1992approximation} and we provide a self-contained proof in Appendix~\ref{subsec: proof-approx}. 

We now state the main result of this subsection.
\begin{mainthm}\label{thm: main1}
In addition to the assumptions of \Cref{thm: continuous-qualitative}, assume that $f$ is $L$-smooth. Then, the following two conclusions hold:
\begin{enumerate}
\item[(a)]
For any $T > 0$, the iterates of SGD with learning rate $0 < s \le 1/L$ satisfy
\begin{equation}\label{eqn: final-estimate-sgd}
\E f(x_k) - f^{\star} \leq (A + B(T))s + C \left\| \rho - \mu_{s} \right\|_{\mu_s^{-1}} \e^{- s\lambda_{s} k},
\end{equation}
for all $k \le T/s$, where $\lambda_s$ is the exponential decay constant in~\eqref{eqn: continuous-qualitative}, $A$ as in \Cref{thm: compare_step} depends only on $1/L$ and $f$, $C = C_{1/L}$ is as in \Cref{prop:convv}, and $B(T)$ depends only on the time horizon $T$ and the Lipschitz constant $L$. 

\item[(b)] If $f$ is a Morse function with at least two local minima, with $\lambda_s$ appearing in \eqref{eqn: final-estimate-sgd} being given by \eqref{eqn: main-lambda-nonconvex}, and if $f$ is $\mu$-strongly convex then $\lambda_s = \mu$.
\end{enumerate}

\end{mainthm}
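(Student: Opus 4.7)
The plan is to derive \eqref{eqn: final-estimate-sgd} by combining the continuous-time bound from \Cref{thm: continuous-qualitative} with the strong-approximation estimate in \Cref{prop: approx}, and then to read off part (b) directly from the quantitative rate statements already proved. The natural decomposition is
\begin{equation*}
\E f(x_k) - f^\star = \bigl[\E f(x_k) - \E f(X_s(ks))\bigr] + \bigl[\E f(X_s(ks)) - f^\star\bigr],
\end{equation*}
so that the first bracket is a pure discretization error controlled by \Cref{prop: approx}, while the second bracket is the continuous-time excess risk at time $t = ks$. Along the SGD trajectory indexed by $k \le T/s$ we have $t = ks \le T$, which is exactly the regime in which \Cref{prop: approx} yields $|\E f(x_k) - \E f(X_s(ks))| \le B(T)s$ with a constant $B(T)$ depending only on $T$ and $L$.

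For the second bracket, I would apply \Cref{thm: continuous-qualitative} together with its refinement in \eqref{eq:as_strong}. Specifically, \Cref{prop:convv} gives
\begin{equation*}
\bigl| \E f(X_s(t)) - \E f(X_s(\infty)) \bigr| \le C(s)\, \|\rho - \mu_s\|_{\mu_s^{-1}}\, \e^{-\lambda_s t},
\end{equation*}
and \Cref{thm: compare_step} gives $\E f(X_s(\infty)) - f^\star = \epsilon(s) \le A s$ whenever $s \le S$. Taking $S = 1/L$ (which is permissible since the hypothesis forces $s \le 1/L$), the constant $A$ depends only on $1/L$ and $f$, and likewise $C = C_{1/L}$ depends only on $1/L$ and $f$ because $C(s)$ is monotone in $s$ per the remark following \Cref{prop:convv}. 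Substituting $t = ks$ and adding the two brackets gives the claimed bound \eqref{eqn: final-estimate-sgd}.

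Part (b) requires no new work: it is precisely the specialization of $\lambda_s$ in the continuous setting. If $f$ is a Morse function with at least two local minima, \Cref{thm: continuous-quantative} gives the asymptotic expression \eqref{eqn: main-lambda-nonconvex}, which carries over to \eqref{eqn: final-estimate-sgd} because the same $\lambda_s$ appears in both the continuous and discrete bounds. If $f$ is $\mu$-strongly convex, \Cref{prop:lambda_str_mu} gives $\lambda_s = \mu$ identically, and again this plugs directly into the exponential factor in \eqref{eqn: final-estimate-sgd}.

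The main obstacle in this plan is essentially bookkeeping rather than analysis: I need to ensure that the constants $A$, $C$, and $B(T)$ factor through cleanly and are independent of the learning rate $s$ on the relevant range $s \in (0, 1/L]$. For $A$ this uses the fact that \Cref{thm: compare_step} allows any fixed upper bound $S$ on $s$; for $C$ it uses the monotonicity of $C(s)$ noted after \Cref{prop:convv}, so that $C_{1/L}$ majorizes $C(s)$ for all admissible $s$; for $B(T)$ it uses that the constant produced by \Cref{prop: approx} depends only on $T$ and the Lipschitz constant $L$, not on $s$. Once these uniformities are verified, the argument reduces to the two-line decomposition above.
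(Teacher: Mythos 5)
Your proposal is correct and follows exactly the paper's argument: the paper proves Theorem~\ref{thm: main1} as a direct consequence of combining \Cref{prop: approx} (discretization error $\le B(T)s$ for $ks \le T$) with \Cref{prop:convv} and \Cref{thm: compare_step} evaluated at $t = ks$ with $S = 1/L$, and part (b) is, as you say, simply a restatement of \Cref{thm: continuous-quantative} and \Cref{prop:lambda_str_mu}. The uniformity bookkeeping you flag (monotonicity of $C(s)$, $A = A_{1/L}$, $B(T)$ independent of $s$) is the same as in the paper.
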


\Cref{thm: main1} follows as a direct consequence of \Cref{thm: continuous-qualitative} and \Cref{prop: approx}. Note that the second part of \Cref{thm: main1} is simply a restatement of \Cref{thm: continuous-quantative} and \Cref{prop:lambda_str_mu}. As earlier in the continuous-time formulation, we also mention that the dimension parameter $d$ is not an essential parameter for characterizing the rate of linear convergence. In relating to \Cref{fig: sgd_nonconvex_traj-gen}, note that its left panel with $k$ being the $x$-axis shows a faster linear convergence of SGD when using a larger learning rate, regardless of convexity or nonconvexity of the objective. This is because the linear rate $s \lambda_s$ in \eqref{eqn: final-estimate-sgd} is always an increasing function of $s$ even for the strongly convex case, where $\lambda_s$ itself is constant.


\subsection{A one-dimensional example}
\label{sec:one-dimens-example}

In this section we provide some intuition for the theoretical results presented in the preceding subsections. Our priority is to provide intuition rather than rigor. Consider the simple example of $f$ presented in \Cref{fig:one_d_barrier}, which has a global minimum $x^\star$, a local minimum $x^\bullet$, and a local maximum $x^\circ$.\footnote{We can also regard $x^\circ$ as a saddle point in the sense that the Hessian at this point has one negative eigenvalue. See~\Cref{sec:basics-morse-theory} for more discussion.}  We use this toy example to gain insight into the expression \eqref{eqn: main-lambda-nonconvex} for the exponential decay constant $\lambda_s$; deferring the rigorous derivation of this number in the general case to \Cref{sec: estimate-rate}.

From \eqref{eqn: continuous-qualitative} it suggests that the lr-dependent SDE~\eqref{eqn: sgd_high_resolution_formally} takes about $O(1/\lambda_s)$ time to achieve approximate stationarity. Intuitively, for the specific function in \Cref{fig:one_d_barrier}, the bottleneck in achieving stationarity is to pass through the local maximum $x^\circ$. Now, we show that it takes about $O(1/\lambda_s)$ time to pass $x^\circ$ from the local minimum $x^\bullet$. For simplicity, write
\[
f(x) = \frac{\theta}{2} (x - x^\bullet)^2 + g(x),
\]
where $g(x) = f(x^\bullet)$ stays constant if $x \le x^\circ - \nu$ for a very small positive $\nu$ and $\theta > 0$. Accordingly, the lr-dependent SDE~\eqref{eqn: sgd_high_resolution_formally} is reduced to the Ornstein--Uhlenbeck process,
\[
\dd X_s = -\theta (X_s - x^\bullet) \dd t + \sqrt{s} \dd W,
\]
before hitting $x^\circ$. Denote by $\tau_{x^\circ}$ the first time the Ornstein--Uhlenbeck process hits $x^\circ$. It is well known that the hitting time obeys
\begin{equation}\label{eq:hitting_t}
\begin{aligned}
\E \tau_{x^\circ} &\approx \frac{\sqrt{\pi s}}{(x^\circ - x^\bullet)\theta\sqrt{\theta}} \, \e^{\frac{2}{s} \cdot \frac12 \theta (x^\circ - x^\bullet)^2}\approx \frac{\sqrt{\pi s}}{(x^\circ - x^\bullet)\theta\sqrt{\theta}} \, \e^{\frac{2 H_{f}}{s}},
\end{aligned}
\end{equation}
where $H_{f} := f(x^\circ) - f(x^\bullet) \approx f(x^\circ) - g(x^\circ) =  \frac12 \theta (x^\circ - x^\bullet)^2$. This number, which we refer to as the \textit{Morse saddle barrier}, is the difference between the function values at the local maximum $x^\circ$ and the local minimum $x^\bullet$ in our case. As an implication of \eqref{eq:hitting_t}, the continuous-time formulation of SGD takes time (at least) of the order $\e^{(1 + o(1)) \frac{2 H_{f}}{s}}$ to achieve approximate stationarity. This is consistent with the exponential decay constant $\lambda_s$ given in~\eqref{eqn: main-lambda-nonconvex}.


\begin{figure}[htb!]
\centering
\begin{tikzpicture}[scale=0.7]
     \draw  plot[smooth, tension=.7] coordinates{(0, 6) (2.5, 2) (5, 4)  (7.5, 0.5) (10, 6)};
     \draw[dashed] (2.65,1.98) -- (6.30,1.98);
     \draw[dashed] (4.9,4.0) -- (4.9,2);
     \node at (2.65,1.8) {$x^{\bullet}$};
     \node at (5.0,4.2) {$x^\circ$};
     \node at (7.5, 0.3) {$x^{\star}$};
     \node at (4.4,3.0) {$H_{f}$};
     \draw (4.5,4.0) -- (4.9, 4.0);
     \draw (4.5,2.0) -- (4.9, 2.0);
     \draw[->, thick] (4.7,2.85) -- (4.7, 2.0);
     \draw[->, thick] (4.7,3.25) -- (4.7, 4.0);
\end{tikzpicture}
\caption{\small A one-dimensional nonconvex function $f$. The height difference between $x^\circ$ and $x^\bullet$ in this special case is the Morse saddle barrier $H_f$. See the formal definition in \Cref{def:barrier}.}
\label{fig:one_d_barrier}
\end{figure}
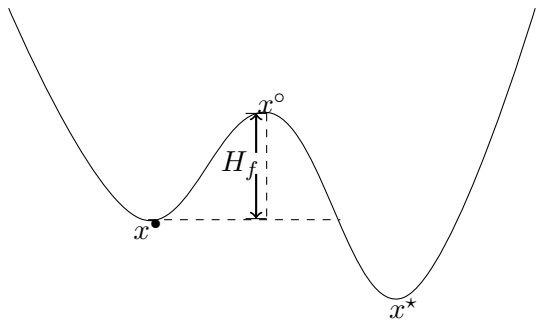

In passing, we remark that the discussion above can be made rigorous by invoking the theory of the Kramers escape rate, which shows that for this univariate case the hitting time satisfies
\[
\E \tau_{x^\circ} = (1 + o(1)) \frac{\pi}{\sqrt{-f''(x^\bullet) f''(x^\circ)}} \e^{\frac{2H_{f}}{s}}.
\]
See, for example, \cite{freidlin2012random,pavliotis2014stochastic}. Furthermore, we demonstrate the view from the theory of \textit{viscosity solution} and \textit{singular perturbation} in~\Cref{subsec: supplement-viscosity}.



\section{Why Learning Rate Decay?}
\label{sec:expl-benef-init}

As a widely used technique for training neural networks, learning rate decay refers to taking a large learning rate initially and then progressively reducing it during the training process. This technique has been observed to be highly effective especially in the minimization of nonconvex objective functions using stochastic optimization methods, with a very recent strand of theoretical effort toward understanding its benefits~\cite{you2019learning,li2019towards}. In this section, we offer a new and crisp explanation by leveraging the results in \Cref{sec:main-results}. To highlight the intuition, we primarily work with the continuous-time formulation of SGD. 

\begin{figure}[htb!]
\begin{minipage}[t]{0.321\linewidth}
\centering
\includegraphics[scale=0.11]{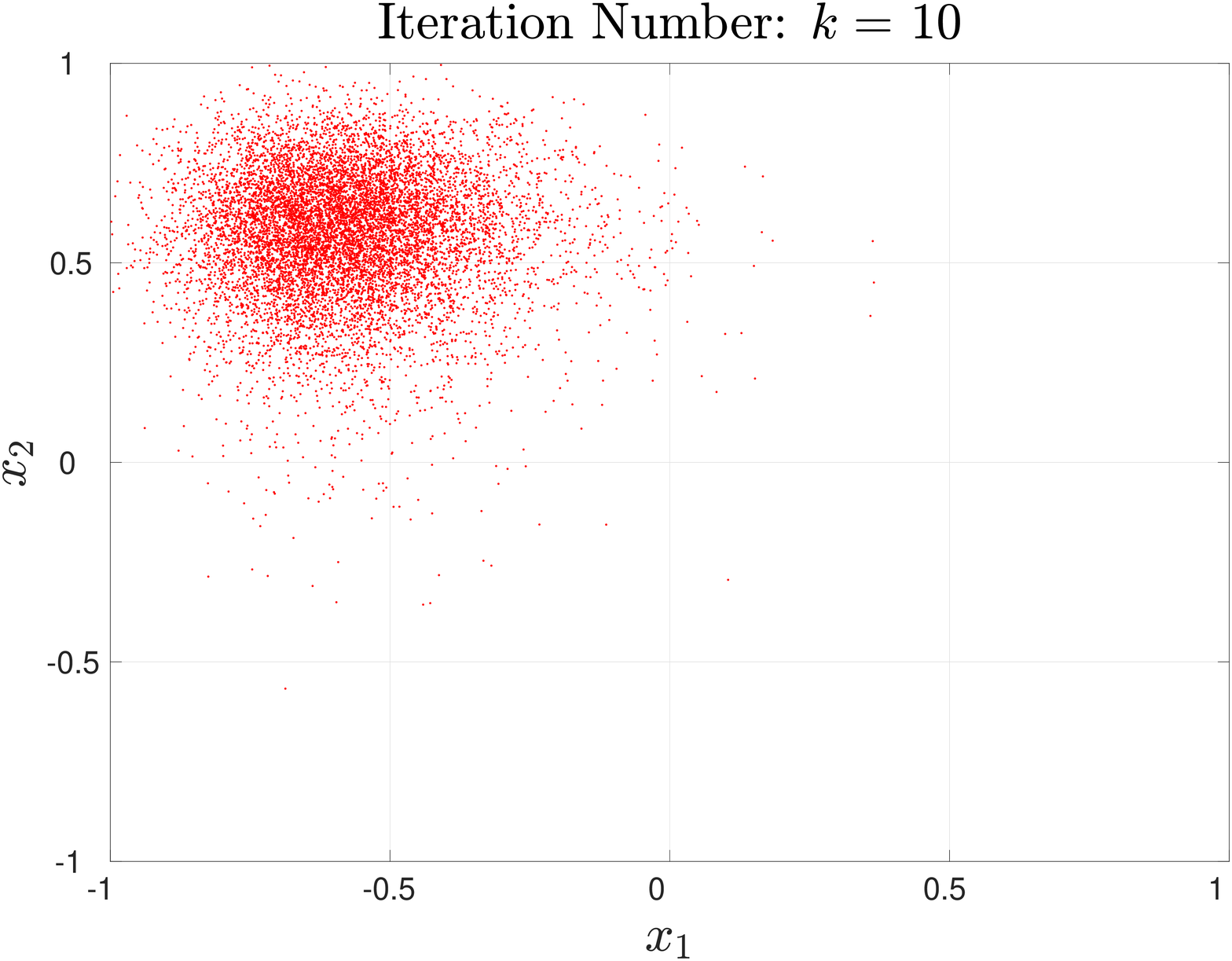}
\end{minipage}
\begin{minipage}[t]{0.321\linewidth}
\centering
\includegraphics[scale=0.11]{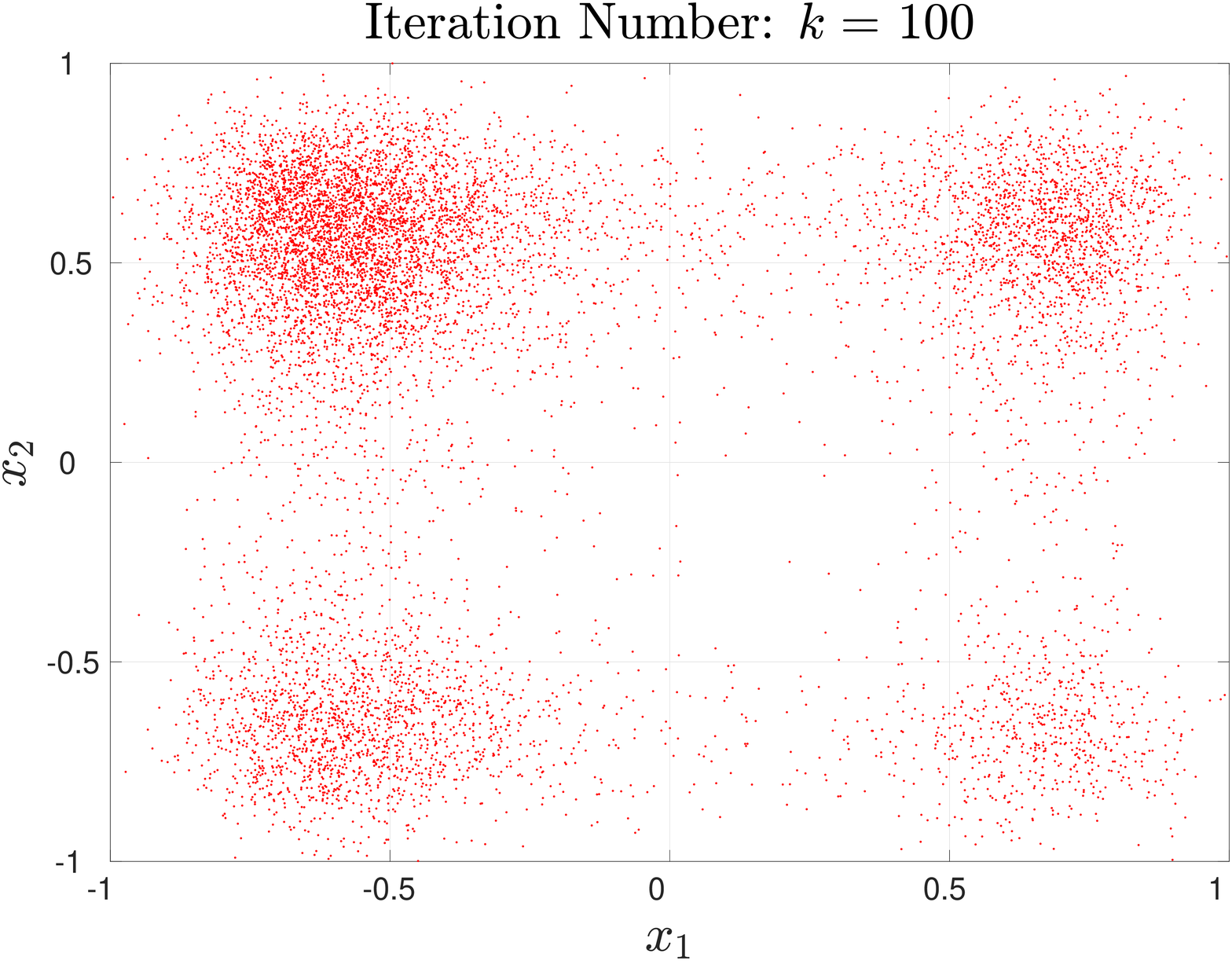}
\end{minipage}
\begin{minipage}[t]{0.321\linewidth}
\centering
\includegraphics[scale=0.11]{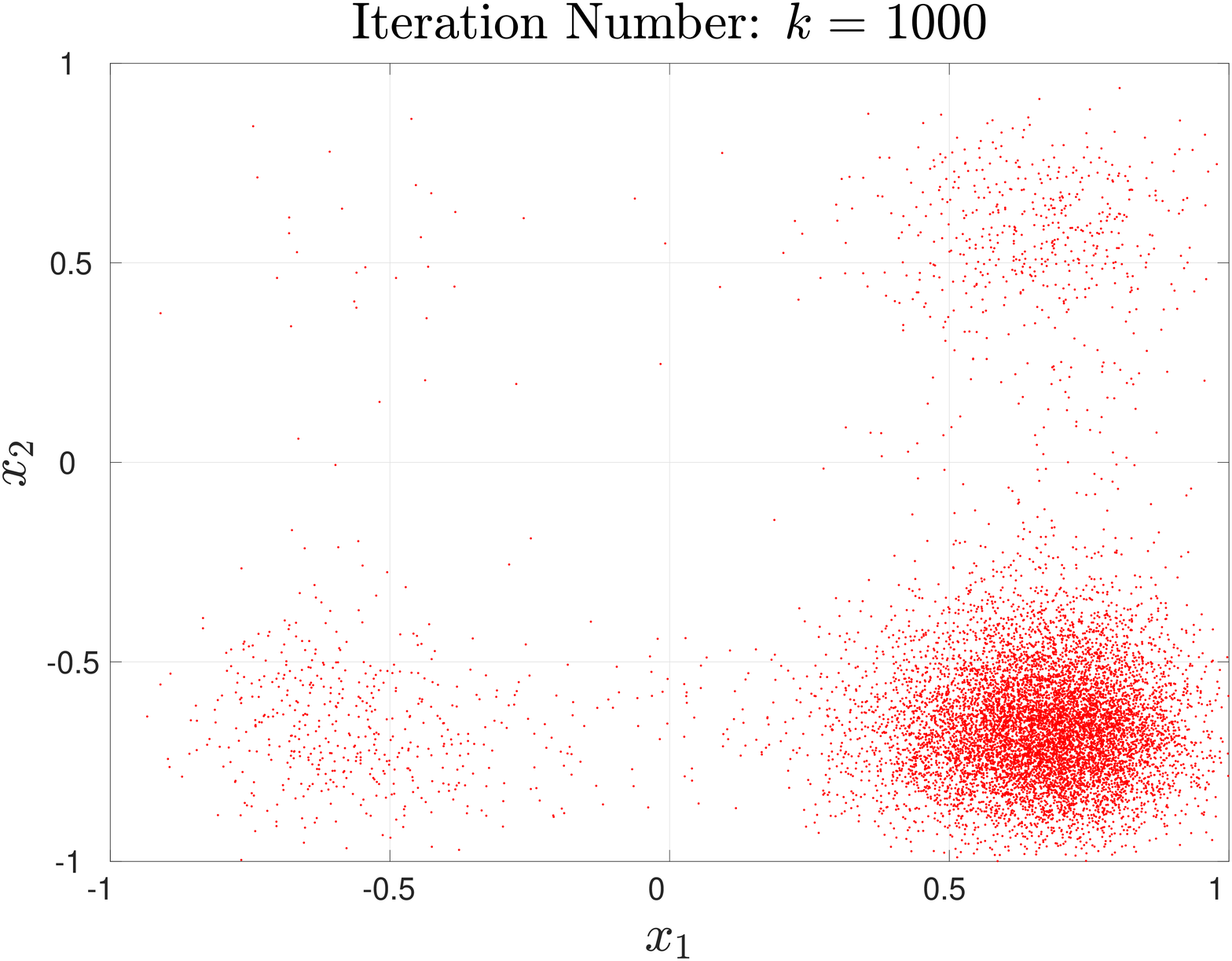}
\end{minipage}
\\
\begin{minipage}[t]{0.321\linewidth}
\centering
\includegraphics[scale=0.11]{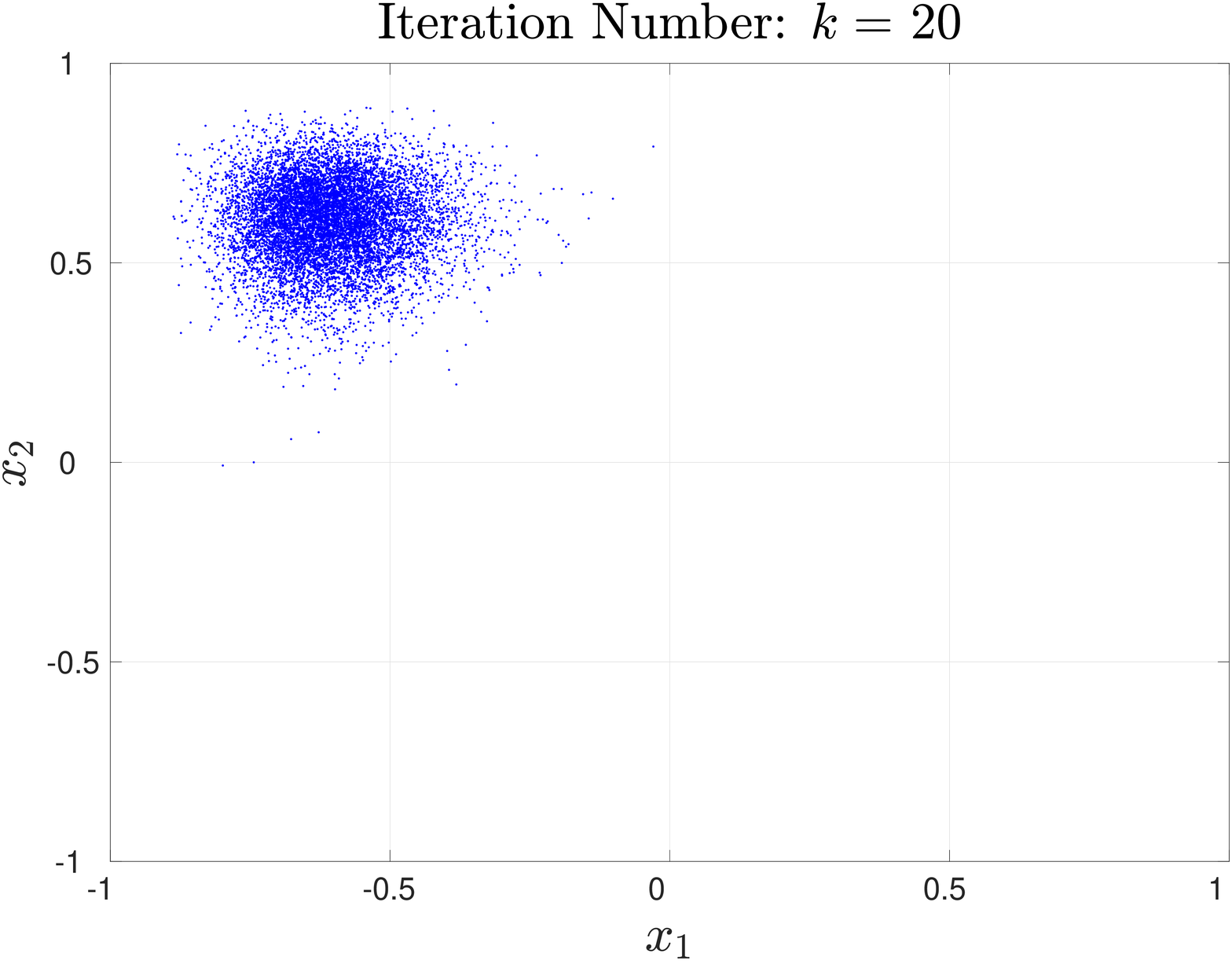}
\end{minipage}
\begin{minipage}[t]{0.321\linewidth}
\centering
\includegraphics[scale=0.11]{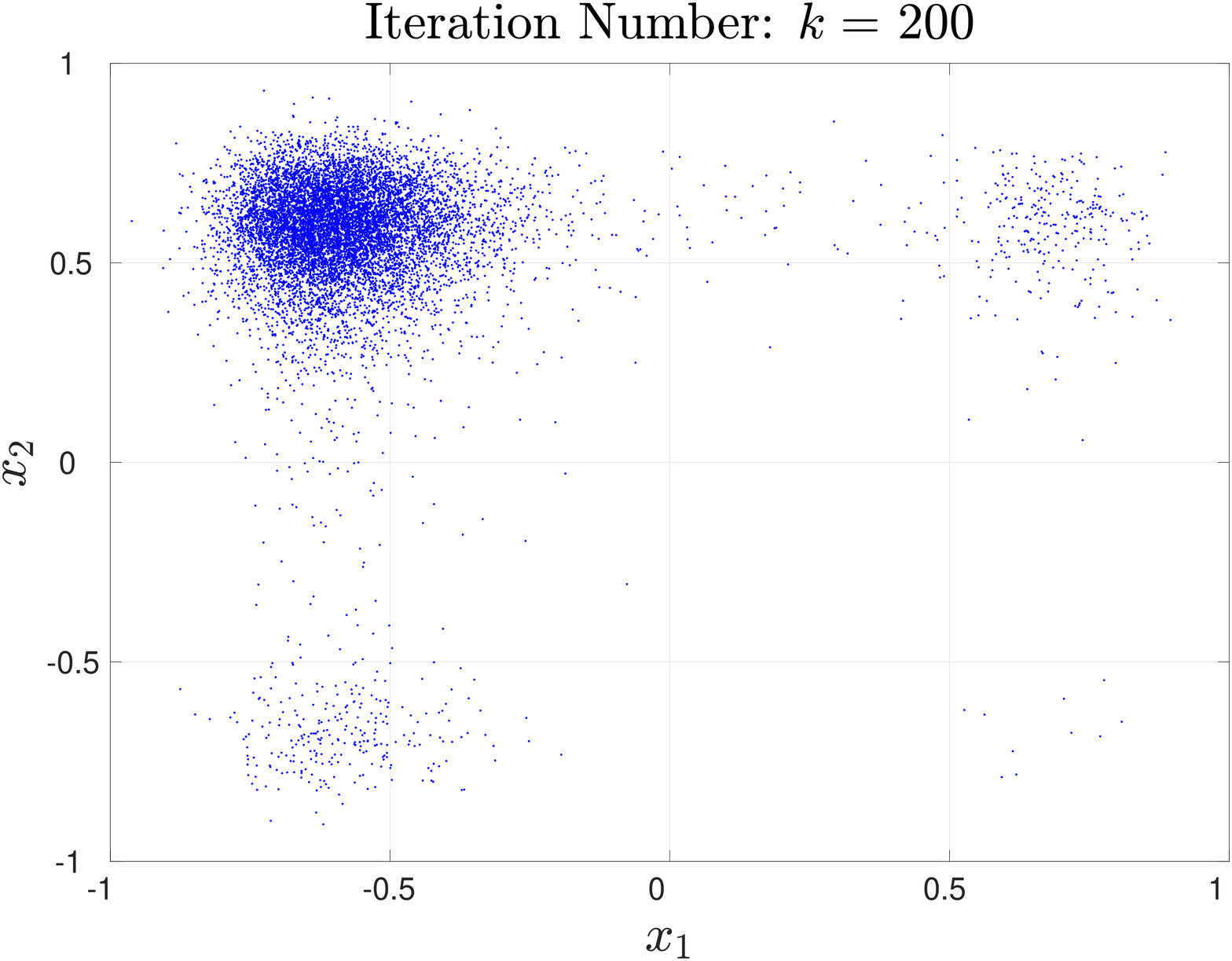}
\end{minipage}
\begin{minipage}[t]{0.321\linewidth}
\centering
\includegraphics[scale=0.11]{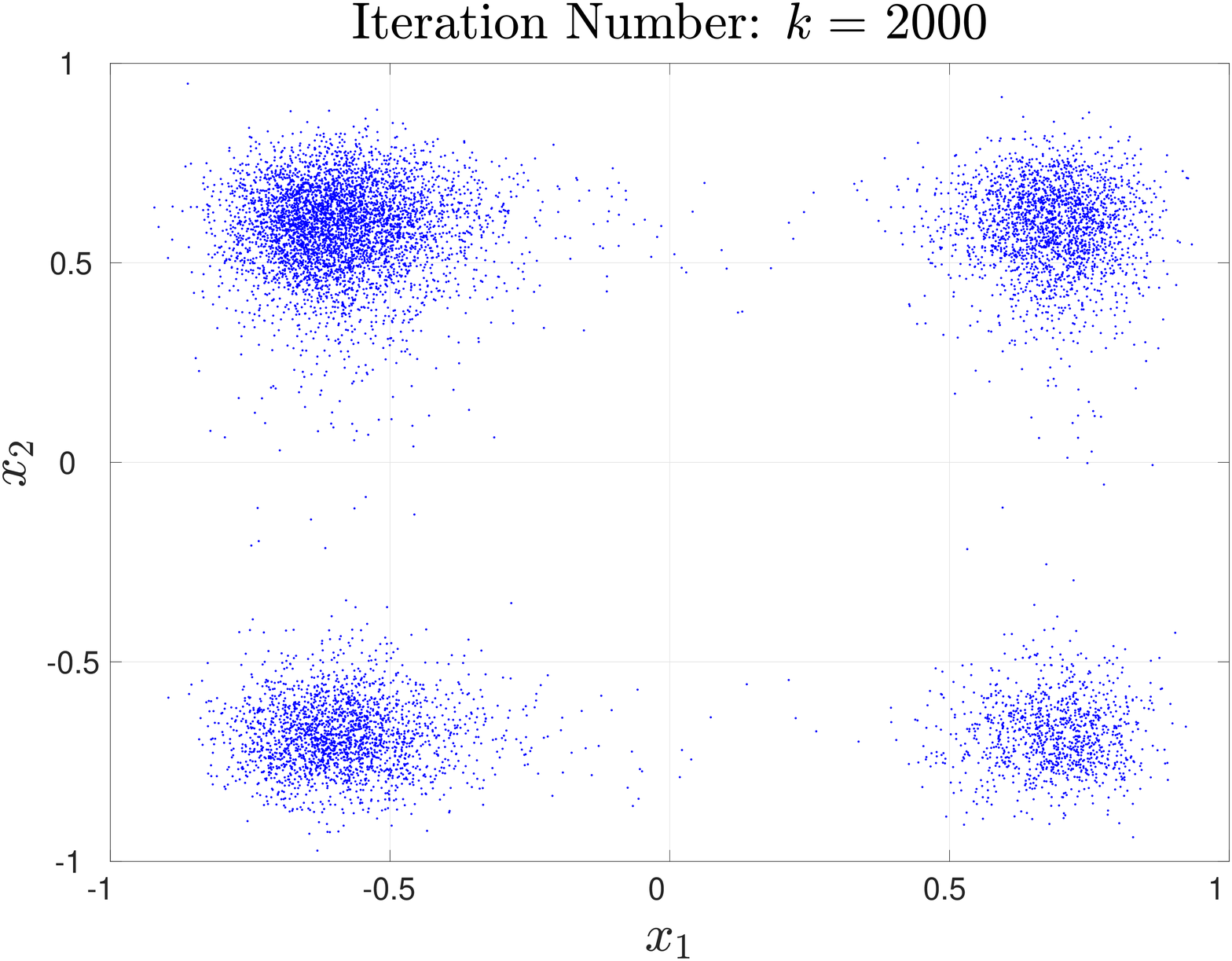}
\end{minipage}
\caption{\small Scatter plots of the iterates $x_k \in \R^2$ of SGD for minimizing the nonconvex function in \Cref{fig: sgd_nonconvex_traj-gen}. This function has four local minima, of which the bottom right one is the gloabl minimum. Each column corresponds to the same value of $t = ks$, and the first row and second row correspond to learning rates $0.1$ and $0.05$, respectively. The gradient noise is drawn from the standard normal distribution. Each plot is based on 10000 independent SGD runs using the noise generator ``state 1-10000'' in Matlab2019b, starting from an initial point $(-0.9, 0.9)$. } 
\label{fig: sgd_distri}
\end{figure}


For purposes of illustration, \Cref{fig: sgd_distri} presents numerical examples for this technique where the learning rate is set to $0.1$ or $0.05$. This figure clearly demonstrates that SGD with a larger learning rate converges much faster to the global minimum than SGD with a smaller learning rate. This comparison reveals that a large learning rate would render SGD able to quickly explore the landscape of the objective function and efficiently escape bad local minima. On the other hand, a larger learning rate would prevent SGD iterates from concentrating around a global minimum, leading to substantial suboptimality. This is clearly illustrated in \Cref{fig: sgd_distri_final}. As suggested by the heuristic work on learning rate decay, we see that it is important to decrease the learning rate to achieve better optimization performance whenever the iterates arrive near a local minimum of the objective function.

\begin{figure}[htb!]
\centering
\begin{minipage}[t]{0.45\linewidth}
\centering
\includegraphics[scale=0.15]{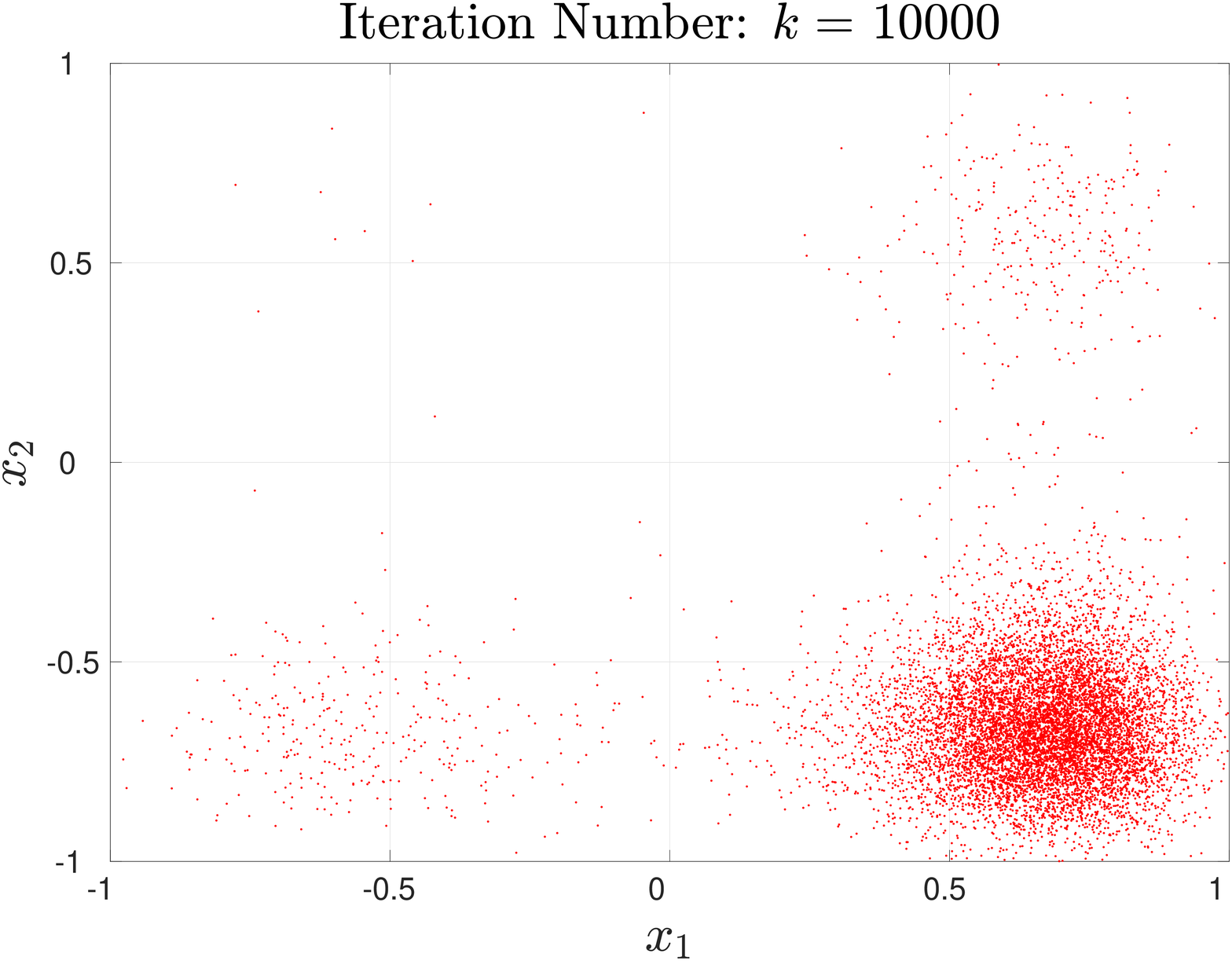}
\end{minipage}
\begin{minipage}[t]{0.45\linewidth}
\centering
\includegraphics[scale=0.15]{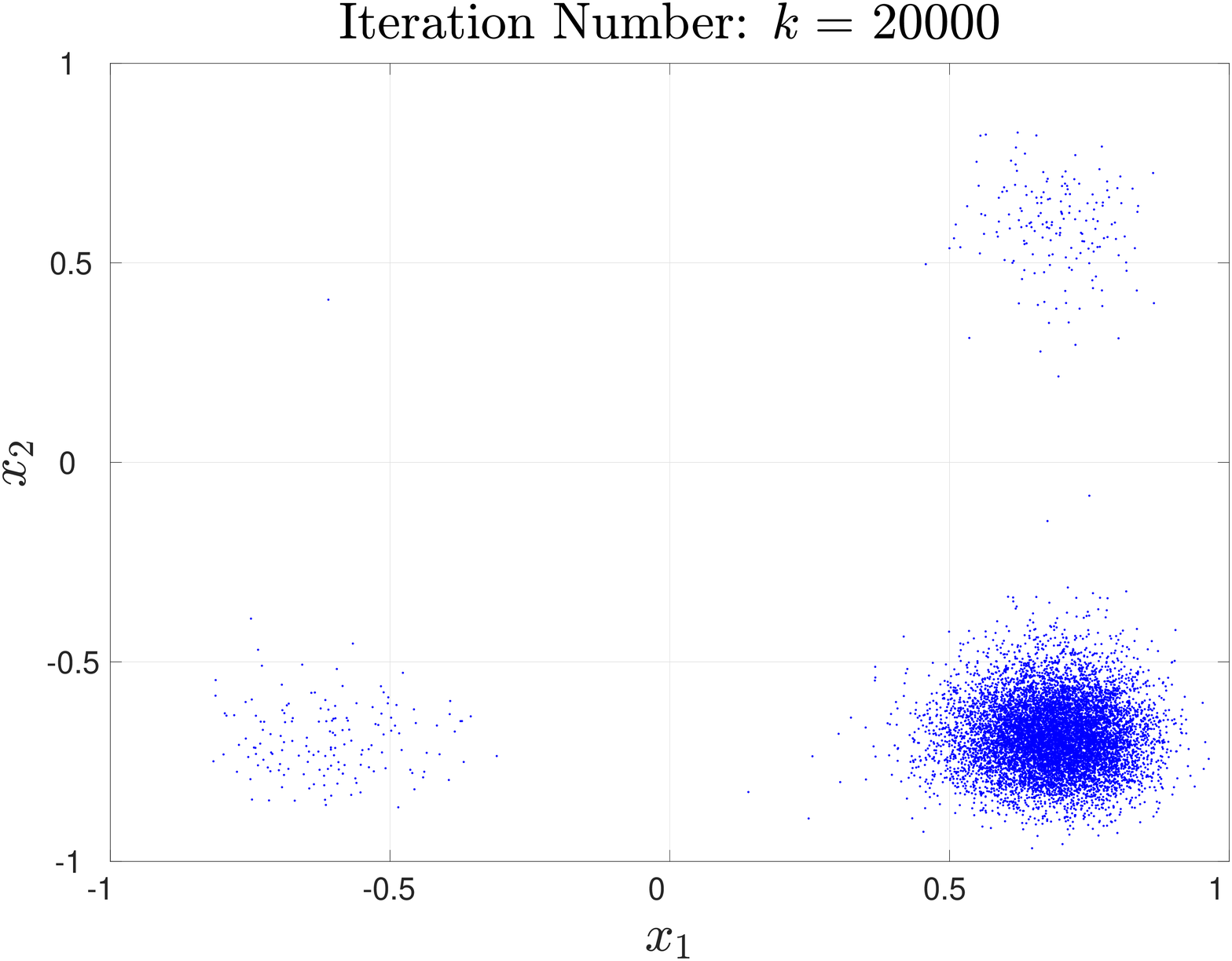}
\end{minipage}
\caption{\small The same setting as in \Cref{fig: sgd_distri}. Both plots correspond to the same value of $t =ks= 1000$.} 
\label{fig: sgd_distri_final}
\end{figure}


Despite its intuitive plausibility, the exposition above stops short of explaining why nonconvexity of the objective is crucial to the effectiveness of learning rate decay. Our results in \Cref{sec:main-results}, however, enable a concrete and crisp understanding of the vital importance of nonconvexity in this setting. Motivated by \eqref{eq:as_strong}, we consider an idealized risk function of the form $R(t) = as +  b\e^{-\lambda_s t}$, with $\lambda_s$ set to $\e^{-c/s}$, where $a, b$, and $c$ are positive constants for simplicity as opposed to the non-constants in the upper bound in \eqref{eqn: continuous-qualitative}. This function is plotted in \Cref{fig: exp2_s}, with two quite different learning rates, $s_1 = 0.1$ and $s_2 = 0.001$, as an implementation of learning rate decay. When the learning rate is $s_1 = 0.1$, from the right panel of \Cref{fig: exp2_s}, we see that rough stationarity is achieved at time $t = k s \approx 25$; thus, the number of iterations $k_{0.1} \approx 25/s = 250$. In the case of $s = 0.001$, from the left panel of \Cref{fig: exp2_s}, we see now it requires $ks \approx 2.5 \times 10^{44}$ to reach rough stationarity, leading to $k_{0.001} \approx 2.5 \times 10 ^{47}$. This gives
\[
\frac{k_{0.001}}{k_{0.1}} \approx 10^{45}.
\]
In contrast, the sharp dependence of $k_s$ on the learning rate $s$ is not seen for strongly convex functions, because $\lambda_s = \mu$ stays constant as the learning rate $s$ varies. Following the preceding example, we have
\[
\frac{k_{0.001}}{k_{0.1}} \approx 10^2.
\]

\begin{figure}[htb!]
\centering
\begin{minipage}[t]{0.45\linewidth}
\centering
\includegraphics[scale=0.15]{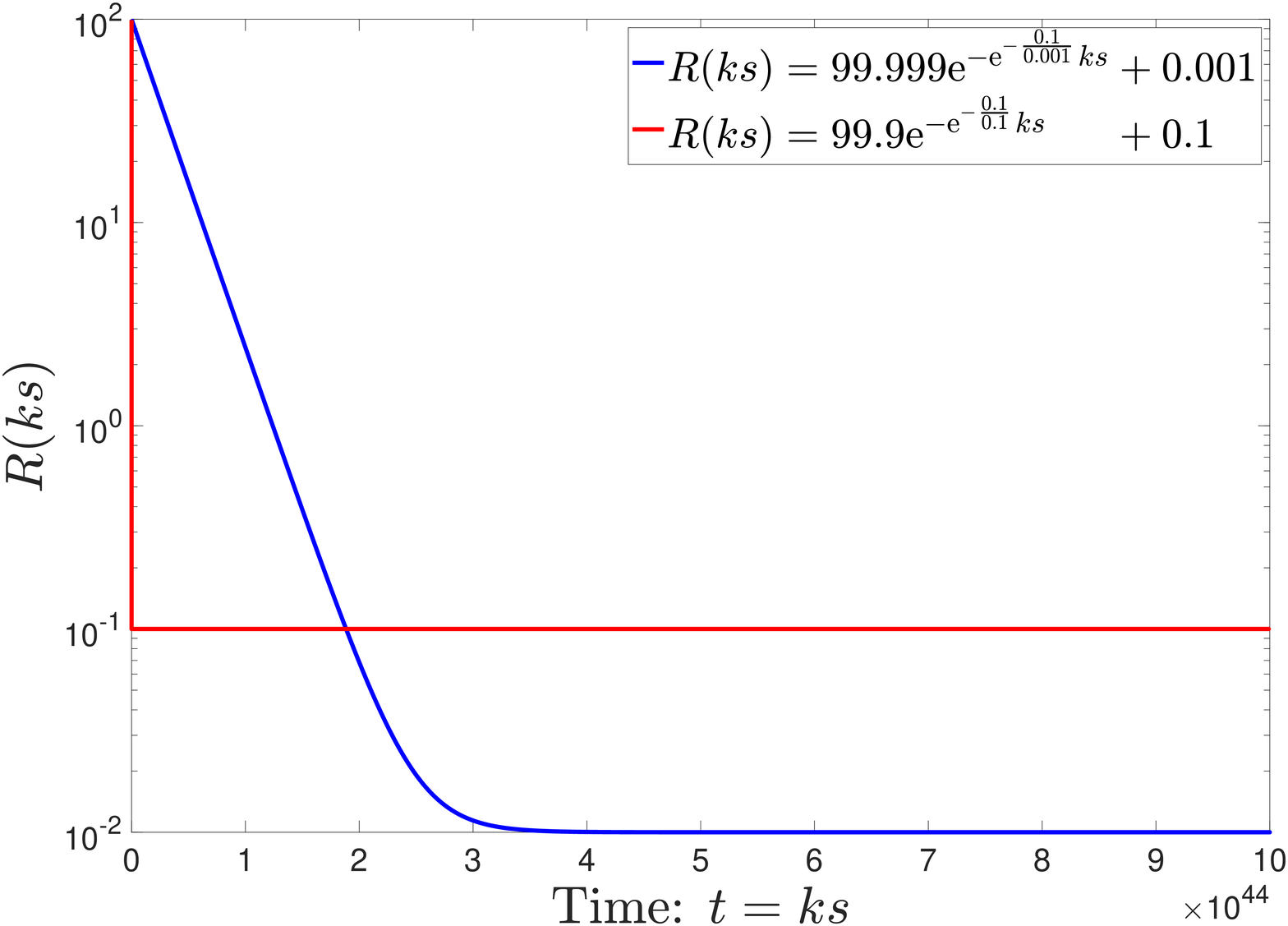}
\end{minipage}
\begin{minipage}[t]{0.45\linewidth}
\centering
\includegraphics[scale=0.15]{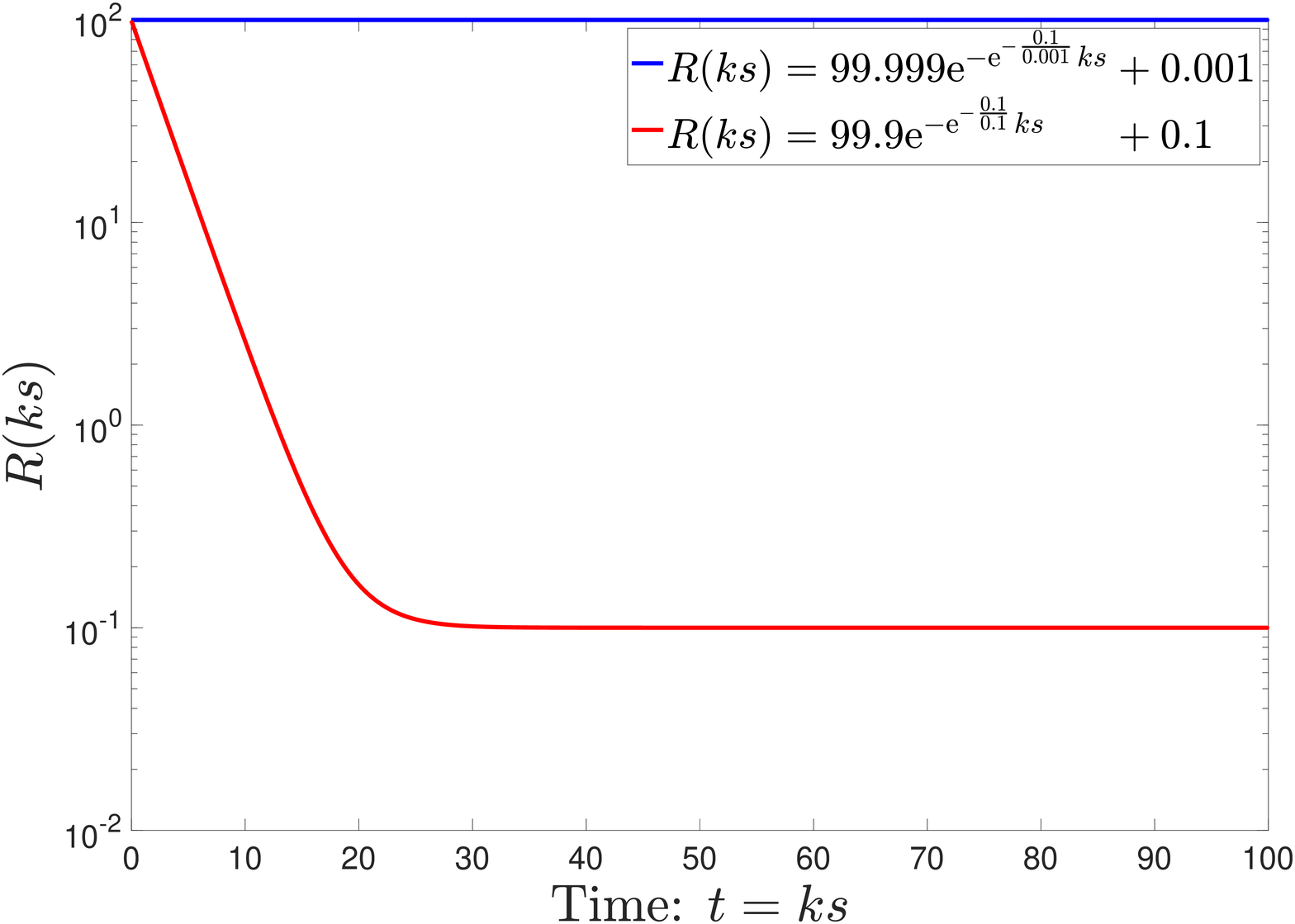}
\end{minipage}
\caption{\small Idealized risk function of the form $R(t) = as +  b\e^{-\e^{\frac{c}{s}} t}$ with the identification $t = ks$, which is adapted from \eqref{eq:as_strong}. The parameters are set as follows: $a = 1, b = 100 - s, c = 0.1$, and the learning rate is $s = 0.1$ or $0.001$. The right plot is a locally enlarged image of the left.} 
\label{fig: exp2_s}
\end{figure}


While a large initial learning rate helps speed up the convergence, \Cref{fig: exp2_s} also demonstrates that a larger learning rate leads to a larger value of the excess risk at stationarity, $\epsilon(s) \equiv \E f(X_s(\infty))  - f^\star$, which is indeed the claim of \Cref{thm: compare_step}. Leveraging \Cref{prop:convv}, we show below why annealing the learning rate at some point would improve the optimization performance. To this end, for any fixed learning rate $s$, consider a stopping time $T_s^{\delta}$ that is defined as 
\[
T_s^{\delta} := \inf_t \left\{ \left| \E  f(X_s(t)) - \E f(X_s(\infty))  \right| \le  \delta \epsilon(s) \right\},
\]
for a small $\delta > 0$. In words, the lr-dependent SDE~\eqref{eqn: sgd_high_resolution_formally} at time $T_s^{\delta}$ is approximately stationary since its risk $\E f(X_s(t)) - f^\star$ is mainly comprised of the excess risk at stationarity $\epsilon(s)$, with a total risk of no more than $(1 + \delta)\epsilon(s)$. From \Cref{prop:convv} it follows that (recall that $\rho$ is the initial distribution):
\begin{equation}\label{eq:t_d_s}
T_s^{\delta} \le \frac1{\lambda_s} \log \frac{C(s) \left\| \rho - \mu_{s} \right\|_{\mu_s^{-1}}}{\delta \epsilon(s)} = \frac{\e^{\frac{2H_{f}}{s}}}{\gamma + o(s)} \log \frac{C(s) \left\| \rho - \mu_{s} \right\|_{\mu_s^{-1}}}{\delta \epsilon(s)}.
\end{equation}
In addition to taking a large $s$, an alternative way to make $T_s^{\delta}$ small is to have an initial distribution $\rho$ that is close to the stationary distribution $\mu_s$. This can be achieved by using the technique of learning rate decay. More precisely, taking a larger learning rate $s_1$ for a while, at the end the distribution of the iterates is approximately the stationary distribution $\mu_{s_1}$, which serves as the initial distribution for SGD with a smaller learning rate $s_2$ in the second phase. Taking $\rho \approx \mu_{s_1}$, the factor $\left\| \rho - \mu_{s} \right\|_{\mu_s^{-1}}$ in \eqref{eq:t_d_s} for the second phase of learning rate decay is approximately
\begin{equation}\label{eq:c_small_begin}
\left\| \mu_{s_1} - \mu_{s_2} \right\|_{\mu_{s_2}^{-1}} = \left(\int (\mu_{s_1} - \mu_{s_2})^2 \mu_{s_2}^{-1} \dd x \right)^{\frac12} = \left(\int \frac{\mu_{s_1}^2}{\mu_{s_2}} \dd x - 1\right)^{\frac12}.
\end{equation}
Both $\mu_{s_1}$ and $\mu_{s_2}$ are decreasing functions of $f$ and, therefore, have the same modes. As a consequence, the integral of $\mu_{s_1}^2/\mu_{s_2}$ is small by appeal to the rearrangement inequality, thereby leading to fast convergence of SGD with learning rate $s_2$ to the stationary risk $\epsilon(s_2)$. In contrast, $\|\rho - \mu_{s_2}\|_{\mu_{s_2}^{-1}}$ would be much larger for a general random initialization $\rho$. Put simply, SGD with learning rate $s_2$ cannot achieve a risk of approximately $\epsilon(s_2)$ given the same number of iterations \textit{without} the warm-up stage using learning rate $s_1$. See \Cref{fig:arrow-large-small} for an illustration.


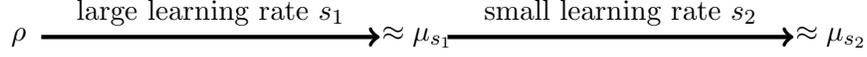
\begin{figure}[htb!]
\centering
\begin{tikzpicture}
     \node at (-0.3,0) {$\rho$};
     \node at (5,0) {$\approx \mu_{s_1}$};
     \node at (10.5, 0) {$\approx \mu_{s_2}$};
     \draw[->, ultra thick] (0,0) -- node [above]{large learning rate $s_1$} (4.5, 0);
     \draw[->, ultra thick] (5.4,0) --  node [above]{small learning rate $s_2$} (10, 0);
\end{tikzpicture}
\caption{\small Learning rate decay. The first phase uses a larger learning rate $s_1$, at the end of which the SGD iterates are approximately distributed as $\mu_{s_1}$. The second phase uses a smaller learning rate $s_2$ and at the end the distribution of the SGD iterates roughly follows $\mu_{s_2}$.}
\label{fig:arrow-large-small}
\end{figure}

\section{Proof of the Linear Convergence}
\label{sec: convergence-rate}

In this section, we prove \Cref{prop:convv} and \Cref{thm: compare_step}, leading to a complete proof of \Cref{thm: continuous-qualitative}.



\subsection{Proof of \Cref{prop:convv}}
\label{sec:proof-crefprop:convv}

To better appreciate the linear convergence of the lr-dependent SDE~\eqref{eqn: sgd_high_resolution_formally}, as established in \Cref{prop:convv}, we start by showing the convergence to stationarity without a rate. In fact, this intermediate result constitutes a necessary step in the proof of \Cref{prop:convv}.

\paragraph{Convergence without a rate.} Recall that we use $\rho$ to denote the initial probability density in the space $L^{2}(\mu_s^{-1})$. Superficially, it seems that the most natural space for probability densities is $L^{1}(\mathbb{R}^{d})$. However, we prefer to work in $L^{2}(\mu_s^{-1})$ since this function space has certain appealing properties that allow us to obtain the proof of the desired convergence results for the lr-dependent SDE. Formally, the following result says that any (nonnegative) function in $L^{2}(\mu_s^{-1})$ can be normalized to be a density function. The proof of this simple lemma is shown in Appendix~\ref{subsec: space-holder}. 


\begin{lem}\label{lem: space-holder}
Let $f$ satisfy the confining condition. Then, $L^{2}(\mu_s^{-1})$ is a subset of $L^{1}(\mathbb{R}^{d})$.
\end{lem}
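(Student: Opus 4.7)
The plan is to deduce the inclusion from a one-line Cauchy--Schwarz argument, using the fact that $\mu_s$ is itself a probability density when $f$ is confining.

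First I would recall that, by the confining condition (\Cref{defn: confining}), the normalization constant $Z_s = \int_{\R^d} e^{-2f/s}\,\dd x$ is finite, so $\mu_s = Z_s^{-1} e^{-2f/s}$ is a genuine probability density with $\int_{\R^d} \mu_s\,\dd x = 1$. Then for any $g \in L^2(\mu_s^{-1})$, I would split $|g| = (|g|\,\mu_s^{-1/2}) \cdot \mu_s^{1/2}$ and apply Cauchy--Schwarz in $L^2(\R^d)$:
\[
\int_{\R^d} |g|\,\dd x \;=\; \int_{\R^d} \bigl(|g|\,\mu_s^{-1/2}\bigr)\,\mu_s^{1/2}\,\dd x \;\le\; \left(\int_{\R^d} g^2\,\mu_s^{-1}\,\dd x\right)^{1/2}\!\left(\int_{\R^d} \mu_s\,\dd x\right)^{1/2} = \|g\|_{\mu_s^{-1}}.
\]
This immediately gives $\|g\|_{L^1} \le \|g\|_{\mu_s^{-1}} < +\infty$, so $g \in L^1(\R^d)$, establishing the inclusion.

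There is no real obstacle here; the only thing to verify is that $\mu_s^{-1/2}$ and $\mu_s^{1/2}$ are measurable (which is immediate since $f$, and hence $e^{\pm f/s}$, is smooth and strictly positive). I would also note in passing that the inequality $\|g\|_{L^1(\R^d)} \le \|g\|_{\mu_s^{-1}}$ gives a continuous embedding, which is the feature used later when normalizing a nonnegative $L^2(\mu_s^{-1})$-function into a probability density.
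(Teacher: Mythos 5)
Your proof is correct and is essentially the paper's own argument: the paper likewise writes $|g| = |g|\,\e^{f/s}\cdot \e^{-f/s}$ and applies Cauchy--Schwarz, with the confining condition guaranteeing the second factor is finite; your version merely carries the normalization $Z_s$ so that the two factors are exactly $\|g\|_{\mu_s^{-1}}$ and $\bigl(\int_{\R^d}\mu_s\,\dd x\bigr)^{1/2}=1$. The resulting bound $\|g\|_{L^1(\R^d)}\le \|g\|_{\mu_s^{-1}}$ is the same, so there is nothing to add.
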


The following result shows that the solution to the lr-dependent SDE converges to stationarity in terms of the dynamics of its probability densities over time.


\begin{lem}\label{thm: converge}
Let $f$ satisfy the confining condition and denote the initial distribution as $\rho \in L^{2}(\mu_s^{-1})$. Then, the unique solution $\rho_s(t, \cdot) \in C^{1} \left( [0, +\infty), L^{2}(\mu_s^{-1}) \right)$ to the Fokker--Planck--Smoluchowski equation~\eqref{eqn: Fokker-Planck} converges in $L^{2}(\mu_s^{-1})$ to the Gibbs invariant distribution $\mu_s$, which is specified by~\eqref{eqn: Gibbs}.

\end{lem}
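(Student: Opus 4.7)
The plan is to identify a Lyapunov functional on $L^2(\mu_s^{-1})$ whose derivative along the Fokker--Planck--Smoluchowski flow is the Dirichlet energy of the relative density $h := \rho_s/\mu_s$, and then to pass to the limit via a compactness argument. The starting point is to rewrite equation~\eqref{eqn: Fokker-Planck} in self-adjoint divergence form. Using $\nabla \log \mu_s = -\tfrac{2}{s}\nabla f$, one checks the identity $\nabla \rho_s + \tfrac{2}{s}\rho_s \nabla f = \mu_s \nabla h$, so~\eqref{eqn: Fokker-Planck} becomes
\[
\partial_t \rho_s \;=\; \frac{s}{2}\, \nabla \cdot \!\left( \mu_s \nabla h \right),
\]
which exposes the Dirichlet structure in the weighted Hilbert space $L^2(\mu_s)$.

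Next I would introduce the Lyapunov functional
\[
\Phi(t) \;:=\; \|\rho_s(t,\cdot) - \mu_s\|_{\mu_s^{-1}}^{2} \;=\; \int_{\mathbb{R}^d} (h(t,x)-1)^2\, \mu_s(x)\,\dd x .
\]
Differentiating in $t$ (which is legitimate since $\rho_s \in C^1([0,+\infty), L^2(\mu_s^{-1}))$), substituting the divergence form, and integrating by parts, with the boundary contributions vanishing thanks to the rapid decay of $\mu_s$ guaranteed by the confining condition and the hypothesis $\rho_s(t,\cdot) \in L^2(\mu_s^{-1})$, yields
\[
\frac{\dd \Phi}{\dd t} \;=\; -s \int_{\mathbb{R}^d} |\nabla h(t,x)|^{2}\, \mu_s(x)\,\dd x \;\le\; 0.
\]
Hence $\Phi$ is non-increasing and bounded below by zero, so it admits a limit $\Phi_\infty \ge 0$, and integrating the identity on $[0,+\infty)$ gives $\int_0^{\infty}\!\!\int_{\mathbb{R}^d} |\nabla h|^2 \mu_s \,\dd x\,\dd t < +\infty$. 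In particular there is a sequence $t_n \to +\infty$ along which the Dirichlet energy $\int |\nabla h(t_n,\cdot)|^2 \mu_s \,\dd x$ tends to zero, while $\|h(t_n,\cdot) - 1\|_{L^2(\mu_s)}$ stays bounded by monotonicity of $\Phi$.

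The final step is to identify $\Phi_\infty = 0$. Weak compactness in the weighted Sobolev space associated to $\mu_s$ extracts a further subsequence along which $h(t_n,\cdot) - 1$ converges weakly to some $h_\infty - 1$ with $\nabla h_\infty = 0$ a.e., so $h_\infty$ is constant; mass conservation $\int \rho_s(t,\cdot)\,\dd x = \int \rho\,\dd x = 1$ (obtained by integrating~\eqref{eqn: Fokker-Planck} and using decay at infinity, together with \Cref{lem: space-holder} which places $\rho$ in $L^1(\mathbb{R}^d)$) pins this constant to $1$, so $\rho_s(t_n,\cdot) \to \mu_s$ in $L^2(\mu_s^{-1})$ along the subsequence. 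Monotonicity of $\Phi$ upgrades this to convergence of the full trajectory, giving the claimed convergence in $L^2(\mu_s^{-1})$. The main obstacle is the last compactness step: converting weak convergence of $h(t_n,\cdot)$ into genuine convergence in $L^2(\mu_s)$ and correctly identifying the limit requires controlling the tails of the trajectory, and this is exactly where the confining condition $\lim_{\|x\|\to\infty} f(x) = +\infty$ is essential, since it ensures tightness of the weighted measure $\mu_s$ and rules out loss of mass at infinity.
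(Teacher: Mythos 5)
Your Lyapunov argument is the same as the paper's: the paper proves this lemma by differentiating exactly the functional $\Phi(t)=\|\rho_s(t,\cdot)-\mu_s\|_{\mu_s^{-1}}^2=\|h_s(t,\cdot)-1\|_{\mu_s}^2$ and obtaining $\frac{\dd}{\dd t}\Phi=-s\int_{\R^d}\|\nabla h_s\|^2\dd\mu_s\le 0$ (there via the self-adjointness of $\mathscr{L}_s$ in \Cref{lem: equivlent-h}, in your case via the equivalent divergence form $\partial_t\rho_s=\frac{s}{2}\nabla\cdot(\mu_s\nabla h)$), followed by the identification that the dissipation vanishes only when $h_s$ is constant, and mass normalization forces the constant to be $1$.

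The genuine gap is in your last step, and you have flagged it yourself without closing it. From $\|\nabla h(t_n,\cdot)\|_{L^2(\mu_s)}\to 0$ and boundedness of $\|h(t_n,\cdot)-1\|_{L^2(\mu_s)}$ you extract a \emph{weak} limit $h_\infty\equiv 1$; but weak convergence does not give convergence of norms, so this does not yield $\Phi(t_n)\to 0$, and hence the asserted convergence $\rho_s(t_n,\cdot)\to\mu_s$ in $L^2(\mu_s^{-1})$ (a norm statement) does not follow. Tightness of $\mu_s$ from the confining condition is not the missing ingredient: what you actually need is that sets bounded in the weighted Sobolev norm $\|g\|_{L^2(\mu_s)}+\|\nabla g\|_{L^2(\mu_s)}$ are precompact in $L^2(\mu_s)$, or equivalently a Poincar\'e-type inequality for $\mu_s$, and the confining condition alone does not supply this (sub-exponential potentials such as $f(x)=\|x\|^{1/2}$ are confining yet admit sequences with $\|\nabla g_n\|_{L^2(\mu_s)}\to 0$ while the variance of $g_n$ stays bounded away from zero). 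In the paper this is precisely the role of the extra Villani condition, which gives the Poincar\'e inequality of \Cref{thm: villani-poincare-inq} and the quantitative \Cref{thm: rate-l2-1}; a rate-free way to close your gap under the stated hypotheses is spectral: as in Appendix~\ref{subsec: proof_wellposedness}, expand $h(0,\cdot)-1$ in the spectral resolution of the self-adjoint nonpositive operator $\mathscr{L}_s$ and use that its spectral measure has no atom at $0$ (since $\ker\mathscr{L}_s$ consists of constants), so $\e^{t\mathscr{L}_s}(h(0,\cdot)-1)\to 0$ in $L^2(\mu_s)$ by dominated convergence. To be fair, the paper's in-text proof is itself terse at this identification step, but it does not rest on the invalid weak-to-strong inference; as written, your proof is incomplete exactly there.
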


Note that the existence and uniqueness of $\rho_s(t, \cdot)$ is ensured by \Cref{prop: unique-existence}. The convergence guarantee on $\rho_s(t, \cdot)$ in \Cref{thm: converge} relies heavily on the following lemma (\Cref{lem: equivlent-h}). This preparatory lemma introduces the transformation 
\[
h_s(t, \cdot) =\rho_s(t, \cdot) \mu_{s}^{-1} \in C^{1} \left( [0, +\infty), L^{2}(\mu_s) \right),
\]
which allows us to work in the space $L^{2}(\mu_s)$ in place of $L^{2}(\mu_s^{-1})$ (a measurable function $g$ is said to belong to $L^2(\mu_s)$ if $\|g\|_{\mu_s}  := \left( \int_{\mathbb{R}^{d}} g^{2} \dd \mu_s \right)^{\frac{1}{2}} < + \infty$\footnote{Here, $\dd \mu_s$ stands for the probability measure $\dd \mu_s \equiv \mu_s \dd x = \frac1{Z_s} \exp(-2f/s) \dd x$.}). It is not hard to show that $h_s$ satisfies the following equation
\begin{equation}\label{eqn: FPS-equiv}
\frac{\partial h_s}{\partial t} = -\nabla f \cdot \nabla h_s + \frac{s}{2} \Delta h_s,
\end{equation}
with the initial distribution $h_s(0, \cdot) = \rho \mu_s^{-1} \in  L^{2}(\mu_s)$. The linear operator 
\begin{equation}\label{eqn: gene-oper}
\mathscr{L}_s =  - \nabla f \cdot \nabla  + \frac{s}{2} \Delta 
\end{equation}
has a crucial property, as stated in the following lemma. Its proof is postponed to Appendix~\ref{subsec: equiv-h-rho}.


\begin{lem}\label{lem: equivlent-h}
The linear operator $\mathscr{L}_s$ in~\eqref{eqn: gene-oper} is self-adjoint and nonpositive in $L^{2}(\mu_s)$. Explicitly, for any $g_1, g_2$, this operator obeys
\[
\int_{\mathbb{R}^{d}} (\mathscr{L}_s g_1 ) g_2 \dd \mu_s = \int_{\mathbb{R}^{d}} g_1 \mathscr{L}_s g_2 \dd \mu_s = -\frac{s}{2} \int_{\mathbb{R}^{d}} \nabla g_1 \cdot \nabla g_2 \dd \mu_s.
\]
\end{lem}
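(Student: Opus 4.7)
\medskip

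\noindent\textbf{Proof proposal.} The plan is a direct integration-by-parts computation, with the key observation that the weight $\mu_s = Z_s^{-1} e^{-2f/s}$ satisfies the logarithmic-derivative identity
\[
\nabla \mu_s = -\tfrac{2}{s}\,\nabla f\cdot \mu_s,
\]
so that the drift $-\nabla f\cdot \nabla$ and the Laplacian $\tfrac{s}{2}\Delta$ combine into a single symmetric form once we integrate against $\mu_s$.

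Concretely, I will first verify the identity for $g_1,g_2$ in a sufficiently nice class (for instance, smooth functions with compact support, or Schwartz functions, both of which are dense in $L^2(\mu_s)$ under the confining assumption on $f$). Starting from
\[
\int_{\R^d}(\mathscr{L}_s g_1)\,g_2\,\dd\mu_s \;=\; -\int_{\R^d}(\nabla f\cdot\nabla g_1)\,g_2\,\mu_s\,\dd x \;+\; \frac{s}{2}\int_{\R^d}(\Delta g_1)\,g_2\,\mu_s\,\dd x,
\]
I will integrate by parts in the second integral, writing $\Delta g_1\cdot(g_2\mu_s) = \nabla\cdot(\nabla g_1\,g_2\mu_s) - \nabla g_1\cdot \nabla(g_2\mu_s)$. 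Expanding $\nabla(g_2\mu_s) = (\nabla g_2)\mu_s + g_2\nabla\mu_s$ and substituting $\nabla\mu_s = -\tfrac{2}{s}\nabla f\,\mu_s$ produces exactly one term that cancels the drift contribution $-\int(\nabla f\cdot\nabla g_1)g_2\mu_s\,\dd x$ and leaves the symmetric Dirichlet form
\[
\int_{\R^d}(\mathscr{L}_s g_1)\,g_2\,\dd\mu_s \;=\; -\frac{s}{2}\int_{\R^d}\nabla g_1\cdot\nabla g_2\,\dd\mu_s.
\]
Since this expression is manifestly symmetric in $g_1$ and $g_2$, the same chain of equalities applied with the roles exchanged gives $\int g_1\,\mathscr{L}_s g_2\,\dd\mu_s$ equal to the same quantity, proving self-adjointness. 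Taking $g_1=g_2=g$ yields $\int g\,\mathscr{L}_s g\,\dd\mu_s = -\tfrac{s}{2}\int\|\nabla g\|^2\,\dd\mu_s\le 0$, which is the nonpositivity claim.

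The only subtlety is to justify that the boundary terms discarded in the integration by parts really vanish, and then to extend the identity from the dense test class to all relevant $g_1,g_2$. I expect this to be the main technical point. For compactly supported smooth $g_1,g_2$ the boundary integrals are trivially zero; for the extension, one uses that $\mu_s$ has Gaussian-type decay (because $f$ satisfies the confining condition, so $e^{-2f/s}$ is integrable and in fact decays at infinity), which forces the product $\nabla g_1\cdot g_2\,\mu_s$ to be integrable and its flux through large spheres to tend to zero whenever $g_1,g_2$ lie in the natural weighted Sobolev space associated to $\mathscr{L}_s$. A density/closure argument then transfers the identity to the full domain of $\mathscr{L}_s$ in $L^2(\mu_s)$, completing the proof of the three displayed equalities in the lemma.
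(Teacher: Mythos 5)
Your computation is the same integration-by-parts argument the paper uses: write out $\mathscr{L}_s g_1$, integrate the Laplacian term by parts against $g_2\mu_s$, and use $\nabla\mu_s = -\tfrac{2}{s}\nabla f\,\mu_s$ to cancel the drift term, leaving the symmetric Dirichlet form. The only difference is that you explicitly flag the boundary-term/density issue, which the paper silently omits; this is a reasonable refinement but not a different route.
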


\begin{proof}[Proof of \Cref{thm: converge}]
We have
\begin{align}
\frac{\dd}{\dd t} \left\| \rho_{s}(t, \cdot)- \mu_{s} \right\|_{\mu_s^{-1} } ^{2}     & =   \frac{\dd}{\dd t} \left\| h_s(t, \cdot) - 1 \right\|^{2}_{\mu_s} \nonumber \\
            & =   \frac{\dd}{\dd t} \int_{\mathbb{R}^{d}} \left( h_s(t, x) - 1 \right)^{2} \dd \mu_s  \nonumber \\
             & = 2 \int_{\mathbb{R}^{d}} (h_s - 1) \mathscr{L}_s (h_s - 1)\dd \mu_s \nonumber,
\end{align}
where the last equality is due to \eqref{eqn: FPS-equiv}. Next, we proceed by making use of \Cref{lem: equivlent-h}:
\begin{align}
2 \int_{\mathbb{R}^{d}} (h_s - 1)  \mathscr{L}_s (h_s - 1) \dd \mu_s & = -s\int_{\mathbb{R}^{d}} \nabla (h_s - 1) \cdot \nabla (h_s - 1) \dd \mu_s \nonumber\\
             & = -s\int_{\mathbb{R}^{d}} \| \nabla h_s\|^{2} \dd \mu_s \le 0. \label{eqn: converge-FK}
\end{align}
Thus, $\left\| \rho_{s}(t, \cdot)- \mu_{s} \right\|_{\mu_s^{-1} } ^{2}$ is a strictly decreasing function, decreasing asymptotically towards the equilibrium state
\[
\int_{\mathbb{R}^{d}} \| \nabla h_s\|^{2} \dd \mu_s = 0.
\]
This equality holds, however, only if $h_s(t, \cdot)$ is constant. Because both $\rho_s(t, \cdot)$ and $\mu_s$ are probability densities, this case must imply that $h_s(t, \cdot) \equiv 1$; that is, $ \rho_{s}(t, \cdot)\equiv \mu_{s} $. Therefore, $\rho_s(t, \cdot) \in C^{1} \left( [0, +\infty), L^2(\mu_s^{-1}) \right)$ converges to the Gibbs invariant distribution $\mu_s$ in $L^2(\mu_s^{-1})$.
\end{proof}



\paragraph{Linear convergence.} We turn towards the proof of linear convergence.  We first state a lemma which serves as a fundamental tool for us to prove a linear rate of convergence for \Cref{prop:convv}.  

\begin{lem}[Theorem A.1 in~\cite{villani2009hypocoercivity}]\label{thm: villani-poincare-inq}
If $f$ satisfies both the confining condition and the Villani condition, then there exists $\lambda_{s} > 0$ such that the measure $\dd \mu_s$ satisfies the following Poincar\'e-type inequality 
\begin{equation}\nonumber
\int_{\mathbb{R}^{d}} h^{2} \dd \mu_{s} - \left( \int_{\mathbb{R}^{d}} h \dd \mu_{s} \right)^{2} \leq  \frac{s}{2\lambda_{s}}\int_{\mathbb{R}^{d}} \|\nabla h\|^{2} \dd \mu_{s},
\end{equation}
for any $h$ such that the integrals above are well-defined.
\end{lem}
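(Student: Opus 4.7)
The plan is to recast the Poincar\'e inequality as a spectral gap statement for a Schr\"odinger operator (the Witten Laplacian) on $L^2(\mathbb{R}^d)$, and then to produce that gap by showing the operator has compact resolvent, which is exactly what the Villani condition is designed to deliver. Writing $u = h\sqrt{\mu_s}$ defines a unitary isometry $U:L^2(\mu_s) \to L^2(\mathbb{R}^d)$, since $\int h^2 \, \dd\mu_s = \int u^2 \, \dd x$. A direct integration by parts, using $\nabla \mu_s = -(2/s)\mu_s \nabla f$, gives
\[
\frac{s}{2}\int_{\mathbb{R}^d} \|\nabla h\|^2 \, \dd\mu_s \;=\; \frac{s}{2}\int_{\mathbb{R}^d} \|\nabla u\|^2 \, \dd x + \int_{\mathbb{R}^d} V_s\, u^2 \, \dd x, \qquad V_s(x) := \frac{\|\nabla f(x)\|^2}{2s} - \frac{\Delta f(x)}{2}.
\]
Thus the Dirichlet form is the quadratic form of the Schr\"odinger operator $H_s := -(s/2)\Delta + V_s$. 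One verifies directly that $H_s\sqrt{\mu_s} = 0$, so $U$ carries the constant function $1 \in L^2(\mu_s)$ to a normalized ground state of $H_s$ at eigenvalue zero, and $\int h \, \dd\mu_s = \langle u, \sqrt{\mu_s}\rangle_{L^2(\mathbb{R}^d)}$. The variance on the left-hand side of the target inequality is therefore the squared $L^2(\mathbb{R}^d)$-norm of the projection of $u$ onto $\{\sqrt{\mu_s}\}^\perp$.

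The next step is to show that $H_s$ has compact resolvent. The Villani condition states $2V_s = \|\nabla f\|^2/s - \Delta f \to +\infty$ as $\|x\| \to \infty$, so $V_s$ is bounded below and diverges at infinity. A standard Persson-type argument (truncating outside a large ball $B_R$, applying Rellich--Kondrachov inside $B_R$, and using $V_s \to \infty$ together with Markov's inequality to control $L^2$-mass outside $B_R$ for form-bounded sequences) shows that the form domain of $H_s$ embeds compactly into $L^2(\mathbb{R}^d)$. Hence $H_s$ has purely discrete spectrum $0 = \nu_0 \le \nu_1 \le \nu_2 \le \cdots$ with $\nu_n \to +\infty$. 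Simplicity of $\nu_0$ follows from Perron--Frobenius for Schr\"odinger operators since $\sqrt{\mu_s}$ is strictly positive, so $\nu_1 > 0$, and we set $\lambda_s := \nu_1$.

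The min-max principle then gives, for every $u$ in the form domain of $H_s$,
\[
\langle H_s u, u\rangle_{L^2(\mathbb{R}^d)} \;\ge\; \lambda_s \Big(\|u\|_{L^2}^2 - \langle u, \sqrt{\mu_s}\rangle^2\Big),
\]
which, translated back through $U^{-1}$, is exactly the asserted Poincar\'e inequality with constant $s/(2\lambda_s)$. The main obstacle is the compact-resolvent step: divergence of a potential at infinity is necessary but not a priori sufficient for compact resolvent on all of $\mathbb{R}^d$, and one has to combine the tail/truncation estimate with an interior Rellich--Kondrachov argument, taking care that $V_s$ may be negative on bounded sets (the Villani condition only constrains the behavior at infinity) and using that its negative part is form-bounded by $-(s/2)\Delta$. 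Once compact resolvent is in hand, the rest of the argument is automatic and produces $\lambda_s > 0$ qualitatively but without a quantitative value; the sharp asymptotic $\lambda_s \asymp \e^{-2H_f/s}$ of \Cref{thm: continuous-quantative} requires a separate and much finer analysis of the low-lying spectrum of $H_s$.
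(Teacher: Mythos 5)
Your proof is correct but takes a genuinely different route from the paper's. The paper's own argument, given in Appendix~\ref{subsec: villani-poincare-inq}, is Villani's direct truncation argument carried out entirely in $L^2(\mu_s)$: it first proves the key estimate $\int V_s h^2 \,\dd\mu_s \le s\int\|\nabla h\|^2\,\dd\mu_s$ for mean-zero $h$ by expanding $\int\|\nabla(h\e^{-f/s})\|^2\dd x\ge 0$, then splits $\R^d$ into a ball $B_{R_s}$ and its complement, applies the classical bounded-domain Poincar\'e inequality on $B_{R_s}$, and uses the divergence of $V_s$ at infinity to absorb the tail; this is elementary and manufactures an explicit (if not sharp) expression for $\lambda_s$ in terms of $R_s$, the bounded-domain Poincar\'e constant $C(R_s)$, and $\inf V_s$. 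You instead conjugate by $h\mapsto h\sqrt{\mu_s}$ to the Schr\"odinger operator $H_s=-\tfrac{s}{2}\Delta+V_s$ on $L^2(\R^d)$ and extract $\lambda_s$ from its spectral gap, via compact resolvent (hence discrete spectrum) and simplicity of the ground state $\sqrt{\mu_s}$. This is conceptually tighter with the rest of the paper, since it identifies $\lambda_s=\tfrac12\zeta_{s,1}$ directly --- something the paper only records afterwards in \eqref{eq:schrodinger_ex_decay} using \Cref{thm: discrete-spectrum-schrodinger}. The price is that your route leans on nontrivial spectral input that you allude to rather than prove. Two remarks: (i) your bespoke Persson-style truncation for compact resolvent is unnecessary here --- $f\in C^\infty(\R^d)$ makes $V_s$ continuous, and the Villani condition then forces $V_s$ to be bounded below as well as divergent, so \Cref{thm: discrete-spectrum-schrodinger} applies off the shelf; (ii) the more delicate omission is simplicity of the zero eigenvalue, which requires that $\e^{-tH_s}$ be positivity-improving (or an equivalent ground-state uniqueness theorem for Schr\"odinger operators with locally bounded potentials) together with strict positivity of $\sqrt{\mu_s}$; this is standard but should be cited rather than asserted as bare ``Perron--Frobenius.''
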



For completeness, we provide a proof of this Poincar\'{e}-type inequality in \Cref{subsec: villani-poincare-inq}. For comparison, the usual Poincar\'{e} inequality is put into use for a bounded domain, as opposed to the entire Euclidean space as in \Cref{thm: villani-poincare-inq}. In addition, while the constant in the Poincar\'{e} inequality in general depends on the dimension (see, for example, \cite[Theorem 1, Chapter 5.8]{evans2010partial}), $\lambda_s$ in \Cref{thm: villani-poincare-inq} is completely determined by geometric properties of the objective $f$. See details in \Cref{sec: estimate-rate}.



Importantly, \Cref{thm: villani-poincare-inq} allows us to obtain the following lemma, from which the proof of \Cref{prop:convv} follows readily. The proof of this lemma is given at the end of this subsection.
\begin{lem}\label{thm: rate-l2-1} 
Under the assumptions of \Cref{prop:convv}, $\rho_s(t, \cdot)$ converges to the Gibbs invariant distribution $\mu_s$ in $L^{2}(\mu_s^{-1})$ at the rate
\begin{equation}\label{eqn: rate-l2-1}
\left\| \rho_{s}(t, \cdot) - \mu_{s} \right\|_{\mu_{s}^{-1}} \leq \e^{- \lambda_{s} t }\left\|\rho - \mu_{s} \right\|_{\mu_{s}^{-1}}.
\end{equation}
\end{lem}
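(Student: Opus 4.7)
The plan is to close the loop between the energy-dissipation identity already derived in the proof of Lemma~\ref{thm: converge} and the Villani Poincar\'e-type inequality of Lemma~\ref{thm: villani-poincare-inq}, converting the pointwise-in-time bound into an exponential contraction by a one-line Gr\"onwall argument. Concretely, I would again pass to $h_s(t,\cdot) = \rho_s(t,\cdot)\mu_s^{-1}$, which lives in $C^1([0,+\infty), L^2(\mu_s))$ and solves $\partial_t h_s = \mathscr{L}_s h_s$. The identity derived in~\eqref{eqn: converge-FK} then reads
\[
\frac{\dd}{\dd t}\left\|\rho_s(t,\cdot) - \mu_s\right\|_{\mu_s^{-1}}^2
= \frac{\dd}{\dd t}\left\|h_s(t,\cdot) - 1\right\|_{\mu_s}^2
= -s\int_{\mathbb{R}^d} \|\nabla h_s\|^2 \,\dd \mu_s,
\]
so the task reduces to bounding this dissipation from below by a multiple of the current squared distance.

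Next, I would apply Lemma~\ref{thm: villani-poincare-inq} to $h_s(t,\cdot)$. Because $\rho_s(t,\cdot)$ and $\mu_s$ are both probability densities (mass conservation under the Fokker--Planck--Smoluchowski flow is built into \Cref{prop: unique-existence}), one has $\int h_s\,\dd \mu_s = \int \rho_s\,\dd x = 1$, so the variance in the Villani inequality collapses to $\|h_s - 1\|_{\mu_s}^2$ and $\nabla(h_s - 1) = \nabla h_s$. This yields
\[
\left\|h_s(t,\cdot) - 1\right\|_{\mu_s}^2 \;\le\; \frac{s}{2\lambda_s}\int_{\mathbb{R}^d}\|\nabla h_s\|^2\,\dd\mu_s.
\]

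Combining the two displays gives the differential inequality
\[
\frac{\dd}{\dd t}\left\|h_s(t,\cdot) - 1\right\|_{\mu_s}^2
\;\le\; -2\lambda_s \left\|h_s(t,\cdot) - 1\right\|_{\mu_s}^2,
\]
to which Gr\"onwall's lemma applies directly, giving exponential decay at rate $2\lambda_s$ for the squared norm. Taking square roots and translating back through $\rho_s - \mu_s = (h_s - 1)\mu_s$, i.e.\ $\|\rho_s(t,\cdot) - \mu_s\|_{\mu_s^{-1}} = \|h_s(t,\cdot) - 1\|_{\mu_s}$, produces exactly~\eqref{eqn: rate-l2-1}.

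There is essentially no hard step beyond invoking Lemma~\ref{thm: villani-poincare-inq}, which is where all the analytic content sits. The only item I would check carefully is the preservation of the mean-one property $\int h_s(t,\cdot)\,\dd\mu_s = 1$ over time, so that the variance-form Poincar\'e inequality can be applied without a subtraction; this is immediate from the divergence structure of~\eqref{eqn: Fokker-Planck} together with the decay of $\rho_s$ at infinity granted by $\rho_s(t,\cdot)\in L^2(\mu_s^{-1})\subset L^1(\mathbb{R}^d)$ via Lemma~\ref{lem: space-holder}.
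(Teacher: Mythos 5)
Your proposal is correct and follows essentially the same route as the paper's own proof: pass to $h_s = \rho_s \mu_s^{-1}$, invoke the dissipation identity~\eqref{eqn: converge-FK}, apply \Cref{thm: villani-poincare-inq} using $\int h_s \,\dd\mu_s = 1$ to collapse the variance to $\|h_s-1\|_{\mu_s}^2$, and integrate the resulting differential inequality. The only cosmetic difference is that you name the final step Gr\"onwall's lemma, whereas the paper simply integrates the inequality directly.
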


\begin{proof}[Proof of \Cref{prop:convv}]
Using \Cref{thm: rate-l2-1}, we get
\begin{align*}
\left| \E f(X_s(t)) - \E f(X(\infty)) \right|  & = \left|  \int_{\mathbb{R}^{d}} f(x) \left( \rho_{s}(t, x) - \mu_s(x) \right) \dd x\right| \\
&= \left|  \int_{\mathbb{R}^{d}} (f(x) - f^{\star}) \left( \rho_{s}(t, x) - \mu_s(x) \right) \dd x\right| \\
                                                                    & \leq  \left(\int_{\mathbb{R}^{d}}  (f(x) - f^{\star}) ^{2} \mu_s(x) \dd x \right)^{\frac{1}{2}} \left( \int_{\mathbb{R}^{d}}  \left( \rho_{s}(t, x) - \mu_s(x) \right)^{2} \mu_s^{-1} \dd x \right)^{\frac{1}{2}} \\
                                                                    &  \leq C(s) \e^{-\lambda_s  t} \left\|\rho- \mu_{s} \right\|_{\mu_s^{-1}},
\end{align*}
where the first inequality applies the Cauchy-Schwarz inequality and
\[
C(s) = \left(\int_{\mathbb{R}^{d}}  (f - f^{\star}) ^{2} \mu_s \dd x \right)^{\frac{1}{2}}
\]
is an increasing function of $s$.

\end{proof}

We conclude this subsection with the proof of \Cref{thm: rate-l2-1}.
\begin{proof}[Proof of \Cref{thm: rate-l2-1}]
It follows from~\eqref{eqn: converge-FK} that
\[
\frac{\dd}{\dd t} \left\| \rho_s(t, \cdot) - \mu_s \right\|_{\mu_s^{-1} } ^{2} = - s \int_{\mathbb{R}^{d}} \| \nabla h_{s} \|^{2}  \dd \mu_{s}.
\]
Next, using Lemma~\ref{thm: villani-poincare-inq} and recognizing the equality $\int_{\mathbb{R}^{d}} h_s\dd \mu_s = \int_{\mathbb{R}^{d}} \rho_s(t, x) \dd x = 1$, we get
\begin{align}
\frac{\dd}{\dd t} \left\| \rho_s(t, \cdot) - \mu_s \right\|_{\mu_s^{-1} } ^{2}  & \leq - 2 \lambda_{s} \left( \int_{\mathbb{R}^{d}} h_s^{2} \dd \mu_{s} - \left( \int_{\mathbb{R}^{d}} h_s \dd \mu_{s} \right)^{2} \right) \nonumber \\
& = - 2 \lambda_{s} \left( \int_{\mathbb{R}^{d}} h_s^{2} \dd \mu_{s} - 1 \right) \nonumber \\
& =  - 2 \lambda_{s} \int_{\mathbb{R}^{d}} (h_s - 1)^{2}  \dd \mu_{s} \nonumber \\
& = - 2\lambda_{s}\left\|\rho_s(t,\cdot) - \mu_s \right\|_{\mu_{s}^{-1}} ^{2}. \nonumber
\end{align}
Integrating both sides yields \eqref{eqn: rate-l2-1}, as desired.

\end{proof}




\subsection{Proof of \Cref{thm: compare_step}}
\label{sec:proof-crefthm:-comp}

Next, we turn to the proof of \Cref{thm: compare_step}. We first state a technical lemma, deferring its proof to \Cref{subsec: proof-lem-deriv-ep=0}.   
\begin{lem}\label{lem: deriv_epsilon=0}
Under the assumptions of \Cref{thm: compare_step}, the excess risk at stationarity $\epsilon(s)$ satisfies
 \[
 \frac{\dd\epsilon(0)}{\dd s} = 0.
 \]
 \end{lem}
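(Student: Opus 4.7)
The natural first step is to exploit the exponential-family structure of the Gibbs measure $\mu_s \propto \e^{-2f/s}$ and differentiate $\epsilon(s) = \int f \, \dd\mu_s - f^\star$ in $s$. Using $\partial_s \e^{-2f/s} = (2f/s^{2}) \e^{-2f/s}$ in both the numerator and the normalizing constant $Z_s = \int \e^{-2f/s} \dd x$ and simplifying the ratio gives the explicit identity
\[
\frac{\dd \epsilon(s)}{\dd s} \;=\; \frac{2}{s^{2}} \, \mathrm{Var}_{\mu_s}(f),
\]
which is manifestly nonnegative---yielding for free the strictly-increasing part of \Cref{thm: compare_step} for non-constant $f$---and reduces the lemma to analyzing $\mathrm{Var}_{\mu_s}(f)/s^{2}$ in the limit $s \to 0^{+}$.

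To control this ratio, I would localize around the global minimum set of $f$. The confining condition ensures that the contribution of $\mu_s$ from outside any fixed neighborhood of a global minimizer $x^{\star}$ is $O(\e^{-c/s})$ and hence negligible. Inside a small neighborhood of $x^{\star}$, I would Taylor-expand $f(x) - f^{\star}$, apply the parabolic rescaling $u = (x - x^{\star})/\sqrt{s}$ to convert $\mu_s$ into a perturbed Gaussian, and read off matching asymptotic expansions of $\E_{\mu_s}[(f-f^\star)^{2}]$ and $(\E_{\mu_s}[f-f^\star])^{2}$ order-by-order in $\sqrt{s}$. The interchange of limit and integral in this expansion would be justified by dominated convergence, with the dominating envelope supplied by the quadratic-type growth at infinity implied by the Villani condition; a useful auxiliary tool is the Stein-type identity $\int g\, \nabla f \, \dd\mu_s = \frac{s}{2} \int \nabla g \, \dd\mu_s$, which follows from $\nabla \mu_s = -\tfrac{2}{s}\nabla f\, \mu_s$ by integration by parts and which in particular yields the handy reduction $\E_{\mu_s}[\|\nabla f\|^{2}] = \tfrac{s}{2}\,\E_{\mu_s}[\Delta f]$.

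The main obstacle is isolating the cancellation that forces $\mathrm{Var}_{\mu_s}(f)/s^{2}$ to vanish rather than merely to stay bounded. A crude second-order Laplace calculation makes both $\E_{\mu_s}[(f-f^\star)^{2}]$ and $(\E_{\mu_s}[f-f^\star])^{2}$ of order $s^{2}$, so the naive estimate gives only $\mathrm{Var}_{\mu_s}(f) = O(s^{2})$, which is not enough. The crux is to show that the contributions coming from the quadratic part of $f$ near $x^{\star}$ cancel between these two terms, and that the remaining contribution (driven by the genuinely higher-order terms of $f$ and by any interaction between distinct global minima when the minimum is not unique) is $o(s^{2})$. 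I would organize this by substituting the Stein identity into the variance formula to rewrite $\mathrm{Var}_{\mu_s}(f)$ in terms of quantities involving $\nabla f$ and $\Delta f$ weighted against $\mu_s$, and then tracking the leading order in $s$ of each piece; making the remainders uniform in $s$ near $0$ is the technical heart of the argument.
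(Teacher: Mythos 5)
Your opening step---differentiating $\epsilon(s)=\E_{\mu_s}f - f^\star$ to obtain
\[
\frac{\dd\epsilon(s)}{\dd s}
= \frac{2}{s^{2}}\,\mathrm{Var}_{\mu_s}(f)
= 2\int_{\R^d}\Big(\frac{g}{s}\Big)^{2}\dd\mu_s
  - 2\left(\int_{\R^d}\frac{g}{s}\,\dd\mu_s\right)^{2},\qquad g:=f-f^\star,
\]
is exactly the paper's starting point and is correct. The gap is at the step you yourself flag as the ``crux'': you hope the quadratic contribution near the minimizer cancels between the two terms. It does not, and carrying out your own parabolic rescaling makes this plain. Take the simplest admissible objective $f(x)=\tfrac12\|x\|^{2}$ (confining, Villani, strongly convex). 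Then $g=f$, $\mu_s=\mathcal N(0,\tfrac s2 I)$, and with $X=\sqrt{s/2}\,Z$, $Z\sim\mathcal N(0,I)$, one has $g(X)/s=\tfrac14\|Z\|^{2}$, so
\[
\frac{\dd\epsilon(s)}{\dd s}=2\,\mathrm{Var}\!\left(\tfrac14\|Z\|^{2}\right)=\frac{2d}{8}=\frac d4
\quad\text{for every } s>0,
\]
and indeed $\epsilon(s)=sd/4$ exactly. Thus $\epsilon'(0^{+})=d/4>0$: the lemma is false as stated, and the cancellation your plan relies on cannot be made to happen because the leading-order object is the variance of a $\chi^2_d$, which is $2d$, not $0$. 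More generally, for a Morse $f$ with a unique nondegenerate minimizer, Laplace gives $\epsilon(s)=\tfrac{sd}{4}+o(s)$.

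For what it is worth, the same computation locates the error in the paper's own proof: the paper passes $\lim_{s\to0^{+}}$ inside $\int(g/s)^{k}\Pi(g/s)\,\dd x$ via a Fatou argument, using the envelope $(g/s)^{k}\Pi(g/s)\le (g/s)^{k}\e^{-(g/s)^{\alpha}}$ with $\alpha<1$. But $\Pi(g/s)=\e^{-2g/s}/Z_s$ with $Z_s\asymp s^{d/2}\to 0$, so near $x^\star$ the left side is of order $s^{-d/2}$ and the claimed domination fails; the mass concentrating at $x^\star$ under your rescaling $u=(x-x^\star)/\sqrt s$ is precisely what keeps $\epsilon'(s)$ bounded away from $0$. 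Note, however, that \Cref{thm: compare_step} is not endangered: its proof only needs $\epsilon'(s)$ to be bounded on $[0,S]$, and the Laplace estimate you outline (yielding $\lim_{s\to0^{+}}\epsilon'(s)=d/4$ plus controlled remainders) does supply that bound even though the limit is nonzero.
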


Using~\Cref{lem: deriv_epsilon=0}, we now finish the proof of \Cref{thm: compare_step}. 
\begin{proof}[Proof of \Cref{thm: compare_step}]

Letting $g = f - f^{\star}$, we write the excess risk at stationarity as
\begin{equation}\nonumber
\epsilon(s) = \E f(X_s(\infty)) - f^{\star} = \frac{\int_{\mathbb{R}^{d}} g \e^{- \frac{2g}{s}} \dd x }{\int_{\mathbb{R}^{d}}  \e^{- \frac{2g}{s}} \dd x},
\end{equation}
which yields the following derivative:
\begin{align}\nonumber
\frac{\dd \epsilon(s)}{\dd s} & = \frac{ \frac{2}{s^{2}} \int_{\mathbb{R}^{d}} g^{2} \e^{-\frac{2g}{s}} \dd x  \int_{\mathbb{R}^{d}}  \e^{-\frac{2g}{s}} \dd x  - \frac{2}{s^{2}} \left( \int_{\mathbb{R}^{d}} g \e^{-\frac{2g}{s}} \dd x \right)^{2}  }{\left( \int_{\mathbb{R}^{d}} \e^{-\frac{2g}{s}} \dd x\right)^{2}}.
\end{align}
Making use of the Cauchy-Schwarz inequality, the derivative satisfies $\frac{\dd \epsilon(s)}{\dd s} \geq 0$ for all $s > 0$. In fact, the equality holds only in the case of a constant $f$ is a constant, which contradicts both the confining condition and the Villani condition. Hence, the inequality can be strengthened to
\[
\frac{\dd \epsilon(s)}{\dd s} > 0,
\]
for $s > 0$. Consequently, we have proven that the excess risk $\epsilon(s)$ at stationarity is a strictly increasing function of $s \in [0, +\infty)$.

Next, from Fatou's lemma we get
\begin{align*}
& \epsilon(0) \leq \limsup_{s \rightarrow 0^+} \epsilon(s) \leq  \int_{\mathbb{R}^{d} } \lim_{s \rightarrow 0^+} g \mu_{s} \dd x = f^\star - f^\star = 0 \\
& \epsilon(0) \geq \liminf_{s \rightarrow 0^+} \epsilon(s) \geq  \int_{\mathbb{R}^{d} } \lim_{s \rightarrow 0^+} g \mu_{s} \dd x = f^\star - f^\star = 0.
\end{align*}
As a consequence, $\epsilon(0) = 0$. \Cref{lem: deriv_epsilon=0} shows that for any $S > 0$, there exists $A = A_{S}$ such that $ 0 \le \frac{\dd \epsilon(s)}{\dd s} \le A$ for all $0\le s \le S$. This fact, combined with $\epsilon(0) = 0$, immediately gives $\epsilon(s) \le As$ for all $0 \le s \le S$.



\end{proof}


\section{Geometrizing the Exponential Decay Constant}
\label{sec: estimate-rate}

Having established the linear convergence to stationarity for the lr-dependent SDE, we now offer a quantitative characterization of the exponential decay constant $\lambda_s$ for a class of nonconvex objective functions. This is crucial for us to obtain a clear understanding of the dynamics of SGD and especially its dependence on the learning rate in the nonconvex setting.


\subsection{Connection with a Schr\"odinger operator}
\label{subsubsec: schrodinger}


We begin by deriving a relationship between the lr-dependent SDE~\eqref{eqn: sgd_high_resolution_formally} and a Schr\"odinger operator. Recall that the probability density $\rho_s(t, \cdot)$ of the SDE solution is assumed to be in $L^{2}(\mu_s^{-1})$. Consider the transformation
\begin{equation}\nonumber
\psi_s(t, \cdot) =  \frac{\rho_s(t, \cdot)}{\sqrt{\mu_s}}  \in L^{2}(\mathbb{R}^{d}).
\end{equation}
This transformation allows us to equivalently write the Fokker--Planck--Smoluchowski equation~\eqref{eqn: Fokker-Planck} as
\begin{equation}\label{eq:schrodinger}
\frac{\partial \psi_s}{\partial t} = \frac{s}{2} \Delta \psi_s - \left( \frac{\| \nabla f\|^{2}}{2s} - \frac{\Delta f}{2}\right) \psi_s = - \frac{-s \Delta  + V_s}{2}\psi_s,
\end{equation}
with the initial condition $\psi_s(0, \cdot) = \frac{\rho}{\sqrt{\mu_s}} \in  L^{2}(\mathbb{R}^{d})$. This is a Schr\"odinger equation with the associated operator $-s \Delta  + V_s$, where the potential
\[
V_s = \frac{\| \nabla f\|^{2}}{s} - \Delta f
\]
is positive for sufficiently large $\|x\|$ due to the Villani condition.


Now, we collect some basic facts concerning the spectrum of the Schr\"odinger operator $-s \Delta  + V_s$. First, it is a positive semidefinite operator, as shown below. Recognizing the uniqueness of the Gibbs distribution~\eqref{eqn: Gibbs}, it is not hard to show that $\sqrt{\mu_s}$ is the unique eigenfunction of $-s \Delta  + V_s$ with a corresponding eigenvalue of zero. Using this fact, from the proof of \Cref{thm: rate-l2-1}, we get
\begin{align*}
\left\langle (-s \Delta  + V_s) \psi_s(t, \cdot), \psi_s(t, \cdot) \right\rangle & = \left\langle (-s \Delta  + V_s) (\psi_s(t, \cdot) - \sqrt{\mu_s}), \psi_s(t, \cdot) - \sqrt{\mu_s} \right\rangle \\
& = -  \frac{\dd}{\dd t} \left\langle \psi_{s}(t, \cdot)- \sqrt{\mu_{s}}, \psi_{s}(t, \cdot)- \sqrt{\mu_{s}} \right\rangle  \\
                                                                                                                       & = -  \frac{\dd}{\dd t} \left\| \rho_{s}(t, \cdot)- \mu_{s} \right\|_{\mu_s^{-1} } ^{2}  \\
    & = s \int_{\R^d} \|\nabla (\rho_s(t, \cdot)\mu_s^{-1})\|^2 \dd \mu_s\\
    &\geq 0,
\end{align*}
where $\langle \cdot, \cdot \rangle$ denotes the standard inner product in $L^2(\R^d)$. 
In fact, this inequality can be extended to $\left\langle (-s \Delta  + V_s) g, g \right\rangle \ge 0$ for any $g$. This verifies the positive semidefiniteness of the Schr\"odinger operator $-s \Delta  + V_s$. 

Next, making use of the fact that $\frac1s V_s(x) \goto +\infty$ as $\|x\| \goto +\infty$, we state the following well-known result in spectral theory---that the Schr\"odinger operator has a purely discrete spectrum in $L^{2}(\mathbb{R}^{d})$~\cite{hislop2012introduction}.
\begin{lem}[Theorem 10.7 in~\cite{hislop2012introduction}]\label{thm: discrete-spectrum-schrodinger}
Assume that $V$ is continuous, and $V(x) \rightarrow +\infty$ as $\|x\| \rightarrow +\infty$. Then the operator $-\Delta + V$ has a purely discrete spectrum.
\end{lem}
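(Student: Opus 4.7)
The plan is to derive the conclusion from the classical fact that a self-adjoint operator with compact resolvent has purely discrete spectrum. After a harmless additive shift I may assume $V \geq 1$. First I would realize $H = -\Delta + V$ as a self-adjoint operator via its quadratic form $q(u) = \int_{\R^d} (\|\nabla u\|^2 + V |u|^2)\,\dd x$, with form domain $\mathcal{Q} = \{u \in L^2(\R^d) : \nabla u \in L^2,\ \sqrt{V}\,u \in L^2\}$. Continuity of $V$, together with cutoff and mollification arguments, shows that $C_c^\infty(\R^d)$ is a form core, so the Friedrichs extension produces a unique self-adjoint operator $H \geq 1$ whose form is $q$.

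The crux is to prove that the inclusion $\mathcal{Q} \hookrightarrow L^2(\R^d)$ is compact; this is the only step that uses the confinement hypothesis $V(x) \to +\infty$. Given a sequence $\{u_n\}$ bounded in $\mathcal{Q}$, the restrictions $u_n|_{B_R}$ are bounded in $H^1(B_R)$ on every ball, so by Rellich--Kondrachov and a standard diagonal extraction a subsequence converges in $L^2_{\mathrm{loc}}$. The tail is controlled by
\[
\int_{\|x\| > R} |u_n|^2 \,\dd x \;\leq\; \frac{1}{\inf_{\|x\| > R} V(x)} \int_{\|x\| > R} V\,|u_n|^2 \,\dd x \;\leq\; \frac{C}{\inf_{\|x\| > R} V(x)},
\]
which tends to $0$ as $R \to +\infty$ uniformly in $n$, by the hypothesis $V \to +\infty$. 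Combining the local convergence with this uniform tail bound gives $L^2$ convergence on all of $\R^d$, proving compactness. Consequently $H^{-1} : L^2 \to L^2$, which factors through $L^2 \to \mathcal{Q}^\ast \to \mathcal{Q} \hookrightarrow L^2$, is compact.

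The conclusion then follows from the spectral theorem: the compact self-adjoint operator $H^{-1}$ has a countable sequence of eigenvalues of finite multiplicity that accumulates only at $0$, and inverting this gives that $H$ has purely discrete spectrum consisting of finite-multiplicity eigenvalues tending to $+\infty$. The main technical obstacle is the compactness step: the tail estimate, which hinges on $V \to +\infty$, is precisely what upgrades the purely local Rellich compactness of $H^1 \hookrightarrow L^2$ into global compactness. Everything else (the Friedrichs extension and the spectral theorem for compact self-adjoint operators) is textbook once this upgrade is in place.
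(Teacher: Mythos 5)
Your proposal is correct, but note that the paper itself contains no proof of this statement: it is imported verbatim as Theorem 10.7 of Hislop--Sigal, so there is no internal argument to compare against. What you have written is the standard self-contained proof via compact resolvent, and all the steps check out: after shifting so that $V\ge 1$ (legitimate, since a continuous $V$ with $V\to+\infty$ is bounded below and a constant shift does not affect discreteness), the closed form $q(u)=\int(\|\nabla u\|^2+V|u|^2)\,\dd x$ defines a self-adjoint $H\ge 1$; the tail bound $\int_{\|x\|>R}|u_n|^2\le \bigl(\inf_{\|x\|>R}V\bigr)^{-1}\,q(u_n)$ combined with Rellich--Kondrachov on balls gives compactness of the form-domain embedding into $L^2(\R^d)$, hence compactness of $H^{-1}$, hence purely discrete spectrum with eigenvalues of finite multiplicity tending to $+\infty$. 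This is essentially the same mechanism that underlies the textbook result (absence of essential spectrum is equivalent to compact resolvent, and the confinement $V\to+\infty$ is exactly what kills mass escaping to infinity), so your route buys the paper a self-contained justification at the cost of some functional-analytic bookkeeping. The only point worth tightening is the form-core claim: asserting that $C_c^\infty(\R^d)$ is a core for the maximal form domain $\{u\in L^2:\nabla u\in L^2,\ \sqrt{V}\,u\in L^2\}$ is true for $V\ge 0$ locally bounded but is itself a (standard, Kato-type) theorem; alternatively you can sidestep it entirely by defining the form domain as the closure of $C_c^\infty$ in the form norm, since the compact-embedding and resolvent arguments go through unchanged on that possibly smaller space.
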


Taken together, the positive semidefiniteness of $-s \Delta  + V_s$ and \Cref{thm: discrete-spectrum-schrodinger} allow us to order the eigenvalues of $-s \Delta  + V_s$ in $L^2(\mathbb{R}^d)$ as
\[
0 = \zeta _{s,0} < \zeta _{s,1} \leq \cdots \leq \zeta _{s,\ell} \leq \cdots < +\infty.
\]
A crucial fact from this representation is that the exponential decay constant $\lambda_{s}$ in Theorem~\ref{thm: rate-l2-1} can be set to
\begin{equation}\label{eq:schrodinger_ex_decay}
\lambda_{s} = \frac12 \zeta _{s,1}.
\end{equation}
To see this, note that $\psi_s(t, \cdot) - \sqrt{\mu_s}$ also satisfies \eqref{eq:schrodinger} and is orthogonal to the null eigenfunction $\sqrt{\mu_s}$. Therefore, the norm of $\psi_s(t, \cdot) - \sqrt{\mu_s}$ must decay exponentially at a rate determined by half of the smallest positive eigenvalue of $H_s$.\footnote{Here, the norm of $\psi_{s}(t, \cdot) - \sqrt{\mu_s}$ is induced by the inner product in $L^{2}(\mathbb{R}^d)$. That is,
\[
\|\psi(t, \cdot) - \sqrt{\mu_s}\|_{L^2(\mathbb{R}^d)} = \sqrt{\left\langle \psi(t, \cdot) - \sqrt{\mu_s}, \psi(t, \cdot) - \sqrt{\mu_s} \right\rangle}.
\]
} That is, we have
\[
\begin{aligned}
\left\langle \psi_s(t, \cdot) - \sqrt{\mu_s}, \psi_s(t, \cdot) - \sqrt{\mu_s} \right\rangle &\le \e^{-2\frac{\zeta _{s,1}}{2} t} \left\langle \psi_s(0, \cdot) - \sqrt{\mu_s}, \psi_s(0, \cdot) - \sqrt{\mu_s} \right\rangle\\
&= \e^{-\zeta_{s,1} t} \left\langle \psi_s(0, \cdot) - \sqrt{\mu_s}, \psi_s(0, \cdot) - \sqrt{\mu_s} \right\rangle,
\end{aligned}
\]
which is equivalent to
\[
\|\rho_s(t, \cdot) - \mu_s\|_{\mu_s^{-1}} \le \e^{-\frac{\zeta _{s,1}}{2} t} \|\rho - \mu_s\|_{\mu_s^{-1}}.
\]
As such, we can take $\lambda_{s} = \frac12 \zeta_{s,1}$ in the proof of \Cref{thm: rate-l2-1}.



As a consequence of this discussion, we seek to study the Fokker--Planck--Smoluchowski equation~\eqref{eqn: Fokker-Planck} by analyzing the spectrum of the linear Schr\"odinger operator~\eqref{eq:schrodinger}, especially its smallest positive eigenvalue $\delta_{s,1}$. To facilitate the analysis, a crucial observation is that this Schr\"odinger operator is equivalent to the \emph{Witten-Laplacian},
\begin{equation}\label{eqn: witten-laplacian}
\Delta_{f}^s := s (-s \Delta  + V_s)  = - s^{2} \Delta + \|\nabla f\|^{2} - s \Delta f,
\end{equation}
by a simple scaling. Denoting by the eigenvalues of the Witten-Laplacian as $0 = \delta_{s,0} < \delta_{s, 1} \leq \cdots \le \delta_{s, \ell} \le \cdots < +\infty$, we obtain the simple relationship
\[
\delta_{s, \ell} = s\zeta _{s,\ell},
\]
for all $\ell$.

The spectrum of the Witten-Laplacian has been the subject of a large literature~\cite{helffer2005hypoelliptic,bovier2005metastability,nier2004quantitative,arnold1999topological}, and in the next subsection, we exploit this literature to derive a closed-from expression for the first positive eigenvalue of the Witten-Laplacian, thereby obtaining the dependence of the exponential decay constant on the learning rate for a certain class of nonconvex objective functions~\cite{herau2011tunnel,michel2019small}.


\subsection{The spectrum of the Witten-Laplacian: nonconvex Morse functions}
\label{sec:basics-morse-theory}

We proceed by imposing the mild condition on the objective function that its first-order and second-order derivatives cannot be both degenerate anywhere. Put differently, the objective function is a Morse function.  This allows us to use the theory of Morse functions to provide a geometric interpretation of the spectrum of the Witten-Laplacian. 


\paragraph{Basics of Morse theory.} 
We give a brief introduction to Morse theory at the minimum level that is necessary for our analysis. Let $f$ be an infinitely differentiable function defined on $\R^n$. A point $x$ is called a critical point if the gradient $\nabla f(x) = 0$. A function $f$ is said to be a Morse function if for any critical point $x$, the Hessian $\nabla^{2} f(x)$ at $x$ is nondegenerate; that is, all the eigenvalues of the Hessian are nonzero. The objective $f$ is assumed to be a Morse function throughout \Cref{sec:basics-morse-theory}. Note also that we refer to a point $x$ as a local minimum if $x$ is a critical point and all eigenvalues of the Hessian at $x$ are positive.


Next, we define a certain type of saddle point. To this end, let $\eta_1(x) \ge \eta_2(x) \ge \cdots \ge \eta_d(x)$ be the eigenvalues of the Hessian $\nabla^{2} f(x)$ at $x$.\footnote{Note that here we order the eigenvalues from the largest to the smallest, as opposed to the case of the Schr\"odinger operator previously.} A critical point $x$ is said to be an \emph{index-1 saddle point} if the Hessian at $x$ has exactly one negative eigenvalue, that is, $\eta_1(x) \geq \cdots \geq \eta_{d-1}(x) > 0,\; \eta_d(x) < 0$. Of particular importance to this paper is a special kind of index-$1$ saddle point that will be used to characterize the exponential decay constant. Letting $\mathcal{K}_{\nu} := \big\{x \in \mathbb{R}^{d}: f(x) < \nu \big\}$ denote the sublevel set at level $\nu$, for any index-1 saddle point $x$, it is not hard to show that the set $\mathcal{K}_{f(x)} \cap \{x': \|x' - x\| < r\}$ can be partitioned into two connected components, say $C_{1}(x, r)$ and $C_{2}(x, r)$, if the radius $r$ is sufficiently small. Using this fact, we give the following definition.

\begin{defn}\label{defn: separating-saddle}
Let $x$ be an index-$1$ saddle point and $r > 0$ be sufficiently small. If $C_1(x, r)$ and $C_{2}(x, r)$ are contained in two different (maximal) connected components of the sublevel set $\mathcal{K}_{f(x)}$, we call $x$ an index-$1$ \textit{separating} saddle point.

\end{defn}

The remainder of this section aims to relate index-1 separating saddle points to the convergence rate of the lr-dependent SDE. For ease of reading, the remainder of the paper uses $x^\circ$ to denote an index-1 separating saddle point and writes $\mathcal{X}^\circ$ for the set of all these points. To give a geometric interpretation of \Cref{defn: separating-saddle}, let $x_1^\bullet$ and $x_2^\bullet$ denote local minima in the two maximal connected components of $\mathcal{K}_{f(x^\circ)}$, respectively. Intuitively speaking, the index-1 separating saddle point $x^\circ$ is the bottleneck of any path connecting the two local minima. More precisely, along a path connecting $x_1^\bullet$ and $x_2^\bullet$, by definition the function $f$ must attain a value that is at least as large as $f(x^\circ)$. In this regard, the function value at $x^\circ$ plays a fundamental role in determining how long it takes for the lr-dependent SDE initialized at $x_1^\bullet$ to arrive at $x_2^\bullet$. See an illustration in \Cref{fig: separating-saddle-1}.


\begin{figure}[htb!]
\centering
\begin{minipage}[t]{0.45\linewidth}
\centering
\includegraphics[scale=0.15]{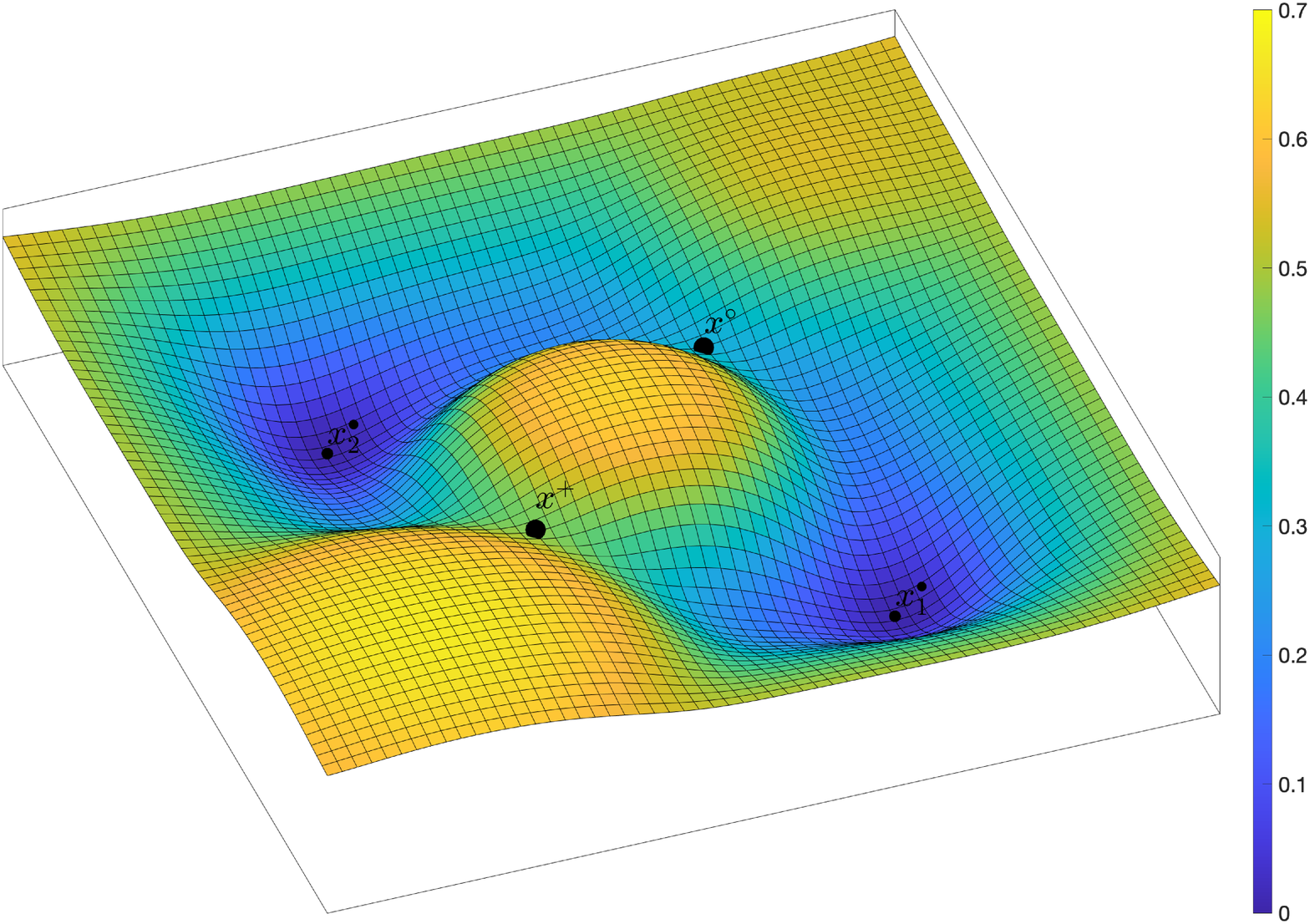}
\end{minipage}
\begin{minipage}[t]{0.45\linewidth}
\centering
\includegraphics[scale=0.15]{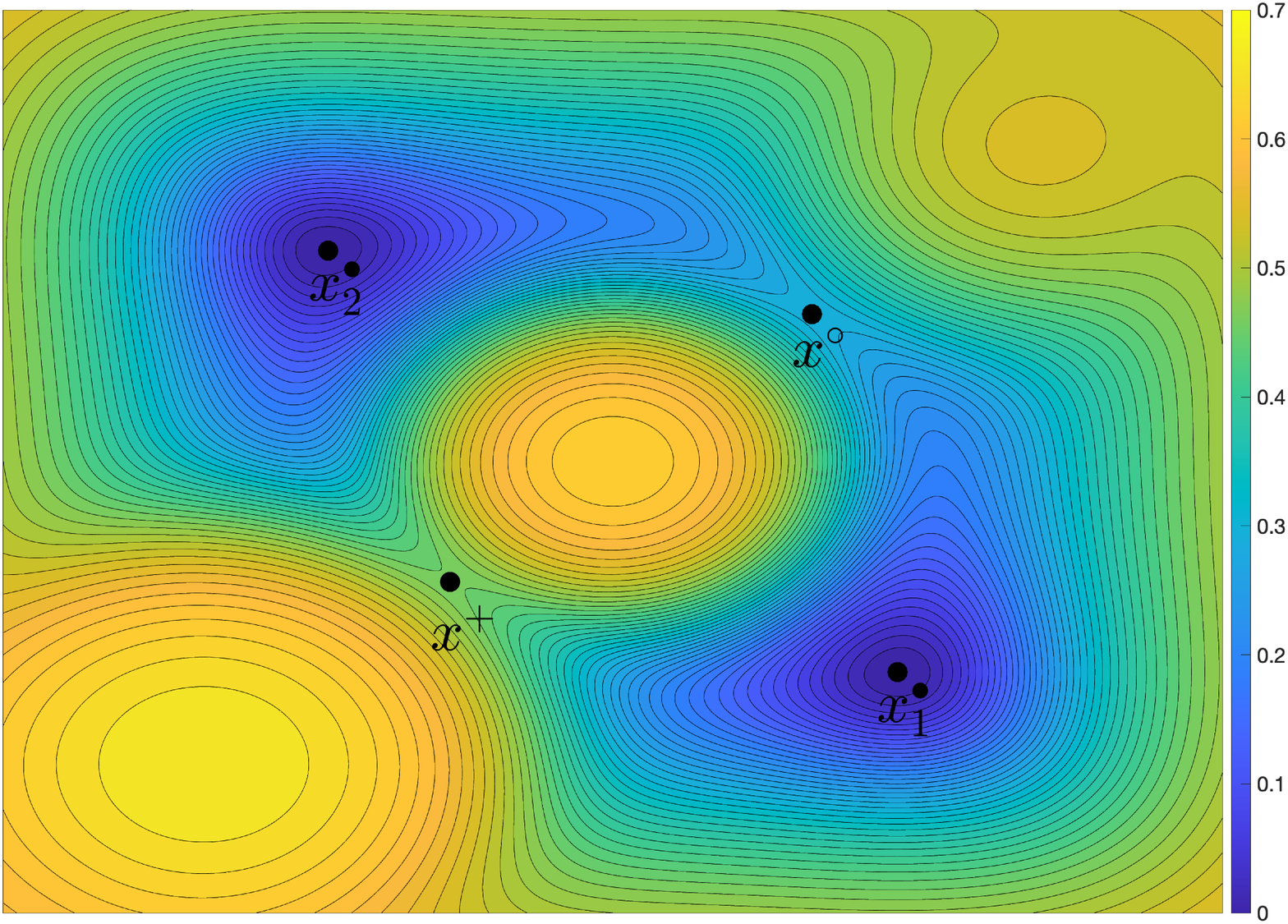}
\end{minipage}\\
\begin{minipage}[t]{0.45\linewidth}
\centering
\includegraphics[scale=0.15]{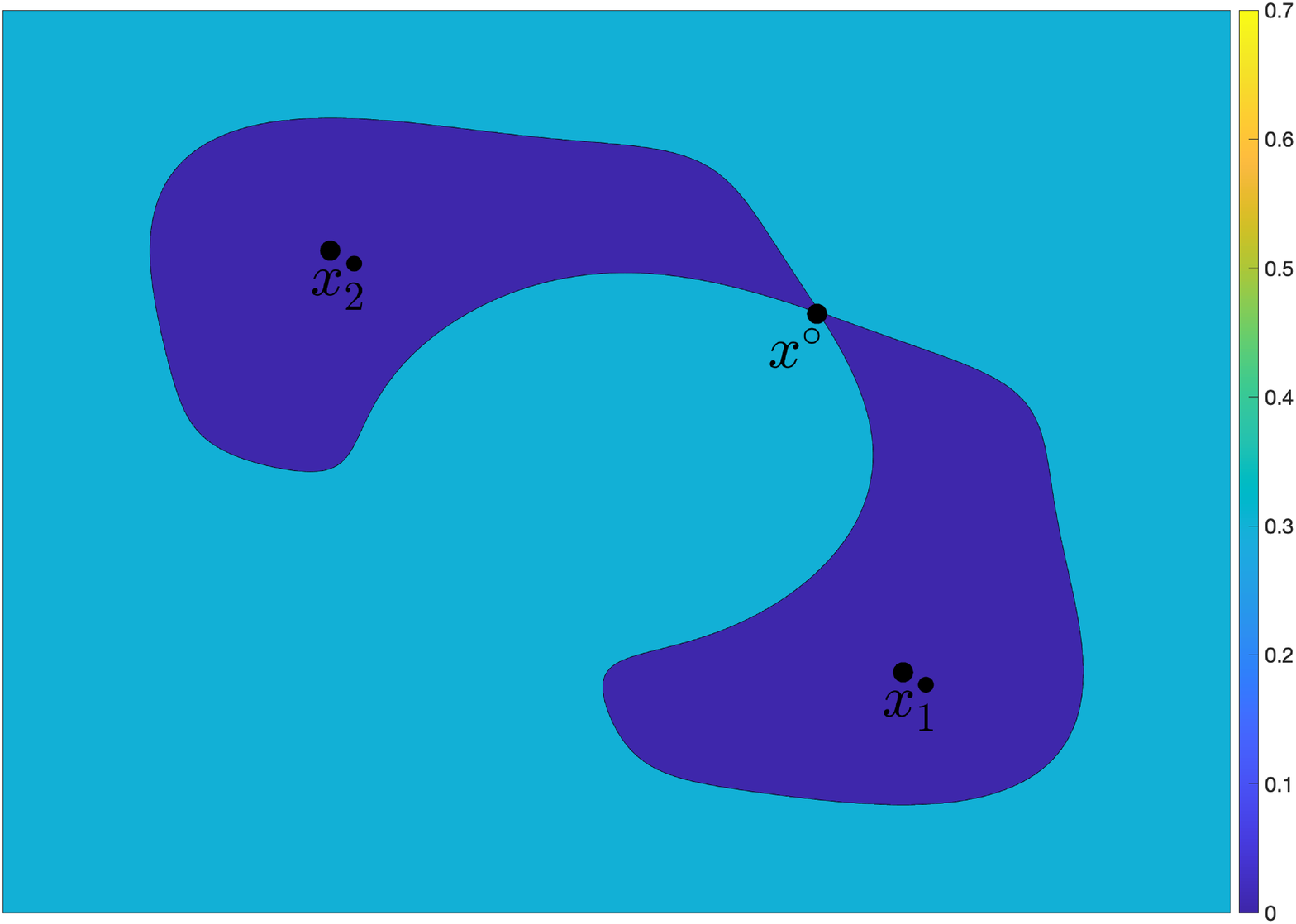}
\end{minipage}
\begin{minipage}[t]{0.45\linewidth}
\centering
\includegraphics[scale=0.15]{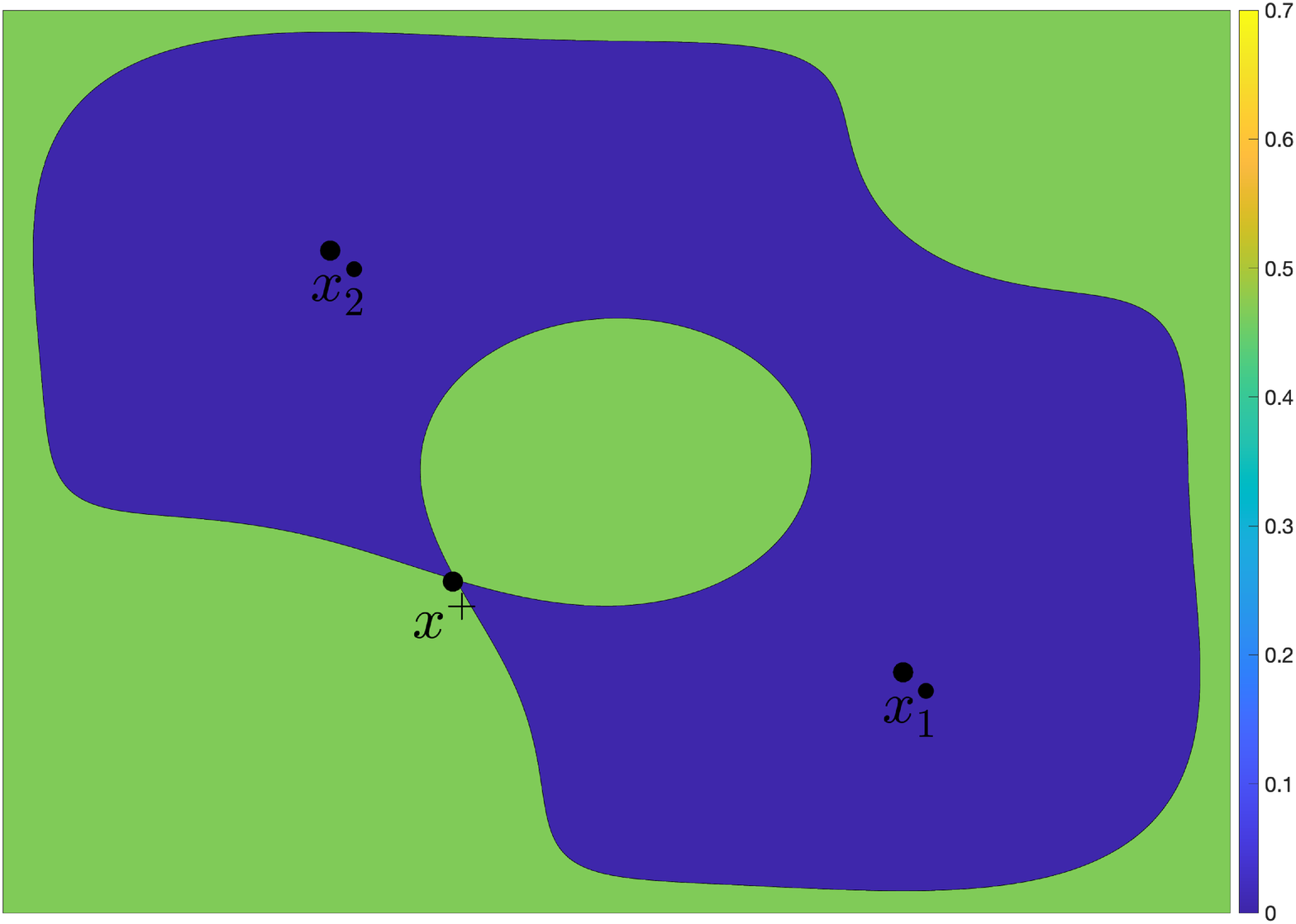}
\end{minipage}
\caption{\small The landscape of a two-dimensional nonconvex Morse function. Here, $x_1^\bullet$ and $x_2^\bullet$ denote two local minima. Both $x^\circ$ and $x^+$ are index-1 saddle points, but only the former is an index-1 separating saddle point since $f(x^\circ) < f(x^\bullet)$. In the two bottom plots, the deep blue regions form the sublevel sets at $f(x^\circ)$ or $f(x^\bullet)$. Note that the sublevel set induced by $x^\circ$ is the union of two connected components.} 
\label{fig: separating-saddle-1}
\end{figure}


As is assumed in this section, $f$ is a Morse function and satisfies both the confining and the Villani conditions; in this case, it can be shown that the number of the critical points of $f$ is finite. Thus, denote by $n^\circ$ the number of index-$1$ separating saddle points of $f$ and let $n^\bullet$ denote the number of local minima.

\paragraph{H\'{e}rau--Hitrik--Sj\"ostrand's generic case.} To describe the labeling procedure, consider the set of the objective values at index-$1$ separating saddle points $\mathcal{V} = \{ f(x^{\circ}): x^{\circ} \in \mathcal{X}^{\circ} \}$. This is a finite set and we use $I$ to denote the cardinality of this set. Write $\mathcal{V} = \{\nu_1, \ldots, \nu_I\}$ and sort these values as
\begin{equation}\label{eqn: sort-decrease-mono-rigorous}
+\infty = \nu_0 > \nu_1 > \cdots > \nu_I,
\end{equation}
where by convention $\nu_{0} = +\infty$ corresponds to a fictive saddle point at infinity.

Next, we follow \cite{herau2011tunnel} and define a type of connected components of sublevel set.  
\begin{defn}\label{defn: connected-component}
A connected component $E$ of the sublevel set $\mathcal{K}_{\nu}$ for some $\nu \in \mathcal{V}$ is called a \textit{critical component} if either  $\partial E \cap \mathcal{X}^\circ \neq \varnothing$ or $E = \mathbb{R}^{d}$, where $\partial E$ is the boundary of $E$. 

\end{defn}

In this definition, the case of $E = \R^d$ applies only if $\nu = \nu_0 = +\infty$. If $\nu = \nu_i$ for some $1 \le i \le I$ is only attained by one index-1 separating saddle point, the sublevel set $\mathcal{K}_{\nu_{i}}$ has two critical components. See \Cref{defn: separating-saddle} for more details.

With the preparatory notions above in place, we describe the following procedure for labeling index-1 separating saddle points and local minima~\cite{herau2011tunnel}. See \Cref{fig:generic} for an illustration of this process.
\begin{itemize}
\item[1.] Let $E^{0}_{1} := \R^d$. Note that the global minimum $x^{\star}$ is contained in $E^{0}_{1}$ and denote
\[
x_0^\bullet := x^{\star} = \underset{x \in E^{0}_{1}}{\mathrm{argmin}}\, f(x).
\] 
Let $\mathcal{X}^{\bullet}_{0}$ denote the singleton set $\{x^{\star}\}$.

\item[2.] Let $E^{1}_{j}$ for $j =1, \ldots, m_1$ be the critical components of the sublevel set $\mathcal{K}_{\nu_1}$. Note that $E^{1}_{1} \cup \cdots \cup E^{1}_{m_1}$ is a (proper) subset of $\mathcal{K}_{\nu_{1}}$. Without loss of generality, assume $x^\star \in E^{1}_{m_1}$. Then, we select $x_{1,j_1}^\bullet$ as
             \[
             x_{1, j_1}^\bullet= \underset{x \in E^{1}_{j_1} }{\mathrm{argmin}} \, f(x).
             \]
Define $\mathcal{X}^{\bullet}_{1} := \{ x_{1,1}^\bullet, \ldots, x_{1, m_1-1}^\bullet \}$.             
\item[3.]  For $i = 2, \ldots, I$, let $E^{i}_j$ for $j =1, \ldots, m_i$ be the critical components of the sublevel set $\mathcal{K}_{\nu_{i}}$. Without loss of generality, we assume that the critical components are ordered such that there exists an integer $k_i \leq m_i$ satisfying 
\[
\left(\bigcup_{j=1}^{k_{i}} E_{j}^{i}\right) \bigcap \left(\bigcup_{\ell=0}^{i-1} \mathcal{X}^{\bullet}_{\ell} \right)=\varnothing 
\]
and
\[
E_{j}^{i} \bigcap \left(\bigcup_{\ell=0}^{i-1} \mathcal{X}_{\ell}^{\bullet} \right)\neq \varnothing,
\]
for any $j=k_{i}+1,  \ldots, m_i$. Set $x^\bullet_{i,j}$ to
\[
x_{i,j}^\bullet = \underset{x \in E^{i}_{j}}{\mathrm{argmin}} \, f(x),
\]
for $j =  1, \ldots, k_i$. Define $\mathcal{X}_{i}^{\bullet} :=\{ x_{i,1}^\bullet, \ldots, x_{i, k_i}^\bullet \} $.             


\end{itemize}

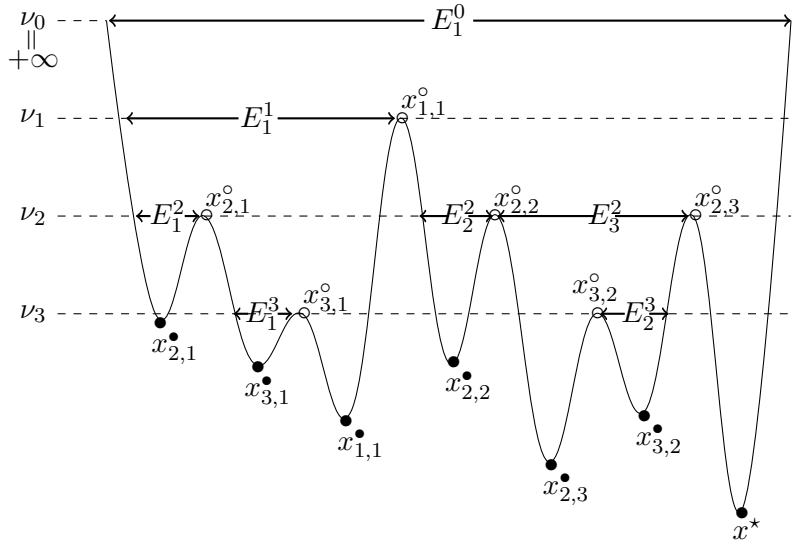
\begin{figure}[htb!]
\centering
\begin{tikzpicture}[scale=0.65]
     \draw  plot[smooth, tension=.7] coordinates{(0, 10) (1, 4) (2, 6) (3,3) (4,4) (5,2) (6, 8) (7, 3) (8, 6) (9, 1) (10,4)(11, 2)(12,6) (13,0)(14,10)};
     \draw[dashed] (-1,10) -- (0,10);
     \draw[dashed] (-1,  8) -- (0.4,  8);
     \draw[dashed] (6,  8) -- (14,  8);
     \draw[dashed] (-1,  6) -- (0.6,  6);
     \draw[dashed] (2.0,  6) -- (6.5,  6);
     \draw[dashed] (12,  6) -- (14,  6);
     \draw[dashed] (-1,  4) -- (10,  4);
     \draw[dashed] (11.5,  4) -- (14,  4);
     \node at (-1.5,10) {$\nu_0$};
     \node at (-1.5,9.6) {$\rotatebox{90}{=}$};
     \node at (-1.5,9.2) {$+\infty $};
     \node at (-1.5,  8) {$\nu_1$};
     \node at (-1.5,  6) {$\nu_2$};    
     \node at (-1.5,  4) {$\nu_3$};
     \node at (13, -0.1) {$\bullet$};
     \node at (13.1, -0.4) {$x^{\star}$};
     \node at (4.9, 1.8) {$\bullet$};
     \node at (5.2, 1.3) {$x_{1,1}^{\bullet}$};
     \node at (6.05, 8) {$\circ$};
     \node at (6.5, 8.3) {$x_{1,1}^{\circ}$};     
     \node at (1.1, 3.8) {$\bullet$};
     \node at (1.4, 3.3) {$x_{2,1}^{\bullet}$}; 
     \node at (2.05, 6) {$\circ$};
     \node at (2.5, 6.3) {$x_{2,1}^{\circ}$}; 
     \node at (7.1, 3.0) {$\bullet$};
     \node at (7.4, 2.5) {$x_{2,2}^{\bullet}$}; 
     \node at (7.95, 6) {$\circ$};
     \node at (8.4, 6.3) {$x_{2,2}^{\circ}$};      
     \node at (9.1, 0.9) {$\bullet$};      
     \node at (9.4, 0.4) {$x_{2,3}^{\bullet}$}; 
     \node at (12.05, 6) {$\circ$};
     \node at (12.5, 6.3) {$x_{2,3}^{\circ}$}; 
     \node at (3.1, 2.9) {$\bullet$};     
     \node at (3.3, 2.4) {$x_{3,1}^{\bullet}$};
     \node at (4.05, 4) {$\circ$};
     \node at (4.5, 4.3) {$x_{3,1}^{\circ}$}; 
     \node at (11.0, 1.9) {$\bullet$};     
     \node at (11.3, 1.4) {$x_{3,2}^{\bullet}$};  
     \node at (10.05, 4) {$\circ$};
     \node at (10.0, 4.5) {$x_{3,2}^{\circ}$}; 


    \node at (7, 10) {$E_{1}^{0}$};
    \draw[->, thick] (7.5, 10) -- (14, 10);
    \draw[->, thick] (6.5, 10) -- (0.05,  10);    
    \node at (3.1,8) {$E_{1}^{1}$};
    \draw[->, thick] (3.5, 8) -- (5.9, 8);
    \draw[->, thick] (2.7, 8) -- (0.4, 8);    
    \node at (1.25,6) {$E_{1}^{2}$};
    \draw[->, thick] (1.6, 6) -- (1.9, 6);
    \draw[->, thick] (0.9, 6) -- (0.6, 6);    
    \node at (7.2,6) {$E_{2}^{2}$};
    \draw[->, thick] (7.4, 6) -- (7.9, 6);
    \draw[->, thick] (7.0, 6) -- (6.4, 6);  
    \node at (10.2,6) {$E_{ 3}^{2}$};
    \draw[->, thick] (10.4, 6) -- (11.9, 6);
    \draw[->, thick] (10.0, 6) -- (8.0, 6);  
    \node at (3.2,4) {$E_{1}^{3}$};
   \draw[->, thick] (3.5, 4) -- (3.8, 4);
   \draw[->, thick] (2.9, 4) -- (2.6, 4);      
    \node at (10.9,4) {$E_{ 2}^{3}$};
    \draw[->, thick] (11.1, 4) -- (11.5, 4);
    \draw[->, thick] (10.6, 4) -- (10.1, 4);    
\end{tikzpicture}
\caption{\small A generic one-dimensional Morse function. The labeling process gives rise to a one-to-one correspondence between the local minimum $x^\bullet_{ij}$ and the index-1 separating saddle point $x^\circ_{i,j}$ (which are also local maxima) for all $i,j$.}
\label{fig:generic}
\end{figure}


To make the labeling process above valid, however, we need to impose the following assumption on the objective. This assumption is generic in the sense that it should be satisfied by a \textit{generic} Morse function.
\begin{assumption}[Generic case \cite{herau2011tunnel}]\label{assump: generic-assumption}
For every critical component $E^{i}_j$ selected in the labeling process above, where $i = 0, 1, \ldots, I$,  we assume that
\begin{itemize}
\item The minimum $x^\bullet_{i,j}$ of $f$ in any critical component $E^{i}_j$ is unique.
\item If $E_{j}^{i} \cap \mathcal{X}^\circ \neq \varnothing$, there exists a unique $x_{i,j}^{\circ} \in E_{j}^{i} \cap \mathcal{X}^\circ$ such that $f(x_{i,j}^{\circ}) = \max\limits_{x \in E_{j}^{i} \cap \mathcal{X}^\circ} f(x)$. In particular, $E_j^i \cap \mathcal{K}_{f(x_{i,j}^{\circ})}$ is the union of two distinct critical components. 

\end{itemize}
\end{assumption}


The first condition in this assumption requires that there exists a unique minimum of the objective $f$ in every critical component $E^{i}_{j}$. In particular, the global minimum $x^\star$ is unique under this assumption. In addition, the second condition requires that among all index-$1$ separating saddle points in $E^i_j$, if any, $f$ attains the maximum at exactly one of these points.


Under~\Cref{assump: generic-assumption}, the above labeling process includes all the local minima of $f$. Moreover, it reveals a remarkable result: there exists a bijection between the set of local minima and the set of index-$1$ separating saddle points (including the fictive one) $\mathcal{X}^{\circ} \cup \{\infty\}$. As shown in the labeling process, for any local minimum $x_{i,j}^\bullet$, we can relate it to the index-1 separating saddle point at which $f$ attains the maximum in the critical component $E_j^i$. See \Cref{fig:generic} for an illustrative example. Interestingly, this shows that the number of local minima is always larger than the number of index-1 separating saddle points by one; that is, $n^\circ = n^\bullet - 1$.

In light of these facts, we can relabel the index-1 separating saddle points $x^\circ_\ell$ for $\ell = 0, 1, \ldots, n^{\circ}$ with $x^\circ_0 = \infty$, and the local minima $x^\bullet_\ell$ for $\ell = 0, 1, \ldots, n^{\bullet}-1$ with $x^\bullet_0 = x^\star$, such that
\begin{equation}
\label{eqn: saddle-minima-pair}
f(x^\circ_0) - f(x^\bullet_0) > f(x^\circ_1) - f(x^\bullet_1) \ge \ldots \ge f(x^\circ_{n^{\bullet}-1}) - f(x^\bullet_{n^{\bullet}-1}),
\end{equation}
where $f(x^\circ_0) - f(x^\bullet_0) = f(\infty) - f(x^\star) = +\infty$. A detailed description of this bijection is given in~\cite[Proposition 5.2]{herau2011tunnel}. 

With the pairs $(x^\circ_{\ell}, x^\bullet_{\ell})$ in place, we readily state the following fundamental result concerning the first $n^{\bullet}-1$ smallest positive eigenvalues of the Witten-Laplacian $\Delta_{f}^{s}$ in~\eqref{eqn: witten-laplacian}. Recall that the nonconvex Morse function $f$ satisfies the confining condition and the Villani condition.

\begin{prop}[Theorem 1.2 in~\cite{herau2011tunnel}]\label{thm: HHS-generic}
Under \Cref{assump: generic-assumption} and the assumptions of \Cref{thm: continuous-quantative}, there exists $s_0 > 0$ such that for any $s \in (0, s_0]$, the first $n^{\bullet}-1$ smallest positive eigenvalues of the Witten-Laplacian $\Delta^s_f$ associated with $f$ satisfy
\begin{equation}\nonumber
\delta_{s,\ell} = s\left(\gamma_\ell + o(s) \right)  \e^{- \frac{2 (f(x_{\ell}^\circ) - f(x_\ell^\bullet) )}{s}}
\end{equation}
for $\ell =1, 1, \ldots,  n^{\bullet}-1$, where
\begin{equation}\label{eq:gamma_expre}
\gamma_\ell = \frac{|\eta_{d}(x_{\ell}^\circ)|}{\pi} \left( \frac{\det( \nabla^{2} f(x_\ell^\bullet) )}{- \det (\nabla^{2} f(x_\ell^\circ) )}\right)^{\frac{1}{2}},
\end{equation}
and $\eta_{d}(x_\ell^\circ)$ is the unique negative eigenvalue of $\nabla^{2} f(x_\ell^\circ)$.

\end{prop}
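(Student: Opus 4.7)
The plan is to exploit the supersymmetric factorization $\Delta_f^s = (d_f^s)^*\, d_f^s$ with $d_f^s = s\nabla + \nabla f$, together with a quasi-mode construction adapted to the pairing in \eqref{eqn: saddle-minima-pair}. As a first step I would invoke the semiclassical Morse inequalities, essentially Witten's harmonic approximation applied to the Witten-Laplacian on $0$-forms and $1$-forms: localizing around each critical point gives harmonic-oscillator Hamiltonians whose low-lying spectra can be counted exactly, and a min-max comparison shows that $\Delta_f^s$ has precisely $n^\bullet$ eigenvalues in an exponentially small window, one of which is the zero eigenvalue corresponding to $\sqrt{\mu_s}$. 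This fixes the number of exponentially small positive eigenvalues at $n^\bullet-1$.

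Next I would construct quasi-modes $\phi_\ell = Z_\ell^{-1/2}\,\chi_\ell\, e^{-(f-f(x_\ell^\bullet))/s}$, one for each local minimum $x_\ell^\bullet$, where $\chi_\ell$ is a smooth cutoff supported in the critical component assigned to $x_\ell^\bullet$ by the labeling process (for $\ell=0$ the cutoff is essentially $1$). Laplace's method at $x_\ell^\bullet$ gives $Z_\ell \sim (\pi s)^{d/2}/\sqrt{\det\nabla^2 f(x_\ell^\bullet)}$, so the $\phi_\ell$ are $L^2$-normalized to leading order and are essentially disjointly supported. The key computation is $\|d_f^s\phi_\ell\|^2 = s^2 Z_\ell^{-1}\int \|\nabla\chi_\ell\|^2\, e^{-2(f-f(x_\ell^\bullet))/s}\,dx$, where the cross terms involving $\chi_\ell\nabla f$ cancel because $d_f^s$ annihilates $e^{-f/s}$. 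By construction $\nabla\chi_\ell$ is supported only where $\chi_\ell$ transitions, which by the labeling procedure occurs in a thin neighborhood of the unique separating saddle $x_\ell^\circ$. In Morse coordinates at $x_\ell^\circ$ of signature $(d-1,1)$, the $d-1$ stable directions contribute Gaussian integrals yielding the factor $1/\sqrt{|\det\nabla^2 f(x_\ell^\circ)|}$, and the unstable direction with eigenvalue $\eta_d(x_\ell^\circ) < 0$ contributes the factor $|\eta_d(x_\ell^\circ)|/\pi$ via a one-dimensional integral against a properly rescaled $\chi_\ell$; combining yields $\|d_f^s\phi_\ell\|^2 = s(\gamma_\ell+O(s))\,e^{-2(f(x_\ell^\circ)-f(x_\ell^\bullet))/s}$ with $\gamma_\ell$ as in \eqref{eq:gamma_expre}.

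To promote the quasi-mode bounds to two-sided asymptotics, I would apply the min-max principle to the quadratic form $u\mapsto \langle d_f^s u, d_f^s u\rangle$ on $\mathrm{span}\{\phi_0,\ldots,\phi_{n^\bullet-1}\}$, obtaining an upper bound with the correct prefactor for each small eigenvalue, and then a resolvent argument (using that the next eigenvalue of $\Delta_f^s$ above the exponentially small block is of order $s$, leaving a wide spectral gap) to show that the true eigenvalues are close to the diagonal entries of this finite matrix. The ordering in \eqref{eqn: saddle-minima-pair} ensures that the matrix of quadratic forms is essentially diagonal in the $\phi_\ell$ basis, so its eigenvalues read off the diagonal directly. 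The hardest step will be tightening the remainder from $O(s)$ in the Laplace expansion of $\|d_f^s\phi_\ell\|^2$ to a genuine $o(s)$ \emph{multiplicative} error matching the statement; this requires refining $\chi_\ell$ by a WKB-transport construction along the integral curves of $\nabla f$ emanating from $x_\ell^\circ$, and it is precisely \Cref{assump: generic-assumption} (uniqueness of the dominating saddle within each critical component) that prevents multiple contributions from spoiling the clean closed form \eqref{eq:gamma_expre}.
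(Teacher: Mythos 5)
You should first note that the paper does not prove this proposition at all: it is imported verbatim as Theorem 1.2 of H\'erau--Hitrik--Sj\"ostrand (with Michel's Theorem 2.8 covering the degenerate case), so there is no internal argument to compare against. Judged against the cited literature, your sketch follows exactly the route those works take: the supersymmetric factorization $\Delta_f^s=(d_f^s)^*d_f^s$ with $d_f^s=s\nabla+\nabla f$, harmonic (Witten) approximation plus IMS localization to show that exactly $n^\bullet$ eigenvalues lie below $cs$ with the rest separated by a gap of order $s$, and Gaussian quasimodes $\chi_\ell e^{-(f-f(x_\ell^\bullet))/s}$ whose Dirichlet energy is carried entirely by $s\nabla\chi_\ell$. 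Your Laplace computation is correct: the normalization at the minimum gives $(\pi s)^{d/2}\det(\nabla^2 f(x_\ell^\bullet))^{-1/2}$, the $d-1$ stable directions at the saddle give $(\pi s)^{(d-1)/2}\bigl(\prod_{i<d}\eta_i\bigr)^{-1/2}$, and the optimal error-function cutoff in the unstable direction gives $(|\eta_d|/\pi s)^{1/2}$, which assembles into $s\,\gamma_\ell\,e^{-2(f(x_\ell^\circ)-f(x_\ell^\bullet))/s}$ with $\gamma_\ell$ as in \eqref{eq:gamma_expre}.

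The genuine gap is in the passage from quasimode Rayleigh quotients to two-sided asymptotics for the \emph{individual} eigenvalues. Min--max on the span of the $0$-form quasimodes, plus the counting argument, only yields upper bounds; and your claim that "the matrix of quadratic forms is essentially diagonal, so its eigenvalues read off the diagonal" is not justified by the ordering \eqref{eqn: saddle-minima-pair} alone, since \Cref{assump: generic-assumption} permits several pairs $(x_\ell^\circ,x_\ell^\bullet)$ to share the same barrier height, in which case quasimodes living at the same exponential scale can interact and a naive diagonal reading fails; moreover the projection errors onto the low-lying spectral subspace must be shown to be small \emph{relative to} eigenvalues that differ by exponential factors, which is where the real work in H\'erau--Hitrik--Sj\"ostrand (and Helffer--Klein--Nier) lies. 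The cited proofs resolve this by exploiting the full Witten complex: one also constructs $1$-form quasimodes attached to the separating saddles and identifies the small eigenvalues as the squared singular values of $d_f^s$ restricted to the two exponentially small spectral subspaces, which is what produces matching lower bounds and the refined $\gamma_\ell+o(s)$ (indeed full asymptotic) prefactor; your proposed WKB refinement of $\chi_\ell$ is the right instinct but is only one ingredient of that machinery. So your plan is the correct strategy in outline and gets the constant right, but the lower-bound/interaction-matrix step is a substantive missing piece rather than a routine "resolvent argument."
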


Using \Cref{thm: HHS-generic} in conjunction with the simple relationship between the exponential decay constant and the spectrum of the Schr\"{o}dinger operator/Witten-Laplacian~\eqref{eq:schrodinger_ex_decay}, it is a stone's throw to prove \Cref{thm: continuous-quantative} when $f$ is generic. First, we give the definition of the \emph{Morse saddle barrier}.
\begin{defn}\label{def:barrier} 
Let $f$ satisfy the assumptions of \Cref{thm: continuous-quantative}. We call $H_{f}= f(x_1^\circ) - f(x_1^\bullet)$ the Morse saddle barrier of $f$.
\end{defn}

\begin{proof}[Proof of \Cref{thm: continuous-quantative} in the generic case]
By \Cref{thm: HHS-generic}, we can set the exponential decay constant to
\[
\lambda_s = \frac{1}{2s} \delta_{s,1} = \left(\frac{|\eta_{d}(x_1^\circ)|}{2\pi} \left( \frac{\det( \nabla^{2} f(x_1^\bullet) )}{- \det (\nabla^{2} f(x_1^\circ) )}\right)^{\frac{1}{2}} + o(s) \right) \e^{- \frac{2 H_f}{s}}
\]
in \Cref{thm: continuous-quantative}. Taking $\alpha = \frac12\frac{|\eta_{d}(x_1^\circ)|}{2\pi} \left( \frac{\det( \nabla^{2} f(x_1^\bullet) )}{- \det (\nabla^{2} f(x_1^\circ) )}\right)^{\frac{1}{2}}$ in \eqref{eqn: main-lambda-nonconvex}, we complete the proof when $f$ falls into the generic case.

\end{proof}

However, the generic assumption for the labeling process is complex, leading to the lack of a geometric interpretation of the objective function required for the labeling process. To gain further insight, we present a simplifying assumption that is a special case of \Cref{assump: generic-assumption}. This simplification is due to~\cite{nier2004quantitative}. 


\begin{assumption}[Simplified generic case~\cite{nier2004quantitative}]
\label{assump: simplified}
The objective functions $f$ takes different values at its local minima and index-1 separating saddle points. That is, letting $x_1$ be a local minimum or an index-1 separating saddle point, and $x_2$ likewise, then $f(x_1) \neq f(x_2)$.
Furthermore, the differences $f(x_{\ell_1}^\circ) - f(x_{\ell_2}^\bullet)$ are distinct for any $\ell_1$ and $\ell_2$.

\end{assumption}

The following result follows immediately from \Cref{thm: HHS-generic}.
\begin{coro}[Theorem 3.1 in~\cite{nier2004quantitative}]\label{thm: Nier-generic}
Under \Cref{assump: simplified} and the assumptions of \Cref{thm: continuous-quantative}, \Cref{thm: HHS-generic} holds. Therefore, \Cref{thm: continuous-quantative} holds in this case.

\end{coro}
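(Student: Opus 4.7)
The plan is to deduce this corollary as an immediate consequence of \Cref{thm: HHS-generic} by showing that \Cref{assump: simplified} is a strictly stronger hypothesis than \Cref{assump: generic-assumption}. Once this implication is in place, the proof of \Cref{thm: continuous-quantative} already given in the generic case applies verbatim, using the relationship $\lambda_s = \frac{1}{2s}\delta_{s,1}$ from \eqref{eq:schrodinger_ex_decay} together with the explicit expression for $\delta_{s,1}$ supplied by \Cref{thm: HHS-generic}. Thus the bulk of the work is the verification of the two bullet points of \Cref{assump: generic-assumption} under \Cref{assump: simplified}.

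For the first bullet, I would fix a critical component $E^i_j$ produced by the labeling procedure. Because $f$ is a Morse function satisfying the confining condition, a minimizer of $f$ over the closed component $\overline{E^i_j}$ must lie in the interior (since $f$ is constant equal to $\nu_i$ on $\partial E^i_j$ and strictly smaller somewhere inside), and must therefore be a local minimum of $f$ in the usual sense. If two distinct local minima of $f$ in $E^i_j$ achieved the same value, this would contradict the clause of \Cref{assump: simplified} that $f$ takes pairwise distinct values on the set of local minima. Hence the minimizer $x^\bullet_{i,j}$ is unique. For the second bullet, assume $E^i_j \cap \mathcal{X}^\circ \neq \varnothing$; since $\mathcal{X}^\circ$ is a finite set and, by \Cref{assump: simplified}, $f$ is injective on it, the maximum of $f$ over $E^i_j \cap \mathcal{X}^\circ$ is attained at a unique $x^\circ_{i,j}$, and the fact that $E^i_j \cap \mathcal{K}_{f(x^\circ_{i,j})}$ splits into exactly two critical components is an immediate consequence of \Cref{defn: separating-saddle}.

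Next I would use the additional clause that the differences $f(x^\circ_{\ell_1}) - f(x^\bullet_{\ell_2})$ are pairwise distinct to conclude that the ordering \eqref{eqn: saddle-minima-pair} is in fact strict at every position, so the bijection between local minima and index-$1$ separating saddle points produced by the labeling process is canonical. In particular, the pair $(x^\circ_1, x^\bullet_1)$ that realizes the Morse saddle barrier $H_f = f(x^\circ_1) - f(x^\bullet_1)$ in \Cref{def:barrier} is uniquely defined, and so is the constant $\gamma_1$ in \eqref{eq:gamma_expre}. Applying \Cref{thm: HHS-generic} then yields
\[
\delta_{s,1} = s\bigl(\gamma_1 + o(s)\bigr) \e^{-\frac{2H_f}{s}},
\]
and combining this with $\lambda_s = \frac{1}{2s}\delta_{s,1}$ reproduces \eqref{eqn: main-lambda-nonconvex} with $\alpha = \tfrac{1}{2}\gamma_1 > 0$, completing the proof of \Cref{thm: continuous-quantative} under \Cref{assump: simplified}.

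The main potential obstacle is not an analytic one but a bookkeeping one: ensuring that the labeling procedure terminates with no ambiguity and produces exactly $n^\bullet - 1 = n^\circ$ pairs. The distinctness hypotheses of \Cref{assump: simplified} are designed precisely so that every choice in the labeling procedure (choice of minimizer in each critical component, choice of maximizing separating saddle point, and identification of which critical component contains previously labeled minima) is forced, and therefore the reduction to the generic case is essentially automatic once the two bullets above have been checked.
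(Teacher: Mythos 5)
Your proposal is correct and follows the same route as the paper, which simply declares \Cref{assump: simplified} to be a special case of \Cref{assump: generic-assumption} and then notes that the corollary ``follows immediately'' from \Cref{thm: HHS-generic} together with the identification $\lambda_s = \frac{1}{2s}\delta_{s,1}$. Your write-up just supplies the verification of the two bullets of \Cref{assump: generic-assumption} (which the paper leaves implicit, citing Nier), and that verification is sound, though the claim that the two-critical-component structure of $E^i_j \cap \mathcal{K}_{f(x^\circ_{i,j})}$ is ``immediate'' from \Cref{defn: separating-saddle} glosses over a short Morse-theoretic argument that all other saddle passages inside $E^i_j$ occur strictly below $f(x^\circ_{i,j})$.
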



\paragraph{Michel's degenerate case.} 

We say that a Morse function is \emph{degenerate} if it satisfies the assumptions of \Cref{thm: continuous-quantative} but not \Cref{assump: generic-assumption}. To violate the generic assumption, for example, we can change the objective value $f(x_{3,1}^{\bullet})$ to $f(x_{1,1}^{\bullet})$ or change $f(x_{3,2}^{\bullet})$ to $f(x_{2,3}^{\bullet})$ in~\Cref{fig:generic}. In this situation, the first condition in~\Cref{assump: generic-assumption} is not satisfied. Alternatively, if the objective value at $x_{3,1}^{\circ}$ is changed to $f(x_{2,1}^{\circ})$, the second condition in~\Cref{assump: generic-assumption} is not met. \Cref{fig:degenerate} presents an example of a degenerate Morse function.

The main challenge in the degenerate case is the lack of uniqueness of the pairs $(x^\circ_{\ell}, x^\bullet_{\ell})$ derived from the labeling process. Nevertheless, the uniqueness can be maintained if we work on the function values. Explicitly, the labeling process can be adapted to the degenerate case and still yields unique pairs $(f(x^\circ_{\ell}), f(x^\bullet_{\ell}))$ obeying
\[
f(\infty) - f(x^\star) = f(x^\circ_0) - f(x^\bullet_0) > f(x^\circ_1) - f(x^\bullet_1) \ge \ldots \ge f(x^\circ_{n^{\bullet}-1}) - f(x^\bullet_{n^{\bullet}-1}).
\]
In particular, the number of local minima remains larger than that of index-1 separating saddle points by one in this case. The following result extends \Cref{thm: HHS-generic} to the degenerate case, which is adapted from Theorem 2.8 of~\cite{michel2019small}.

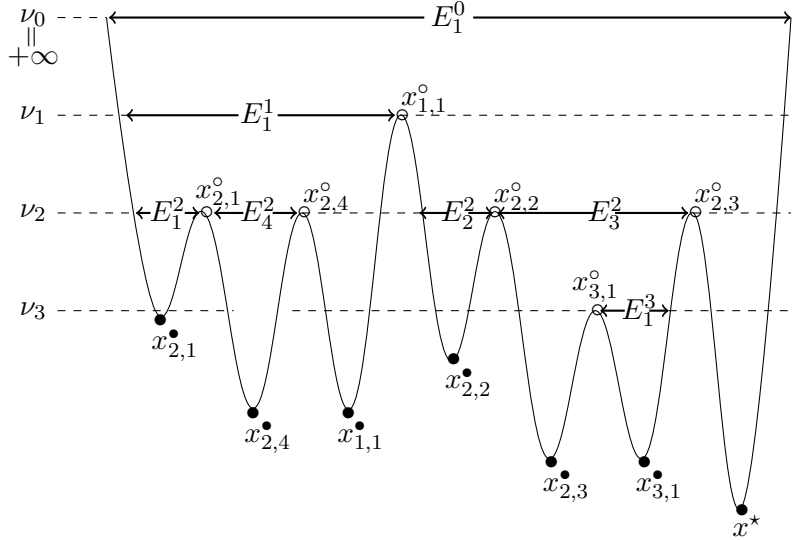
\begin{figure}[b!ht]
\centering
\begin{tikzpicture}[scale=0.65]
     \draw  plot[smooth, tension=.7] coordinates{(0, 10) (1, 4) (2, 6) (3,2) (4,6) (5,2) (6, 8) (7, 3) (8, 6) (9, 1) (10,4)(11, 1)(12,6) (13,0)(14,10)};
     \draw[dashed] (-1,10) -- (0,10);
     \draw[dashed] (-1,  8) -- (0.4,  8);
     \draw[dashed] (6,  8) -- (14,  8);
     \draw[dashed] (-1,  6) -- (0.6,  6);
     \draw[dashed] (4.0,  6) -- (6.5,  6);
     \draw[dashed] (12,  6) -- (14,  6);
     \draw[dashed] (-1,  4) -- (2.6,  4);
     \draw[dashed] (3.8, 4) -- (10,   4);     
     \draw[dashed] (11.5,  4) -- (14,  4);
     \node at (-1.5,10) {$\nu_0$};
     \node at (-1.5,9.6) {$\rotatebox{90}{=}$};
     \node at (-1.5,9.2) {$+\infty $};
     \node at (-1.5,  8) {$\nu_1$};
     \node at (-1.5,  6) {$\nu_2$};    
     \node at (-1.5,  4) {$\nu_3$};
     \node at (13, -0.1) {$\bullet$};
     \node at (13.1, -0.4) {$x^{\star}$};
     \node at (4.95, 1.9) {$\bullet$};
     \node at (5.2, 1.4) {$x_{1,1}^{\bullet}$};
     \node at (6.05, 8) {$\circ$};
     \node at (6.5, 8.3) {$x_{1,1}^{\circ}$};     
     \node at (1.1, 3.8) {$\bullet$};
     \node at (1.4, 3.3) {$x_{2,1}^{\bullet}$}; 
     \node at (2.05, 6) {$\circ$};
     \node at (2.3, 6.4) {$x_{2,1}^{\circ}$}; 
     \node at (7.1, 3.0) {$\bullet$};
     \node at (7.4, 2.5) {$x_{2,2}^{\bullet}$}; 
     \node at (7.95, 6) {$\circ$};
     \node at (8.4, 6.3) {$x_{2,2}^{\circ}$};      
     \node at (9.1, 0.9) {$\bullet$};      
     \node at (9.4, 0.4) {$x_{2,3}^{\bullet}$}; 
     \node at (12.05, 6) {$\circ$};
     \node at (12.5, 6.3) {$x_{2,3}^{\circ}$}; 
     \node at (3.0, 1.9) {$\bullet$};     
     \node at (3.3, 1.4) {$x_{2,4}^{\bullet}$};
     \node at (4.05, 6) {$\circ$};
     \node at (4.5, 6.3) {$x_{2,4}^{\circ}$}; 
     \node at (11.0, 0.9) {$\bullet$};     
     \node at (11.3, 0.4) {$x_{3,1}^{\bullet}$};  
     \node at (10.05, 4) {$\circ$};
     \node at (10.0, 4.5) {$x_{3,1}^{\circ}$}; 


    \node at (7, 10) {$E_{1}^{0}$};
    \draw[->, thick] (7.5, 10) -- (14, 10);
    \draw[->, thick] (6.5, 10) -- (0.05,  10);    
    \node at (3.1,8) {$E_{1}^{1}$};
    \draw[->, thick] (3.5, 8) -- (5.9, 8);
    \draw[->, thick] (2.7, 8) -- (0.4, 8);    
    \node at (1.25,6) {$E_{1}^{2}$};
    \draw[->, thick] (1.6, 6) -- (1.9, 6);
    \draw[->, thick] (0.9, 6) -- (0.6, 6);    
    \node at (7.2,6) {$E_{2}^{2}$};
    \draw[->, thick] (7.4, 6) -- (7.9, 6);
    \draw[->, thick] (7.0, 6) -- (6.4, 6);  
    \node at (10.2,6) {$E_{ 3}^{2}$};
    \draw[->, thick] (10.4, 6) -- (11.9, 6);
    \draw[->, thick] (10.0, 6) -- (8.0, 6);  
    \node at (3.1,6) {$E_{4}^{2}$};
   \draw[->, thick] (3.4, 6) -- (3.9, 6);
   \draw[->, thick] (2.8, 6) -- (2.2, 6);      
    \node at (10.9,4) {$E_{ 1}^{3}$};
    \draw[->, thick] (11.1, 4) -- (11.5, 4);
    \draw[->, thick] (10.6, 4) -- (10.1, 4);    
\end{tikzpicture}
\caption{\small A degenerate one-dimensional Morse function. The labeling of its index-$1$ separating saddle points $x_{i,j}^{\circ}$ and local minima $x_{i,j}^{\bullet}$ is not unique. Nevertheless, the labeling process gives a unique one-to-one correspondence between the function values at the two types of points. See~\Cref{fig:generic} for a comparison.}
\label{fig:degenerate}
\end{figure}

\begin{prop}[Theorem 2.8 in~\cite{michel2019small}]\label{thm: Michel-degerate}
Assume that the assumptions of \Cref{thm: continuous-quantative} are satisfied but not \Cref{assump: generic-assumption}. Then, there exists $s_0 > 0$ such that for any $s \in (0, s_0]$, the first $n^{\bullet}-1$ smallest positive eigenvalues of the Witten-Laplacian $\Delta^s_f$ associated with $f$ satisfy
\begin{equation}\nonumber
\delta_{s,\ell} = s\left(\gamma_\ell + o(s) \right)  \e^{- \frac{2 H_{f,\ell}}{s}},
\end{equation}
for $\ell = 1, \ldots, n^{\bullet} - 1$, where $f(x_{\ell}^{\circ}) - f(x_{\ell}^{\bullet}) \leq H_{f, \ell} \leq f(x_{1}^{\circ}) - f(x^\star)$. The constants $H_{f, \ell}$ and $\gamma_{\ell}$ all depend only on the function $f$.

\end{prop}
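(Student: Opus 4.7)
The plan is to extend the generic-case analysis of \Cref{thm: HHS-generic} by combining a perturbation-to-generic argument with a direct quasi-mode construction adapted to degeneracies. The first step is to establish that even when \Cref{assump: generic-assumption} fails, the ordered sequence of function-value differences
\[
f(x_0^\circ) - f(x_0^\bullet) > f(x_1^\circ) - f(x_1^\bullet) \ge \cdots \ge f(x_{n^\bullet - 1}^\circ) - f(x_{n^\bullet - 1}^\bullet)
\]
remains well-defined. One groups together the local minima that are simultaneously attained in the same critical component at the same height, and likewise groups the index-$1$ separating saddles on its boundary. This produces unique equivalence classes and a canonical ordering of barrier heights, even though the underlying points $x_\ell^\circ$ and $x_\ell^\bullet$ are not individually unique.

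Next, I would perturb $f$ to $f_\tau = f + \tau g$ with a smooth, compactly supported $g$ chosen along a direction that lifts all ties among the critical values. A transversality argument shows such $g$ exists, and for every small $\tau > 0$ the perturbation $f_\tau$ satisfies \Cref{assump: generic-assumption}. Applying \Cref{thm: HHS-generic} to $f_\tau$ yields explicit asymptotics for $\delta_{s,\ell}(f_\tau)$. As $\tau \to 0$, the perturbed critical values converge to those of $f$; the constants $\gamma_\ell(\tau)$ converge to finite positive limits $\gamma_\ell$ depending only on the Hessian data at the critical points of $f$; and the barriers $H_{f_\tau,\ell}$ converge to limiting values $H_{f,\ell}$ sandwiched between the smallest per-pair barrier $f(x_\ell^\circ) - f(x_\ell^\bullet)$ and the global mountain-pass barrier $f(x_1^\circ) - f(x^\star)$, giving the claimed two-sided bound.

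The main obstacle is that the eigenvalues $\delta_{s,\ell}$ are exponentially small in $s$, so a naive continuity-in-$\tau$ limit (whose errors are of order $O(\tau)$) would destroy the $\e^{-2 H_{f,\ell}/s}$ scale if one takes $\tau \to 0$ after fixing $s$. The resolution, following \cite{michel2019small}, is to treat the perturbation as a heuristic for the combinatorial structure only, and to redo the quasi-mode construction directly for $f$. One introduces WKB-type $0$-form quasi-modes $c_\ell \chi_\ell \, \e^{-f/s}$ supported near equivalence classes of local minima, and corresponding $1$-form quasi-modes near equivalence classes of separating saddles, then exploits the intertwining between the $0$-form and $1$-form Witten-Laplacians via the twisted differential $d_f^s = s\, d + df \wedge$. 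The small eigenvalues $\delta_{s,\ell}$ then arise as the squared singular values of a tunneling matrix $T_s$ whose entries are exponentials of pairwise barrier heights. Its singular value decomposition, now with multiplicities reflecting the degeneracy, produces the stated asymptotics with $H_{f,\ell}$ occupying precisely the claimed range.
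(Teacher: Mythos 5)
The paper itself does not prove this proposition: it is imported (with the labeling adapted) from Theorem 2.8 of \cite{michel2019small}, exactly as \Cref{thm: HHS-generic} is imported from \cite{herau2011tunnel}, so the relevant comparison is between your sketch and Michel's argument rather than any proof in this paper. Your outline correctly identifies that machinery --- grouping tied minima and separating saddles so that the pairing of critical \emph{values} survives the degeneracy, constructing $0$-form and $1$-form quasi-modes localized at these classes, and extracting the small eigenvalues as squared singular values of the tunneling matrix of the twisted differential $s\,d + df\wedge$ --- which is indeed the route of the cited work; in substance you are reconstructing the quoted proof, not supplying an independent one. Two cautions. First, the perturbation-to-generic step is, as you concede, only a heuristic: on the scale $\e^{-2H_{f,\ell}/s}$ the eigenvalue asymptotics are not continuous in the perturbation parameter, and the degenerate prefactors $\gamma_\ell$ are in general \emph{not} limits of the generic formula \eqref{eq:gamma_expre} (they can aggregate contributions from several tied saddles and minima), so no part of the final argument can rest on that step. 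Second, the hard analytic content --- exponentially accurate quasi-modes, control of the off-diagonal interaction terms, and the argument that the ordered singular values force $f(x_\ell^\circ)-f(x_\ell^\bullet) \le H_{f,\ell} \le f(x_1^\circ)-f(x^\star)$ --- is precisely what Theorem 2.8 of \cite{michel2019small} provides, and your sketch defers to it rather than establishing it. Since the paper handles this the same way, by citation, there is no gap relative to the paper; just present your write-up as an exposition of the cited proof rather than a self-contained derivation.
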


Taken together, \Cref{thm: HHS-generic} and \Cref{thm: Michel-degerate} give a full proof of \Cref{thm: continuous-quantative}. As is clear, the Morse saddle barrier in \Cref{def:barrier} for the degenerate case is set to $H_f = H_{f,1}$. For completeness, we remark that this result applies to \Cref{assump: generic-assumption}, in which case we conclude that $H_{f,\ell}= f(x_{\ell}^{\circ}) - f(x_{\ell}^{\bullet})$ and $\gamma_\ell$ is given the same as \eqref{eq:gamma_expre}. As such, \Cref{thm: HHS-generic} is implied by \Cref{thm: Michel-degerate}.

\section{Discussion}
\label{sec: conclusion}

In this paper, we have presented a theoretical perspective on the convergence of SGD in nonconvex optimization as a function of the learning rate. Introducing the notion of an lr-dependent SDE, we have leveraged modern tools for the study of diffusions, in particular the spectral theory of diffusion operators, to analyze the dynamics of SGD in a continuous-time model. Specifically, we have shown that the solution to the SDE converges linearly to stationarity under certain regularity conditions and we have presented a concise expression for the linear rate of convergence with transparent dependence on the learning rate for nonconvex Morse functions. Our results show that the linear rate is a constant in the strongly convex case, whereas it decreases rapidly as the learning rate decreases in the nonconvex setting.  We have thus uncovered a fundamental distinction between convex and nonconvex problems. As one implication, we note that noise in the gradients plays a more determinative role in stochastic optimization with nonconvex objectives as opposed to convex objectives. We also note that our results provide a justification for the use of a large initial learning rate in training neural networks.

We propose several directions for future research to consolidate and extend the framework for analyzing stochastic optimization methods via SDEs. A pressing question is to better characterize the gap between the stationary distribution of the lr-dependent SDE and that of the discrete SGD~\cite{kronfeld1993dynamics,pavliotis2014stochastic,dieuleveut2017bridging}. Explicitly, can we improve the upper bound in \Cref{prop: approx}? A related question is whether \Cref{thm: main1} can be improved to $\E f(x_{k}) - f^\star \leq O(s + (1 - \lambda_s s)^{k})$, with the hidden coefficients having less dependence on the time horizon $ks$. A possible approach to overcoming this difficulty in the discrete regime is to obtain a discrete version of the Poincar\'e inequality in $\mathbb{R}^{d}$ (\Cref{thm: villani-poincare-inq}). From a different angle, it is noteworthy that $(s/2) \Delta \rho_{s}$ in the Fokker--Planck--Smoluchowski equation~\eqref{eqn: Fokker-Planck} corresponds to vanishing
viscosity in fluid mechanics. \Cref{subsec: supplement-viscosity} presents several open problems from this viewpoint. To widen the scope of this framework, it is important to extend our results to the setting where the gradient noise is heavy-tailed~\cite{simsekli2019tail}.

From a practical standpoint, our work offers several promising avenues for future research in deep learning. First, a seemingly straightforward direction is to extend our SDE-based analysis to various learning rate schedules used in practice in training deep neural networks, such as diminishing learning rate and cyclical learning rates~\cite{bottou2018optimization,smith2017cyclical}. More broadly, it is of great interest to use SDEs to study and improve on practical variants of SGD, including RMSProp and Adam~\cite{tieleman2012lecture,kingma2014adam}. Second, our results would likely to be useful in guiding the choice of hyperparameters of deep neural networks from an optimization viewpoint. For instance, recognizing the essence of the exponential decay constant $\lambda_s$ in determining the convergence rate of SGD, how to choose the neural network architecture and the loss function so as to get a small value of the Morse saddle barrier $H_f$? Finally, we wonder if the lr-dependent SDE might give insights into generalization properties of neural networks such as local elasticity~\cite{he2019local} and implicit regularization~\cite{zhang2016understanding,gunasekar2018characterizing}.




{\small
\subsection*{Acknowledgments}
We would like to thank Zhuang Liu and Yu Sun for helpful conversations about practical experience in deep learning. This work was supported in part by NSF through CAREER DMS-1847415, CCF-1763314, and CCF-1934876, and the Wharton Dean’s Research Fund.  We also recognize support from the Mathematical Data Science program of the Office of Naval Research under grant number N00014-18-1-2764.

\bibliographystyle{alpha}
\newcommand{\etalchar}[1]{$^{#1}$}


\begin{thebibliography}{KMN{\etalchar{+}}16}

\bibitem[AK99]{arnold1999topological}
V.~I. Arnol'd and B.~A. Khesin.
\newblock {\em Topological Methods in Hydrodynamics}, volume 125.
\newblock Springer Science \& Business Media, 1999.

\bibitem[Arn12]{arnol?d2012geometrical}
V.~Arnol'd.
\newblock {\em Geometrical Methods in the Theory of Ordinary Differential
  Equations}.
\newblock Springer Science \& Business Media, 2012.

\bibitem[Arn13]{arnol2013mathematical}
V.~Arnol'd.
\newblock {\em Mathematical Methods of Classical Mechanics}.
\newblock Springer Science \& Business Media, 2013.

\bibitem[BCN18]{bottou2018optimization}
L.~Bottou, F.~E. Curtis, and J.~Nocedal.
\newblock Optimization methods for large-scale machine learning.
\newblock {\em SIAM Review}, 60(2):223--311, 2018.

\bibitem[Ben12]{bengio2012practical}
Y.~Bengio.
\newblock Practical recommendations for gradient-based training of deep
  architectures.
\newblock In {\em Neural Networks: Tricks of the Trade}, pages 437--478.
  Springer, 2012.

\bibitem[BGK05]{bovier2005metastability}
A.~Bovier, V.~Gayrard, and M.~Klein.
\newblock Metastability in reversible diffusion processes {II}: Precise
  asymptotics for small eigenvalues.
\newblock {\em Journal of the European Mathematical Society}, 7(1):69--99,
  2005.

\bibitem[BGL13]{bakry2013analysis}
D.~Bakry, I.~Gentil, and M.~Ledoux.
\newblock {\em Analysis and Geometry of {Markov} Diffusion Operators}, volume
  348.
\newblock Springer Science \& Business Media, 2013.

\bibitem[Bot10]{bottou2010large}
L.~Bottou.
\newblock Large-scale machine learning with stochastic gradient descent.
\newblock In {\em Proceedings of COMPSTAT'2010}, pages 177--186. Springer,
  2010.

\bibitem[BT96]{bally1996law}
V.~Bally and D.~Talay.
\newblock The law of the {Euler} scheme for stochastic differential equations:
  Ii. convergence rate of the density.
\newblock {\em Monte Carlo Methods and Applications}, 2(2):93--128, 1996.

\bibitem[CEL84]{crandall1984some}
M.~Crandall, L.~Evans, and P.-L. Lions.
\newblock Some properties of viscosity solutions of {Hamilton-Jacobi}
  equations.
\newblock {\em Transactions of the American Mathematical Society},
  282(2):487--502, 1984.

\bibitem[CF99]{chen1999vanishing}
G.-Q. Chen and H.~Frid.
\newblock Vanishing viscosity limit for initial-boundary value problems for
  conservation laws.
\newblock {\em Contemporary Mathematics}, 238:35--51, 1999.

\bibitem[CH19]{caluya2019gradient}
K.~Caluya and A.~Halder.
\newblock Gradient flow algorithms for density propagation in stochastic
  systems.
\newblock {\em IEEE Transactions on Automatic Control}, 2019.

\bibitem[CL83]{crandall1983viscosity}
M.~Crandall and P.-L. Lions.
\newblock Viscosity solutions of {Hamilton-Jacobi} equations.
\newblock {\em Transactions of the American Mathematical Society},
  277(1):1--42, 1983.

\bibitem[CM90]{chorin1990mathematical}
A.~Chorin and J.~Marsden.
\newblock {\em A Mathematical Introduction to Fluid Mechanics}.
\newblock Springer, 1990.

\bibitem[COO{\etalchar{+}}18]{chaudhari2018deep}
P.~Chaudhari, A.~Oberman, S.~Osher, S.~Soatto, and G.~Carlier.
\newblock Deep relaxation: partial differential equations for optimizing deep
  neural networks.
\newblock {\em Research in the Mathematical Sciences}, 5(3):30, 2018.

\bibitem[CS04]{cannarsa2004semiconcave}
P.~Cannarsa and C.~Sinestrari.
\newblock {\em Semiconcave Functions, {Hamilton-Jacobi} Equations, and Optimal
  Control}.
\newblock Springer Science \& Business Media, 2004.

\bibitem[CS18]{chaudhari2018stochastic}
P.~Chaudhari and S.~Soatto.
\newblock Stochastic gradient descent performs variational inference, converges
  to limit cycles for deep networks.
\newblock In {\em 2018 Information Theory and Applications Workshop (ITA)},
  pages 1--10. IEEE, 2018.

\bibitem[DDB17]{dieuleveut2017bridging}
A.~Dieuleveut, A.~Durmus, and F.~Bach.
\newblock Bridging the gap between constant step size stochastic gradient
  descent and {M}arkov chains.
\newblock {\em arXiv preprint arXiv:1707.06386}, 2017.

\bibitem[DDC19]{davis2019stochastic}
D.~Davis, D.~Drusvyatskiy, and V.~Charisopoulos.
\newblock Stochastic algorithms with geometric step decay converge linearly on
  sharp functions.
\newblock {\em arXiv preprint arXiv:1907.09547}, 2019.

\bibitem[DJ19]{diakonikolas2019generalized}
J.~Diakonikolas and M.~I. Jordan.
\newblock Generalized momentum-based methods: A {Hamiltonian} perspective.
\newblock {\em arXiv preprint arXiv:1906.00436}, 2019.

\bibitem[DS01]{deuschel2001large}
J.-D. Deuschel and D.~Stroock.
\newblock {\em Large deviations}, volume 342.
\newblock American Mathematical Society, 2001.

\bibitem[Eva80]{evans1980solving}
L.~Evans.
\newblock On solving certain nonlinear partial differential equations by
  accretive operator methods.
\newblock {\em Israel Journal of Mathematics}, 36(3-4):225--247, 1980.

\bibitem[Eva10]{evans2010partial}
L.~Evans.
\newblock {\em Partial Differential Equations (Second Edition)}, volume~19.
\newblock American Mathematical Society, 2010.

\bibitem[Eva12]{evans2012introduction}
L.~Evans.
\newblock {\em An Introduction to Stochastic Differential Equations},
  volume~82.
\newblock American Mathematical Society, 2012.

\bibitem[FW12]{freidlin2012random}
M.~Freidlin and A.~Wentzell.
\newblock {\em Random Perturbations of Dynamical Systems}, volume 260.
\newblock Springer Science \& Business Media, 2012.

\bibitem[Gas07]{gasiorowicz2007quantum}
S.~Gasiorowicz.
\newblock {\em Quantum Physics}.
\newblock John Wiley \& Sons, 2007.

\bibitem[GLSS18]{gunasekar2018characterizing}
S.~Gunasekar, J.~Lee, D.~Soudry, and N.~Srebro.
\newblock Characterizing implicit bias in terms of optimization geometry.
\newblock In {\em International Conference on Machine Learning}, pages
  1832--1841, 2018.

\bibitem[HHS11]{herau2011tunnel}
F.~H{\'e}rau, M.~Hitrik, and J.~Sj{\"o}strand.
\newblock Tunnel effect and symmetries for {Kramers--Fokker--Planck} type
  operators.
\newblock {\em Journal of the Institute of Mathematics of Jussieu},
  10(3):567--634, 2011.

\bibitem[HKN04]{helffer2004quantitative}
B.~Helffer, M.~Klein, and F.~Nier.
\newblock Quantitative analysis of metastability in reversible diffusion
  processes via a {Witten} complex approach.
\newblock {\em Mat. Contemp.}, 26:41--85, 2004.

\bibitem[HN05]{helffer2005hypoelliptic}
B.~Helffer and F.~Nier.
\newblock {\em Hypoelliptic estimates and spectral theory for {Fokker-Planck}
  operators and {Witten} {Laplacians}}.
\newblock Springer, 2005.

\bibitem[HRB08]{hazan2008adaptive}
E.~Hazan, A.~Rakhlin, and P.~Bartlett.
\newblock Adaptive online gradient descent.
\newblock In {\em Advances in Neural Information Processing Systems}, pages
  65--72, 2008.

\bibitem[HS12]{hislop2012introduction}
P.~Hislop and I.~Sigal.
\newblock {\em Introduction to Spectral Theory: With Applications to
  {Schr{\"o}dinger} Operators}, volume 113.
\newblock Springer Science \& Business Media, 2012.

\bibitem[HS20]{he2019local}
H.~He and W.~J. Su.
\newblock The local elasticity of neural networks.
\newblock In {\em International Conference on Learning Representations (ICLR)},
  2020.

\bibitem[Hwa80]{hwang1980laplace}
C.-R. Hwang.
\newblock Laplace's method revisited: weak convergence of probability measures.
\newblock {\em The Annals of Probability}, pages 1177--1182, 1980.

\bibitem[HZRS16]{he2016deep}
K.~He, X.~Zhang, S.~Ren, and J.~Sun.
\newblock Deep residual learning for image recognition.
\newblock In {\em Proceedings of the IEEE conference on computer vision and
  pattern recognition}, pages 770--778, 2016.

\bibitem[JGN{\etalchar{+}}17]{jin2017escape}
C.~Jin, R.~Ge, P.~Netrapalli, S.~M. Kakade, and M.~I. Jordan.
\newblock How to escape saddle points efficiently.
\newblock In {\em Proceedings of the 34th International Conference on Machine
  Learning}, pages 1724--1732. JMLR. org, 2017.

\bibitem[JKA{\etalchar{+}}17]{jastrzkebski2017three}
S.~Jastrzebski, Z.~Kenton, D.~Arpit, N.~Ballas, A.~Fischer, Y.~Bengio, and
  A.~Storkey.
\newblock Three factors influencing minima in {SGD}.
\newblock {\em arXiv preprint arXiv:1711.04623}, 2017.

\bibitem[JKB{\etalchar{+}}18]{jastrzebski2018relation}
S.~Jastrzebski, Z.~Kenton, N.~Ballas, A.~Fischer, Y.~Bengio, and A.~Storkey.
\newblock On the relation between the sharpest directions of {DNN} loss and the
  {SGD} step length.
\newblock {\em arXiv preprint arXiv:1807.05031}, 2018.

\bibitem[Jor18]{jordan2018dynamical}
M.~I. Jordan.
\newblock Dynamical, symplectic and stochastic perspectives on gradient-based
  optimization.
\newblock In {\em Proceedings of the International Congress of Mathematicians,
  Rio de Janeiro}, volume~1, pages 523--550, 2018.

\bibitem[KB14]{kingma2014adam}
D.~P. Kingma and J.~Ba.
\newblock Adam: {A} method for stochastic optimization.
\newblock {\em arXiv preprint arXiv:1412.6980}, 2014.

\bibitem[KB17]{krichene2017acceleration}
W.~Krichene and P.~L. Bartlett.
\newblock Acceleration and averaging in stochastic descent dynamics.
\newblock In {\em Advances in Neural Information Processing Systems}, pages
  6796--6806, 2017.

\bibitem[KCD08]{kundu2008fluid}
P.~Kundu, I.~Cohen, and D.~Dowling.
\newblock {\em Fluid Mechanics (Fourth Edition)}.
\newblock Elsevier, 2008.

\bibitem[KMN{\etalchar{+}}16]{keskar2016large}
N.~S. Keskar, D.~Mudigere, J.~Nocedal, M.~Smelyanskiy, P.~Tang, and P.~Tak.
\newblock On large-batch training for deep learning: Generalization gap and
  sharp minima.
\newblock {\em arXiv preprint arXiv:1609.04836}, 2016.

\bibitem[KP92]{kloeden1992approximation}
P.~E. Kloeden and E.~Platen.
\newblock The approximation of multiple stochastic integrals.
\newblock {\em Stochastic Analysis and Applications}, 10(4):431--441, 1992.

\bibitem[Kri09]{krizhevsky2009learning}
A~Krizhevsky.
\newblock Learning multiple layers of features from tiny images.
\newblock {\em Master's thesis, University of Toronto}, 2009.

\bibitem[Kro93]{kronfeld1993dynamics}
A.~S. Kronfeld.
\newblock Dynamics of {Langevin} simulations.
\newblock {\em Progress of Theoretical Physics Supplement}, 111:293--311, 1993.

\bibitem[KY03]{kushner2003stochastic}
H.~Kushner and G.~G. Yin.
\newblock {\em Stochastic approximation and recursive algorithms and
  applications}, volume~35.
\newblock Springer Science \& Business Media, 2003.

\bibitem[LA19]{li2019exponential}
Z.~Li and S.~Arora.
\newblock An exponential learning rate schedule for deep learning.
\newblock {\em arXiv preprint arXiv:1910.07454}, 2019.

\bibitem[LH16]{loshchilov2016sgdr}
I.~Loshchilov and F.~Hutter.
\newblock {SGDR}: Stochastic gradient descent with warm restarts.
\newblock {\em arXiv preprint arXiv:1608.03983}, 2016.

\bibitem[Lio82]{lions1982generalized}
P.-L. Lions.
\newblock {\em Generalized Solutions of {Hamilton-Jacobi} Equations},
  volume~69.
\newblock London Pitman, 1982.

\bibitem[LSJR16]{lee2016gradient}
J.~D. Lee, M.~Simchowitz, M.~I. Jordan, and B.~Recht.
\newblock Gradient descent only converges to minimizers.
\newblock In {\em Conference on Learning Theory}, pages 1246--1257, 2016.

\bibitem[LTE17]{li2017stochastic}
Q.~Li, C.~Tai, and W.~E.
\newblock Stochastic modified equations and adaptive stochastic gradient
  algorithms.
\newblock In {\em Proceedings of the 34th International Conference on Machine
  Learning}, volume~70, pages 2101--2110. JMLR. org, 2017.

\bibitem[LWM19]{li2019towards}
Y.~Li, C.~Wei, and T.~Ma.
\newblock Towards explaining the regularization effect of initial large
  learning rate in training neural networks.
\newblock In {\em Advances in Neural Information Processing Systems}, pages
  11669--11680, 2019.

\bibitem[MHB16]{mandt2016variational}
S.~Mandt, M.~Hoffman, and D.~Blei.
\newblock A variational analysis of stochastic gradient algorithms.
\newblock In {\em International Conference on Machine Learning}, pages
  354--363, 2016.

\bibitem[Mic19]{michel2019small}
L.~Michel.
\newblock About small eigenvalues of {Witten} {Laplacian}.
\newblock {\em Pure and Applied Analysis}, 1(2), 2019.

\bibitem[Mil75]{mil1975approximate}
G.~N. Mil'shtein.
\newblock Approximate integration of stochastic differential equations.
\newblock {\em Theory of Probability \& Its Applications}, 19(3):557--562,
  1975.

\bibitem[Mil86]{mil1986weak}
G.~N. Mil'shtein.
\newblock Weak approximation of solutions of systems of stochastic differential
  equations.
\newblock {\em Theory of Probability \& Its Applications}, 30(4):750--766,
  1986.

\bibitem[MV99]{markowich1999trend}
P.~A. Markowich and C.~Villani.
\newblock On the trend to equilibrium for the {Fokker-Planck} equation: An
  interplay between physics and functional analysis.
\newblock In {\em Physics and Functional Analysis, Matematica Contemporanea
  (SBM) 19}, pages 1--29, 1999.

\bibitem[Nie04]{nier2004quantitative}
F.~Nier.
\newblock Quantitative analysis of metastability in reversible diffusion
  processes via a {Witten} complex approach.
\newblock {\em Journ{\'e}es Equations aux D{\'e}riv{\'e}es Partielles}, pages
  1--17, 2004.

\bibitem[Pav14]{pavliotis2014stochastic}
G.~Pavliotis.
\newblock {\em Stochastic Processes and Applications: Diffusion Processes, the
  {Fokker--Planck} and {Langevin} Equations}, volume~60.
\newblock Springer, 2014.

\bibitem[PT85]{pardoux1985approximation}
E.~Pardoux and D.~Talay.
\newblock Discretization and simulation of stochastic differential equations.
\newblock {\em Acta Applicandae Math}, 3:23--47, 1985.

\bibitem[RRT17]{raginsky2017non}
M.~Raginsky, A.~Rakhlin, and M.~Telgarsky.
\newblock Non-convex learning via stochastic gradient {Langevin} dynamics: A
  nonasymptotic analysis.
\newblock In {\em Conference on Learning Theory}, pages 1674--1703, 2017.

\bibitem[SBC16]{su2016differential}
W.~J. Su, S.~Boyd, and E.~Cand{\`e}s.
\newblock A differential equation for modeling {Nesterov's} accelerated
  gradient method: theory and insights.
\newblock {\em The Journal of Machine Learning Research}, 17(1):5312--5354,
  2016.

\bibitem[SDJS18]{shi2018understanding}
B.~Shi, S.~Du, M.~Jordan, and W.~J. Su.
\newblock Understanding the acceleration phenomenon via high-resolution
  differential equations.
\newblock {\em arXiv preprint arXiv:1810.08907}, 2018.

\bibitem[SKYL17]{smith2017don}
S.~L. Smith, P.-J. Kindermans, C.~Ying, and Q.~V. Le.
\newblock Don't decay the learning rate, increase the batch size.
\newblock {\em arXiv preprint arXiv:1711.00489}, 2017.

\bibitem[Smi17]{smith2017cyclical}
L.~N. Smith.
\newblock Cyclical learning rates for training neural networks.
\newblock In {\em 2017 IEEE Winter Conference on Applications of Computer
  Vision (WACV)}, pages 464--472. IEEE, 2017.

\bibitem[SS19]{sordello2019robust}
M.~Sordello and W.~J. Su.
\newblock Robust learning rate selection for stochastic optimization via
  splitting diagnostic.
\newblock {\em arXiv preprint arXiv:1910.08597}, 2019.

\bibitem[SSG19]{simsekli2019tail}
U.~Simsekli, L.~Sagun, and M.~Gurbuzbalaban.
\newblock A tail-index analysis of stochastic gradient noise in deep neural
  networks.
\newblock {\em arXiv preprint arXiv:1901.06053}, 2019.

\bibitem[Sun19]{sun2019optimization}
R.~Sun.
\newblock Optimization for deep learning: theory and algorithms.
\newblock {\em arXiv preprint arXiv:1912.08957}, 2019.

\bibitem[Tal82]{talay1982analyse}
D.~Talay.
\newblock {\em Analyse num{\'e}rique des {\'e}quations diff{\'e}rentielles
  stochastiques}.
\newblock PhD thesis, Universit{\'e} Aix-Marseille I, 1982.

\bibitem[Tal84]{talay1984efficient}
D.~Talay.
\newblock Efficient numerical schemes for the approximation of expectations of
  functionals of the solution of a {SDE} and applications.
\newblock In {\em Filtering and Control of Random Processes}, pages 294--313.
  Springer, 1984.

\bibitem[TH12]{tieleman2012lecture}
T.~Tieleman and G.~Hinton.
\newblock Lecture 6.5-rmsprop: Divide the gradient by a running average of its
  recent magnitude.
\newblock {\em COURSERA: Neural Networks for Machine Learning}, 4(2):26--31,
  2012.

\bibitem[Vil06]{villani2006hypocoercive}
C.~Villani.
\newblock Hypocoercive diffusion operators.
\newblock In {\em International Congress of Mathematicians}, volume~3, pages
  473--498, 2006.

\bibitem[Vil09]{villani2009hypocoercivity}
C.~Villani.
\newblock Hypocoercivity.
\newblock {\em Memoirs of the American Mathematical Society}, 202(950), 2009.

\bibitem[WWJ16]{wibisono2016variational}
A.~Wibisono, A.~C. Wilson, and M.~I. Jordan.
\newblock A variational perspective on accelerated methods in optimization.
\newblock {\em proceedings of the National Academy of Sciences},
  113(47):E7351--E7358, 2016.

\bibitem[YLWJ19]{you2019learning}
K.~You, M.~Long, J.~Wang, and M.~I. Jordan.
\newblock How does learning rate decay help modern neural networks?
\newblock {\em arXiv preprint arXiv:1908.01878}, 2019.

\bibitem[ZBH{\etalchar{+}}16]{zhang2016understanding}
C.~Zhang, S.~Bengio, M.~Hardt, B.~Recht, and O.~Vinyals.
\newblock Understanding deep learning requires rethinking generalization.
\newblock {\em arXiv preprint arXiv:1611.03530}, 2016.

\bibitem[Zei12]{zeiler2012adadelta}
M.~D. Zeiler.
\newblock Adadelta: An adaptive learning rate method.
\newblock {\em arXiv preprint arXiv:1212.5701}, 2012.

\bibitem[ZLC17]{zhang2017hitting}
Y.~Zhang, P.~Liang, and M.~Charikar.
\newblock A hitting time analysis of stochastic gradient {Langevin} dynamics.
\newblock In {\em Conference on Learning Theory}, pages 1980--2022, 2017.

\end{thebibliography}
}

\clearpage
\appendix

\section{Technical Details for Sections~\ref{sec: intro} and \ref{sec: preliminaries}}
\label{sec: proof_preliminaries}

\subsection{Approximating differential equations}
\label{subsec: supplement-DE-algorithms}

\Cref{fig:chart} presents a diagram that shows approximating surrogates for GD, SGD, and SGLD at multiple scales. In the case of SGD, for example, the inclusion of only $O(1)$ terms leads to the ODE $\dot{X} = -\nabla f(X)$, whereas the inclusion of up to $O(\sqrt{s})$ terms leads to the lr-dependent SDE \eqref{eqn: sgd_high_resolution_formally}. For GD and SGLD, $O(\sqrt{s})$ terms are not found in the expansion as in the derivation of \eqref{eqn: sgd_high_resolution_formally}. The $O(\sqrt{s})$-approximation, therefore, leads to the same differential equation as the $O(1)$-approximation for both GD and SGLD.


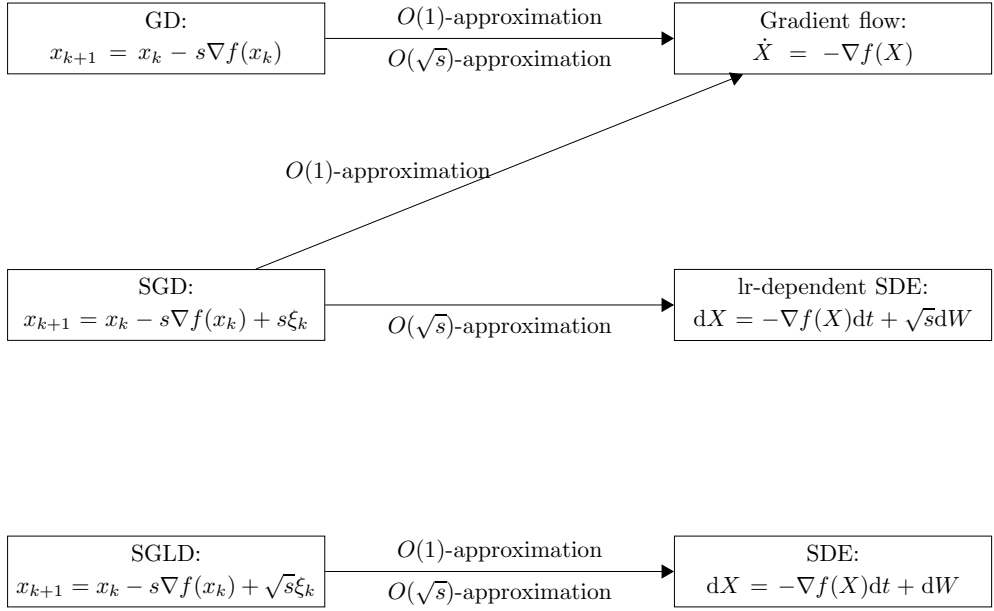
\begin{figure}[htb!]
\begin{center}
\resizebox{0.8\textwidth}{!}{%
\begin{tikzpicture}[node distance=4cm]
    \node (n00) [box, draw=black] {\small GD:\\ $x_{k+1} = x_{k} - s\nabla f(x_k)$};
    \node (n01) [box, draw=black,  below of=n00] {\small SGD: \\$x_{k+1} = x_{k} - s\nabla f(x_k) + s \xi_k$};
    \node (n10) [box, draw=black,  below of=n01] {\small  SGLD: \\$x_{k+1} = x_{k} - s\nabla f(x_k) + \sqrt{s} \xi_k$};
    \node (n11) [box, draw=black,   right  of=n00, xshift=+6cm] {\small  Gradient flow: \\$\dot{X} = -\nabla f(X)$};
    \node (n100) [box, draw=black,  below of=n11] {\small lr-dependent SDE: \\$\dd X = -\nabla f(X) \dd t + \sqrt{s} \dd W$};
    \node (n101) [box, draw=black,  below of=n100] {\small SDE: \\$\dd X = -\nabla f(X) \dd t + \dd W$};
 
    \draw [arrow] (n00)  to node[above] {\small $O(1)$-approximation} node[below]{\small $O(\sqrt{s})$-approximation} (n11);
    \draw [arrow] (n10)  to node[above] {\small $O(1)$-approximation} node[below]{\small $O(\sqrt{s})$-approximation} (n101);
   \draw [arrow] (n01)  to node[left] {\small $O(1)$-approximation} (n11);
    \draw [arrow] (n01)  to node[below]{\small $O(\sqrt{s})$-approximation} (n100);
     
\end{tikzpicture}
}%
\end{center}
\caption{\small Diagram showing the relationship between three discrete algorithms and their $O(1)$-approximating and $O(1) + O(\sqrt{s})$-approximating differential equations. Note that the inclusion of only $O(1)$-terms does not distinguish between GD and SGD.}
\label{fig:chart}
\end{figure}



\subsection{Derivation of the Fokker--Planck--Smoluchowski equation}
\label{subsec: fok-plk}

To derive the lr-dependent Fokker--Planck--Smoluchowski equation \eqref{eqn: Fokker-Planck}, we first state the following lemma.
\begin{lem}[It\^o's lemma]
\label{lem: ito-lem}
For any $f \in C^\infty(\mathbb{R}^{d})$ and $g \in C^\infty([0, +\infty) \times \mathbb{R}^{d})$, let $X_s(t)$ be the solution to the lr-dependent SDE~\eqref{eqn: sgd_high_resolution_formally}. Then, we have
\begin{equation}\label{eqn: ito-formula}
\dd g(t, X_{s}(t)) =  \left( \frac{\partial g}{\partial t} - \nabla f \cdot \nabla g + \frac{s}{2} \Delta g \right) \dd t  + \sqrt{s} \left(\sum_{i=1}^{d} \frac{\partial g}{\partial x_{i}}\right) \dd W. 
\end{equation}
\end{lem}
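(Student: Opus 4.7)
The plan is to obtain \eqref{eqn: ito-formula} as a direct specialization of the classical multidimensional It\^o formula to the drift and diffusion coefficients appearing in \eqref{eqn: sgd_high_resolution_formally}, so the argument is more computation than theory. First I would recall that the lr-dependent SDE admits the component-wise form $\dd X_s^i = -\partial_{x_i} f(X_s)\, \dd t + \sqrt{s}\, \dd W^i$ for $i = 1, \ldots, d$, and that $X_s$ is a continuous semimartingale whose quadratic covariation satisfies $\dd \langle X_s^i, X_s^j\rangle = s\, \delta_{ij}\, \dd t$; this is immediate from the It\^o isometry applied to the driving Brownian motion $W$, together with the fact that the drift term $-\nabla f(X_s)\, \dd t$ is of bounded variation and hence contributes nothing to the quadratic variation.

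Next I would perform a formal second-order Taylor expansion of $g(t+\dd t, X_s(t) + \dd X_s)$ around $(t, X_s(t))$,
\[
\dd g = \frac{\partial g}{\partial t}\, \dd t + \sum_{i=1}^d \frac{\partial g}{\partial x_i}\, \dd X_s^i + \frac{1}{2} \sum_{i,j=1}^d \frac{\partial^2 g}{\partial x_i \partial x_j}\, \dd X_s^i\, \dd X_s^j,
\]
where higher-order terms vanish thanks to the It\^o multiplication table $(\dd t)^2 = 0$, $\dd t\, \dd W^i = 0$, and $\dd W^i\, \dd W^j = \delta_{ij}\, \dd t$. Substituting the SDE into the first-order sum produces the drift contribution $-\nabla f \cdot \nabla g\, \dd t$ together with the martingale term $\sqrt{s} \sum_{i=1}^d \frac{\partial g}{\partial x_i}\, \dd W^i$. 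Substituting into the second-order sum and applying the multiplication rules collapses it to
\[
\frac{1}{2} \sum_{i,j=1}^d \frac{\partial^2 g}{\partial x_i \partial x_j} \cdot s\, \delta_{ij}\, \dd t = \frac{s}{2} \Delta g\, \dd t,
\]
since only the diagonal $i = j$ terms survive. Combining these contributions reassembles precisely \eqref{eqn: ito-formula}.

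There is no genuine analytical obstacle, because It\^o's formula is a classical theorem for continuous semimartingales (see, e.g., \cite{evans2012introduction}), and the smoothness hypothesis $g \in C^\infty$ together with the a.s.\ continuity of $X_s$ on compact time intervals guarantees the local integrability of $\partial_{x_i} g(t, X_s(t))$ against $\dd W^i$ needed for the stochastic integrals to be well-defined. The only care required is to translate the informal differential manipulation above into its rigorous integrated form on $[0, t]$ and justify passage to the limit in the Riemann-sum approximation of the stochastic integrals, which is entirely standard. The noteworthy feature of the final formula is that the scalar diffusion coefficient $\sqrt{s}$ produces the \emph{isotropic} second-order term $\frac{s}{2}\Delta g$, which is precisely the source of the Laplacian in the Fokker--Planck--Smoluchowski equation \eqref{eqn: Fokker-Planck} and, ultimately, of the learning-rate-dependent spreading in the Gibbs invariant distribution \eqref{eqn: Gibbs}.
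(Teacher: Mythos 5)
Your proposal is correct and coincides with what the paper does: the paper states this lemma as the classical multidimensional It\^o formula specialized to the lr-dependent SDE (drift $-\nabla f$, scalar diffusion $\sqrt{s}$) and gives no separate proof, so your component-wise expansion using $\dd\langle X_s^i, X_s^j\rangle = s\,\delta_{ij}\,\dd t$ and the It\^o multiplication table is exactly the intended argument, and your reading of the noise term as $\sqrt{s}\,\nabla g\cdot\dd W$ is the right interpretation of the paper's shorthand $\sqrt{s}\bigl(\sum_i \partial g/\partial x_i\bigr)\dd W$. The only nitpick is that the identity $\dd\langle W^i, W^j\rangle=\delta_{ij}\,\dd t$ is the quadratic covariation of Brownian motion rather than a consequence of the It\^o isometry, but this does not affect the argument.
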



From this lemma, we get
\begin{align}
\frac{\dd \mathbb{E}[g(t, X_{s}(t)) | X_{s}(t')]}{ \dd t } =  \ & \frac{\partial \mathbb{E}[g(t, X_{s}(t)) | X_{s}(t')]}{\partial t} - \nabla f \cdot \nabla \mathbb{E}[g(t, X_{s}(t)) | X_{s}(t')] \nonumber \\
& + \frac{s}{2} \Delta \mathbb{E}[g(t, X_{s}(t)) | X_{s}(t')], \label{lem: ito-lem-expectation}
\end{align}
for $t \ge t'$. Setting $v_s(t', x)  = \mathbb{E}[g(t, X_{s}(t)) | X_{s}(t') = x]$, from \eqref{lem: ito-lem-expectation} we see that $v_s(t', x)$ satisfies the following differential equation:
\begin{equation}
\label{eqn: backward-kolmogorov}
\left\{ \begin{aligned}
& \frac{\partial v_s}{\partial t'} = \nabla f \cdot \nabla v_s - \frac{s}{2} \Delta v_s \\
& v_{s}(t, x) = g(t, x).
\end{aligned} \right.
\end{equation}
Recognizing the invariance of translation of time and letting $u_{s}(t-t', x) = v_s(t', x)$, we can reduce \eqref{eqn: backward-kolmogorov} to the following backward Fokker--Planck--Smoluchowski equation:
\begin{equation}
\label{eqn: backward-kolmogorov-1}
\left\{ \begin{aligned}
& \frac{\partial u_s}{\partial t} = -\nabla f \cdot \nabla u_s + \frac{s}{2} \Delta u_s \\
& u_{s}(0, x) = g(t, x).
\end{aligned} \right.
\end{equation}

Next, from the Chapman--Kolmogorov equation, we get
\[
\rho_{s}(t, x) = \int_{\mathbb{R}^{d}} \rho_{s}(t, x | 0, y) \rho_{s}(0, y) \dd y,
\]
and by switching the order of the integration, we obtain
\begin{align}
\int_{\mathbb{R}^{d}} u_{s}(0, x) \rho_{s}(t, x) \dd x & = \int_{\mathbb{R}^{d}} g(x) \rho_{s}(t, x) \dd x  \nonumber \\
                                                                                 & = \int_{\mathbb{R}^{d}} g(x)  \left( \int_{\mathbb{R}^{d}}  \rho_{s}(t, x | 0, y) \rho_{s}(0, y) \dd y \right) \dd x  \nonumber  \\
                                                                                 & = \int_{\mathbb{R}^{d}}  \rho_{s}(0, y) \left(  \int_{\mathbb{R}^{d}} g(x) \rho_{s}(t, x | 0, y) \dd x \right) \dd y \nonumber  \\
                                                                                 & = \int_{\mathbb{R}^{d}}  \rho_{s}(0, y) u_{s}(t, y) \dd y \nonumber  \\
                                                                                 & = \int_{\mathbb{R}^{d}}  \rho_{s}(0, x) u_{s}(t, x) \dd x. \label{eqn: u_rho-exchange}
\end{align}
Making use of the~backward Fokker--Planck--Smoluchowski equation~\eqref{eqn: backward-kolmogorov-1} and switching the order of integration~\eqref{eqn: u_rho-exchange}, we get
\begin{align*}
\int_{\mathbb{R}^{d}} u_s(0, x)  \frac{ \partial \rho_{s}(t, x)} {\partial t} \bigg|_{t = 0} \dd x & = \frac{\dd}{\dd t} \int_{\mathbb{R}^{d}} u_s(0, x)   \rho_{s}(t, x) \dd x \bigg|_{t = 0}\\
                                                                                                                      & =  \frac{\dd}{\dd t} \int_{\mathbb{R}^{d}} u_s(t, x)   \rho_{s}(0, x) \dd x \bigg|_{t = 0}\\
                                                                                                                      & = \int_{\mathbb{R}^{d}} \frac{\partial u_s(t, x)}{\partial t} \bigg|_{t = 0}  \rho_{s}(0, x) \dd x \\
                                                                                                                      & = \int_{\mathbb{R}^{d}} \left( -\nabla f(x) \cdot \nabla u_s(0, x) + \frac{s}{2} \Delta u_s(0, x) \right) \rho_{s}(0, x) \dd x \\
                                                                                                                      & =  \int_{\mathbb{R}^{d}} u_{s}(0, x) \left( \nabla \cdot (\rho_{s}(0, x) \nabla f(x) )  + \frac{s}{2} \Delta \rho_s(0, x) \right)  \dd x.
\end{align*}
Hence, we derive the forward Fokker--Planck--Smoluchowski equation at $t = 0$ for an arbitrary smooth function $u_s(0, x) = g(t,x)$. Noting that $t = 0$ can be replaced by any time $t$, we complete the derivation of the Fokker--Planck--Smoluchowski equation.

\subsection{The uniqueness of Gibbs invariant distribution}
\label{subsec: proof-unique-steady}

We begin by proving that the probability density $\mu_s$ is an invariant distribution of~\eqref{eqn: Fokker-Planck}. Plugging
\begin{equation}\nonumber
\nabla \mu_s =  - \frac{2}{s} \left( \nabla f \right)  \mu_s
\end{equation}
into \eqref{eqn: Fokker-Planck} gives
\begin{align}
\nabla \cdot (\mu_s \nabla f)  = \nabla \mu_s \cdot \nabla f + \mu_s \Delta f   = - \frac{2}{s} \|\nabla f\|^{2} \mu_s + (\Delta f) \mu_s \label{eqn: advection}
\end{align}
and
\begin{equation}
\Delta \mu_s = - \frac{2}{s} \nabla f \cdot \nabla \mu_s - \frac{2}{s} \mu_s \Delta f = \frac{4}{s^2} \|\nabla f\|^2 \mu_s - \frac{2}{s} \mu_s \Delta f. \label{eqn: diffusion}
\end{equation}
Combining~\eqref{eqn: advection} and~\eqref{eqn: diffusion} yields
\[
\nabla \cdot (\mu_s \nabla f) + \frac{s }{2} \Delta  \mu_s  = 0.
\]

We now proceed to show that the probability density $\mu_s$ is unique. To derive a contradiction, we assume that there exists another distribution $\vartheta_s$ satisfying the Fokker--Planck--Smoluchowski equation:
\begin{equation}
\label{eqn: steady-state-FPS}
\nabla \cdot (\vartheta_s \nabla f) + \frac{s }{2} \Delta  \vartheta_s  = 0.
\end{equation}
Write $\varpi _s = \vartheta_s \mu_s^{-1}$ and recall the operator $\mathscr{L}_s$ defined in~\Cref{sec:proof-crefprop:convv}. We can rewrite \eqref{eqn: steady-state-FPS} as
\[
\mathscr{L}_{s} \varpi_s = 0.
\]
Using \Cref{lem: equivlent-h}, we have
\[
0 = \int_{\mathbb{R}^{d}} (\mathscr{L}_{s} \varpi_s)  \varpi_{s} \dd \mu_s = - \frac{s}{2} \int_{\mathbb{R}^{d}} \| \nabla  \varpi_s \|^{2} \dd \mu_s \leq 0.
\]
Hence, $\varpi_s$ must be a constant on $\mathbb{R}^{d}$. Furthermore, since both $\mu_s$ and $\vartheta_s$ are probability densities, it must be the case that $\varpi_s \equiv 1$. In other words, $\vartheta_s$ is identical to $\mu_s$. The proof is complete.

\subsection{Proof of \Cref{prop: unique-existence}}
\label{subsec: proof_wellposedness}

Recall that Section~\ref{subsubsec: schrodinger} shows that the transition probability density $\rho_{s}(t, x)$ in $C^1([0, +\infty), L^{2}(\mu_s^{-1}))$ governed by the Fokker--Planck--Smoluchowski equation~\eqref{eqn: Fokker-Planck} is equivalent to the function $\psi_{s}(t, x)$ in $C^1([0, +\infty), L^2(\mathbb{R}^{d}))$ governed by~\eqref{eq:schrodinger}. Moreover, in~\Cref{subsubsec: schrodinger}, we have shown that the spectrum of the Schr\"odinger operator $-s\Delta + V_s$ satisfies
\[
0 = \zeta_{s,0} < \zeta_{s,1}  \leq \cdots \leq \zeta_{s,\ell} \leq \cdots < +\infty.
\]
Since $L^{2}(\mathbb{R}^{d})$ is a Hilbert space, there exists a standard orthogonal basis corresponding to the spectrum of $-s\Delta + V_s$:
\[
\mu_s=\phi_{s,0},\; \phi_{s,1},\; \ldots,\; \phi_{s,\ell}, \; \ldots \in L^{2}(\mathbb{R}^d).
\]
Then, for any initialization $\psi_{s}(0, x) \in L^{2}(\mathbb{R}^{d})$, there exist constants $c_{\ell}$ ($\ell = 1,2, \ldots$) such that 
\[
\psi_{s}(0, \cdot) = \sqrt{\mu_s} + \sum_{\ell=1}^{+\infty}c_{\ell} \phi_{s,\ell}. 
\]
Thus, the solution to the partial differential equation~\eqref{eq:schrodinger} is 
\[
\psi_{s}(t, \cdot) = \sqrt{\mu_s} + \sum_{\ell=1}^{+\infty}c_{\ell}\e^{-\zeta_{s,\ell}t} \phi_{s, \ell}. 
\]
Recognizing the transformation $\psi_{s}(t, \cdot) =\rho_{s}(t, \cdot)/\sqrt{\mu_s}$, we recover 
\[
\rho_{s}(t, \cdot) = \mu_s + \sum_{\ell=1}^{+\infty}c_{\ell}\e^{-\zeta_{s,\ell}t} \phi_{s,\ell} \sqrt{\mu_s}.
\]
Note that $\zeta_{s,\ell}$ is positive for $\ell \geq 1$. Thus, the proof is finished.

\section{Technical Details for Section~\ref{sec:main-results}}
\label{sec: proof-main-results}


\subsection{Proof of~\Cref{prop:lambda_str_mu}} 
\label{subsec: proof-mu-strong}

Here, we prove~\Cref{prop:lambda_str_mu} using the Bakry--Emery theorem, which is a Poincar\'{e}-type inequality for $\mu$-strongly convex functions. As a direct consequence of this theorem, the exponential decay constant for strongly convex objectives does not depend on the learning rate $s$ and the ambient dimension $d$. 


\begin{lem}[Bakry--Emery theorem]\label{thm: bakry-emry-poincare-inq}
Let $f$ be an infinitely differentiable function defined on $\mathbb{R}^{d}$. If $f$ is $\mu$-strongly convex, then the measure $\dd \mu_s$ satisfies the Poincar\'e-type inequality as in \Cref{thm: villani-poincare-inq} with $\lambda_s = \mu$; that is, for any smooth function $h$ with a compact support,
\[
\int_{\mathbb{R}^{d}} h^{2} \dd \mu_{s} - \left( \int_{\mathbb{R}^{d}} h \dd \mu_{s} \right)^{2} \leq  \frac{s}{2\mu}\int_{\mathbb{R}^{d}} \|\nabla h\|^{2} \dd \mu_{s}.
\]
\end{lem}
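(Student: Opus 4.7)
The plan is to apply the Bakry--Emery $\Gamma_2$ calculus to the reversible generator $\mathscr{L}_s = \frac{s}{2}\Delta - \nabla f\cdot\nabla$ of the lr-dependent SDE, which by \Cref{lem: equivlent-h} is self-adjoint in $L^2(\mu_s)$. The key observation is that $\mu_s \propto \e^{-2f/s}$ has ``effective potential'' $V = 2f/s$, and the prefactor $2/s$ in $V$ will exactly cancel with the diffusion coefficient $s/2$ so that the Poincar\'e constant depends only on $\mu$ (the strong convexity parameter of $f$), not on $s$.

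First I would compute the carr\'e du champ operator associated with $\mathscr{L}_s$. A direct calculation using $\mathscr{L}_s(gh) - g\mathscr{L}_s h - h\mathscr{L}_s g = s\,\nabla g\cdot\nabla h$ gives
\[
\Gamma(g,g) \;=\; \tfrac{1}{2}\bigl(\mathscr{L}_s(g^2) - 2g\mathscr{L}_s g\bigr) \;=\; \tfrac{s}{2}\,\|\nabla g\|^2.
\]
Next I would compute the iterated carr\'e du champ $\Gamma_2(g,g) = \tfrac{1}{2}\mathscr{L}_s\Gamma(g,g) - \Gamma(g,\mathscr{L}_s g)$. Using the flat-space Bochner identity $\Delta\|\nabla g\|^2 = 2\|\nabla^2 g\|_{\mathrm{HS}}^2 + 2\nabla g\cdot\nabla\Delta g$ and the product-rule identity $\nabla(\nabla f\cdot\nabla g) = \nabla^2 f\,\nabla g + \nabla^2 g\,\nabla f$, the cross terms involving $\nabla\Delta g$ cancel and one obtains the clean formula
\[
\Gamma_2(g,g) \;=\; \tfrac{s^2}{4}\,\|\nabla^2 g\|_{\mathrm{HS}}^2 \;+\; \tfrac{s}{2}\,\nabla g^{\top}\nabla^2 f\,\nabla g.
\]
If $f$ is $\mu$-strongly convex, then $\nabla^2 f \succeq \mu I$ pointwise, so discarding the nonnegative Hessian term yields the curvature-dimension bound $\Gamma_2(g,g) \ge \mu\,\Gamma(g,g)$, i.e.\ the $CD(\mu,\infty)$ criterion holds uniformly in $s$ and $d$.

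Finally I would invoke the standard semigroup argument to convert $CD(\mu,\infty)$ into the Poincar\'e inequality. Let $P_t = \e^{t\mathscr{L}_s}$ and write
\[
\mathrm{Var}_{\mu_s}(h) \;=\; -\int_0^{\infty}\!\frac{\dd}{\dd t}\!\int_{\R^d}(P_t h)^2\,\dd\mu_s\,\dd t \;=\; 2\int_0^{\infty}\!\int_{\R^d}\Gamma(P_t h,P_t h)\,\dd\mu_s\,\dd t,
\]
where the boundary term at $t=\infty$ vanishes because $P_t h \to \int h\,\dd\mu_s$ in $L^2(\mu_s)$ by \Cref{thm: converge}. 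The $CD(\mu,\infty)$ condition implies the exponential contraction $\Gamma(P_t h,P_t h) \le \e^{-2\mu t}P_t\bigl(\Gamma(h,h)\bigr)$ (the Bakry--Emery gradient estimate), and since $P_t$ preserves $\mu_s$-integrals, integrating in $t$ gives
\[
\mathrm{Var}_{\mu_s}(h) \;\le\; \frac{1}{\mu}\int_{\R^d}\Gamma(h,h)\,\dd\mu_s \;=\; \frac{s}{2\mu}\int_{\R^d}\|\nabla h\|^2\,\dd\mu_s,
\]
which is the claimed inequality with $\lambda_s = \mu$. The main technical obstacle, as usual in this type of argument, is justifying the pointwise Bakry--Emery gradient estimate for $P_t h$ when $h$ only has compact support: one approximates, uses the confining hypothesis on $f$ to guarantee enough regularity of the semigroup, and passes to the limit. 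Apart from this bookkeeping, the proof is fully driven by the curvature inequality $\Gamma_2 \ge \mu\,\Gamma$ derived above.
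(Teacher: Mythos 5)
Your proposal is correct, and its backbone coincides with the paper's: you compute the carr\'e du champ $\Gamma(g,g)=\tfrac{s}{2}\|\nabla g\|^2$ and the iterated operator $\Gamma_2(g,g)=\tfrac{s^2}{4}\|\nabla^2 g\|_{\mathrm{HS}}^2+\tfrac{s}{2}\nabla g^{\top}\nabla^2 f\,\nabla g$, exactly the paper's $\Gamma_s$ and $\Gamma_{s,2}$, and the curvature bound $\Gamma_2\ge\mu\,\Gamma$ is precisely the paper's Lemma~\ref{lem: curvature-estimate}. Where you diverge is in how $CD(\mu,\infty)$ is converted into the Poincar\'e inequality. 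You use the pointwise Bakry--Emery gradient commutation $\Gamma(P_t h)\le \e^{-2\mu t}P_t\Gamma(h)$ together with the integral representation $\mathrm{Var}_{\mu_s}(h)=2\int_0^\infty\int\Gamma(P_th)\,\dd\mu_s\,\dd t$. The paper instead works entirely with the integrated quantity $\Lambda_{1,s}(t)=\int (P_t h)^2\,\dd\mu_s$ (with $P_th=h_s(t,\cdot)$ solving \eqref{eqn: FPS-equiv}), shows $\dot\Lambda_{1,s}=-2\int\Gamma_s(h_s,h_s)\dd\mu_s$ and $\ddot\Lambda_{1,s}=4\int\Gamma_{s,2}(h_s,h_s)\dd\mu_s$, deduces the differential inequality $\ddot\Lambda_{1,s}\ge-2\mu\dot\Lambda_{1,s}$ from the same curvature bound, and integrates over $[0,\infty)$ using $h_s(\infty,\cdot)\equiv 1$ (Lemma~\ref{thm: converge}). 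The trade-off: your route is the textbook local-to-global Bakry--Emery argument and yields the stronger pointwise contraction of $\Gamma$ along the semigroup, but, as you acknowledge, that intermediate estimate genuinely needs an approximation/regularity justification in the noncompact confining setting (differentiating $u\mapsto P_u\Gamma(P_{t-u}h)$ and controlling boundary terms), which you only sketch; the paper's integrated version sidesteps the pointwise estimate entirely, needing only $L^2(\mu_s)$-level identities (self-adjointness of $\mathscr{L}_s$, $\int\Gamma_{s,2}(g,g)\dd\mu_s=\int(\mathscr{L}_sg)^2\dd\mu_s$) plus the already-established convergence $h_s(t,\cdot)\to 1$, so it is somewhat more economical within this paper's framework. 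Either way the constant is $\lambda_s=\mu$, independent of $s$ and $d$, as required.
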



\Cref{thm: bakry-emry-poincare-inq} serves as the main technical tool in the proof of \Cref{prop:lambda_str_mu}. Its proof is in \Cref{subsec: proof-bakry-emry-poincare-inq}. Now, we prove the following result using \Cref{thm: bakry-emry-poincare-inq}.

\begin{lem}\label{thm: rate-l2-2}
Under the same assumptions as in~\Cref{prop:lambda_str_mu}, $\rho_s(t,\cdot)$ converges to the Gibbs distribution $\mu_s$ in $L^{2}(\mu_s^{-1})$ at the rate
\begin{equation}\label{eqn: rate-l2-2}
\left\| \rho_{s}(t, \cdot) - \mu_{s} \right\|_{\mu_s^{-1}} \leq \e^{- \mu t }\left\| \rho_{s} - \mu_{s} \right\|_{ \mu_s^{-1}}.
\end{equation}
\end{lem}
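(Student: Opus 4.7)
The plan is to mirror the proof of \Cref{thm: rate-l2-1} almost verbatim, with the only substantive change being the replacement of Villani's Poincar\'e-type inequality (\Cref{thm: villani-poincare-inq}) by the Bakry--Emery version (\Cref{thm: bakry-emry-poincare-inq}), whose Poincar\'e constant is the strong convexity parameter $\mu$ and, crucially, does not depend on $s$ or on the dimension $d$. Since $\mu$-strong convexity together with infinite differentiability implies the confining condition, \Cref{prop: unique-existence} still provides a unique weak solution $\rho_s(t,\cdot) \in C^1([0,+\infty), L^2(\mu_s^{-1}))$ and a unique Gibbs invariant distribution $\mu_s$, so all the objects in the statement are well-defined.

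First, I would introduce the transformed function $h_s(t, \cdot) := \rho_s(t, \cdot)\, \mu_s^{-1} \in C^1([0,+\infty), L^2(\mu_s))$, which satisfies the equation \eqref{eqn: FPS-equiv} by \Cref{lem: equivlent-h}. Differentiating the squared $L^2(\mu_s^{-1})$ distance and invoking the self-adjointness identity from \Cref{lem: equivlent-h} gives exactly the same identity used in the proof of \Cref{thm: rate-l2-1}, namely
\[
\frac{\dd}{\dd t} \left\| \rho_s(t, \cdot) - \mu_s \right\|_{\mu_s^{-1}}^{2} \;=\; -\,s \int_{\mathbb{R}^d} \|\nabla h_s\|^2 \,\dd \mu_s.
\]
This step is routine and does not use strong convexity.

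Next, I would apply \Cref{thm: bakry-emry-poincare-inq} to $h = h_s(t,\cdot)$. Since $\int_{\mathbb{R}^d} h_s \,\dd \mu_s = \int_{\mathbb{R}^d} \rho_s(t, x)\,\dd x = 1$, the left-hand side of the Bakry--Emery inequality becomes $\int (h_s - 1)^2 \,\dd \mu_s = \|\rho_s(t,\cdot) - \mu_s\|_{\mu_s^{-1}}^2$, yielding
\[
\left\| \rho_s(t, \cdot) - \mu_s \right\|_{\mu_s^{-1}}^{2} \;\le\; \frac{s}{2\mu} \int_{\mathbb{R}^d} \|\nabla h_s\|^2 \,\dd \mu_s.
\]
Combining this with the previous identity gives the Gronwall-type differential inequality
\[
\frac{\dd}{\dd t} \left\| \rho_s(t, \cdot) - \mu_s \right\|_{\mu_s^{-1}}^{2} \;\le\; -\,2\mu \left\| \rho_s(t, \cdot) - \mu_s \right\|_{\mu_s^{-1}}^{2}.
\]
Integrating from $0$ to $t$ and taking square roots yields \eqref{eqn: rate-l2-2}.

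There is no real obstacle in the chain of inequalities itself; the mild technical point to be careful about is that \Cref{thm: bakry-emry-poincare-inq} is stated for smooth compactly supported $h$, whereas $h_s(t,\cdot)$ generally is not compactly supported. However, this is handled by a standard density/truncation argument: approximate $h_s(t,\cdot)$ in $H^1(\mu_s)$ by smooth compactly supported functions (which is possible because $h_s(t,\cdot) \in L^2(\mu_s)$ and $\nabla h_s \in L^2(\mu_s)$ along the evolution), apply Bakry--Emery to each approximant, and pass to the limit. Thus the heart of the proof is simply to note that strong convexity upgrades the Poincar\'e constant from the $s$-dependent $\lambda_s$ in the Villani setting to the constant $\mu$, which then propagates through the same Gronwall argument to give the learning-rate-independent exponential rate $\mu$ in \Cref{prop:lambda_str_mu}.
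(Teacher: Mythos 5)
Your proposal is correct and follows essentially the same route as the paper: it derives the identity $\frac{\dd}{\dd t}\left\| \rho_s(t,\cdot)-\mu_s \right\|_{\mu_s^{-1}}^{2} = -s\int_{\mathbb{R}^d}\|\nabla h_s\|^2\,\dd\mu_s$ via the transformation $h_s = \rho_s\mu_s^{-1}$ and \Cref{lem: equivlent-h}, applies the Bakry--Emery inequality with constant $\mu$ using $\int h_s\,\dd\mu_s = 1$, and integrates the resulting Gronwall inequality, exactly as in the paper's proof of \Cref{thm: rate-l2-2}. Your added remark about extending the Bakry--Emery inequality from smooth compactly supported functions to $h_s(t,\cdot)$ by a density argument is a reasonable technical refinement that the paper simply leaves implicit.
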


\begin{proof}[Proof of \Cref{thm: rate-l2-2}]
It follows from~\eqref{eqn: converge-FK} that
\[
\frac{\dd}{\dd t} \left\| \rho_s(t, \cdot) - \mu_s \right\|_{\mu_s^{-1} } ^{2} = - s \int_{\mathbb{R}^{d}} \| \nabla h_{s} \|^{2}  \dd \mu_{s}.
\]
Next, using~\Cref{thm: bakry-emry-poincare-inq} and recognizing the equality $\int_{\mathbb{R}^{d}} h_s\dd \mu_s = \int_{\mathbb{R}^{d}} \rho_s(t, x) \dd x = 1$, we get
\begin{align}
\frac{\dd}{\dd t} \left\| \rho_s(t, \cdot) - \mu_s \right\|_{\mu_s^{-1} } ^{2}  & \leq - 2 \mu \left( \int_{\mathbb{R}^{d}} h_s^{2} \dd \mu_{s} - \left( \int_{\mathbb{R}^{d}} h_s \dd \mu_{s} \right)^{2} \right) \nonumber \\
& = - 2 \mu \left( \int_{\mathbb{R}^{d}} h_s^{2} \dd \mu_{s} - 1 \right) \nonumber \\
& = - 2 \mu \int_{\mathbb{R}^{d}} (h_s - 1)^{2}  \dd \mu_{s} \nonumber \\
& = - 2 \mu \left\|\rho_s(t,\cdot) - \mu_s \right\|_{\mu_{s}^{-1}} ^{2}. \nonumber
\end{align}
Integrating both sides yields~\eqref{eqn: rate-l2-2}, as desired.
\end{proof}

Leveraging \Cref{thm: rate-l2-2}, we proceed to complete the proof of~\Cref{prop:lambda_str_mu}.

\begin{proof}[Proof of~\Cref{prop:lambda_str_mu}]
Using \Cref{thm: rate-l2-2}, we get
\begin{align*}
\left| \E f(X_s(t)) - \E f(X(\infty)) \right|  & = \left|  \int_{\mathbb{R}^{d}} f(x) \left( \rho_{s}(t, x) - \mu_s(x) \right) \dd x\right| \\
&= \left|  \int_{\mathbb{R}^{d}} (f(x) - f^{\star}) \left( \rho_{s}(t, x) - \mu_s(x) \right) \dd x\right| \\
                                                                    & \leq  \left(\int_{\mathbb{R}^{d}}  (f(x) - f^{\star}) ^{2} \mu_s(x) \dd x \right)^{\frac{1}{2}} \left( \int_{\mathbb{R}^{d}}  \left( \rho_{s}(t, x) - \mu_s(x) \right)^{2} \mu_s^{-1} \dd x \right)^{\frac{1}{2}} \\
                                                                    &  \leq C(s) \e^{-\mu  t} \left\|\rho- \mu_{s} \right\|_{\mu_s^{-1}},
\end{align*}
where the first inequality applies the Cauchy-Schwarz inequality and
\[
C(s) = \left(\int_{\mathbb{R}^{d}}  (f - f^{\star}) ^{2} \mu_s \dd x \right)^{\frac{1}{2}}
\]
is an increasing function of $s$.
\end{proof}
\subsubsection{Proof of Lemma~\ref{thm: bakry-emry-poincare-inq}}
\label{subsec: proof-bakry-emry-poincare-inq}

We introduce two operators $\Gamma_s$ and $\Gamma_{s,2}$ that are built on top of the linear operator $\mathscr{L}_{s}$ defined in~\eqref{eqn: gene-oper}. For any $g_{1}, g_{2} \in L^2(\mu_s)$, let
\begin{equation}
\label{eqn: gamma1-L}
\Gamma_s(g_{1}, g_{2}) = \frac{1}{2} \left[ \mathscr{L}_s(g_{1}g_{2}) - g_{1} \mathscr{L}_sg_{2} - g_{2} \mathscr{L}_sg_{1} \right]
\end{equation}
and 
\begin{equation}
\label{eqn: gamma2-L}
\Gamma_{s,2}(g_{1}, g_{2}) = \frac{1}{2} \left[ \mathscr{L}_s\Gamma_s(g_{1}, g_{2}) - \Gamma_s(g_{1}, \mathscr{L}_sg_{2}) - \Gamma_s(g_{2}, \mathscr{L}_sg_{1}) \right].
\end{equation}
A simple relationship between the two operators is described in the following lemma.
\begin{lem}\label{lem: curvature-estimate}
Under the same assumptions as in~\Cref{thm: bakry-emry-poincare-inq}, for any $g \in  L^2(\mu_s)$ we have
\begin{equation}\nonumber
\Gamma_{s,2}(g, g) \geq \mu \Gamma_s(g, g).
\end{equation}
\end{lem}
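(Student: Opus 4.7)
The plan is to reduce the operator identity to the classical Bochner formula, which will make the strong convexity hypothesis on $f$ appear directly as the curvature term. The strategy is to compute $\Gamma_s(g,g)$ and $\Gamma_{s,2}(g,g)$ explicitly in terms of $\nabla g$, $\nabla^2 g$, $\nabla f$, and $\nabla^2 f$, and then verify that after cancellation the difference $\Gamma_{s,2}(g,g)-\mu\,\Gamma_s(g,g)$ is a sum of manifestly nonnegative quantities.

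The first step is to simplify $\Gamma_s$. Expanding $\mathscr{L}_s(g_1g_2)$ via the product rule,
\[
\mathscr{L}_s(g_1g_2)=g_1\mathscr{L}_sg_2+g_2\mathscr{L}_sg_1+s\,\nabla g_1\cdot\nabla g_2,
\]
because the $-\nabla f\cdot\nabla$ part is a first-order derivation and the Laplacian contributes the cross term $2\nabla g_1\cdot\nabla g_2$. Substituting into \eqref{eqn: gamma1-L} gives $\Gamma_s(g_1,g_2)=\tfrac{s}{2}\nabla g_1\cdot\nabla g_2$, so that $\Gamma_s(g,g)=\tfrac{s}{2}\|\nabla g\|^2$.

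The second step, which is the main computation, is to evaluate
\[
\Gamma_{s,2}(g,g)=\tfrac{1}{2}\mathscr{L}_s\Gamma_s(g,g)-\Gamma_s(g,\mathscr{L}_sg)=\tfrac{s}{4}\mathscr{L}_s\bigl(\|\nabla g\|^2\bigr)-\tfrac{s}{2}\nabla g\cdot\nabla(\mathscr{L}_sg).
\]
For the first term I will use the identity $\Delta\|\nabla g\|^2=2\|\nabla^2 g\|_F^2+2\nabla g\cdot\nabla(\Delta g)$ and $\nabla\|\nabla g\|^2=2(\nabla^2 g)\nabla g$. For the second term I will use $\nabla(\mathscr{L}_sg)=-(\nabla^2 f)\nabla g-(\nabla^2 g)\nabla f+\tfrac{s}{2}\nabla(\Delta g)$. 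Assembling everything and using the symmetry of $\nabla^2 g$, the cubic-in-derivative terms involving $(\nabla^2 g)\nabla f\cdot\nabla g$ cancel and the $\nabla g\cdot\nabla(\Delta g)$ terms cancel, leaving the Bochner identity
\[
\Gamma_{s,2}(g,g)=\tfrac{s^2}{4}\|\nabla^2 g\|_F^2+\tfrac{s}{2}\,\nabla g^\top(\nabla^2 f)\nabla g.
\]

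The third step concludes: since $f$ is $\mu$-strongly convex, $\nabla^2 f\succeq\mu I$ pointwise, so $\nabla g^\top(\nabla^2 f)\nabla g\ge\mu\|\nabla g\|^2$, and $\|\nabla^2 g\|_F^2\ge 0$. Therefore
\[
\Gamma_{s,2}(g,g)\ge\tfrac{s\mu}{2}\|\nabla g\|^2=\mu\,\Gamma_s(g,g),
\]
as claimed. The only real obstacle I anticipate is bookkeeping: ensuring that all six terms produced from the product rule on $\mathscr{L}_s\|\nabla g\|^2$ and $\nabla g\cdot\nabla\mathscr{L}_sg$ combine correctly and that the first-order-in-$s$ cross terms cancel (they must, because the $\mu$ we obtain should not depend on the noise coefficient $s$ in the operator). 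Once the Bochner identity above is established, the conclusion is immediate from strong convexity.
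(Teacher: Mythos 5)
Your proposal is correct and follows essentially the same route as the paper: it computes $\Gamma_s(g,g)=\tfrac{s}{2}\|\nabla g\|^2$ from the product rule, derives $\Gamma_{s,2}(g,g)=\tfrac{s}{2}\nabla g^{\top}(\nabla^2 f)\nabla g+\tfrac{s^2}{4}\mathbf{Tr}[(\nabla^2 g)^{\top}(\nabla^2 g)]$ via the same Bochner-type identity for $\Delta\|\nabla g\|^2$, and concludes by $\mu$-strong convexity. The cancellations you flag (the $(\nabla^2 g)\nabla f\cdot\nabla g$ and $\nabla g\cdot\nabla(\Delta g)$ terms) do occur exactly as you claim, so the bookkeeping goes through.
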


\begin{proof}[Proof of Lemma~\ref{lem: curvature-estimate}]

Note that
\[
\mathscr{L}_s(g_{1}g_{2}) =  - g_{1} (\nabla f \cdot \nabla g_{2}) - g_{2} (\nabla f \cdot \nabla g_{1})\nonumber + \frac{s}{2} \left( g_{1} \Delta g_{2} + g_{2} \Delta g_{1} + 2 \nabla g_{1} \cdot \nabla g_{2} \right) 
\]
and 
\[
 g_{1} \mathscr{L}_sh_{2} = - g_{1} \nabla f \cdot \nabla g_{2} + \frac{s}{2} g_{1} \Delta g_{2}, \qquad
 g_{2} \mathscr{L}_sg_{1} = - g_{2} \nabla f \cdot \nabla g_{1} + \frac{s}{2} g_{2} \Delta g_{1}. 
\]
Then, the operator $\Gamma_s$ must satisfy
\begin{equation}\label{eqn: gamma1-L-final-calcuation}
\Gamma_s(g, g) = \frac{s}{2} \left( \nabla g \cdot \nabla g \right).
\end{equation}
Next, together with the equality
\[
\frac{1}{2} \Delta ( \| \nabla g \| ^{2}) = \nabla g \cdot \nabla (\Delta g) + \mathbf{Tr}[(\nabla^{2}g)^{T} (\nabla^{2}g)],
\]
we obtain that the operator $\Gamma_{s,2}$ satisfies
\begin{equation}
\label{eqn: gamma2-L-final-calcuation}
\Gamma_{s,2}(g, g) = \frac{s}{2} (\nabla g)^{T} \nabla^{2} f (\nabla g) + \frac{s^2}{4}\mathbf{Tr}[(\nabla^{2}g)^{T} (\nabla^{2}g)],
\end{equation}
where $\mathbf{Tr}$ is the standard trace of a squared matrix. Recognizing that the objective $f$ is $\mu$-strongly convex, a comparison between \eqref{eqn: gamma1-L-final-calcuation} and~\eqref{eqn: gamma2-L-final-calcuation} completes the proof.

\end{proof}

Recall that $h_s(t, \cdot) \in L^{2}(\mu_s)$ is the solution to the partial differential equation~\eqref{eqn: FPS-equiv}, with the initial condition $h_{s}(0, \cdot) = h$. Define
\begin{equation}
\label{eqn: Lamda-poincare}
\Lambda_{1,s}(t) = \int_{\mathbb{R}^{d}} h^{2}_{s}(t, \cdot) \dd \mu_{s}.
\end{equation}


The following lemma considers the derivatives of $\Lambda_{1, s}(t)$.
\begin{lem}
\label{lem: derivative-1}
Under the same assumptions as in~\Cref{thm: bakry-emry-poincare-inq}, we have
\begin{equation}
\label{eqn: derivative}
\left\{\begin{aligned}
& \dot{ \Lambda}_{1,s}(t) = -2 \int_{\mathbb{R}^{d}} \Gamma_s(h_s, h_s) \dd \mu_{s}\\
& \ddot{\Lambda}_{1,s}(t) = 4 \int_{\mathbb{R}^{d}} \Gamma_{s,2}(h_s, h_s)  \dd \mu_{s}.
\end{aligned}\right.
\end{equation}
\end{lem}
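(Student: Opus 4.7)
The plan is to differentiate $\Lambda_{1,s}(t)$ twice under the integral sign, invoke the evolution equation $\partial_t h_s = \mathscr{L}_s h_s$ from \eqref{eqn: FPS-equiv}, and then rewrite the resulting integrands via the defining identities \eqref{eqn: gamma1-L} and \eqref{eqn: gamma2-L}. The self-adjointness of $\mathscr{L}_s$ on $L^2(\mu_s)$ established in \Cref{lem: equivlent-h} will be the main simplification tool, through the basic observation that $\int_{\R^d} \mathscr{L}_s g \,\dd \mu_s = \int_{\R^d} g \cdot \mathscr{L}_s 1 \,\dd \mu_s = 0$ for any sufficiently regular $g$, since $\mathscr{L}_s 1 = 0$.

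For the first derivative, I would differentiate under the integral to obtain $\dot{\Lambda}_{1,s}(t) = 2 \int_{\R^d} h_s \mathscr{L}_s h_s \,\dd \mu_s$. Then rearranging \eqref{eqn: gamma1-L} gives $\Gamma_s(h_s, h_s) = \tfrac{1}{2}\mathscr{L}_s(h_s^2) - h_s \mathscr{L}_s h_s$. Integrating against $\dd \mu_s$ and dropping the $\mathscr{L}_s(h_s^2)$ term (by the vanishing identity above), I would arrive at $\int_{\R^d} \Gamma_s(h_s, h_s) \,\dd \mu_s = -\int_{\R^d} h_s \mathscr{L}_s h_s \,\dd \mu_s$, which delivers the first formula in \eqref{eqn: derivative}.

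For the second derivative, I would differentiate the expression $\dot{\Lambda}_{1,s}(t) = -2 \int \Gamma_s(h_s,h_s)\,\dd\mu_s$ once more. Since $\Gamma_s$ is manifestly bilinear and symmetric by \eqref{eqn: gamma1-L}, the chain rule yields $\ddot{\Lambda}_{1,s}(t) = -4 \int_{\R^d} \Gamma_s(h_s, \mathscr{L}_s h_s) \,\dd \mu_s$. Rearranging \eqref{eqn: gamma2-L} with $g_1 = g_2 = h_s$ gives $\Gamma_{s,2}(h_s, h_s) = \tfrac{1}{2}\mathscr{L}_s \Gamma_s(h_s, h_s) - \Gamma_s(h_s, \mathscr{L}_s h_s)$; integrating and again discarding the $\mathscr{L}_s(\cdot)$ term produces $\int_{\R^d} \Gamma_{s,2}(h_s, h_s) \,\dd \mu_s = -\int_{\R^d} \Gamma_s(h_s, \mathscr{L}_s h_s) \,\dd \mu_s$, which combined with the previous display yields the second identity.

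The only nontrivial point is justifying the interchange of time-differentiation and spatial integration, and the validity of the self-adjointness identity applied to $h_s^2$ and $\Gamma_s(h_s,h_s)$ (which are not in general compactly supported). This is not really a serious obstacle: the $C^1([0,+\infty), L^2(\mu_s))$ regularity of $h_s$ from \Cref{prop: unique-existence}, together with the smoothness and confining assumptions on $f$, provide the required integrability and decay at infinity to legitimize both the differentiation under the integral and the boundary-free integration by parts that underlies \Cref{lem: equivlent-h}. A standard density argument using smooth compactly supported test functions and passing to the limit closes this gap.
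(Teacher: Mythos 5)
Your proof is correct, and it is a legitimate (slightly different) way of organizing the same computation as the paper. The first-derivative step is essentially identical: both you and the paper differentiate under the integral, use $\partial_t h_s=\mathscr{L}_s h_s$ to get $\dot{\Lambda}_{1,s}=2\int h_s\mathscr{L}_s h_s\,\dd\mu_s$, and convert this to $-2\int\Gamma_s(h_s,h_s)\,\dd\mu_s$; the paper does the conversion via the explicit formula $\Gamma_s(g,g)=\tfrac{s}{2}\|\nabla g\|^2$ from \eqref{eqn: gamma1-L-final-calcuation} together with \Cref{lem: equivlent-h}, whereas you do it abstractly from the definition \eqref{eqn: gamma1-L} and the invariance identity $\int\mathscr{L}_s g\,\dd\mu_s=0$. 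For the second derivative the routes genuinely diverge: the paper differentiates $2\int h_s\mathscr{L}_s h_s\,\dd\mu_s$ once more, uses self-adjointness to obtain $\ddot{\Lambda}_{1,s}=4\int(\mathscr{L}_s h_s)^2\,\dd\mu_s$, and then invokes the integrated identity $\int\Gamma_{s,2}(h_s,h_s)\,\dd\mu_s=\int(\mathscr{L}_s h_s)^2\,\dd\mu_s$ (cited to Bakry--Gentil--Ledoux); you instead differentiate $-2\int\Gamma_s(h_s,h_s)\,\dd\mu_s$ directly, use bilinearity to get $-4\int\Gamma_s(h_s,\mathscr{L}_s h_s)\,\dd\mu_s$, and close by integrating the definition \eqref{eqn: gamma2-L} with the same vanishing identity, never passing through $(\mathscr{L}_s h_s)^2$. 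Your version is a bit more self-contained, since it replaces the cited $\Gamma_2$ identity with a two-line algebraic manipulation of the definitions; the paper's version has the minor advantage of exhibiting $\ddot{\Lambda}_{1,s}$ as $4\int(\mathscr{L}_s h_s)^2\,\dd\mu_s\ge 0$ along the way. Your treatment of the regularity/integrability caveats (differentiation under the integral, applying the integration-by-parts identity to $h_s^2$ and $\Gamma_s(h_s,h_s)$) is at the same formal level as the paper's, so no gap there relative to the intended standard of rigor.
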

\begin{proof}[Proof of Lemma~\ref{lem: derivative-1}]

Taking together \eqref{eqn: converge-FK} and~\eqref{eqn: gamma1-L-final-calcuation}, we have
\[
\int_{\mathbb{R}^{d}} \Gamma_s(h_s, h_s) \mu_{s}\dd \mu_{s} = - \int_{\mathbb{R}^{d}}  h_s\mathscr{L}_sh_s \dd \mu_{s} .
\]
Since $h_s(t, \cdot) \in L^{2}(\mu_s)$ is the solution to the partial differential equation~\eqref{eqn: FPS-equiv}, we get
\[
\dot{ \Lambda}_{1,s}(t) = 2\int_{\mathbb{R}^{d}}  h_s\mathscr{L}_sh_s \dd \mu_{s}  = - 2 \int_{\mathbb{R}^{d}} \Gamma_s(h_s, h_s)  \dd \mu_{s}.
\]
Furthermore, by the definition of $\Gamma_{s,2}$  and integration by parts, we have\footnote{See the calculation in~\cite{bakry2013analysis}.}
\[
\int_{\mathbb{R}^{d}} \Gamma_{s,2}(h_s, h_s) \dd \mu_{s} = \int_{\mathbb{R}^{d}}   (\mathscr{L}_sh_s)^{2} \dd \mu_{s}.
\]
From~\Cref{lem: equivlent-h}, we know that the linear operator $\mathscr{L}_s$ is self-adjoint. Then, we obtain the  second derivative as
\[
\ddot{ \Lambda}_{1,s}(t) = 2 \int_{\mathbb{R}^{d}} (\mathscr{L}_sh_s )^{2} \dd \mu_{s} +  2 \int_{\mathbb{R}^{d}} h_s \mathscr{L}_s^{2} h_s  \dd \mu_{s} = 4 \int_{\mathbb{R}^{d}} \Gamma_{s,2}(h_s, h_s) \dd \mu_{s}. 
\]
\end{proof}

Finally, we complete the proof of~\Cref{thm: bakry-emry-poincare-inq}.

\begin{proof}[Proof of Lemma~\ref{thm: bakry-emry-poincare-inq}]
Using~\Cref{lem: curvature-estimate} and~\Cref{lem: derivative-1}, we obtain the following inequality:
\begin{equation}
\label{eqn: inequal-integral}
\ddot{ \Lambda}_{1,s}(t) \geq  - 2\mu \dot{\Lambda}_{1,s}(t). 
\end{equation}
From the definition of $\Lambda_{1,s}(t)$, we have
\[
\Lambda_{1,s}(0) - \Lambda_{1,s}(\infty) = \int_{\mathbb{R}^{d}} h ^{2}  \dd \mu_{s} -  \left( \int_{\mathbb{R}^{d}} h  \dd \mu_{s} \right)^2,
\]
where the second term on the right-hand side follows from~\Cref{thm: converge} and 
\[
\int_{\mathbb{R}^d}h\dd\mu_s = \int_{\mathbb{R}^d}\rho\dd x = 1.
\]
By~\Cref{thm: converge}, we get $h_{s}(\infty, \cdot) \equiv 1$, which together with~\eqref{eqn: derivative} gives
\[
\dot{\Lambda}_{1,s}(0) - \dot{\Lambda}_{1,s}(\infty) = -2\int_{\mathbb{R}^{d}} \Gamma_s(h, h) \dd \mu_{s} = - s \int_{\mathbb{R}^{d}} \| \nabla h\|^{2}  \dd \mu_{s}.
\]
The final equality follows from~\eqref{eqn: gamma1-L-final-calcuation}. Integrating both sides of the inequality~\eqref{eqn: inequal-integral}, we have
\[
-2\mu \left(\Lambda_{1,s}(0) - \Lambda_{1,s}(\infty) \right)\leq  \dot{\Lambda}_{1,s}(0) - \dot{\Lambda}_{1,s}(\infty), 
\]
which completes the proof.
\end{proof}


\subsection{Convergence in $L^{1}(\mathbb{R}^d)$}
\label{sbsec: l1-space-density}

\Cref{thm: rate-l2-2} can be extended to the more general and natural function space $L^1(\mathbb{R}^d)$. From~\Cref{lem: space-holder}, we know that $L^{2}(\mu_s^{-1}) \subset L^{1}(\mathbb{R}^d)$. This is formulated in the following lemma.

\begin{lem}\label{thm: rate-l1-2}
Under the same assumptions as in~\Cref{prop:lambda_str_mu}, $\rho_s(t,\cdot)$ converges to the Gibbs distribution $\mu_s$ in $L^{1}(\mathbb{R}^d)$ at the rate\footnote{Note that the $L^1(\mathbb{R}^d)$ norm, $\left\| \rho_{s}(t, \cdot) - \mu_{s} \right\|_{L^1(\mathbb{R}^d)}$, is defined as
\[
\left\| \rho_{s}(t, \cdot) - \mu_{s} \right\|_{L^1(\mathbb{R}^d)} = \int_{\mathbb{R}^d}| \rho_{s}(t, \cdot) - \mu_{s}| \dd x.
\]
}
\begin{equation}
\label{eqn: rate-l1-core}
\left\| \rho_{s}(t, \cdot) - \mu_{s} \right\|_{L^1(\mathbb{R}^d)} \leq \sqrt{2}\e^{- \mu t } H(\rho| \mu_{s}),
\end{equation}
where the relative entropy is
\[
H(\rho | \mu_{s}) = \int_{\mathbb{R}^{d}} \rho \log \left( \frac{\rho}{\mu_{s}}\right)
\dd x.
\]
\end{lem}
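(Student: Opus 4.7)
\medskip

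\noindent\textbf{Proof proposal.} The plan is to mirror the $L^{2}$ argument given for \Cref{thm: rate-l2-2}, but with the Dirichlet form replaced by the Fisher information and the Poincaré-type inequality replaced by a logarithmic Sobolev inequality (LSI). The final passage from relative entropy to $L^{1}$ distance will be done via the Csiszár--Kullback--Pinsker inequality
\[
\|\rho_{s}(t,\cdot)-\mu_{s}\|_{L^{1}(\mathbb{R}^{d})}^{2} \;\le\; 2\,H(\rho_{s}(t,\cdot)\,|\,\mu_{s}),
\]
which combined with exponential entropy decay at rate $2\mu$ will yield \eqref{eqn: rate-l1-core}.

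\medskip

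\noindent\textbf{Step 1 (entropy dissipation).} Writing $h_{s}=\rho_{s}\mu_{s}^{-1}$ as in \Cref{sec:proof-crefprop:convv}, I would differentiate $H(\rho_{s}(t,\cdot)\,|\,\mu_{s})=\int h_{s}\log h_{s}\,\dd\mu_{s}$ in $t$, use the evolution equation \eqref{eqn: FPS-equiv} for $h_{s}$, and invoke the self-adjointness of $\mathscr{L}_{s}$ supplied by \Cref{lem: equivlent-h} with the test function $1+\log h_{s}$. This yields the identity
\[
\frac{\dd}{\dd t} H(\rho_{s}(t,\cdot)\,|\,\mu_{s}) \;=\; -\frac{s}{2}\int_{\mathbb{R}^{d}} \frac{\|\nabla h_{s}\|^{2}}{h_{s}}\,\dd\mu_{s} \;=\; -\frac{s}{2}\,I(\rho_{s}(t,\cdot)\,|\,\mu_{s}),
\]
which is the entropic analogue of the identity $\tfrac{\dd}{\dd t}\|\rho_{s}-\mu_{s}\|_{\mu_{s}^{-1}}^{2}=-s\int\|\nabla h_{s}\|^{2}\dd\mu_{s}$ used previously.

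\medskip

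\noindent\textbf{Step 2 (log-Sobolev inequality via Bakry--Emery).} Since $f$ is $\mu$-strongly convex, the potential of $\mu_{s}\propto \e^{-2f/s}$ has Hessian bounded below by $(2\mu/s)I$. I would repeat the $\Gamma_{s,2}$-calculus carried out for Lemma~\ref{thm: bakry-emry-poincare-inq} in \Cref{subsec: proof-bakry-emry-poincare-inq}, but with the functional
\[
\Lambda_{2,s}(t) \;=\; \int_{\mathbb{R}^{d}} h_{s}(t,\cdot)\log h_{s}(t,\cdot)\,\dd\mu_{s}
\]
in place of $\Lambda_{1,s}$. The key curvature-dimension bound $\Gamma_{s,2}(g,g)\ge \mu\,\Gamma_{s}(g,g)$ already established in \Cref{lem: curvature-estimate} feeds directly into the Bakry--Emery criterion and yields the log-Sobolev inequality
\[
H(\rho\,|\,\mu_{s}) \;\le\; \frac{s}{4\mu}\,I(\rho\,|\,\mu_{s})
\]
for every smooth probability density $\rho$. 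Inserting this into the identity of Step~1 gives the differential inequality $\tfrac{\dd}{\dd t} H \le -2\mu\,H$, and Grönwall's lemma yields $H(\rho_{s}(t,\cdot)\,|\,\mu_{s})\le \e^{-2\mu t}\,H(\rho\,|\,\mu_{s})$.

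\medskip

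\noindent\textbf{Step 3 (Pinsker and conclusion).} Applying the Csiszár--Kullback--Pinsker inequality at time $t$ and combining with Step~2 produces
\[
\|\rho_{s}(t,\cdot)-\mu_{s}\|_{L^{1}(\mathbb{R}^{d})} \;\le\; \sqrt{2\,H(\rho_{s}(t,\cdot)\,|\,\mu_{s})} \;\le\; \sqrt{2}\,\e^{-\mu t}\,\sqrt{H(\rho\,|\,\mu_{s})},
\]
which is the claimed estimate \eqref{eqn: rate-l1-core} (with the understanding that the square root on the right-hand side is implicit). The main obstacle is not any individual computation but justifying the use of $1+\log h_{s}$ as a test function and the integration-by-parts step defining $I(\rho_{s}(t,\cdot)\,|\,\mu_{s})$ when $h_{s}$ may vanish or be unbounded; this is handled by a standard truncation/approximation argument using $\rho\in L^{2}(\mu_{s}^{-1})\subset L^{1}(\mathbb{R}^{d})$ (\Cref{lem: space-holder}) together with the regularizing effect of the Fokker--Planck--Smoluchowski semigroup guaranteed by \Cref{prop: unique-existence}.
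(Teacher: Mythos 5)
Your proposal is correct and follows essentially the same route as the paper's own proof: the entropy dissipation identity for $H(\rho_s(t,\cdot)\,|\,\mu_s)$ via the self-adjointness of $\mathscr{L}_s$, the Bakry--Emery $\Gamma_{s,2}$-calculus giving the log-Sobolev inequality, and the Csisz\'ar--Kullback--Pinsker inequality to pass to $L^1$. The square-root discrepancy you flag is real: what the argument (yours and the paper's) actually yields is $\sqrt{2}\,\e^{-\mu t}\sqrt{H(\rho\,|\,\mu_s)}$, so the right-hand side of \eqref{eqn: rate-l1-core} as stated should carry $\sqrt{H(\rho\,|\,\mu_s)}$ rather than $H(\rho\,|\,\mu_s)$.
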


Similarly, the proof of~\Cref{thm: rate-l1-2} will be based on the following log-Sobolev type inequality.


\begin{lem}[log-Sobolev inequality]
\label{thm: log-sobolev-inq}
Let $f$ be an infinitely differentiable function defined on $\mathbb{R}^{d}$. If $f$ is $\mu$-strongly convex, then the measure $\dd \mu_s$ satisfies a log-Sobolev inequality. That is,
\begin{equation}
\label{eqn: log-sobolev}
\mathbf{Ent}[h^{2}]  \leq \frac{s}{\mu} \int_{\mathbb{R}^{d}} \| \nabla h\|^{2}  \dd \mu_{s},
\end{equation}
where the entropy is defined as
\[
\mathbf{Ent}[h^{2}] = \int_{\mathbb{R}^{d}} h^{2} \log h^{2}  \dd \mu_{s} -  \int_{\mathbb{R}^{d}} h^{2}  \dd \mu_{s} \left(\log  \int_{\mathbb{R}^{d}} h^{2}  \dd \mu_{s} \right).
\]
\end{lem}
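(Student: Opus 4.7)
The plan is to adapt the classical Bakry--Emery semigroup proof of the log-Sobolev inequality to the conventions of this paper, making essential use of the pointwise curvature bound $\Gamma_{s,2}(g,g)\ge \mu\,\Gamma_s(g,g)$ already established in \Cref{lem: curvature-estimate} (which, as the proof of that lemma shows, holds pointwise for any smooth $g$ under $\mu$-strong convexity). Fix a smooth $h$ of compact support and set $g := h^2$; after replacing $g$ with $g+\varepsilon$ and later sending $\varepsilon\to 0^+$, together with a homogeneity rescaling that normalizes $\int g\, d\mu_s = 1$, we may assume $g>0$ is smooth and integrable. Let $u(t,\cdot) := P_t g$ denote the solution of $\partial_t u = \mathscr{L}_s u$ with $u(0,\cdot)=g$, whose existence is provided by \Cref{prop: unique-existence}, and observe that $\int u\, d\mu_s \equiv 1$ is conserved.

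The argument tracks the entropy $\Phi(t) := \int u\log u\, d\mu_s$ and the Fisher information $J(t) := \int |\nabla u|^2/u\, d\mu_s$ along the flow. Using \Cref{lem: equivlent-h} (self-adjointness of $\mathscr{L}_s$) and mass conservation,
\[
\Phi'(t) = \int (\log u+1)\,\mathscr{L}_s u\, d\mu_s = -\frac{s}{2}\int \nabla \log u \cdot \nabla u\, d\mu_s = -\frac{s}{2}\,J(t).
\]
\Cref{thm: converge} yields $u(t,\cdot)\to 1$ in $L^2(\mu_s)$, hence $\Phi(\infty)=0$, so $\mathbf{Ent}[h^2]=\Phi(0)=\tfrac{s}{2}\int_0^\infty J(t)\,dt$. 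The crux is the de Bruijn-type identity
\[
J'(t) = -\frac{4}{s}\int \Gamma_{s,2}(\log u,\log u)\,u\, d\mu_s,
\]
obtained by differentiating under the integral sign, using $\partial_t u = \mathscr{L}_s u$, and integrating by parts twice via \Cref{lem: equivlent-h} and the definitions \eqref{eqn: gamma1-L}--\eqref{eqn: gamma2-L}. Combined with $\Gamma_{s,2}(\log u,\log u)\ge \mu\,\Gamma_s(\log u,\log u)=(\mu s/2)|\nabla \log u|^2$ applied pointwise, this gives $J'(t)\le -2\mu\,J(t)$, so $J(t)\le e^{-2\mu t}J(0)$ and $\int_0^\infty J(t)\,dt\le J(0)/(2\mu)$. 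Finally, $g=h^2$ implies $|\nabla g|^2/g=4|\nabla h|^2$, hence $J(0)=4\int |\nabla h|^2\, d\mu_s$, yielding
\[
\mathbf{Ent}[h^2]\le \frac{s}{2}\cdot \frac{J(0)}{2\mu}=\frac{s}{\mu}\int |\nabla h|^2\, d\mu_s,
\]
as required.

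The principal obstacle is the rigorous derivation of the de Bruijn-type identity for $J'(t)$. The formal computation uses the substitution $v=\log u$, the commutation $\partial_t v = \mathscr{L}_s v + \tfrac{2}{s}\Gamma_s(v,v)$, and repeated integration by parts against $\mu_s$; discarding boundary terms at infinity requires decay/regularity of $u(t,\cdot)$, and the manipulation of $\log u$ requires strict positivity of $u$. Both issues are handled by the $\varepsilon$-regularization $g\mapsto g+\varepsilon$: the maximum principle for $\mathscr{L}_s$ forces $u\ge \varepsilon$, and parabolic smoothing of the Fokker--Planck--Smoluchowski equation (already exploited in \Cref{subsec: proof_wellposedness}) gives the requisite regularity; one then passes to the limit $\varepsilon\to 0^+$ using Fatou's lemma on the entropy side and monotone convergence on the Dirichlet form side. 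Once these analytic points are settled, the curvature estimate of \Cref{lem: curvature-estimate} delivers the decay of $J$ and the rest of the argument is routine.
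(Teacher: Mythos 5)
Your proposal is correct and follows essentially the same route as the paper: a Bakry--Emery semigroup interpolation along the flow generated by $\mathscr{L}_s$, using the curvature bound $\Gamma_{s,2} \ge \mu\,\Gamma_s$ of \Cref{lem: curvature-estimate} together with the entropy/Fisher-information derivative identities (the paper's \Cref{lem: derivative-log-2}), exponential decay of the Fisher information, and integration in $t$; your $\Phi$, $J$ correspond exactly to the paper's $\Lambda_{2,s}$ and $-\tfrac{2}{s}\dot{\Lambda}_{2,s}$, and the constants match. The only difference is that you spell out the $\varepsilon$-regularization, positivity, and limiting arguments that the paper leaves implicit.
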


\begin{proof}[Proof of Theorem~\ref{thm: rate-l1-2}]
By the Csisz\'ar--Kullback inequality, we have
\begin{align}
\label{eqn: CK-inequal}
\left\| \rho_{s}(t, \cdot) - \mu_{s} \right\|_{L^{1}(\mathbb{R}^{d})}^{2}  \leq 2 H(\rho_{s}(t, \cdot) | \mu_{s}) = 2 \int_{\mathbb{R}^{d}} h_s \log h_s   \dd \mu_s. 
\end{align}
From \Cref{lem: derivative-log-2}, we have 
\begin{align}
\frac{\dd H(\rho_{s}(t, \cdot) | \mu_{s})}{\dd t}  = - \int_{\mathbb{R}^{d}} h_s \Gamma_s(\log h_s, \log h_s)  \dd \mu_s =  - 2s   \int_{\mathbb{R}^{d}}  \| \nabla \sqrt{h_s}\|^{2}  \dd \mu_s. \label{eqn: entropy-derivative} 
\end{align}
Finally, using~\Cref{thm: log-sobolev-inq}, we obtain the estimate for the derivative of the entropy~\eqref{eqn: entropy-derivative} as
\begin{align*}
\frac{\dd H(\rho_{s}(t, \cdot) | \mu_{s})}{\dd t} \leq - 2s \int_{\mathbb{R}^{d}} (h_s \log h_s) \mu_s \dd x = - 2 \mu H(\rho_{s}(t, \cdot) | \mu_{s}),
\end{align*}
which completes the proof.
\end{proof}


\subsubsection{Proof of~\Cref{thm: log-sobolev-inq}}
\label{subsubsec: proof-log-sobolev-inq}
We consider the integral
\begin{equation}
\label{eqn: lambda-log-soboleve}
\Lambda_{2,s}(t) =  H(\rho_{s}(t, \cdot) | \mu_{s}) = \int_{\mathbb{R}^{d}} h_{s}(t, \cdot)\log ( h_{s}(t, \cdot)  \dd \mu_{s},
\end{equation}
with $h_{s}(0, x) = h^{2}(x)$. We now proceed to find the derivatives of $\Lambda_{2, s}(t)$ with respect to time $t$, which is formulated as the following lemma. 

\begin{lem}
\label{lem: derivative-log-2}
Under the same assumptions as in~\Cref{thm: log-sobolev-inq}, we have
\begin{equation}
\label{eqn: derivative-log-2}
\left\{\begin{aligned}
& \dot{ \Lambda}_{2,s}(t) = - \int_{\mathbb{R}^{d}} h_s  \Gamma_s(\log h_s, \log h_s) \ \dd \mu_{s}  \\
&\ddot{ \Lambda}_{2,s}(t)= 2 \int_{\mathbb{R}^{d}} h_s(  \Gamma_{s, 2}( \log h_s, \log h_s)  \dd \mu_{s}.
\end{aligned}\right.
\end{equation}
\end{lem}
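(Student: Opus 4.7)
The plan is to proceed by direct differentiation of $\Lambda_{2,s}(t)=\int h_s\log h_s\,\dd\mu_s$, using the evolution equation $\partial_t h_s = \mathscr{L}_s h_s$ from \eqref{eqn: FPS-equiv} together with the self-adjointness/integration-by-parts formula in Lemma \ref{lem: equivlent-h}, plus the explicit expression $\Gamma_s(g,g)=\tfrac{s}{2}\|\nabla g\|^2$ from \eqref{eqn: gamma1-L-final-calcuation}.

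For the first derivative, I would differentiate under the integral via the product rule $\partial_t(h_s\log h_s)=(1+\log h_s)\,\partial_t h_s$, obtaining $\dot{\Lambda}_{2,s}(t)=\int(1+\log h_s)\mathscr{L}_s h_s\,\dd\mu_s$. The constant contribution vanishes because Lemma \ref{lem: equivlent-h} with $g_1=h_s,\,g_2\equiv 1$ gives $\int \mathscr{L}_s h_s\,\dd\mu_s=0$. Applying the same lemma with $g_1=\log h_s,\,g_2=h_s$ turns the remaining integral into $-\tfrac{s}{2}\int \nabla\log h_s\cdot\nabla h_s\,\dd\mu_s$; writing $\nabla h_s = h_s\nabla\log h_s$ and invoking \eqref{eqn: gamma1-L-final-calcuation} then delivers $-\int h_s\,\Gamma_s(\log h_s,\log h_s)\,\dd\mu_s$, which is the first line of \eqref{eqn: derivative-log-2}.

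For the second derivative, set $u=\log h_s$ and differentiate $\dot{\Lambda}_{2,s}(t)=-\int h_s\,\Gamma_s(u,u)\,\dd\mu_s$. The chain rule for the diffusion operator $\mathscr{L}_s$ yields $\mathscr{L}_s u = \mathscr{L}_s h_s/h_s - \Gamma_s(u,u)$, so $\partial_t u = \mathscr{L}_s u + \Gamma_s(u,u)$ and therefore $\partial_t\Gamma_s(u,u)=2\Gamma_s(u,\mathscr{L}_s u)+2\Gamma_s(u,\Gamma_s(u,u))$. This produces three contributions:
\[
\ddot{\Lambda}_{2,s}(t)
= -\!\int \Gamma_s(u,u)\,\mathscr{L}_s h_s\,\dd\mu_s
- 2\!\int h_s\,\Gamma_s(u,\mathscr{L}_s u)\,\dd\mu_s
- 2\!\int h_s\,\Gamma_s(u,\Gamma_s(u,u))\,\dd\mu_s.
\]
Using Lemma \ref{lem: equivlent-h} with $\nabla h_s = h_s\nabla u$, the first integral simplifies to $-\int h_s\,\Gamma_s(u,\Gamma_s(u,u))\,\dd\mu_s$, giving the compact form
\[
\ddot{\Lambda}_{2,s}(t) = -\!\int h_s\,\Gamma_s(u,\Gamma_s(u,u))\,\dd\mu_s - 2\!\int h_s\,\Gamma_s(u,\mathscr{L}_s u)\,\dd\mu_s.
\]
To identify this with $2\int h_s\,\Gamma_{s,2}(u,u)\,\dd\mu_s$, specialize the definition \eqref{eqn: gamma2-L} to $g_1=g_2=u$ to obtain $2\Gamma_{s,2}(u,u)=\mathscr{L}_s\Gamma_s(u,u)-2\Gamma_s(u,\mathscr{L}_s u)$; integrating against $h_s\dd\mu_s$ and applying Lemma \ref{lem: equivlent-h} once more to rewrite $\int h_s\,\mathscr{L}_s\Gamma_s(u,u)\,\dd\mu_s = \int \Gamma_s(u,u)\,\mathscr{L}_s h_s\,\dd\mu_s = -\int h_s\,\Gamma_s(u,\Gamma_s(u,u))\,\dd\mu_s$ gives precisely the same right-hand side. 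The second line of \eqref{eqn: derivative-log-2} follows.

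The main obstacle is bookkeeping the cross terms. Because $u=\log h_s$ solves the \emph{nonlinear} equation $\partial_t u=\mathscr{L}_s u+\Gamma_s(u,u)$ rather than a heat equation, the "extra" quadratic term $\Gamma_s(u,\Gamma_s(u,u))$ arises and must be cancelled against an integration-by-parts contribution coming from $\int\Gamma_s(u,u)\mathscr{L}_s h_s\,\dd\mu_s$; it is only after this cancellation that the combination collapses to the Bakry--Emery tensor $\Gamma_{s,2}$. Making this manipulation rigorous also requires enough smoothness/decay on $h_s$ to justify the successive integrations by parts and the exchange of differentiation with integration, which is available from the regularity theory underlying Proposition~\ref{prop: unique-existence} together with the confining hypothesis on $f$.
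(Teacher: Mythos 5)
Your proof is correct and follows essentially the same route as the paper: differentiate under the integral using \eqref{eqn: FPS-equiv}, integrate by parts via the self-adjointness of $\mathscr{L}_s$ (Lemma~\ref{lem: equivlent-h}), and match the result against the definition \eqref{eqn: gamma2-L} of $\Gamma_{s,2}$. The only difference is bookkeeping: you run the second-derivative algebra in the variable $u=\log h_s$ (via $\partial_t u=\mathscr{L}_s u+\Gamma_s(u,u)$), whereas the paper works directly with $h_s$ and cancels the extra quadratic term through the identity $\int_{\mathbb{R}^d}\mathscr{L}_s\bigl(\Gamma_s(h_s,h_s)/h_s\bigr)\dd\mu_s=0$; the two computations are equivalent.
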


\begin{proof}[Proof of Lemma~\ref{eqn: derivative-log-2}]
Recall the linear operator $\mathscr{L}_{s}$ defined in~\eqref{eqn: gene-oper}. Using integration by parts, we have 
\[
 \int_{\mathbb{R}^{d}} \left(1 + \log h_s \right) \mathscr{L}_sh_s \dd \mu_s =  - \frac{s}{2} \int_{\mathbb{R}^{d}} \nabla  \left(1 + \log h_s \right)  \cdot \nabla h_s \dd \mu_s.
 \]
Since $h_{s}(t,\cdot)$ is the solution to the partial differential equation~\eqref{eqn: FPS-equiv}, we obtain the first derivative as 
\begin{align*}
\dot{\Lambda}_{2,s}(t) & = \int_{\mathbb{R}^{d}} \left(1 + \log h_s \right) \mathscr{L}_sh_s  \dd \mu_s  = - \int_{\mathbb{R}^{d}} h_s  \Gamma_s(\log h_s, \log h_s) \dd \mu_{s} . 
\end{align*}
Furthermore, recognizing the definitions of $\Gamma_{s,2}$, $\Gamma_{s}$ and $\mathscr{L}_{s}$, we have 
\[
\left\{ \begin{aligned}
        & \mathscr{L}_s\left( \frac{\Gamma_{s}(h_s, h_s)}{h_s} \right)   = - \frac{\Gamma_{s}(h_s, h_s) \mathscr{L}_s h_s }{h_s^2} + \frac{\mathscr{L}_s \Gamma_{s}(h_s, h_s) }{h_s}  - \frac{\Gamma_{s}(h_s, \Gamma_s(h_s, h_s)) }{h_s^2} + \frac{2( \Gamma_{s}(h_s, h_s) )^2}{h_s^3} \\
        & \Gamma_{s, 2}(\log h_s, \log h_s) = \frac{\Gamma_{s, 2}(h_s, h_s)}{ h_s^2} - \frac{\Gamma_s(h_s, \Gamma_s(h_s, h_{s})}{ h_s^3} + \frac{ ( \Gamma_{s, 2}(h_s, h_s))^2}{h_s^4}.
        \end{aligned} \right. 
\]
On the other hand, the integral with the measure $\dd \mu_{s}$ for the linear operator $\mathscr{L}_{s}$ acting on $\gamma_{s}(h_s,h_s)/h_s$ is zero; that is,
\[
\int_{\mathbb{R}^{d}} \mathscr{L}_s\left( \frac{\Gamma_{s}(h_s, h_s)}{h_s} \right) \dd \mu_s = 0.
\]
Then the first equality above by the definitions can be written as
\[
\int_{\mathbb{R}^{d}} \frac{\Gamma_{s}(h_s, h_s) \mathscr{L}_s h_s }{h_s^2}  \dd \mu_s =  \int_{\mathbb{R}^{d}}  \left(    \frac{\mathscr{L}_s \Gamma_{s}(h_s, h_s) }{h_s}    - \frac{\Gamma_{s}(h_s, \Gamma_s(h_s, h_s)) }{h_s^2} + \frac{2( \Gamma_{s}(h_s, h_s) )^2}{h_s^3} \right)  \dd \mu_s. 
\]
Finally, we obtain the second derivative  as
\begin{align*}
\ddot{\Lambda}_{2,s}(t) & = \int_{\mathbb{R}^{d}} \left[ \frac{\Gamma_{s}(h_s, h_s) \mathscr{L}_s h_s(t, x)}{(h_s(t, x))^{2}} - \frac{2\Gamma_{s}(h_s(t, x), \mathscr{L}_s h_s(t, x)) }{h_s(t, x)} \right] \dd \mu_s \\
                        & =  2 \int_{\mathbb{R}^{d}} \left(\frac{\Gamma_{s, 2}(h_s, h_s)}{h_s}  - \frac{\Gamma_{s}(h_s, \Gamma_s(h_s, h_s))) }{h_s^2} + \frac{2( \Gamma_{s}(h_s, h_s) )^2}{h_s^3} \right)  \dd \mu_s \\
                        & = 2 \int_{\mathbb{R}^{d}} h_s(  \Gamma_{s, 2}( \log h_s, \log h_s)  \dd \mu_{s}.
\end{align*}
This proof is complete.

\end{proof}

Next, we complete the proof of Lemma~\ref{thm: log-sobolev-inq}.

\begin{proof}[Proof of Lemma~\ref{thm: log-sobolev-inq}]
Using~\Cref{lem: curvature-estimate} and~\Cref{lem: derivative-log-2}, we obtain
\[
\ddot{ \Lambda}_{2,s}(t) \geq  - 2\mu \dot{\Lambda}_{2,s}(t). 
\]
Since $\dot{\Lambda}_{2,s}(t)$ is positive, we get
\begin{equation}\label{eqn: inequal-integral-log}
\dot{\Lambda}_{2,s}(t) \geq \dot{\Lambda}_{2,s}(0) \e^{- 2 \mu t}. 
\end{equation}
By the definition of $\Lambda_{2,s}(t)$, we have
\[
\Lambda_{2,s}(0) - \Lambda_{2,s}(\infty) = \int_{\mathbb{R}^{d}}h^2 \log h^2  \dd \mu_{s} -  \left(\int_{\mathbb{R}^{d}} h^2  \dd \mu_{s} \right) \log  \left(\int_{\mathbb{R}^{d}} h^2   \dd \mu_{s} \right),
\]
where the second term in the right-hand side follows from~\Cref{thm: converge} and 
\[
\int_{\mathbb{R}^d}h\dd\mu_s = \int_{\mathbb{R}^d}\rho\dd x = 1.
\]

By~\Cref{thm: converge}, we know that $h_{s}(\infty, \cdot) \equiv 1$. Plugging it into~\eqref{eqn: derivative-log-2}, we get
\[
\dot{\Lambda}_{2,s}(0) = - \int_{\mathbb{R}^{d}} h^{2} \Gamma_{s}(\log h^{2}, \log h^{2})  \dd \mu_s = - 2s \int_{\mathbb{R}^{d}} \| \nabla h\|^{2} \dd \mu_s. 
\]
where the last equality follows from~\eqref{eqn: gamma1-L-final-calcuation}. Integrating both sides of the inequality~\eqref{eqn: inequal-integral-log}, we have
\[
\Lambda_{2,s}(0) - \Lambda_{2,s}(\infty) \leq -\dot{\Lambda}_{2,s}(0) \int_{0}^{+\infty} \e^{-2\mu t} \dd t   \leq  - \frac{1}{2\mu} \dot{\Lambda}_{2,s}(0).
\]
This concludes the proof.

\end{proof}


\subsection{Proof of Proposition~\ref{prop: approx}}
\label{subsec: proof-approx}


By \Cref{prop: unique-existence}, let $\rho_s(t, \cdot) \in C^{1}([0, +\infty), L^2(\mu_s^{-1}))$ denote the unique transition probability density of the solution to the lr-dependent SDE. Taking an expectation, we get
\[
\mathbb{E}[X_s(t)] = \int_{\mathbb{R}^{d}} x \rho_s(t, x) \dd x  .
\]
Hence, the uniqueness has been proved. Using the Cauchy--Schwarz inequality and Theorem~\ref{thm: rate-l2-1}, we obtain:
\begin{align*}
\left\| \mathbb{E}[X_s(t)] \right\| & \leq  \left\| \int_{\mathbb{R}^{d}} x (\rho_{s}(t, \cdot) - \mu_s) \dd x   \right\| +  \left\| \int_{\mathbb{R}^{d}} x  \mu_s \dd x   \right\| \\
                                                & \leq \left(\int_{\mathbb{R}^d} \|x\|^{2} \mu_s dx \right)^{\frac{1}{2}} \left( \left\| \rho_s(t, \cdot) - \mu_{s} \right\|_{\mu_s^{-1}} + 1 \right)    \\
                                                & \leq  \left(\int_{\mathbb{R}^d} \|x\|^{2}  d \mu_s \right)^{\frac{1}{2}} \left( e^{-\lambda_{s}t}\left\| \rho- \mu_{s} \right\|_{\mu_s^{-1}} + 1 \right)  \\
                                                & < + \infty,   
 \end{align*}
where the integrability $\int_{\mathbb{R}^{d}} \|x\|^{2} \mu_s(x) \dd x $ is due to the fact that the objective $f$ satisfies the Villani condition. The existence of a global solution to the lr-dependent SDE~\eqref{eqn: sgd_high_resolution_formally} is thus established.

For the strong convergence, the~lr-dependent SDE~\eqref{eqn: sgd_high_resolution_formally} corresponds to the Milstein scheme in numerical methods. The original result is obtained by Milstein~\cite{mil1975approximate} and Talay~\cite{talay1982analyse, pardoux1985approximation}, independently. We refer the readers to~\cite[Theorem 10.3.5 and Theorem 10.6.3]{kloeden1992approximation}, which studies numerical schemes for stochastic differential equation. For the weak convergence, we can obtain numerical errors by using both the Euler-Maruyama scheme and Milstein scheme. The original result is obtained by Milstein~\cite{mil1986weak} and Talay~\cite{pardoux1985approximation, talay1984efficient} independently and~\cite[Theorem 14.5.2]{kloeden1992approximation} is
also a well-known reference. Furthermore, there exists a more accurate estimate of $B(T)$ shown in~\cite{bally1996law}.
The original proofs in the aforementioned references only assume finite smoothness such as $C^{6}(\mathbb{R}^d)$ for the objective function. 


\subsection{Connection with vanishing viscosity}
\label{subsec: supplement-viscosity}

Taking $s = 0$, the zero-viscosity steady-state equation of the Fokker--Planck--Smoluchowski equation~\eqref{eqn: Fokker-Planck} reads
\begin{equation}\label{eqn: FKS-steady-zero-vis}
\nabla \cdot (\mu_{0} \nabla f) = 0.
\end{equation}
A solution to this zero-viscosity steady-state equation takes the form
\begin{equation}\label{eqn: steady-state-soln-zero-vis}
\mu_{0}(x) = \sum_{i=1}^{m} c_{i}\delta(x - x_i), \quad \text{with}\quad \sum_{i=1}^m c_{i} = 1,
\end{equation}
where $x_i$'s are critical points of the objective $f$. As is clear, the solution is not unique. However, we have shown previously that the invariant distribution $\mu_s$ is unique and converges to
\[
\mu_{s \rightarrow 0}(x) = \delta(x - x^{\star})
\]
in the sense of distribution, which is a special case of \eqref{eqn: steady-state-soln-zero-vis}. Clearly, when there exists more than one critical point, $\mu_{s \rightarrow 0}(x)$ is different from $\mu_0(x)$ in general. In contrast, $\mu_{s \rightarrow 0}(x)$ and $\mu_0(x)$ must be the same for (strictly) convex functions. In light of this comparison, the correspondences between the case $s > 0$ and the case $s = 0$ are fundamentally different in nonconvex and convex problems.

Next, we consider the rate of convergence in the convex setting. Let
\[
f(x) = \frac{1}{2} \theta x^{2},
\]
where $\theta > 0$. Plugging into the Fokker-Planck-Smoluchowski equation~\eqref{eqn: Fokker-Planck},  we have
\begin{equation}
\label{eqn: FK-1D}
\left\{ \begin{aligned}
& \frac{\partial \rho_s}{\partial t} = \theta \frac{\partial (x \rho_s)}{\partial x} + \frac{s}{2} \frac{\partial^{2} \rho_s}{\partial x^{2}} \\
 & \rho(0, \cdot) = \rho \in L^{2}( \sqrt{s \pi / \theta}  \e^{\theta x^{2}/s})) .
 \end{aligned} \right.
\end{equation}
The solution to \eqref{eqn: FK-1D} is 
\begin{equation}
\label{eqn: soln-FK-1D}
\rho_s(t,x) = \sqrt{ \frac{\theta}{\pi s  \left( 1 - \ee^{-2\theta t }\right) } } \exp \left[ - \frac{\theta}{s } \frac{\left( x - x_{0} \ee^{- \theta t}\right)^{2}}{1 - \ee^{-2\theta t}} \right].
\end{equation}
For any $\phi(x) \in L^{2}( \sqrt{s \pi / \theta} \ee^{\theta x^{2}/s})$, we have
\begin{align*}
\left\langle \rho_s, \phi  \right\rangle & = \left\langle \sqrt{ \frac{\theta}{\pi s  \left( 1 - \ee^{-2\theta t }\right) } } \exp \left[ - \frac{\theta}{s } \frac{\left( x - x_{0} \ee^{- \theta t}\right)^{2}}{1 - \ee^{-2\theta t}} \right], \phi(x)  \right\rangle \\
                                                                             & = \left\langle  \frac{1}{\sqrt{2 \pi}} \ee^{- \frac{x^{2}}{2}}, \phi\left( \sqrt{\frac{s  (1 - \ee^{-2\theta t}) }{2\theta} } \cdot x + x_{0} \ee^{-\theta t} \right)   \right\rangle \\
                                                                             & \rightarrow \phi\left( x_{0} \ee^{-\theta t} \right) = \left\langle \delta(x - x_{0} \e^{-\theta t}), \phi(x)  \right\rangle
\end{align*}
as $s \rightarrow 0$, where $\delta(x - x_{0} \ee^{-\theta t})$ denotes the solution to the following zero-viscosity equation
\begin{equation}\label{eqn: zero-vis-Fokker-Planck}
\frac{\partial \rho_0}{\partial t}  =   \nabla \cdot \left(\rho_0 \nabla f\right).
\end{equation}
Furthermore, using the following inequality
\[
\left\| \phi\left( \sqrt{\frac{s (1 - \ee^{-2\theta t}) }{2\theta} } \cdot x + x_{0} \ee^{-\theta t} \right) - \phi\left( x_{0} \ee^{-\theta t} \right) \right\|_{\infty} \leq  O\left(\sqrt{s} \right), 
\]
we get $\left\langle \rho(t, x), \psi(x)  \right\rangle \rightarrow \langle \delta(x - x_{0} \ee^{-\theta t}), \psi(x)  \rangle$ at the rate $O\left(\sqrt{s} \right)$ for a test function $\psi$.

The phenomenon presented above is called \emph{singular perturbation}. It appears in mathematical models of boundary layer phenomena~\cite[Chapter 2.2, Example 1 and Example 2]{chorin1990mathematical}, WKB theory for Schr\"{o}dinger equations~\cite[Supplement 4A]{gasiorowicz2007quantum}, KAM theory for circle diffeomorphisms~\cite[Chapter 2, Section 11]{arnol?d2012geometrical} and that for Hamilton systems~\cite[Appendix 8]{arnol2013mathematical}. Moreover, the singular perturbation phenomenon shows that there exists a fundamental distinction between the $O(1)$-approximating ODE for SGD and the lr-dependent SDE~\eqref{eqn: sgd_high_resolution_formally}. In particular, the learning rate $s \rightarrow 0$ in the Fokker--Planck--Smoluchowski equation~\eqref{eqn: Fokker-Planck} corresponds to vanishing viscosity. The vanishing viscosity phenomenon was originally observed in fluid mechanics~\cite{chorin1990mathematical, kundu2008fluid}, particularly in the degeneration of the
Navier--Stokes equation to the Euler equation~\cite{chen1999vanishing}. As a milestone, the vanishing viscosity method has been used to study the Hamilton--Jacobi equation~\cite{crandall1983viscosity, evans1980solving,crandall1984some}. In fact, the Fokker--Planck--Smoluchowski equation~\eqref{eqn: Fokker-Planck} and its stationary equation are a form of Hamilton--Jacobi equation with a viscosity term, for which the Hamiltonian is
\begin{equation}\label{eqn: Hamiltonian-FKS}
H(x, \rho, \nabla \rho) = \Delta f \rho + \nabla f \cdot \nabla \rho.
\end{equation}
The Hamiltonian~\eqref{eqn: Hamiltonian-FKS} is different from the classical case~\cite{lions1982generalized, cannarsa2004semiconcave, evans2010partial}, which is generally nonlinear in $\nabla \rho$ (cf.\ Burger's equation). Although the Hamiltonian depends linearly on $\rho$ and $\nabla \rho$, the coefficients depend on $\Delta f$ and $\nabla f$. Hence, it is not reasonable to apply directly the well-established theory of Hamilton--Jacobi equations~\cite{crandall1983viscosity, evans1980solving, crandall1984some, lions1982generalized, cannarsa2004semiconcave, evans2010partial} to the Fokker--Planck--Smoluchowski equation~\eqref{eqn: Fokker-Planck} and its stationary equation. Furthermore, for the aforementioned example, which proves the $O(\sqrt{s})$ convergence for the Fokker--Planck--Smoluchowski equation with the quadratic potential $f(x) = \frac{\theta}{2}x^2$, is also a viscosity solution to the Hamilton--Jacobi equation~\cite{crandall1983viscosity}, since the
Hamiltonian~\eqref{eqn: Hamiltonian-FKS} for the quadratic potential degenerates to
\[
H(x, \rho, \nabla \rho) = 2\mathbf{tr}(A) \rho + 2Ax \cdot \nabla \rho,
\]
where $f(x) = x^{T}Ax$ and $A$ is positive definite and symmetric. Thus, we remark that the general theory of viscosity solutions to Hamilton--Jacobi equations cannot be used directly to prove the theorems in the main body of this paper.

In closing, we present several open problems.


\begin{itemize}
\item Consider the stationary solution $\mu_s(x)$ to the Fokker--Planck--Smoluchowski equation~\eqref{eqn: Fokker-Planck}. For a convex or strongly convex objective $f$ with Lipschitz gradients, can we quantify the rate of convergence? Does the rate of convergence remain $O(\sqrt{s})$?


\item Let $T > 0$ be fixed and consider the solution $\rho_s(t, x)$ to the Fokker--Planck--Smoluchowski equation~\eqref{eqn: Fokker-Planck} in $[0, T]$. For a convex or strongly convex objective $f$ with Lipschitz gradients, does the solution  to the  Fokker--Planck--Smoluchowski equation~\eqref{eqn: Fokker-Planck} converge to the solution to its zero-viscosity equation~\eqref{eqn: zero-vis-Fokker-Planck}? Is the rate of convergence still $O(\sqrt{s})$?

\item Consider the solution $\rho_s(t, x)$ to the Fokker--Planck--Smoluchowski equation~\eqref{eqn: Fokker-Planck} in $[0, +\infty)$. For a convex or strongly convex objective $f$ with Lipschitz gradients, does the global solution to the  Cauchy problem of the Fokker--Planck--Smoluchowski equation~\eqref{eqn: Fokker-Planck}  converge to the solution of its zero-viscosity equation~\eqref{eqn: zero-vis-Fokker-Planck}? Is the rate of convergence still $O(\sqrt{s})$?

\end{itemize}

\section{Technical Details for Section~\ref{sec: convergence-rate}}
\label{sec: proof_steady}

\subsection{Proof of Lemma~\ref{lem: space-holder}}
\label{subsec: space-holder}
From the Cauchy--Schwarz inequality, we get
\begin{align*}
\int_{\mathbb{R}^{d}} |g(x)| \dd x         & =      \int_{\mathbb{R}^{d}} |g(x)| \ee^{\frac{f(x)}{s}} \mathrm{e}^{-\frac{f(x)}{s}}\dd x \\
                                                       & \leq  \left( \int_{\mathbb{R}^{d}} g^{2}(x) \ee^{ \frac{2f(x)}{s}} \dd x  \right)^{\frac{1}{2}}  \left( \int_{\mathbb{R}^{d}} \ee^{- \frac{2f(x)}{s} } \dd x  \right)^{\frac{1}{2}} \\
                                                       & < +\infty.
\end{align*}
This completes the proof.


\subsection{Proof of Lemma~\ref{lem: equivlent-h}}
\label{subsec: equiv-h-rho}

Recall that the linear operator $\mathscr{L}_s$ in~\eqref{eqn: gene-oper} is defined as
\[
\mathscr{L}_s = - \nabla f \cdot \nabla + \frac{s}{2}\Delta f.
\]
Note that we have
\begin{align*}
\int_{\mathbb{R}^{d}}   \left(  \mathscr{L}_s g_{1} \right) g_{2}  \dd \mu_s     &  =    
\int_{\mathbb{R}^{d}}   \left(  - \nabla g_{1} \cdot \nabla f + \frac{s}{2} \Delta g_{1}\right) g_{2}  \dd \mu_s \\
                                                              & = - \frac{1}{Z_s}\int_{\mathbb{R}^{d}}   (\nabla g_{1} \cdot \nabla f) g_{2} \ee^{-\frac{2f}{s}} \dd x +  \frac{s}{2Z_s} \int_{\mathbb{R}^{d}} (\Delta g_{1})   g_{2} \ee^{-\frac{2f}{s}} \dd x\\
                                                                                          & = - \frac{1}{Z_s}\int_{\mathbb{R}^{d}}   (\nabla g_{1} \cdot \nabla f) g_{2} \ee^{-\frac{2f}{s}} \dd x -  \frac{s}{2Z_s} \int_{\mathbb{R}^{d}} \nabla g_{1} \cdot \nabla (g_{2} \ee^{-\frac{2f}{s}}) \dd x \\
                                                                                      & = - \frac{s}{2Z_s}  \int_{\mathbb{R}^{d}} (\nabla g_{1} \cdot \nabla g_{2} )\ee^{-\frac{2f}{s}} \dd x \\
                                                                                      & = - \frac{s}{2}  \int_{\mathbb{R}^{d}} (\nabla g_{1} \cdot \nabla g_{2} )  \dd \mu_s. 
\end{align*}
Therefore, $\mathscr{L}_s$ is self-adjoint in $L^2(\mu_s)$ and is non-positive.

\subsection{Proof of Lemma~\ref{thm: villani-poincare-inq}}
\label{subsec: villani-poincare-inq}

For completeness, we show below the original proof of \Cref{thm: villani-poincare-inq} from \cite{villani2009hypocoercivity} in detail. Let $V_{s} = \| \nabla f\|^{2}/s - \Delta f$, then for any $h \in C_{c}^{\infty}(\mathbb{R}^{d})$ with mean-zero condition
\begin{equation}
\label{eqn: mean-zero-cond}
\int_{\mathbb{R}^{d}} h  \dd \mu_{s} = 0,
\end{equation}
we can obtain the following key inequality \cite{deuschel2001large}
\begin{equation}
\label{eqn: key-estimate-vp}
 \int_{\mathbb{R}^{d}} V_{s} h^{2}   \dd \mu_{s}
\leq s\int_{\mathbb{R}^{d}} \| \nabla h\|^{2} \mu_{s} \dd \mu_{s}.
\end{equation}
To show \eqref{eqn: key-estimate-vp}, note that
\begin{align*}
0&\leq \lefteqn{\int_{\mathbb{R}^{d}} \left\| \nabla \left( h \e^{- \frac{f}{s}} \right) \right\|^{2} \dd x} \\
& = \int_{\mathbb{R}^{d}}\left\| (\nabla h) \e^{- \frac{f}{s}} - \frac{h}{s} (\nabla f) \e^{- \frac{f}{s} }\right\|^{2} \dd x  \\
                                                                                                                                 & = \int_{\mathbb{R}^{d}} \| \nabla h\|^{2} \e^{- \frac{2f}{s}} \dd x - \frac{2}{s} \int_{\mathbb{R}^{d}} h \nabla h \cdot \nabla f \e^{- \frac{2f}{s}} \dd x + \left( \frac{1}{s} \right)^{2} \int_{\mathbb{R}^{d}} h^2 \| \nabla f\|^{2} \e^{- \frac{2f}{s}} \dd x \\
                                                                                                                                 & = \int_{\mathbb{R}^{d}} \| \nabla h\|^{2} \e^{- \frac{2f}{s}} \dd x - \frac{1}{s} \int_{\mathbb{R}^{d}} ( \nabla h^2 \cdot \nabla f) \e^{- \frac{2f}{s}} \dd x + \left( \frac{1}{s} \right)^{2} \int_{\mathbb{R}^{d}} h^2 \| \nabla f\|^{2} \e^{- \frac{2f}{s}} \dd x \\
                                                                                                                                 & =  \int_{\mathbb{R}^{d}} \| \nabla h\|^{2} \e^{- \frac{2f}{s}} \dd x + \frac{1}{s} \int_{\mathbb{R}^{d}}   h^2  \nabla \cdot \left( (\nabla f) \e^{- \frac{2f}{s}} \right) \dd x + \left( \frac{1}{s} \right)^{2} \int_{\mathbb{R}^{d}} h^2 \| \nabla f\|^{2} \e^{- \frac{2f}{s}} \dd x \\
                                                                                                                                 & = \int_{\mathbb{R}^{d}} \| \nabla h\|^{2} \e^{- \frac{2f}{s}} \dd x + \frac{1}{s} \int_{\mathbb{R}^{d}} (h^2  \Delta f) \e^{- \frac{2f}{s}} \dd x  - \left( \frac{1}{s} \right)^{2} \int_{\mathbb{R}^{d}} h^2 \| \nabla f\|^{2} \e^{- \frac{2f}{s}} \dd x.
 \end{align*}
Recognizing $\mu_s \propto \e^{-\frac{2f}{s}}$, this proves \eqref{eqn: key-estimate-vp}.

Let $R_{0,s} > 0$ be large enough such that $V_{s}(x) > 0$ for $\|x\| \ge R_{0,s}$. For $R_s > R_{0, s}$, we can define $\epsilon_s$ as
\begin{equation}
\label{eqn: epsilon_s}
\epsilon_s(R_s) := \frac{1}{\inf\{ V_s(x):\; \|x\| \geq R_s \}}.
\end{equation}
Then $\epsilon(R_s) \rightarrow 0$ as $R_s \rightarrow \infty$. Furthermore, we assume the $R_{s}$ is large enough such that
\begin{equation}
\label{eqn: half-integral}
\int_{ \| x \| \leq R_{s}}  \dd \mu_{s} \geq \frac{1}{2}.
\end{equation}
From the key inequality~\eqref{eqn: key-estimate-vp}, we obtain that
\begin{equation}
\label{eqn: estimate-vill1}
\int_{|x| \geq R_{s}} h^{2} \dd \mu_{s} \leq \epsilon(R_s) \left[ s \int_{\mathbb{R}^{d}} \| \nabla h\|^{2}  \dd \mu_{s} - (\inf_{x \in \mathbb{R}^{d}} V_s(x)) \int_{\mathbb{R}^{d}} h^{2}  \dd \mu_{s} \right].
\end{equation}
Let $B_{R_{s}}$ be the ball centered at the origin of radius $R_{s}$ in $\mathbb{R}^{d}$ and define
\[
\mu_{s, R_s} = \left[ \int_{|x| \leq R_s}  \dd \mu_{s} \right]^{-1} \mu_s \mathbf{1}_{|x| \leq R_{s}}.
\]
Using the Poincar\'{e} inequality in a bounded domain~\cite[Theorem 1, Chapter 5.8]{evans2010partial}, we get
\[
\int_{x \in \mathbb{R}^{d}} h^{2}  \dd \mu_{s, R_{s}} \leq s C(R_{s}) \int_{x \in \mathbb{R}^{d}} \| \nabla h\|^{2} \mu_{s, R_{s}} \dd \mu_{s, R_{s}} +  \left( \int_{x \in \mathbb{R}^{d}}  h \dd \mu_{s, R_{s}} \right)^{2},
\]
where $ C(R_{s})$ is a constant depending on $R_{s}$. Furthermore, using the inequality~\eqref{eqn: half-integral}, we obtain
\begin{equation}
\label{eqn: estimate<=R}
\int_{\|x\| \leq R_s} h^{2}   \dd \mu_{s} \leq sC(R_{s}) \int_{\|x\| \leq R_{s}} \| \nabla h\|^{2}  \dd \mu_{s} + 2 \left( \int_{\|x\| \leq R_s} h   \dd \mu_{s} \right)^{2}.
\end{equation}
Making use of the mean-zero property of $h$, we have
\begin{equation}
\label{eqn: estimate-variance}
\left( \int_{\|x\| \leq R_s} h  \dd \mu_{s}  \right)^{2} = \left( \int_{\|x\| > R_s} h  \dd \mu_{s}  \right)^{2} \leq \int_{\|x\| > R_s} h^{2} \dd \mu_{s}.
\end{equation}
Combining~\eqref{eqn: estimate<=R} and~\eqref{eqn: estimate-variance}, we get
\begin{align}
\int_{x \in \mathbb{R}^{d}} h^{2}  \dd \mu_{s} \leq sC(R_{s}) \int_{x \in \mathbb{R}^{d}} \| \nabla h \|^{2} \dd \mu_{s} + 3 \int_{\|x\| \geq R_{s}} h^{2} \dd \mu_{s}. \label{eqn: estimate-vill2}
\end{align}

Taking~\eqref{eqn: estimate-vill1} and~\eqref{eqn: estimate-vill2} together, we obtain 
\begin{align}
\label{eqn: estimate-vill}
\int_{\mathbb{R}^{d}} h^{2}  \dd \mu_{s} \leq s[C(R_{s}) + 3\epsilon(R_{s})] \int_{\mathbb{R}^{d}} \| \nabla h\|^{2}  \dd \mu_{s} - 3(\inf_{x \in \mathbb{R}^{d}} V_s(x)). \epsilon_s(R_s) \int_{x \in \mathbb{R}^{d}} h^{2}  \dd \mu_{s}
\end{align}
Apparently, from the definition of $\epsilon_s(x)$, we can select $R_{s} > 0$ large enough such that $1 + 3s (\inf\limits_{x \in \mathbb{R}^{d}} V_{s}(x)) \epsilon(R_s) > 0$. Then, we can rewrite \eqref{eqn: estimate-vill} as
\begin{equation}
\label{eqn: finial-villani}
\int_{\mathbb{R}^{d}} h^{2} \dd \mu_{s} \leq  \frac{s}{2} \cdot \frac{2 (C(R_{s}) + 3 \epsilon(R_{s}))}{1 + 3s (\inf\limits_{x \in \mathbb{R}^{d}} V_{s}(x)) \epsilon(R_s)} \int_{x \in \mathbb{R}^{d}} \| \nabla h\|^{2}  \dd \mu_{s}.
\end{equation}
Finally,  using $h - \int_{\mathbb{R}^{d}} h \dd \mu_{s}$ instead of $h$ in the inequality~\eqref{eqn: finial-villani}, we prove the desired Poincar\'{e} inequality by taking
\[
\lambda_{s} = \frac{1 + 3s (\inf\limits_{x \in \mathbb{R}^{d}} V_{s}(x)) \epsilon(R_s)}{2 (C(R_{s}) + 3 \epsilon(R_{s}))}.
\]

\subsection{Proof of~\Cref{lem: deriv_epsilon=0}}
\label{subsec: proof-lem-deriv-ep=0}
For convenience, we introduce a shorthand:
\[
\Pi\left( \frac{g}{s} \right) = \frac{\ee^{-\frac{2g}{s}}}{\int_{\mathbb{R}^d} \ee^{-\frac{2g}{s}} \dd x}. 
\]
Then, we can rewrite the derivative as
 \begin{align*}\nonumber
\frac{\dd \epsilon(s)}{\dd s}  &  = \frac{ \frac{2}{s^{2}} \int_{\mathbb{R}^{d}} g^{2} \e^{-\frac{2g}{s}} \dd x  \int_{\mathbb{R}^{d}}  \e^{-\frac{2g}{s}} \dd x  - \frac{2}{s^{2}} \left( \int_{\mathbb{R}^{d}} g \e^{-\frac{2g}{s}} \dd x \right)^{2}  }{\left( \int_{\mathbb{R}^{d}} \e^{-\frac{2g}{s}} \dd x\right)^{2}} \\ & = 2 \int_{\mathbb{R}^{d}} \left( \frac{g}{s} \right)^{2} \Pi\left( \frac{g}{s} \right)dx - 2 \left(\int_{\mathbb{R}^{d}}  \frac{g}{s} \cdot  \Pi\left( \frac{g}{s} \right)dx \right)^{2}.
\end{align*}
Next, we assume that $\zeta_k(x) = x^k e^{-x^{\alpha}}$, where $\alpha < 1$ is a fixed positive constant and $k = 1, 2$. The facts that $\zeta_k(0) = 0$ and $\lim\limits_{x \rightarrow + \infty} \zeta_k(x) = 0$ give
\[
0 \leq \lim_{s \rightarrow 0^+} \left( \frac{g}{s} \right)^{k} \Pi\left( \frac{g}{s} \right) \leq \lim_{s \rightarrow 0^+} \zeta_{k}\left( \frac{g}{s} \right) = 0.
\]
Then, by Fatou's lemma, we get
 \begin{align*}
 0 \leq \liminf_{s \rightarrow 0^+} \frac{\dd\epsilon(s)}{\dd s} &\leq \limsup_{s \rightarrow 0^+} \frac{\dd\epsilon(s)}{\dd s}\\
                                                            & = \limsup_{s \rightarrow 0^+} \left[2 \int_{\mathbb{R}^{d}} \left( \frac{g}{s} \right)^{2} \Pi\left( \frac{g}{s} \right)\dd x - 2 \left(\int_{\mathbb{R}^{d}}  \frac{g}{s} \cdot  \Pi\left( \frac{g}{s} \right)\dd x \right)^{2}\right] \\
                                                          & = 2\limsup_{s \rightarrow 0^+}  \int_{\mathbb{R}^{d}} \left( \frac{g}{s} \right)^{2} \Pi\left( \frac{g}{s} \right)\dd x - 2\liminf_{s \rightarrow 0^+}  \left(\int_{\mathbb{R}^{d}}  \frac{g}{s} \cdot  \Pi\left( \frac{g}{s} \right)\dd x \right)^{2} \\  
                                                  & \leq 2 \int_{\mathbb{R}^{d}} \limsup_{s  \rightarrow 0^+} \left( \frac{g}{s} \right)^{2} \Pi\left( \frac{g}{s} \right)\dd x - 2 \left(\int_{\mathbb{R}^{d}} \liminf_{s  \rightarrow 0^+} \frac{g}{s} \cdot  \Pi\left( \frac{g}{s} \right)\dd x \right)^{2} \\
                                                  & = 2 \int_{\mathbb{R}^{d}} \lim_{s  \rightarrow 0^+} \left( \frac{g}{s} \right)^{2} \Pi\left( \frac{g}{s} \right)\dd x - 2 \left(\int_{\mathbb{R}^{d}} \lim_{s  \rightarrow 0^+} \frac{g}{s} \cdot  \Pi\left( \frac{g}{s} \right)\dd x \right)^{2} \\    
                                             & = 0.
 \end{align*}
 The proof is complete.

\end{document}